\DeclareMathOperator*{\argmax}{arg\,max}
\DeclareMathOperator*{\argmin}{arg\,min}
\let\oldnl\nl
\newcommand{\nonl}{\renewcommand{\nl}{\let\nl\oldnl}}%
\newtheorem{theorem}{Theorem}
\newtheorem{corollary}[theorem]{Corollary}
\theoremstyle{definition}
\acrodef{lfd}[LfD]{learning from demonstration}
\acrodef{tom}[ToM]{theory of mind}
\title{Iterative Teacher-Aware Learning}
\author{%
  Luyao Yuan$^1$\\
  \texttt{yuanluyao@ucla.edu} \\
   \And
   Dongruo Zhou$^1$ \\
   \texttt{drzhou@cs.ucla.edu}\\
   \And
   Juhong Shen$^2$ \\
   \texttt{jhshen@ucla.edu} \\
   \And
   Jingdong Gao$^1$ \\
   \texttt{mxuan@ucla.edu}\\
   \And
   Jeffrey L. Chen$^1$ \\
   \texttt{jlchen0@ucla.edu}\\
   \And
   Quanquan Gu$^1$ \\
   \texttt{qgu@cs.ucla.edu}\\
   \And
   Ying Nian Wu$^3$ \\
   \texttt{ywu@stat.ucla.edu} \\
   \And
   Song-Chun Zhu$^{1, 3}$ \\
   \texttt{sczhu@stat.ucla.edu}\\
   \AND
    \normalfont{$^1$Department of Computer Science},
    \normalfont{$^2$Department of Mathematics},
    \normalfont{$^3$Department of Statistics}\\
    University of California, Los Angeles\\
    \normalfont{$^4$Beijing Institute for General Artificial Intelligence (BIGAI)}\\
}
\newcommand*{\addFileDependency}[1]{
  \typeout{(#1)}
  \@addtofilelist{#1}
  \IfFileExists{#1}{}{\typeout{No file #1.}}
}
\newcommand*{\myexternaldocument}[1]{%
    \externaldocument{#1}%
    \addFileDependency{#1.tex}%
    \addFileDependency{#1.aux}%
}
\begin{document}

\maketitle
\vspace{-5pt}
\begin{abstract}
  In human pedagogy, teachers and students can interact adaptively to maximize communication efficiency. The teacher adjusts her teaching method for different students, and the student, after getting familiar with the teacher’s instruction mechanism, can infer the teacher’s intention to learn faster. Recently, the benefits of integrating this cooperative pedagogy into machine concept learning in discrete spaces have been proved by multiple works. However, how cooperative pedagogy can facilitate machine parameter learning hasn’t been thoroughly studied. In this paper, we propose a gradient optimization based teacher-aware learner who can incorporate teacher’s cooperative intention into the likelihood function and learn provably faster compared with the naive learning algorithms used in previous machine teaching works. We give theoretical proof that the iterative teacher-aware learning (ITAL) process leads to local and global improvements. We then validate our algorithms with extensive experiments on various tasks including regression, classification, and inverse reinforcement learning using synthetic and real data. We also show the advantage of modeling teacher-awareness when agents are learning from human teachers.

\end{abstract}

\section{Introduction}
Cooperative pedagogy is invoked across language, cognitive development, cultural anthropology, and robotics to explain people’s ability to effectively transmit information and accumulate knowledge~\citep{fan2018learning,wang2020mathematical}. As the usage of artificial intelligence and machine learning based systems ratchets up, it is foreseeable that extensive human-computer and agent-agent pedagogical scenarios will occur in the near future~\citep{peltola2019machine}. However, there is still a distance away from robots being able to directly teach or learn from humans as efficiently and effectively as humans do. One of the many difficulties is that machine learning and teaching are now usually studied in single-agent frameworks. Most of the prevailing machine learning methods focus on the improvement of \textbf{individual learners} and the explanations of how knowledge is obtained focus entirely on each learner’s unilateral experiences, either passive observations from a Markov decision process~\citep{mnih2013playing,silver2016mastering}, random samples from a data distribution~\citep{ILSVRC15,he2016deep}, responses of active queries provided by an oracle~\citep{angluin1988queries,settles2009active}, or demonstrations from an expert~\citep{argall2009survey}.


Such machine learning framework is diametrically distinctive from human education, in whose context, learning often occurs sequentially instead of in batch, and from intentional messages given by a pedagogical teacher rather than random data from a fixed sampling process~\citep{wang2020sequential}. Recently, the advantage of pedagogical teachers over randomly sampled data or optimal task completion trajectories from experts has been shown in machine teaching~\citep{cakmak2012algorithmic,zhu2013machine,zhu2015machine,liu2017iterative,eaves2016tractable,milli2017interpretable,chen2018near,chen2018understanding} and in \ac{lfd}~\citep{hadfield2016cooperative,ho2016showing}. Nonetheless, compared with human pedagogy, these works still lack a sophisticated student model that can accommodate the teacher's cooperation into his learning and acts differently than learning from passive data. Machine teaching algorithms model a cooperative teacher giving instructions in the format of data examples for continuous parameter~\citep{cakmak2012algorithmic,zhu2013machine,zhu2015machine,liu2017iterative}, Bayesian concept~\citep{eaves2016tractable,milli2017interpretable} or version space learning~\citep{chen2018near,chen2018understanding}, but seldom do they consider how learners may interpret differently between the data picked intentionally by the teacher and sampled randomly from the world. Standard \ac{lfd} takes in demonstrations from an (approximately-) optimal expert to learn the underlying reward function~\citep{argall2009survey}. \citet{hadfield2016cooperative,ho2016showing,ho2018effectively} shows the advantage of using pedagogical rather than optimal demonstrations, yet, in either case, the learners are not aware of the teacher . \citet{shafto2014rational,yang2018optimal,wang2020sequential,wang2020mathematical} move one step further and proposed recursive cooperative inference models having both the teacher and the student reasoning each other, an ability known as \ac{tom}~\citep{premack1978does,baker2017rational}. The first work modeled and predicted human behavior while the latter three managed to integrate \ac{tom} into machine learning.  Despite the theoretical contribution, their approach~\citep{yang2018optimal,wang2020sequential,wang2020mathematical} is confined to Bayesian concept learning with finite hypothesis space, in which the Sinkhorn scaling~\citep{sinkhorn1967concerning} is tractable. It is unclear how to apply their algorithms to settings involving continuous hypothesis spaces, such as learning neural networks.

In this paper, we study how to integrate the cooperative essence of pedagogy into machine parameter learning and propose a teacher-aware learner who learns significantly faster than a naive learner, given an iterative machine teacher~\citep{liu2017iterative,liu2018towards}. The learner estimates the teacher's data selection process with distribution and corrects his likelihood function with this estimation to accommodate the teacher's intention. Maximizing the new likelihood enables the learner to utilize both explicit information from the selected data and implicit information suggested by the pedagogical context. We theoretically proved the improvement brought by the learner's teacher awareness and justified our results with various experiments. We believe that our work can provide insights into both human-machine interactions, such as online education, and machine-machine communications, such as ad-hoc teamwork~\citep{barrett2017making}.

Our main contributions are \textbf{i)} modeling teacher-awareness for generic gradient optimization based parameter learning; \textbf{ii)} theoretically proving the improvement guaranteed by the teacher-aware learner over the naive learner under mild assumptions; \textbf{iii)} empirically illustrating the advantage of teacher-awareness learner when interacting with both machine and human teachers.

\section{Related Work}
\paragraph{Machine Teaching} Our work is related to machine teaching as we used an iterative machine teacher in our framework. Machine teaching~\citep{zhu2013machine,zhu2015machine,liu2017iterative,peltola2019machine} solves the problem of constructing an optimal (usually minimal) dataset according to a target concept such that a student model can learn the target concept based on this dataset. Most of the machine teaching algorithms consider a batch setting, where the teacher designs a minimal dataset and provides it to a learning algorithm in one shot~\citep{cakmak2012algorithmic,zhu2013machine,patil2014optimal,zhu2015machine,chen2018near}. Iterative machine teaching has also been studied, where the teacher gives out data iteratively and checks the learner's status before selecting the next data~\citep{johns2015becoming,bak2016adaptive,liu2017iterative,liu2018towards,melo2018interactive}, but previous works didn't consider teacher-aware learners. There are also works applying machine teaching to inverse reinforcement learning (IRL) and LfD~\citep{argall2009survey,cakmak2012algorithmic,hadfield2016cooperative,ho2016showing, ho2018effectively,haug2018teaching}. Our IRL experiments are different from those works as our data are provided iteratively and sequentially. Also, our learner is aware of the teacher's intention. \citet{ho2016showing,ho2018effectively} integrated Bayesian rule in LfD to model mutual reasoning between the teacher and the learner, but they mainly used their model to explain human data. A theoretical study of the teaching-complexity of the teacher-aware learners was presented in~\citep{zilles2011models,doliwa2014recursive} where the teacher and the learner are aware of their cooperation. Their analysis mainly attends to version-space learners which maintain all hypotheses consistent with the training data, and cannot be applied to hypothesis selection via optimization, such as learning parameters.

\citet{peltola2019machine} studied machine teaching for an active sequential multi-arm bandit learner. Although they also have a helpful teacher and a teacher-aware learner, their problem setting is different from ours. First, in every round of multi-arm bandit, the learner can actively choose an arm to pull, and then the teacher provides feedback for that arm, while in our cases, the data batch in each round is sampled randomly and independently from the learner's current status. Second, as the teacher can only give binary (success or not) feedback to the learner, the counterfactual reasoning required for pedagogy is significantly simplified. Besides, they required that the same feature representation for the arms is shared between the teacher and the learner. Also, the learner doesn't have to know the underlying parameter exactly to perform well in multi-arm bandit games, while in most of our cases, the learner tries to match the teacher's model exactly. \citet{fisac2020pragmatic} used similar formulation to model cooperative value alignment within a human-robot team. They assumed the human knows the robot's value function during interactions, and parameters to be aligned are sparse and low dimensional.

\textbf{Learning to Teach} Sharing the same goal as machine teaching, learning to teach (L2T) also seeks a teaching algorithm to improve the learning efficiency of AI. While machine teaching usually models the question as an optimization problem and solves for a closed-form teaching algorithm, works in L2T tend to train the teaching model in the process of the teacher-student interaction with gradient based optimization~\citep{wu2018learning} or reinforcement learning (RL) algorithms~\citep{fan2018learning}. Nevertheless, these works also focus mainly on the teacher algorithm and assume teacher-unaware learners. In some tasks, typically when the student aims to learn a concept in a discrete space, the teacher can track the learner's status by modeling his belief over the concept~\citep{shafto2014rational}. As the beliefs within the learner's mind are not usually known by the teacher, the teaching process can be modeled as a partially observable Markov decision process (POMDP)~\citep{monahan1982state}, solving the optimal policy of which returns a teaching algorithm~\citep{rafferty2011faster,whitehill2017approximately}. From the teacher's perspective, the unknown part of the environment is the learner's belief, a probability distribution over the concept space, so she has to form another belief over the learner's belief. The intractable modeling of belief over continuous variables usually requires approximation methods such as particle filters~\citep{rafferty2011faster,whitehill2017approximately} to solve, restricting the scope of these algorithms to naive learners and relatively simple learning tasks. Interactive POMDP~\citep{gmytrasiewicz2005framework,woodward2012learning}, a probabilistic multi-agent model, can also be used to model cooperative pedagogy with recursive teacher-learner reasoning. However, the nested belief over belief also suffers from intractability and is hard to scale up. If the concept space of the learner is continuous by itself, such as high dimensional continuous parameter spaces in our case, handling the belief over belief becomes difficult.

\textbf{Cooperative Bayesian Inference} \citet{shafto2014rational} studies human pedagogy with examples using Bayesian learning models. The cooperative inference~\citep{yang2018optimal,wang2019generalizing,wang2020mathematical} in machine learning also formalizes full recursive reasoning from the perspectives of both the teacher and the learner. Distinctive from their concept learning settings, in this paper, we focus on parameter learning, in which the student has  intractable posterior distribution and learns via gradient-based optimization. In addition, \citep{shafto2014rational,yang2018optimal,wang2019generalizing,wang2020mathematical} only consider the problem of a single interaction between the teacher and learner. \citet{wang2020sequential} proposed a sequential cooperative Bayesian inference algorithm and showed its performance advantage over naive Bayesian learner analytically and empirically. Nevertheless, they were still discussing concept learning with finite and usually small data and hypothesis sets. The Sinkhorn scaling~\citep{hershkowitz1988classifications,wang2020sequential,wang2020mathematical} becomes infeasible when dealing with continuous parameter learning.

\textbf{Pragmatics Reasoning} Our work is inspired by the study of pragmatics (how context contributes to the meaning)~\citep{grice1975logic,hadfield2016cooperative} and \ac{tom}~\citep{premack1978does,baker2017rational}. The rational speech act (RSA) model raised by~\citet{golland2010game} and developed in~\citep{frank2012predicting,shafto2014rational,goodman2016pragmatic,andreas2016reasoning} accommodates the idea of using not only the utterance but also the selection of the utterance to understand the speaker. Previous works in these areas are mainly from human action interpretation~\citep{frank2009informative,vogel2013emergence,khani2018planning}, language emergence~\citep{yuan2020emergence,kang2020Incorporating}, linguistics~\citep{jager2012game,andreas2016reasoning,cohn-gordon-etal-2019-incremental} and cognitive science~\citep{goodman2016pragmatic,baker2017rational} perspectives. To our knowledge, our work is the first to relate pedagogy and recursive reasoning to generic parameter learning and shows a provable improvement. \citet{shafto2014rational} proposed computational models for more diverse concept spaces, but mainly focus on modeling and predicting human behaviors.

\section{Background}\label{sec:background}
Finding the optimal way of teaching parameters has been a challenging problem because of the continuous state space and long horizon planning. One common framework is machine teaching~\cite{zhu2013machine,zhu2015machine}. Here, we adopt an iterative variation of machine teaching~\citep{liu2017iterative}, consisting of three entities: the learner, the teacher and the world. \textbf{The world} is defined as a parameter $\omega^*$, fixed and known only by the teacher. Given a model $y = h(x; \omega)$ parameterized by $\omega$, the world is defined as $ \omega^* = \argmin_{\omega\in\Omega}\mathbb{E}_{(x, y)\sim\mathcal{P}(x, y)}[l(h(x; \omega), y)]$, where $\mathcal{P}(x, y)$ is the data distribution in standard machine learning. Here, $l$ and $h$ can vary across tasks, eg. $l$ can be squared loss for regression, cross-entropy for classification, and negative log-likelihood for IRL~\citep{babes2011apprenticeship,macglashan2015between}. In this paper, we assume $l$ to be a convex function and $h(x; \omega) = h(\langle x, \omega\rangle)$. $h$ can be an identity function for linear regression and softmax function for classification. Thus, we can omit $h$ in the loss function and write $l(\langle x, \omega\rangle, y)$ for short. $l$ and $h$ are common knowledge of the teacher and the learner.

\textbf{Representation:} The teacher represents an example as $(x,y)$ while the student represents the same example as $(\Tilde{x},\Tilde{y})$ (typically $y=\Tilde{y}$ and we use $y$ when there is no ambiguity). The representation $x\in \mathcal{X}$ and $\Tilde{x}\in\Tilde{\mathcal{X}}$ can be different but deterministically related by an unknown mapping, $\Tilde{x}=\mathcal{G}(x)$. Suppose the teacher and the learner use model $h(\langle x, \omega\rangle)$ and $h(\langle \Tilde{x}, \nu\rangle)$ respectively, then $\omega^*$ and $\nu^*$ are very likely in different spaces too. This is a common scenario when the teacher and the learner are a human and a robot, or two robots from different factories. As the representation of examples can be complex, such as features extracted by deep neural networks~\citep{mnih2013playing,ILSVRC15,he2016deep}, using a linear model $h$ doesn't impinge the expressive power of the overall model. In the rest of the paper, we use $\omega$ for the teacher's parameter and $\nu$ for the learner's if they are from different spaces. Otherwise, we use $\omega$ for both of them. We use $x$ to refer to an example and its teacher representation. We use $\Tilde{x}$ for its learner representation. Also, we don't specify the choice of $\mathcal{G}$. Our only assumption about the teacher and the learner's representation will be discussed in \cref{thm:local}.

\textbf{Teacher:} In general, the teacher can only communicate with the learner via examples. This restriction doesn't impinge the generality of the machine teaching framework, as the format of the data can be generic, such as demonstration used in the IRL~\citep{ziebart2008maximum,babes2011apprenticeship,vroman2014maximum,macglashan2015between}. In this paper, data are provided iteratively. We use $x^t$ to denote the example used in the $t$-th iteration. The teacher aims to provide examples iteratively so that the student parameter $\nu$ converges to its optimum $\nu^*$ as fast as possible. Since the teacher doesn't know $\nu^t$ or $\nu^*$, we let the learner provide some feedback to her in each iteration so that she can track the pedagogy progress (details in \cref{sec:coop_teacher}).

\textbf{Learner:} The learner has an initial parameter $\nu^0$ before learning. At time $t$, he has learning rate $\eta_t$. The learning algorithms for teacher-unaware learners are often simple. For iterative gradient based optimization, the learner usually uses stochastic gradient descent~\citep{liu2017iterative,liu2018towards,fan2018learning,wu2018learning}. Suppose the learner receives $(x^t, y^t)$ from the teacher, his iterative update is:
\begin{align}
    \nu^t = \nu^{t-1}-\eta_t\frac{\partial l\big(\langle \Tilde{x}^t, \nu^{t-1}\rangle, y^t\big)}{\partial \nu^{t-1}}.\label{eq:gradient-descent}
\end{align}
\textbf{Mutual knowledge:} We limit the mutual knowledge between the teacher and the learner, otherwise, the mutual reasoning between the two can theoretically become an infinite recursion. In this paper, we consider a teacher who assumes a naive learner using \cref{eq:gradient-descent} to update his model. Meanwhile, the learner knows the teacher selects data deliberately instead of randomly (detailed in the next section). If we define a naive learner as having level-0 recursive reasoning, then the teacher and the teacher-aware learner have level-1 and level-2 recursive reasoning respectively. This level of recursion is very close to human cognitive capability~\citep{de2015higher,de2017negotiating} and was also adopted by~\cite{peltola2019machine}. 

To summarize, the loss function $l$, the model $h$, and the naive learner update function are common knowledge to the teacher and the learner. $\omega^*$ and the teaching mechanism are only known by the teacher, while $\nu^t$ and the learning mechanism, i.e. how to update $\nu^t$ given data, are only known by the learner. He knows the teacher intentionally selects helpful data according to her estimation of himself, and the teacher assumes that the learner learns following \cref{eq:gradient-descent}. For our teacher-aware learner, this assumption is inaccurate, but we'll show how the proposed learner can learn much faster than a naive learner.

\section{Teacher-Aware Learning}
\subsection{Cooperative Teacher}\label{sec:coop_teacher}
We first define the teacher whom the learner should be aware of. Let's consider a teacher using the same feature representation as the learner and knowing his parameter in each iteration. \cite{liu2017iterative} termed this kind of teacher as the omniscient teacher, who, in the $t-$th iteration, greedily chooses example from a data batch $D^t=\{(x_i, y_i)\sim\mathcal{P}(x, y)\}$:
\begin{align}
    (x^t, y^t) &= \argmin_{(x, y)\in D^t}\left\|\omega^{t-1} - \eta_t\frac{\partial l(\langle x, \omega^{t-1}\rangle, y)}{\partial \omega^{t-1}} - \omega^*\right\|_2^2\nonumber\\
    &= \argmax_{(x, y)\in D^t}\Big(-\eta^2_t\left\|\frac{\partial l(\langle x, \omega^{t-1}\rangle, y)}{\partial \omega^{t-1}}\right\|_2^2+2\eta_t\Big\langle \omega^{t-1} - \omega^*, \frac{\partial l(\langle x, \omega^{t-1}\rangle, y)}{\partial \omega^{t-1}}\Big\rangle\Big). \label{eq:teaching-volume}
\end{align}
The expression after $\argmax$ in \cref{eq:teaching-volume} is defined as the teaching volume $TV_{\omega^*}(x, y|\omega^{t})$, which represents the learner's progress in this iteration. It is a trade-off between the difficulty and the usefulness of an example~\citep[see][sec. 4.1]{liu2017iterative}. Notice that the teacher has no control over $D^t$, which is sampled from the data distribution $\mathcal{P}$ or from a large dataset. She only selects the best example from $D^t$. Given $D^t$ with a mild batch size, e.g. 20, the $\argmax$ in \cref{eq:teaching-volume} can be exactly calculated.

\citet{lessard2018optimal} has proved that, for an omniscient teacher, teaching greedily is sub-optimal. Yet, their findings cannot be directly applied to more practical teaching scenarios. Thus, we keep leveraging the greedy heuristic to model our cooperative teacher and generalize it to a non-omniscient teacher who doesn't fully know the learner in every iteration.

Suppose the teacher neither knows the learner's $\nu^{t-1}$ nor $\nu^*$ and they use different feature representations of the data. To teach cooperatively, she has to imitate the learner's model in her own feature space and use $\omega^*$ to guide the teaching. This can be done approximately if, in every round, the learner gives the inner products of $\nu^{t-1}$ and the data to the teacher as feedback. Given the convexity of the loss function $l$, we have:$\Big\langle\omega^{t-1}-\omega^*, \dfrac{\partial l(\langle x, \omega^{t-1}\rangle, y)}{\omega^{t-1}}\Big\rangle\ge l(\langle x, \omega^{t-1}\rangle, y) - l(\langle x, \omega^*\rangle, y).$ Now, \cref{eq:teaching-volume} can be approximated by inner products between the model parameter and the data~\citep{liu2017iterative}. Denote the learner's feedback as $\alpha_x = \langle \Tilde{x}, \nu^{t-1}\rangle, \forall (x, y) \in D^t, \Tilde{x} = \mathcal{G}(x)$, then the teacher will teach as following:
\begin{align}
    \argmax_{(x, y)\in D^t}\Big(-&\eta_t^2\left\|\frac{\partial l(\alpha_x, y)}{\partial \alpha_x}x\right\|_2^2+2\eta_t\big(l(\alpha_x, y)-l(\langle x, \omega^*\rangle, y)\big)\Big).\label{eq:imit-teacher}
\end{align}
It has been shown that cooperative teachers using \cref{eq:imit-teacher} can substantially speed up the learning of a standard SGD learner~\citep{liu2017iterative}. Nonetheless, only having a cooperative teacher doesn't provide us the most effective interaction between the two agents, as the learner doesn't exploit the fact that the data come from a helpful teacher~\citep{shafto2014rational}. In the next section, we introduce a teacher-aware learner.

\subsection{Teacher-Aware Learner}
Now we propose a learner who integrates teacher's pedagogy into his parameter updating process. Suppose we have a distribution $p(x, y|\nu^* = \nu) \propto\exp(-l(\langle \Tilde{x}, \nu\rangle, y))$, denoted as $p_{\nu}(x, y)$. Then, applying gradient descent to $l(\langle \Tilde{x}, \nu\rangle, y)$  is equivalent to maximizing $\log{p_{\nu}(x, y)}$ wrt. $\nu$. Hence, a learner updating parameters with \cref{eq:gradient-descent} can be considered as performing maximum likelihood estimation (MLE) when the data are randomly sampled from $\mathcal{P}(x, y)$.

Nonetheless, in the machine teaching framework, data are no longer randomly sampled from $\mathcal{P}(x, y)$. A teacher-aware learner should rectify his updating rule by considering teacher's helpfulness. Given the dataset $D^t$ at time $t$, the learner can postulate that the teacher is more likely to choose the example she thinks helpful following $p(x, y|\nu^* = \nu, \nu^{t-1}, D^t)$, denoted as $q_{\nu}(x, y|\nu^{t-1}, D^t)$ for short:
\begin{align}
    q_{\nu}(x, y|\nu^{t-1}, D^t&) = \frac{\exp(\beta_t \widehat{TV}_{\nu}(\Tilde{x}, y|\nu^{t-1}))}{\int_{(x', y')\in D^t}\exp(\beta_t \widehat{TV}_{\nu}(\Tilde{x}', y'|\nu^{t-1}))},\beta_t \ge 0,\label{eq:q-imit}\\
    \text{with }\widehat{TV}_{\nu}(\Tilde{x}, y&|\nu^{t-1}) = -\eta_t^2\left\|\frac{\partial l(\langle \Tilde{x}, \nu^{t-1}\rangle, y)}{\partial \nu^{t-1}}\right\|_2^2+2\eta_t\Big(l\big(\langle \Tilde{x}, \nu^{t-1}\rangle, y\big)-l\big(\langle \Tilde{x}, \nu\rangle, y\big)\Big).\label{eq:imit-teaching-volume}
\end{align}
The Boltzmann noisy rationality model~\citep{baker2014modeling} indicates that the teacher samples data according to the soft-max of their approximation of the teaching volumes, calculated wrt. her $\nu^*$ and the inner product feedback from the learner. Although, in practice, this estimation is usually different from the teacher's actual example selection distribution, which is a hard-max, corresponding to $\beta_t \rightarrow \infty$, 
maximizing it wrt. $\nu$ can still improve the learning. 

The learner now wants to learn a $\nu$, which not only makes $y^t$ more likely to be the correct label of $x^t$, but also $(x^t, y^t)$ more likely to be chosen from $D^t$. Intuitively, given all data in $D^t$ are coherent with the true distribution, the teacher gives $(x^t; y^t)$ but not other examples. With what $\nu$ can the probability of this selection be maximized? So, at every time $t$, the student maximizes $p_{\nu}(x^t, y^t)$ and $q_{\nu}(x^t, y^t|\nu^{t-1}, D^t)$ wrt. $\nu$ simultaneously. We can still use gradient descent. Omit $y$ when there is no confusion. Denote $g_x(\gamma) = \frac{\partial l(\langle\Tilde{x}, \gamma\rangle, y)}{\partial \gamma}$, then we have (derivation in supplementary \cref{sup:sec:proof-gradient}):
\begin{align}
    \nu^{t} \notag &= \nu^{t-1}-\eta_t\Big(\frac{\partial l(\langle \Tilde{x}^t, \nu^{t-1}\rangle, y)}{\partial \nu^{t-1}} -\frac{\partial \log q_{\nu}(\Tilde{x}^t, y^t)}{\partial \nu}\Big|_{\nu = \nu^{t-1}}\Big)\nonumber\\
    &= \nu^{t-1}-\eta_tg_{x^t}(\nu^{t-1})-2\beta_t\eta_t^2 \big(g_{x^t}(\nu^{t-1}) - \mathbb{E}_{x\sim q_{\nu^{t-1}}}[g_x(\nu^{t-1})]\big)\label{eq:imit-update}.
\end{align}
Notice that $\nu^{t-1}$ is a constant in $q_{\nu}(x^t, y^t|\nu^{t-1}, D^t)$ and the optimization is wrt. $\nu$, which is treated as $\nu^*$ in the calculation. The gradient is computed at $\nu = \nu^{t-1}$. This is equivalent to maximizing a new log-likelihood function $\log\big(p_{\nu}(x^t, y^t)q_{\nu}(x^t, y^t|\nu^{t-1}, D^t)\big)$, an approximation of the log probability that $(x^t, y^t)$ being sampled in $D^t$ and then being selected by the teacher given $\nu^* = \nu$. The product is an approximation of $p\big((x^t, y^t), D^t|\nu^{t-1}, \nu^* = \nu\big)$ because the sampling of data in $D^t$ except $(x^t, y^t)$ is regarded as deterministic. When $\beta_t = 0$, i.e. the learner thinks the teacher uniformly samples data, and \cref{eq:imit-update} becomes regular SGD.

An interpretation of the benefits brought by \cref{eq:imit-update} is that the learner not only learns from the literal meaning of the example selected by the teacher \textbf{(the second term)}, but also compares that example with ``also-rans'' in $D^t$ \textbf{(the third term)}, forming a context incorporating additional information. This is a prevailing phenomenon in human communication, as messages often convey both literal meanings and pragmatic (contributed by the context) meanings ~\citep{vogel2013emergence,smith2013learning,yuan2020emergence}. In other words, we can acquire not only explicit information from what others said but also implicit information from what others didn't say. When the message space is finite and known, exact computations of the implicit information becomes tractable. Therefore, in scenarios like human-robot interactions, where robots usually provide predefined user interfaces with a fixed choice of instructions, our algorithm can easily conduct counterfactual reasoning by comparing the user's selected instruction with the others and deliver faster learning than only using the selected one. In \cref{sec:exp-human}, our experiment with humans as the teacher illustrates such an advantage.

One nuance is that if we use $\nu^{t-1}$ as the $\nu$ in \cref{eq:q-imit}, the second term of the teaching volume will be 0. Thus, to better approximate $\nu^*$, in practice, we plug in $\nu^{t-1}-\eta_t\frac{\partial l(\langle \Tilde{x}, \nu^{t-1}\rangle, y)}{\partial \nu^{t-1}}$. That is, the learner first updates $\nu^{t-1}$ just like a naive learner. Then he calculates the gradient of $\log q_{\nu}$ wrt. the new $\nu$ and does an additional gradient descent corresponding to the last term in \cref{eq:imit-update}. 
Also, in supervised learning settings, the teacher needs to provide labels of the whole dataset for the learner to calculate the expectation. This is a mild requirement easy to be satisfied in practice. In the iterative process, $D^t$ is a mini-batch sampled from a large dataset with a small batch size, say 20 examples. Thus, $\mathbb{E}_{x\sim q_\nu}[g_x(\nu)]$ can be calculated exactly, and, compared with the standard mini-batch gradient descent, the only additional information needed from the teacher is the index of $(x_t, y_t)$. In fact, we can further relax this condition by letting the learner estimate $\mathbb{E}_{x\sim q_{\nu}}[g_x(\nu)]$ with only a subset $\widehat{D}^t\subseteq D^t$. In our experiments, we show that with only one random unchosen example provided, i.e. $|\widehat{D}^t|=1$, the teacher-aware learner outperforms the naive learner.  See \cref{alg:imit} for details.

\begin{algorithm}
\SetAlgoLined
\KwIn{Data distribution $\mathcal{D}$, teacher parameter $\omega^*$, learning rate $\eta_t$, teacher estimation scale $\beta_t$}
\KwResult{$\nu^{(T)}$}
 Randomly initialize student model $\nu^{(0)}\sim \text{Uniform}(N)$;
 Set $t = 1$ and $T$ as the maximum iteration number;\\
 \While{$t < T$}
 {
    Teacher gets data batch $D^t\sim \mathcal{D}$\\
    Learner reports $\alpha_x = \langle \nu^{(t-1)}, \Tilde{x}\rangle$ for all $x\in D^t$ to the teacher\\
    Teacher selects data for time $t$:\nonl\\
    $(x^t, y^t) = \argmax_{(x, y)\in D^t}\bigg(-\eta_t^2\left\|\frac{\partial l(\alpha_x, y)}{\partial \alpha_x}x\right\|^2 + 2\eta_t\Big(l\big(\alpha_x, y\big)-l\big(\langle \omega^*, x\rangle, y\big)\Big)\bigg)$\\
    Learner uses the selected data $(\Tilde{x}^t, y^t)$ and $D^t$ to calculate\\
    $\hat{\nu}^{(t)} = \nu^{(t-1)}-\eta_t\frac{\partial l(\langle \Tilde{x}^t, \nu^{(t-1)}\rangle, y^t)}{\partial \nu^{(t-1)}}$\\
    $\nu^{(t)} = \hat{\nu}^{(t)}-2\beta_t\eta_t^2\Big(\frac{\partial l(\langle \Tilde{x}^t, \hat{\nu}^{(t)}\rangle, y^t)}{\partial \hat{\nu}^{(t)}}-\mathbb{E}_{(\Tilde{x}, y)\sim q_{\hat{\nu}^{(t)}}(\Tilde{x}, y|\nu^{(t-1)}, D^t)}\left[\frac{\partial l(\langle \Tilde{x}, \hat{\nu}^{(t)}\rangle, y)}{\partial \hat{\nu}^{(t)}}\right]\Big)$\\
    where $q_{\hat{\nu}^{(t)}}(\Tilde{x}, y|\nu^{(t-1)}, D^t) = \frac{\exp\Big(\beta_t \widehat{TV}_{\hat{\nu}^{(t)}}\big(\Tilde{x}, y|\nu^{(t-1)}, D^t\big)\Big)}{\sum_{(x', y')\in D^t}\exp\Big(\beta_t \widehat{TV}_{\hat{\nu}^{(t)}}\big(\Tilde{x}', y'|\nu^{(t-1)}, D^t\big)\Big)}$\\
    with $\widehat{TV}_{\hat{\nu}^{(t)}}(\Tilde{x}, y|\nu^{(t-1)}, D^t)$ defined in \cref{eq:imit-teaching-volume}.\\
    $t = t + 1$\\
 }
 \caption{Iterative Teacher-Aware Learning}
 \label{alg:imit}
\end{algorithm}
We now prove the teacher-aware learner can always  perform better than a naive learner given proper conditions.
\begin{theorem}[Local Improvement]\label{thm:local}
Denote $\Tilde{\nu}^{t} = \nu^{t-1}-\eta_tg_{x^t}(\nu^{t-1})$. For a specific loss function $l$, given the same learning status $\nu^{t-1}$ and a teacher following \cref{eq:imit-teacher}, suppose $x^t$ satisfies that $x^t$ itself maximizes $\widehat{TV}_{\Tilde{\nu}^{t}}(x,y|\nu^{t-1})$. Denote $\hat x^t$ as the $x\in D^t$ which achieves the second largest $\widehat{TV}_{\Tilde{\nu}^{t}}(x,y|\nu^{t-1})$. Suppose that $\|g_x(\Tilde{\nu}^{t})\|_2 \leq G$ for any $x \in D^t$. If $\langle \Tilde\nu^t - \nu^*, g_{x^t}(\Tilde{\nu}^{t})-g_{\hat x^t}(\Tilde{\nu}^{t}) \rangle > 0$, then with large enough $\beta_t$, 
the teacher-aware learner using \cref{eq:imit-update} is guaranteed to make no smaller progress than a naive learner using \cref{eq:gradient-descent}.
\end{theorem}

One intuition for the assumption is that the best example selected by the teacher does bring more benefits to the learner than the other examples do. Suppose we have $\hat{\nu}^{t} = \Tilde{\nu}^{t}-\eta_t g_{\hat{x}^t}(\Tilde{\nu}^t)$, then moving from $\hat{\nu}^{t}$ to $\nu^{t}$ follows $\eta_t(g_{\hat{x}^t}(\Tilde{\nu}^t) - g_{x^t}(\Tilde{\nu}^t))$. The assumption $\langle \Tilde\nu^t - \nu^*, g_{x^t}(\Tilde{\nu}^t)-g_{\hat x^t}(\Tilde{\nu}^t) \rangle = \langle \nu^* - \Tilde{\nu}^t, g_{\hat{x}^t}(\Tilde{\nu}^t) - g_{x^t}(\Tilde{\nu}^t)\rangle> 0$ simply suggests that updating with $x^t$ gives the learner an advantage over updating with $\hat{x}^t$. The advantage points to $\nu^*$ (the two vectors $\nu^* - \Tilde{\nu}^t$ and $g_{\hat{x}^t}(\Tilde{\nu}^t) - g_{x^t}(\Tilde{\nu}^t)$ form an acute angle).

\begin{corollary}[Global Improvement]\label{thm:global}
Under the same condition of \cref{thm:local}, suppose that $\left\|\partial l(\langle \tilde x, \nu\rangle, y)/\partial \nu\right\|_2^2$ and $l( \langle \tilde x, \nu\rangle, y)$ are $L$-Lipschitz for $x$ with any $\nu$. Suppose the sample set $D^t$ satisfies that for any $x \in D^t$, there exists $x' \in D^t$ such that $\|x' - x\|_2 \leq \epsilon/(TL(\eta_t^2 + 4\eta_t))$ for any $t$, where $T$ is the total number of iterations. Then if the inequality
\begin{align}
    \|\nu_1 - \nu^*\|_2^2 - \max_{(x, y)\in D^t}\widehat{TV}_{\nu^*}(x, y|\nu_1)\le\|\nu_2 - \nu^*\|_2^2 - \max_{(x, y)\in D^t}\widehat{TV}_{\nu^*}(x, y|\nu_2)\label{eq:111}
\end{align}
holds for any $\nu_1, \nu_2$ that satisfy $\|\nu_1 - \nu^*\|_2^2 \le \|\nu_2 - \nu^*\|_2^2$, then with the same parameter initialization, learning rate and a teacher following \cref{eq:imit-teacher}, a teacher-aware learner can always converge not slower than a naive learner up to $\epsilon$ error.
\end{corollary}
To guarantee that $\|x' - x\|_2 \leq \epsilon/(TL(\eta_t^2 + 4\eta_t))$ for any $x \in D$, we need the subset $D^t \subseteq D$ to be `uniform distributed' on $D$. To achieve this goal, we can uniformly sample point $x \in D$ and let $D^t$ to be the set of these points. It is easy to verify that the `uniform distributed' property holds with high probability when $|D^t|$ is large enough. Meanwhile, \cref{eq:111} in \cref{thm:global} is defined as teaching monotonicity in~\citet{liu2017iterative}, and they proved that the squared loss satisfies teaching monotonicity given a dataset $D = \{x\in\mathbb{R}^d, \|x\| \le R\}$~\citep[see][proposition 3]{liu2017iterative}. The main difference between \cref{eq:111} and that in \citet{liu2017iterative} is that \cref{eq:111} works for the non-omniscient teacher setting, while \citet{liu2017iterative} focuses on the omniscient teacher setting. Detailed proofs of the theories can be found in \cref{sup:sec:proofs} of our supplementary.

\section{Experiments}\label{sec:exp}
\subsection{Machine Teacher}
\begin{figure}[t!]
\begin{subfigure}{0.55\textwidth}
    \begin{subfigure}{0.48\textwidth}
        \centering
       \includegraphics[width=\textwidth]{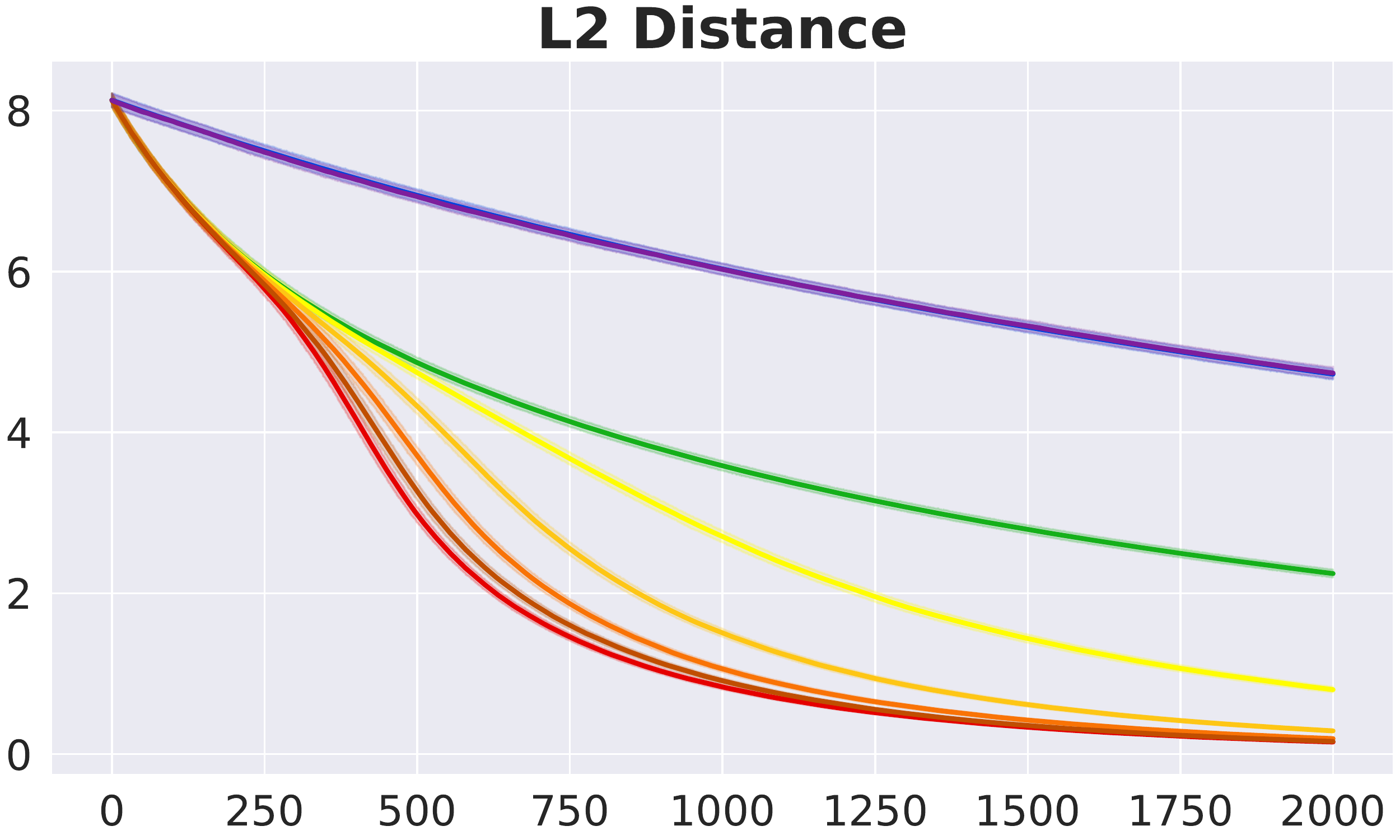}
        \caption{Linear regression.}
        \label{fig:regression-cop}
    \end{subfigure}%
    ~
    \begin{subfigure}{0.48\textwidth}
        \centering
        \includegraphics[width=\textwidth]{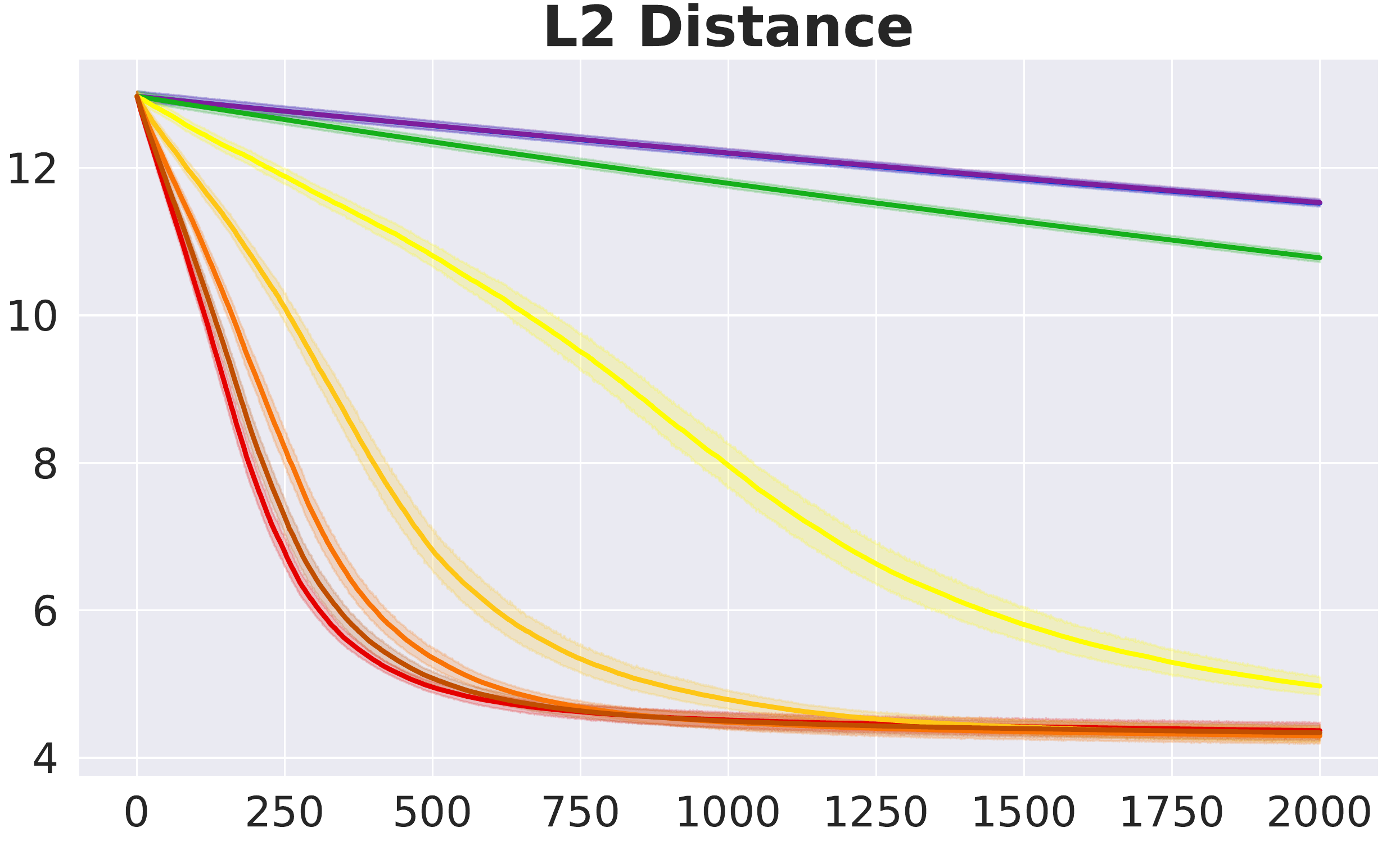}
        \caption{Gaussian data.}
        \label{fig:class10-cop}
    \end{subfigure}

    \begin{subfigure}{0.48\textwidth}
        \centering
        \includegraphics[width=\textwidth]{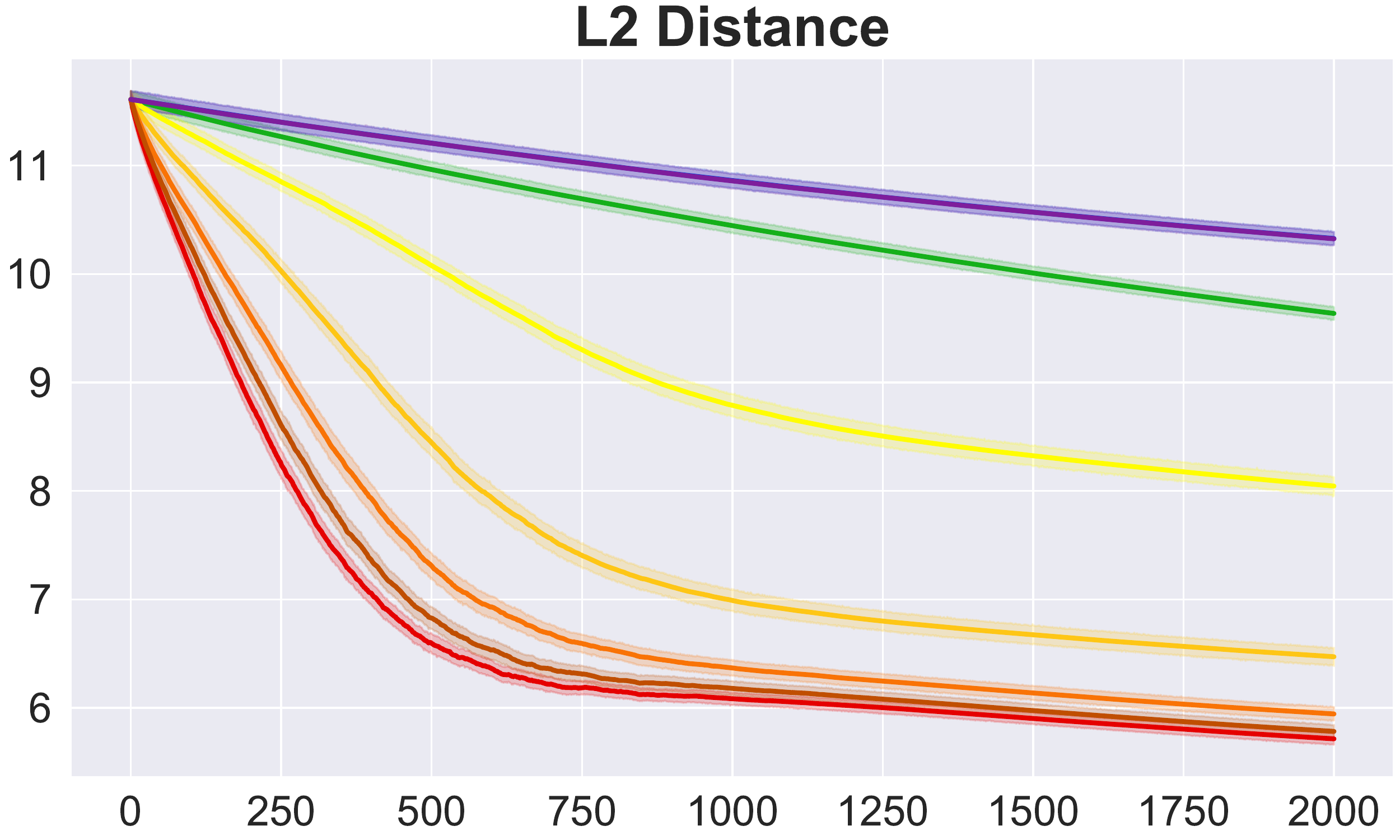}
        \caption{CIFAR-10.}
        \label{fig:CIFAR-10-cop}
    \end{subfigure}
    ~
    \begin{subfigure}{0.48\textwidth}
        \centering
        \includegraphics[width=\textwidth]{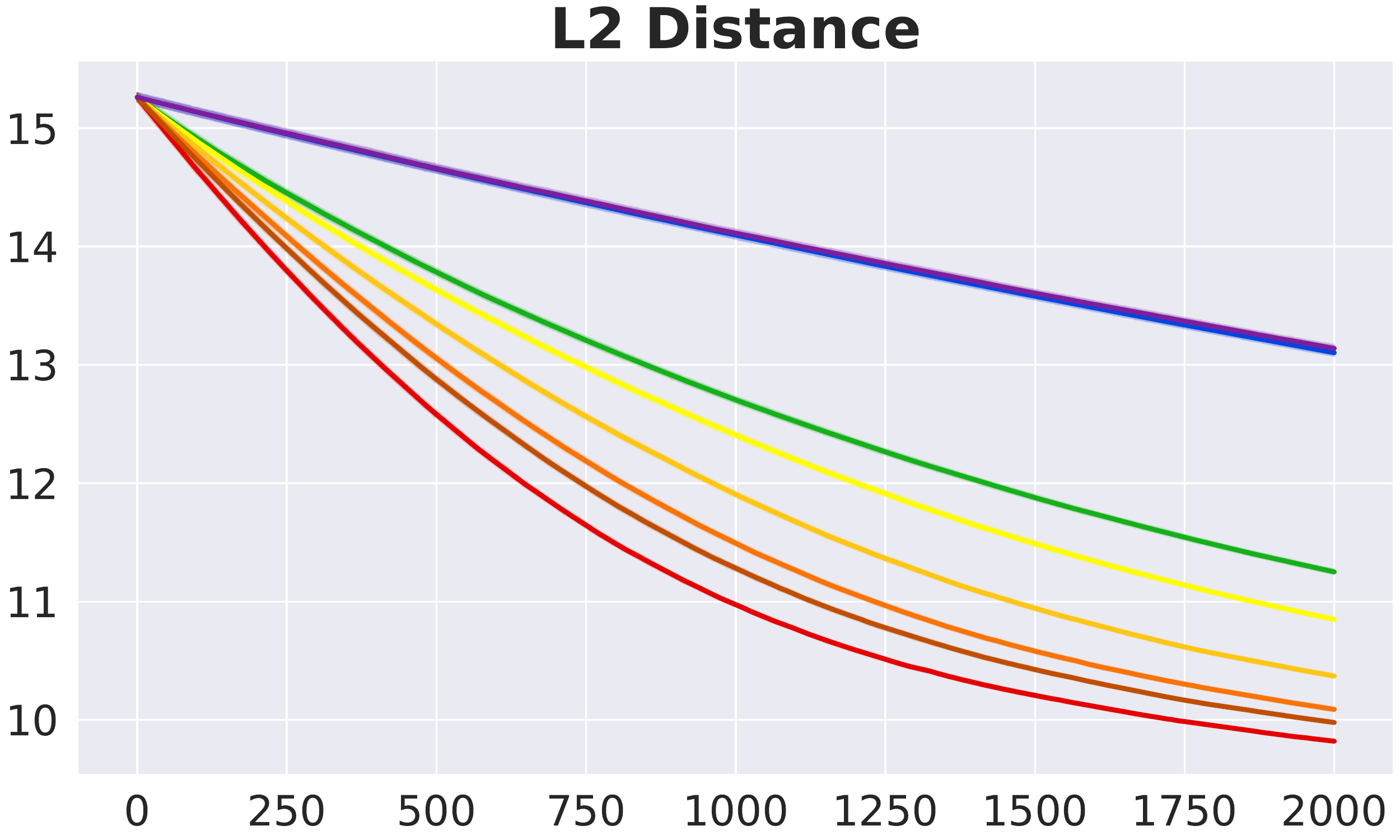}
        \caption{Tiny ImageNet.}
        \label{fig:ImageNet-13}
    \end{subfigure}%

    \begin{subfigure}{0.48\textwidth}
        \centering
        \includegraphics[width=\textwidth]{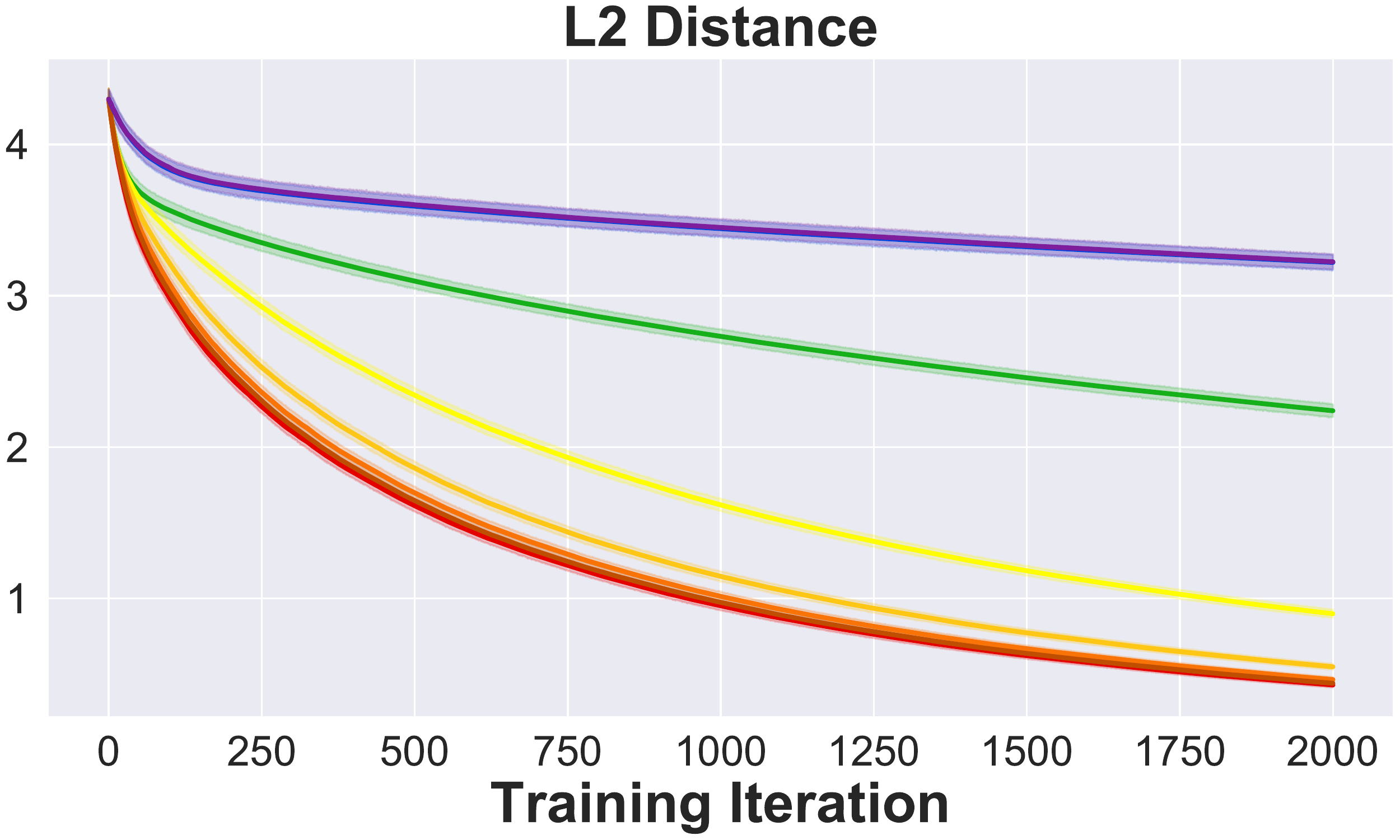}
        \caption{Equation.}
        \label{fig:Equation-cop}
    \end{subfigure}%
    ~
    \begin{subfigure}{0.48\textwidth}
        \centering
        \includegraphics[width=\textwidth]{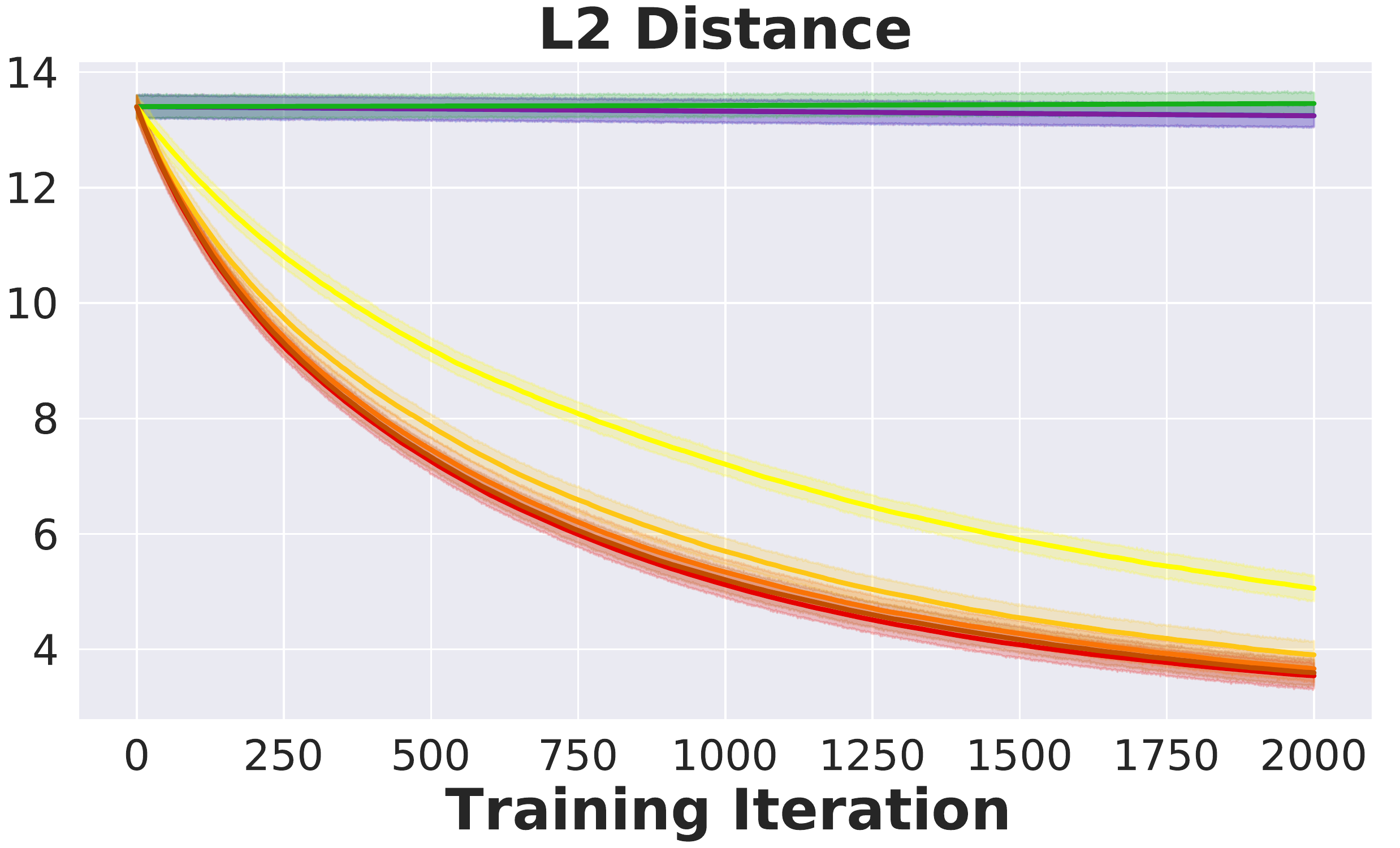}
        \caption{Online IRL.}
        \label{fig:OIRL-cop}
    \end{subfigure}
    
    \begin{subfigure}{0.48\textwidth}
        \centering
        \includegraphics[width=\textwidth]{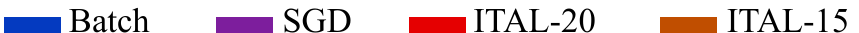}
    \end{subfigure}
    ~
    \begin{subfigure}{0.48\textwidth}
        \centering
        \includegraphics[width=\textwidth]{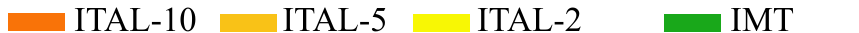}
    \end{subfigure}
\end{subfigure}
\begin{subfigure}{0.45\textwidth}
  \begin{subfigure}{\textwidth}
        \centering
        \includegraphics[width=\textwidth]{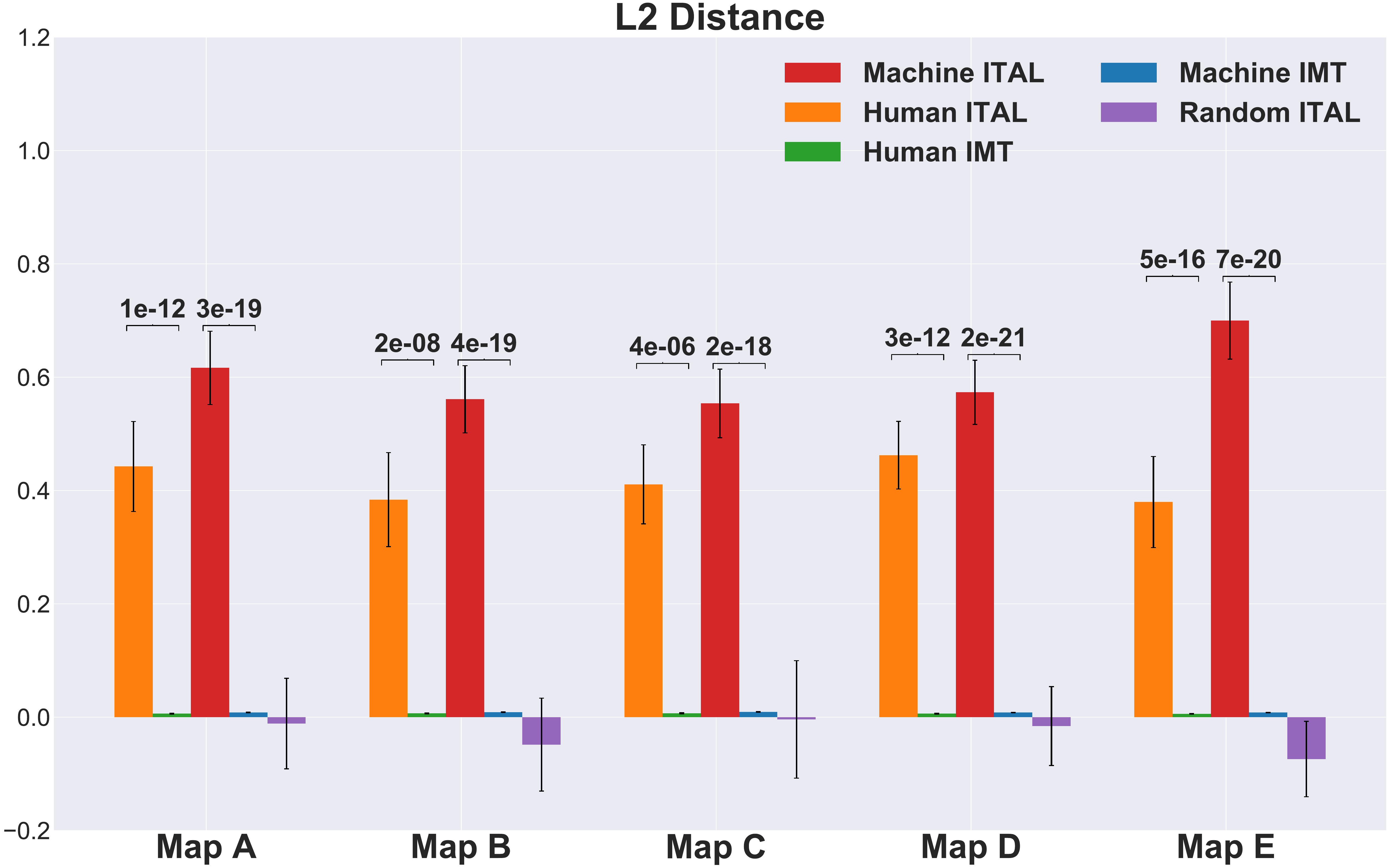}
    \end{subfigure}
    ~
    \begin{subfigure}{\textwidth}
        \centering
        \includegraphics[width=\textwidth]{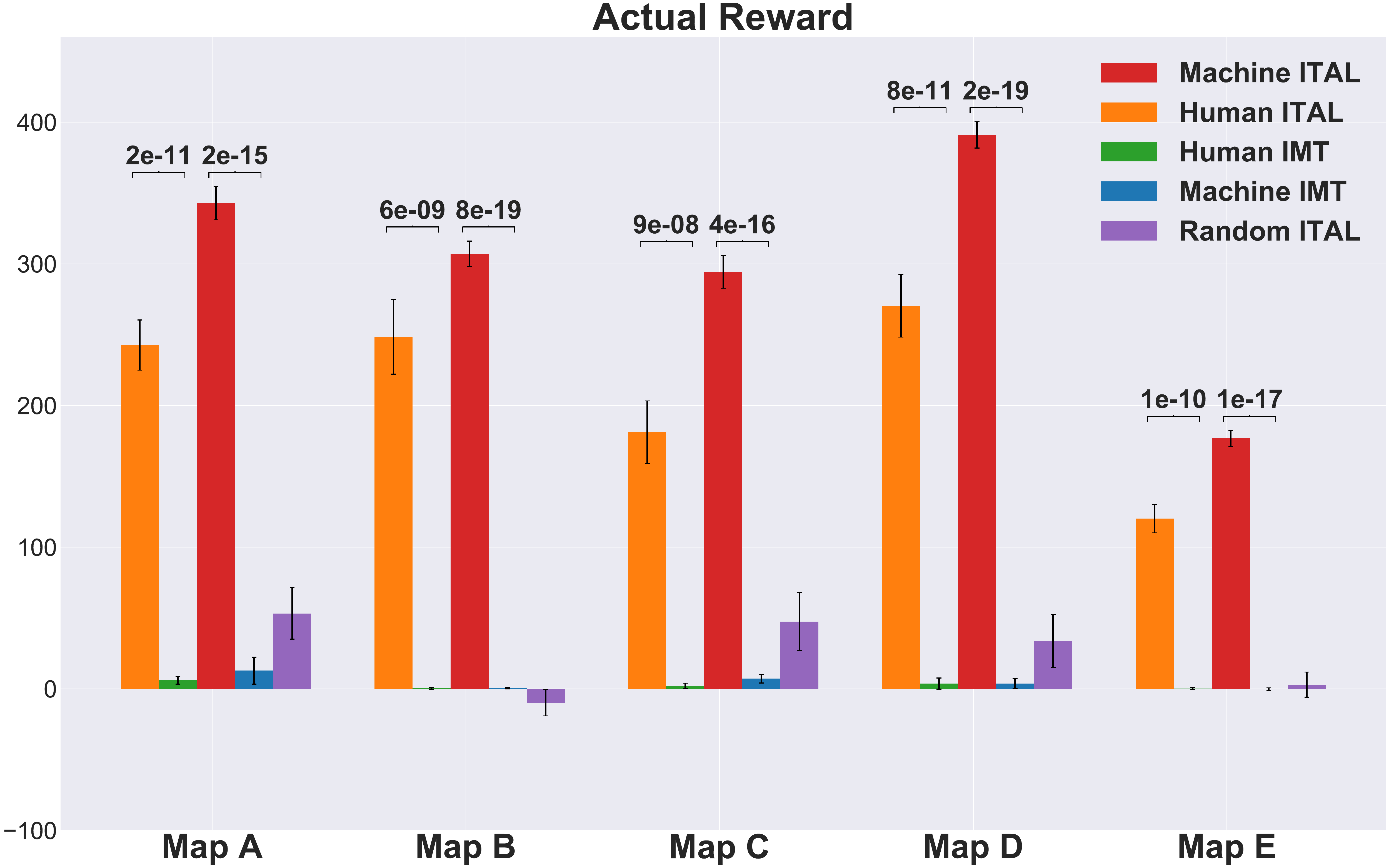}
    \end{subfigure}
  \caption{Human study results.}
  \label{fig:human-results}
\end{subfigure}
\caption{\textbf{\cref{fig:regression-cop}-\cref{fig:OIRL-cop}}: Cooperative teacher results. Our method always gives a substantial improvement over IMT, showing the effect of teacher-awareness. Within 2000 steps, ITAL already show convergence, while a naive learner only learns to a limited extent in most tasks. \textbf{\cref{fig:human-results}}: In the top plot, the height of each bar represents the \textbf{decrease} of the L2-distance between the learner's reward parameter and the ground-truth parameter. In the bottom, the height represents the accumulated reward. Paired t-tests were conducted between Human (Machine) ITAL \& IMT respectively.}
\label{fig:cooperative}
\vspace{-15pt}
\end{figure}
To justify the effectiveness of ITAL, we compared it with iterative machine teaching (IMT) with a naive learner on regression, classification, and IRL tasks. The coverage of squared loss, cross-entropy loss, and negative log-likelihood proves the robustness of our algorithm on various selections of $l$. For regression tasks, we measured the performance using the difference between $\|\omega^t - \omega^*\|_2$ and the mean squared loss of the test set. For the classification task, we measured the difference, the cross-entropy loss, and the classification accuracy of the test set. For online IRL problems, we measured the parameter difference, the total variance between the teacher's and the learner's policies, and the average rewards achieved by the learner. The feature dimension of the teachers can be different from that of the learners in some experiments. In \cref{fig:cooperative}, we show the results of the teacher having a smaller feature dimension than the learner does. We show the opposite in the supplementary \cref{sup:sec:exp}. \textbf{Batch} means the learner uses all the data in the mini-batch to calculate the mean gradient. \textbf{SGD} means the learner randomly selects an example in the mini-batch to calculate the gradient. \textbf{ITAL-$M$s} represent our algorithm with $M$ indicates $|\widehat{D}^t|$. The mini-batch $D^t$ is randomly sampled at every step with batch size 20. The learning rate is \texttt{1e-3} for all the experiments. $\beta_t$ is in the scale of \texttt{1e4}, varying for different settings. We grid search $\beta_t$ starting from $\texttt{1e4}$ and use the largest one inducing \cref{eq:q-imit} that is no longer a delta function. We ran each experiment with 20 different random seeds to calculate the mean and its standard error, shown in \cref{fig:cooperative}.

It can be seen that using the full mini-batch gives almost identical learning performance as using only one random sample from it. IMT has noticeable but limited improvements comparing with Batch and SGD, suggesting not necessarily substantial advantage brought by the helpful teacher. ITAL, on the other hand, significantly outperforms all other baselines, even with only 2 data points as the approximation of the full mini-batch. The learner modeled by these baselines only learns from the examples, but when the examples are no longer acquired randomly but from an intentional teacher, the example selection of the teacher also conveys a large amount of information. In particular, the teacher-aware learner can absorb information from not only the selected examples but also the unselected ones. As the learner has access to more unselected examples, he has a better approximation of the teaching process and learns more efficiently. Additional experimental details can be found in supplementary \cref{sup:sec:exp}.

\subsubsection{Supervised Learning}
\textbf{Linear Models on Synthetic Data:} In these experiments, we explored the convergence of our method in linear regression and multinomial logistic regression. For linear regression, we randomly generated a $M$-dimensional vector and a bias term as the $\omega^*$, and $X\in\mathbb{R}^{N\times (d + 1)}$ as the training set, with the last column being all 1s. The labels are $Y=X\omega^*$. For the classification task, we randomly generated $K$ points in the $d$-dimensional space, each of which is used as the mean of a normal distribution. Then we sampled $N/K$ points from each Gaussian distribution together as the training data. The labels are the indices of these distributions. With these data, we trained a logistic regression model using Scikit-learn~\citep{scikit-learn}, and used the coefficients as the teacher's $\omega^*$. We used a random orthogonal projection matrix to generate the teacher’s feature space from student’s. At every step, a subset of the training data is randomly selected as the mini-batch. The data points in that mini-batch along with their labels and the index of the data selected by the teacher are sent to the student. Details of the data generation can be found in supplementary \cref{sup:sec:exp-linear}.

\textbf{Linear Classifiers on Natural Image Datasets:} We further evaluated our teacher-aware learner on image datasets, CIFAR-10~\citep{krizhevsky2009learning} and Tiny ImageNet~\cite{tinyImageNet} (an adaptation of ImageNet~\citep{jia2009imageNet} used in Stanford 231n with 200 classes and 500 images in each class). In these experiments, the teacher tried to teach the parameters of the last fully connected (FC) layer in a convolutional neural network (CNN) trained on the dataset. We trained 3 baseline CNNs independently to do CIFAR-10 10-class and ImageNet 200-class classification. All of them achieved reasonable accuracy ($\ge  82\%$ for CIFAR-10 and $\ge 58\%$ top-1, $\ge 85\%$ top-5 for ImageNet). For CIFAR-10, we trained three different types of CNNs, CNN-6/9/12. For ImageNet we used VGG-13/16/19~\citep{Simonyan15vgg}. The features fed into the last FC layers are extracted to be the teaching dataset. The learner's feature is from CNN-9/VGG-16 and we set the teacher as either CNN-6/12 or VGG-13/19. Details about CIFAR-10 and Tiny ImageNet experiments are in ~\ref{sup:sec:exp-CIFAR} and \cref{sup:sec:exp-ImageNet} respectively.

\textbf{Linear Regression for Equation Simplification:} In this experiment, we learned a linear value function that can be used to guide action selections. Given polynomial equations with fraction coefficients and unmerged terms, we want to simplify them into cleaner forms with all the terms merged correctly, all the coefficients rescaled to integers without common factors larger than 1, and all the terms sorted by the descending power. For example, equation $-\frac{1}{2}x^2y + \frac{1}{3}xy = -\frac{1}5y^3 + \frac{1}{3}x^2y$ will be simplified to $-25x^2y+10xy+6y^3=0$. We defined a set of equation editing actions and a set of simplification rules. For a given equation, we applied the rules, recorded every editing action, and collected a simplifying trajectory. With all the trajectories of the training equations, we trained a value function by assuming that the value monotonically increases in each trajectory. Then the teacher tried to teach the student this value function. We used three different feature dimensions: 40D, 45D, and 50D. The learner always used 45D, and the teacher used 40D or 50D. Details can be found in \cref{sup:sec:exp-equation}.

\subsubsection{Online Inverse Reinforcement Learning}\label{sec:exp-oirl}
In this experiment, we changed from labeled data in standard supervised learning to demonstrations in IRL. The learner wanted to learn the parameter for a linear reward function $r(s, \omega^*)$ so that the likelihood of the demonstrations is maximized~\citep{babes2011apprenticeship,vroman2014maximum,macglashan2015between}. One challenge is that the $\max$ function in Bellman equations~\citep{sutton2018reinforcement} is non-differentiable. Thus, we approximated $\max$ with soft-max, namely: $\max(a_0, ..., a_n) \approx \frac{\log(\sum_{i=0}^n\exp{ka_i})}{n}$, with $k$ controlling the level of approximation and leveraged the online Bellman gradient iteration~\citep{li2017online}. The IRL environment is an 8$\times$8 map, with a randomly generated reward assigned to each grid. If we encode each grid using a one-hot vector, then the reward parameter is a 64D vector with the $i$-th entry corresponding to the reward of the $i$-th grid. The agent can go up, down, left, or right in each grid. All demonstrations are in the format of $(s, a)$, where $s$ indicates a grid and $a$ an action demonstrated in that grid. The teacher uses a shuffled map encoding. For instance, if the first grid is $[1, 0, ..., 0]$ to the learner, then it became $[0, ..., 0, 1, 0, ...]$ to the teacher. Details are included in \cref{sup:sec:exp-oirl}.

\subsubsection{Adversarial Teacher}\label{sec:exp-adversarial}
In addition to the cooperative setting that we assumed throughout the discussion above, we also explored if the learner can still learn given an adversarial teacher. An adversarial teacher doesn't mean that she gives fake data to the student, but she uses $\argmin$ in \cref{eq:q-imit} instead of using $\argmax$. That is, she always chooses the least helpful data for the learner. Hence, a learner, being aware of this unhelpful pedagogy, will adjust \cref{eq:imit-teacher} accordingly by using $\beta_t \le 0$. We redid all previous experiments with an adversarial teacher and showed that our learner can still learn effectively given an adversarial teacher, while a naive learner barely improves (see \cref{fig:adv} in \cref{sup:sec:exp-adv}). This experiment justifies the universal utility of modeling the teacher's intention regardless of the informativeness of the teaching examples.

\subsection{Human Teacher}\label{sec:exp-human}
In the previous section, we showed that teacher-awareness substantially accelerates learning, given a machine teacher. In this section, we further investigate if our teacher-aware learner can show an advantage in scenarios where humans play the role of the cooperative teacher. We hypothesize that despite the discrepancy between the pedagogical pattern of human and machine teachers, our learner can still benefit from his teacher-awareness modeled with \cref{eq:q-imit}.

We conducted a proof-of-concept human study with a similar but simplified version of the IRL experiments in \cref{sec:exp-oirl}. To better suit human participants, we first change the maps from $8\times 8$ to $5 \times 5$. Second, instead of assigning random continuous rewards to the grids, we color them with white, blue, and red, representing neutral, bad, and good tiles. In each teaching session, a participant is given one of the five reward maps as the ground truth and a randomly initialized learner to be taught. Then, the participant will be asked to teach the learner about the ground truth reward in each grid by providing $(s, a)$ examples as in \cref{sec:exp-oirl}. In each time step, we construct the examples by randomly sampling a set of 10 grids and drawing an arrow on each sampled grid indicating which direction the learner should go to if in that grid. The human teacher is asked to choose the most helpful arrow given the learner's current reward map. The map configurations are in \cref{sup:fig:map_configuration} in \cref{sup:sec:exp-human}. A similar map setup for reward teaching was used by~\citet{ho2016showing}. Every participant will teach both the naive and the teacher-aware learner about the same map. We then run a paired sample t-test to compare the learning effect of the two types of learners. We show the improvement of the L2-distance between the learner's reward parameters and the ground-truth reward parameters and the accumulated reward in \cref{fig:human-results}. Comparison of policy total variance and learning curves are included in \cref{sup:sec:exp-human} of the supplementary.

For all maps, the ITAL method has a significant ($p$-value $\approx$ 0) advantage over its IMT counterparts. The human ITALs all perform worse than machine ITALs. This is as expected as we directly reuse the machine teacher model to simulate humans. There is no guarantee that all the participants follow the same teaching pattern as the machine teacher, or even have a consistent teaching pattern at all. Yet, we still manage to grasp human cooperation to some extent. To illustrate the influence of the teacher model, we also teach the ITAL learner with a random teacher, who samples the example uniformly every time and is not cooperative at all. As shown in \cref{fig:human-results}, this combination doesn't benefit the learner, because the mismatch between the imagined cooperative teacher and the actual random teacher will very likely introduce over-interpretation of the examples. To summary, these results justify that human teachers do have cooperative (contrary to uniform) pedagogy patterns and the current teacher-aware model can take advantage of them. Finding a comprehensive and accurate human-robot communication model will be an open question for future works.

\vspace{-5pt}
\section{Discussion and Conclusions}\label{sec:conclusion}
\vspace{-5pt}
Pedagogy has a profound cognitive science background, but it hasn't received much attention in machine learning works until recently. In this paper, we integrate pedagogy with parameter learning and propose a teacher-aware learning algorithm. Our algorithm changes the model update step for the gradient learner to accommodate the intention of the teacher. We provide theoretical and empirical evidence to justify the advantage of the teacher-aware learner over the naive learner.

To be aware of the teacher, the learner needs an accurate estimation of the teaching model. In many cases, such a model is not directly accessible, e.g. when there is a human-in-the-loop. In this paper, we model the teacher in a heuristic manner. Our human study proved the generality of this model, especially when the learner only assumes a sub-optimal teacher with Boltzmann rationality. In future work, a more advanced teacher model should be investigated, acquired through task-specific data and/or interactions between the agents. Another limitation of our work is that, in our current setting, the learner's feedback is restricted to be inner products. A more generic message space can be leveraged to develop comprehensive learning as a bidirectional communication platform. We believe our work illustrates the promising benefits of accommodating human pedagogy into machine learning algorithms and approaching learning as a multi-agent problem.

\begin{ack}
The work was supported by DARPA XAI project N66001-17-2-4029, ONR MURI project N00014-16-1-2007 and NSF DMS-2015577. We would like to thank Yixin Zhu, Arjun Akula from the UCLA Department of Statistics and Prof. Hongjing Lu from the UCLA Psychology Department for their help with human study design, and three anonymous reviewers for their constructive comments.
\end{ack}

\bibliography{reference}

\begin{thebibliography}{73}
\providecommand{\natexlab}[1]{#1}
\providecommand{\url}[1]{#1}
\csname url@samestyle\endcsname
\providecommand{\newblock}{\relax}
\providecommand{\bibinfo}[2]{#2}
\providecommand{\BIBentrySTDinterwordspacing}{\spaceskip=0pt\relax}
\providecommand{\BIBentryALTinterwordstretchfactor}{4}
\providecommand{\BIBentryALTinterwordspacing}{\spaceskip=\fontdimen2\font plus
\BIBentryALTinterwordstretchfactor\fontdimen3\font minus
  \fontdimen4\font\relax}
\providecommand{\BIBforeignlanguage}[2]{{%
\expandafter\ifx\csname l@#1\endcsname\relax
\typeout{** WARNING: IEEEtranSN.bst: No hyphenation pattern has been}%
\typeout{** loaded for the language `#1'. Using the pattern for}%
\typeout{** the default language instead.}%
\else
\language=\csname l@#1\endcsname
\fi
#2}}
\providecommand{\BIBdecl}{\relax}
\BIBdecl

\bibitem[tin(2017)]{tinyImageNet}
\url{https://www.kaggle.com/c/tiny-imagenet}, 2017.

\bibitem[Andreas and Klein(2016)]{andreas2016reasoning}
J.~Andreas and D.~Klein, ``Reasoning about pragmatics with neural listeners and
  speakers,'' \emph{arXiv preprint arXiv:1604.00562}, 2016.

\bibitem[Angluin(1988)]{angluin1988queries}
D.~Angluin, ``Queries and concept learning,'' \emph{Machine learning}, vol.~2,
  no.~4, pp. 319--342, 1988.

\bibitem[Argall et~al.(2009)Argall, Chernova, Veloso, and
  Browning]{argall2009survey}
B.~D. Argall, S.~Chernova, M.~Veloso, and B.~Browning, ``A survey of robot
  learning from demonstration,'' \emph{Robotics and autonomous systems},
  vol.~57, no.~5, pp. 469--483, 2009.

\bibitem[Babes et~al.(2011)Babes, Marivate, Subramanian, and
  Littman]{babes2011apprenticeship}
M.~Babes, V.~Marivate, K.~Subramanian, and M.~L. Littman, ``Apprenticeship
  learning about multiple intentions,'' in \emph{Proceedings of the 28th
  International Conference on Machine Learning (ICML-11)}, 2011, pp. 897--904.

\bibitem[Bak et~al.(2016)Bak, Choi, Akrami, Witten, and
  Pillow]{bak2016adaptive}
J.~H. Bak, J.~Y. Choi, A.~Akrami, I.~Witten, and J.~W. Pillow, ``Adaptive
  optimal training of animal behavior,'' in \emph{Advances in neural
  information processing systems}, 2016, pp. 1947--1955.

\bibitem[Baker and Tenenbaum(2014)]{baker2014modeling}
C.~L. Baker and J.~B. Tenenbaum, ``Modeling human plan recognition using
  bayesian theory of mind,'' \emph{Plan, activity, and intent recognition:
  Theory and practice}, pp. 177--204, 2014.

\bibitem[Baker et~al.(2017)Baker, Jara-Ettinger, Saxe, and
  Tenenbaum]{baker2017rational}
C.~L. Baker, J.~Jara-Ettinger, R.~Saxe, and J.~B. Tenenbaum, ``Rational
  quantitative attribution of beliefs, desires and percepts in human
  mentalizing,'' \emph{Nature Human Behaviour}, vol.~1, no.~4, pp. 1--10, 2017.

\bibitem[Barrett et~al.(2017)Barrett, Rosenfeld, Kraus, and
  Stone]{barrett2017making}
S.~Barrett, A.~Rosenfeld, S.~Kraus, and P.~Stone, ``Making friends on the fly:
  Cooperating with new teammates,'' \emph{Artificial Intelligence}, vol. 242,
  pp. 132--171, 2017.

\bibitem[Cakmak and Lopes(2012)]{cakmak2012algorithmic}
M.~Cakmak and M.~Lopes, ``Algorithmic and human teaching of sequential decision
  tasks,'' in \emph{Twenty-Sixth AAAI Conference on Artificial Intelligence},
  2012.

\bibitem[Chen et~al.(2018{\natexlab{a}})Chen, Mac~Aodha, Su, Perona, and
  Yue]{chen2018near}
Y.~Chen, O.~Mac~Aodha, S.~Su, P.~Perona, and Y.~Yue, ``Near-optimal machine
  teaching via explanatory teaching sets,'' in \emph{International Conference
  on Artificial Intelligence and Statistics}, 2018, pp. 1970--1978.

\bibitem[Chen et~al.(2018{\natexlab{b}})Chen, Singla, Mac~Aodha, Perona, and
  Yue]{chen2018understanding}
Y.~Chen, A.~Singla, O.~Mac~Aodha, P.~Perona, and Y.~Yue, ``Understanding the
  role of adaptivity in machine teaching: The case of version space learners,''
  in \emph{Advances in Neural Information Processing Systems}, 2018, pp.
  1476--1486.

\bibitem[Cohn-Gordon et~al.(2019)Cohn-Gordon, Goodman, and
  Potts]{cohn-gordon-etal-2019-incremental}
\BIBentryALTinterwordspacing
R.~Cohn-Gordon, N.~Goodman, and C.~Potts, ``An incremental iterated response
  model of pragmatics,'' in \emph{Proceedings of the Society for Computation in
  Linguistics ({SC}i{L}) 2019}, 2019, pp. 81--90. [Online]. Available:
  \url{https://www.aclweb.org/anthology/W19-0109}
\BIBentrySTDinterwordspacing

\bibitem[de~Weerd et~al.(2015)de~Weerd, Verbrugge, and Verheij]{de2015higher}
H.~de~Weerd, R.~Verbrugge, and B.~Verheij, ``Higher-order theory of mind in the
  tacit communication game,'' \emph{Biologically Inspired Cognitive
  Architectures}, vol.~11, pp. 10--21, 2015.

\bibitem[de~Weerd et~al.(2017)de~Weerd, Verbrugge, and
  Verheij]{de2017negotiating}
------, ``Negotiating with other minds: the role of recursive theory of mind in
  negotiation with incomplete information,'' \emph{Autonomous Agents and
  Multi-Agent Systems}, vol.~31, no.~2, pp. 250--287, 2017.

\bibitem[Deng et~al.(2009)Deng, Dong, Socher, Li, Li, and
  Fei-Fei]{jia2009imageNet}
J.~Deng, W.~Dong, R.~Socher, L.-J. Li, K.~Li, and L.~Fei-Fei, ``Imagenet: A
  large-scale hierarchical image database,'' in \emph{2009 IEEE Conference on
  Computer Vision and Pattern Recognition}, 2009, pp. 248--255.

\bibitem[Doliwa et~al.(2014)Doliwa, Fan, Simon, and
  Zilles]{doliwa2014recursive}
T.~Doliwa, G.~Fan, H.~U. Simon, and S.~Zilles, ``Recursive teaching dimension,
  vc-dimension and sample compression,'' \emph{The Journal of Machine Learning
  Research}, vol.~15, no.~1, pp. 3107--3131, 2014.

\bibitem[Eaves~Jr et~al.(2016)Eaves~Jr, Schweinhart, and
  Shafto]{eaves2016tractable}
B.~S. Eaves~Jr, A.~M. Schweinhart, and P.~Shafto, ``Tractable bayesian
  teaching,'' in \emph{Big Data in Cognitive Science}.\hskip 1em plus 0.5em
  minus 0.4em\relax Psychology Press, 2016, pp. 74--99.

\bibitem[Fan et~al.(2018)Fan, Tian, Qin, Li, and Liu]{fan2018learning}
\BIBentryALTinterwordspacing
Y.~Fan, F.~Tian, T.~Qin, X.-Y. Li, and T.-Y. Liu, ``Learning to teach,'' in
  \emph{International Conference on Learning Representations}, 2018. [Online].
  Available: \url{https://openreview.net/forum?id=HJewuJWCZ}
\BIBentrySTDinterwordspacing

\bibitem[Fisac et~al.(2020)Fisac, Gates, Hamrick, Liu, Hadfield-Menell,
  Palaniappan, Malik, Sastry, Griffiths, and Dragan]{fisac2020pragmatic}
J.~F. Fisac, M.~A. Gates, J.~B. Hamrick, C.~Liu, D.~Hadfield-Menell,
  M.~Palaniappan, D.~Malik, S.~S. Sastry, T.~L. Griffiths, and A.~D. Dragan,
  ``Pragmatic-pedagogic value alignment,'' in \emph{Robotics Research}.\hskip
  1em plus 0.5em minus 0.4em\relax Springer, 2020, pp. 49--57.

\bibitem[Frank et~al.(2009)Frank, Goodman, Lai, and
  Tenenbaum]{frank2009informative}
M.~Frank, N.~Goodman, P.~Lai, and J.~Tenenbaum, ``Informative communication in
  word production and word learning,'' in \emph{Proceedings of the annual
  meeting of the cognitive science society}, vol.~31, 2009.

\bibitem[Frank and Goodman(2012)]{frank2012predicting}
M.~C. Frank and N.~D. Goodman, ``Predicting pragmatic reasoning in language
  games,'' \emph{Science}, vol. 336, no. 6084, pp. 998--998, 2012.

\bibitem[Gmytrasiewicz and Doshi(2005)]{gmytrasiewicz2005framework}
P.~J. Gmytrasiewicz and P.~Doshi, ``A framework for sequential planning in
  multi-agent settings,'' \emph{Journal of Artificial Intelligence Research},
  vol.~24, pp. 49--79, 2005.

\bibitem[Golland et~al.(2010)Golland, Liang, and Klein]{golland2010game}
D.~Golland, P.~Liang, and D.~Klein, ``A game-theoretic approach to generating
  spatial descriptions,'' in \emph{Proceedings of the 2010 conference on
  empirical methods in natural language processing}.\hskip 1em plus 0.5em minus
  0.4em\relax Association for Computational Linguistics, 2010, pp. 410--419.

\bibitem[Goodman and Frank(2016)]{goodman2016pragmatic}
N.~D. Goodman and M.~C. Frank, ``Pragmatic language interpretation as
  probabilistic inference,'' \emph{Trends in cognitive sciences}, vol.~20,
  no.~11, pp. 818--829, 2016.

\bibitem[Grice(1975)]{grice1975logic}
H.~P. Grice, ``Logic and conversation,'' in \emph{Speech acts}.\hskip 1em plus
  0.5em minus 0.4em\relax Brill, 1975, pp. 41--58.

\bibitem[Hadfield-Menell et~al.(2016)Hadfield-Menell, Russell, Abbeel, and
  Dragan]{hadfield2016cooperative}
D.~Hadfield-Menell, S.~J. Russell, P.~Abbeel, and A.~Dragan, ``Cooperative
  inverse reinforcement learning,'' in \emph{Advances in neural information
  processing systems}, 2016, pp. 3909--3917.

\bibitem[Haug et~al.(2018)Haug, Tschiatschek, and Singla]{haug2018teaching}
L.~Haug, S.~Tschiatschek, and A.~Singla, ``Teaching inverse reinforcement
  learners via features and demonstrations,'' in \emph{Advances in Neural
  Information Processing Systems}, 2018, pp. 8464--8473.

\bibitem[He et~al.(2016)He, Zhang, Ren, and Sun]{he2016deep}
K.~He, X.~Zhang, S.~Ren, and J.~Sun, ``Deep residual learning for image
  recognition,'' in \emph{Proceedings of the IEEE conference on computer vision
  and pattern recognition}, 2016, pp. 770--778.

\bibitem[Hershkowitz et~al.(1988)Hershkowitz, Rothblum, and
  Schneider]{hershkowitz1988classifications}
D.~Hershkowitz, U.~G. Rothblum, and H.~Schneider, ``Classifications of
  nonnegative matrices using diagonal equivalence,'' \emph{SIAM journal on
  matrix analysis and applications}, vol.~9, no.~4, pp. 455--460, 1988.

\bibitem[Ho et~al.(2016)Ho, Littman, MacGlashan, Cushman, and
  Austerweil]{ho2016showing}
M.~K. Ho, M.~Littman, J.~MacGlashan, F.~Cushman, and J.~L. Austerweil,
  ``Showing versus doing: Teaching by demonstration,'' in \emph{Advances in
  neural information processing systems}, 2016, pp. 3027--3035.

\bibitem[Ho et~al.(2018)Ho, Littman, Cushman, and
  Austerweil]{ho2018effectively}
M.~K. Ho, M.~L. Littman, F.~Cushman, and J.~L. Austerweil, ``Effectively
  learning from pedagogical demonstrations,'' in \emph{Proceedings of the
  Annual Conference of the Cognitive Science Society}, 2018.

\bibitem[J{\"a}ger(2012)]{jager2012game}
G.~J{\"a}ger, ``Game theory in semantics and pragmatics,'' \emph{Semantics: An
  international handbook of natural language meaning}, vol.~3, pp. 2487--2425,
  2012.

\bibitem[Johns et~al.(2015)Johns, Mac~Aodha, and Brostow]{johns2015becoming}
E.~Johns, O.~Mac~Aodha, and G.~J. Brostow, ``Becoming the expert-interactive
  multi-class machine teaching,'' in \emph{Proceedings of the IEEE Conference
  on Computer Vision and Pattern Recognition}, 2015, pp. 2616--2624.

\bibitem[Kang et~al.(2020)Kang, Wang, and de~Melo]{kang2020Incorporating}
\BIBentryALTinterwordspacing
Y.~Kang, T.~Wang, and G.~de~Melo, ``Incorporating pragmatic reasoning
  communication into emergent language,'' in \emph{Advances in Neural
  Information Processing Systems}, H.~Larochelle, M.~Ranzato, R.~Hadsell, M.~F.
  Balcan, and H.~Lin, Eds., vol.~33.\hskip 1em plus 0.5em minus 0.4em\relax
  Curran Associates, Inc., 2020, pp. 10\,348--10\,359. [Online]. Available:
  \url{https://proceedings.neurips.cc/paper/2020/file/7520fa31d14f45add6d61e52df5a03ff-Paper.pdf}
\BIBentrySTDinterwordspacing

\bibitem[Khani et~al.(2018)Khani, Goodman, and Liang]{khani2018planning}
F.~Khani, N.~D. Goodman, and P.~Liang, ``Planning, inference and pragmatics in
  sequential language games,'' \emph{Transactions of the Association for
  Computational Linguistics}, vol.~6, pp. 543--555, 2018.

\bibitem[Krizhevsky et~al.(2009)Krizhevsky, Hinton,
  et~al.]{krizhevsky2009learning}
A.~Krizhevsky, G.~Hinton \emph{et~al.}, ``Learning multiple layers of features
  from tiny images,'' \emph{University of Toronto}, 2009.

\bibitem[Lessard et~al.(2018)Lessard, Zhang, and Zhu]{lessard2018optimal}
L.~Lessard, X.~Zhang, and X.~Zhu, ``An optimal control approach to sequential
  machine teaching,'' \emph{arXiv preprint arXiv:1810.06175}, 2018.

\bibitem[Li and Burdick(2017)]{li2017online}
K.~Li and J.~W. Burdick, ``Online inverse reinforcement learning via bellman
  gradient iteration,'' \emph{arXiv preprint arXiv:1707.09393}, 2017.

\bibitem[Liu et~al.(2017)Liu, Dai, Humayun, Tay, Yu, Smith, Rehg, and
  Song]{liu2017iterative}
W.~Liu, B.~Dai, A.~Humayun, C.~Tay, C.~Yu, L.~B. Smith, J.~M. Rehg, and
  L.~Song, ``Iterative machine teaching,'' in \emph{Proceedings of the 34th
  International Conference on Machine Learning-Volume 70}.\hskip 1em plus 0.5em
  minus 0.4em\relax JMLR. org, 2017, pp. 2149--2158.

\bibitem[Liu et~al.(2018)Liu, Dai, Li, Liu, Rehg, and Song]{liu2018towards}
W.~Liu, B.~Dai, X.~Li, Z.~Liu, J.~M. Rehg, and L.~Song, ``Towards black-box
  iterative machine teaching,'' \emph{Proceedings of the 35th International
  Conference on Machine Learning}, 2018.

\bibitem[MacGlashan and Littman(2015)]{macglashan2015between}
J.~MacGlashan and M.~L. Littman, ``Between imitation and intention learning,''
  in \emph{Twenty-Fourth International Joint Conference on Artificial
  Intelligence}, 2015.

\bibitem[Melo et~al.(2018)Melo, Guerra, and Lopes]{melo2018interactive}
\BIBentryALTinterwordspacing
F.~S. Melo, C.~Guerra, and M.~Lopes, ``Interactive optimal teaching with
  unknown learners,'' in \emph{Proceedings of the Twenty-Seventh International
  Joint Conference on Artificial Intelligence, {IJCAI-18}}.\hskip 1em plus
  0.5em minus 0.4em\relax International Joint Conferences on Artificial
  Intelligence Organization, 7 2018, pp. 2567--2573. [Online]. Available:
  \url{https://doi.org/10.24963/ijcai.2018/356}
\BIBentrySTDinterwordspacing

\bibitem[Milli et~al.(2017)Milli, Abbeel, and Mordatch]{milli2017interpretable}
S.~Milli, P.~Abbeel, and I.~Mordatch, ``Interpretable and pedagogical
  examples,'' \emph{arXiv preprint arXiv:1711.00694}, 2017.

\bibitem[Mnih et~al.(2013)Mnih, Kavukcuoglu, Silver, Graves, Antonoglou,
  Wierstra, and Riedmiller]{mnih2013playing}
V.~Mnih, K.~Kavukcuoglu, D.~Silver, A.~Graves, I.~Antonoglou, D.~Wierstra, and
  M.~Riedmiller, ``Playing atari with deep reinforcement learning,''
  \emph{arXiv preprint arXiv:1312.5602}, 2013.

\bibitem[Monahan(1982)]{monahan1982state}
G.~E. Monahan, ``State of the art—a survey of partially observable markov
  decision processes: theory, models, and algorithms,'' \emph{Management
  science}, vol.~28, no.~1, pp. 1--16, 1982.

\bibitem[Patil et~al.(2014)Patil, Zhu, Kope{\'c}, and Love]{patil2014optimal}
K.~R. Patil, J.~Zhu, {\L}.~Kope{\'c}, and B.~C. Love, ``Optimal teaching for
  limited-capacity human learners,'' in \emph{Advances in neural information
  processing systems}, 2014, pp. 2465--2473.

\bibitem[Pedregosa et~al.(2011)Pedregosa, Varoquaux, Gramfort, Michel, Thirion,
  Grisel, Blondel, Prettenhofer, Weiss, Dubourg, Vanderplas, Passos,
  Cournapeau, Brucher, Perrot, and Duchesnay]{scikit-learn}
F.~Pedregosa, G.~Varoquaux, A.~Gramfort, V.~Michel, B.~Thirion, O.~Grisel,
  M.~Blondel, P.~Prettenhofer, R.~Weiss, V.~Dubourg, J.~Vanderplas, A.~Passos,
  D.~Cournapeau, M.~Brucher, M.~Perrot, and E.~Duchesnay, ``Scikit-learn:
  Machine learning in {P}ython,'' \emph{Journal of Machine Learning Research},
  vol.~12, pp. 2825--2830, 2011.

\bibitem[Peltola et~al.(2019)Peltola, {\c{C}}elikok, Daee, and
  Kaski]{peltola2019machine}
T.~Peltola, M.~M. {\c{C}}elikok, P.~Daee, and S.~Kaski, ``Machine teaching of
  active sequential learners,'' in \emph{Advances in Neural Information
  Processing Systems}, 2019, pp. 11\,202--11\,213.

\bibitem[Premack and Woodruff(1978)]{premack1978does}
D.~Premack and G.~Woodruff, ``Does the chimpanzee have a theory of mind?''
  \emph{Behavioral and brain sciences}, vol.~1, no.~4, pp. 515--526, 1978.

\bibitem[Rafferty et~al.(2011)Rafferty, Brunskill, Griffiths, and
  Shafto]{rafferty2011faster}
A.~N. Rafferty, E.~Brunskill, T.~L. Griffiths, and P.~Shafto, ``Faster teaching
  by pomdp planning,'' in \emph{International Conference on Artificial
  Intelligence in Education}.\hskip 1em plus 0.5em minus 0.4em\relax Springer,
  2011, pp. 280--287.

\bibitem[Russakovsky et~al.(2015)Russakovsky, Deng, Su, Krause, Satheesh, Ma,
  Huang, Karpathy, Khosla, Bernstein, Berg, and Fei-Fei]{ILSVRC15}
O.~Russakovsky, J.~Deng, H.~Su, J.~Krause, S.~Satheesh, S.~Ma, Z.~Huang,
  A.~Karpathy, A.~Khosla, M.~Bernstein, A.~C. Berg, and L.~Fei-Fei, ``{ImageNet
  Large Scale Visual Recognition Challenge},'' \emph{International Journal of
  Computer Vision (IJCV)}, vol. 115, no.~3, pp. 211--252, 2015.

\bibitem[Settles(2009)]{settles2009active}
B.~Settles, ``Active learning literature survey,'' University of
  Wisconsin-Madison Department of Computer Sciences, Tech. Rep., 2009.

\bibitem[Shafto et~al.(2014)Shafto, Goodman, and Griffiths]{shafto2014rational}
P.~Shafto, N.~D. Goodman, and T.~L. Griffiths, ``A rational account of
  pedagogical reasoning: Teaching by, and learning from, examples,''
  \emph{Cognitive psychology}, vol.~71, pp. 55--89, 2014.

\bibitem[Silver et~al.(2016)Silver, Huang, Maddison, Guez, Sifre, Van
  Den~Driessche, Schrittwieser, Antonoglou, Panneershelvam, Lanctot,
  et~al.]{silver2016mastering}
D.~Silver, A.~Huang, C.~J. Maddison, A.~Guez, L.~Sifre, G.~Van Den~Driessche,
  J.~Schrittwieser, I.~Antonoglou, V.~Panneershelvam, M.~Lanctot \emph{et~al.},
  ``Mastering the game of go with deep neural networks and tree search,''
  \emph{nature}, vol. 529, no. 7587, p. 484, 2016.

\bibitem[Simonyan and Zisserman(2015)]{Simonyan15vgg}
K.~Simonyan and A.~Zisserman, ``Very deep convolutional networks for
  large-scale image recognition,'' in \emph{International Conference on
  Learning Representations}, 2015.

\bibitem[Sinkhorn and Knopp(1967)]{sinkhorn1967concerning}
R.~Sinkhorn and P.~Knopp, ``Concerning nonnegative matrices and doubly
  stochastic matrices,'' \emph{Pacific Journal of Mathematics}, vol.~21, no.~2,
  pp. 343--348, 1967.

\bibitem[Smith et~al.(2013)Smith, Goodman, and Frank]{smith2013learning}
N.~J. Smith, N.~Goodman, and M.~Frank, ``Learning and using language via
  recursive pragmatic reasoning about other agents,'' in \emph{Advances in
  neural information processing systems}, 2013, pp. 3039--3047.

\bibitem[Sutton and Barto(2018)]{sutton2018reinforcement}
R.~S. Sutton and A.~G. Barto, \emph{Reinforcement learning: An
  introduction}.\hskip 1em plus 0.5em minus 0.4em\relax MIT press, 2018.

\bibitem[Vogel et~al.(2013)Vogel, Bodoia, Potts, and
  Jurafsky]{vogel2013emergence}
A.~Vogel, M.~Bodoia, C.~Potts, and D.~Jurafsky, ``Emergence of gricean maxims
  from multi-agent decision theory,'' in \emph{Proceedings of the 2013
  Conference of the North American Chapter of the Association for Computational
  Linguistics: Human Language Technologies}, 2013, pp. 1072--1081.

\bibitem[Vroman(2014)]{vroman2014maximum}
M.~C. Vroman, ``Maximum likelihood inverse reinforcement learning,'' Ph.D.
  dissertation, Rutgers University-Graduate School-New Brunswick, 2014.

\bibitem[Wang et~al.(2020{\natexlab{b}})Wang, Wang, and
  Shafto]{wang2020sequential}
J.~Wang, P.~Wang, and P.~Shafto, ``Sequential cooperative bayesian inference,''
  in \emph{Proceedings of the 37th International Conference on Machine Learning
  (ICML-20)}, 2020.

\bibitem[Wang et~al.(2019)Wang, Paranamana, and Shafto]{wang2019generalizing}
P.~Wang, P.~Paranamana, and P.~Shafto, ``Generalizing the theory of cooperative
  inference,'' in \emph{The 22nd International Conference on Artificial
  Intelligence and Statistics}.\hskip 1em plus 0.5em minus 0.4em\relax PMLR,
  2019, pp. 1841--1850.

\bibitem[Wang et~al.(2020{\natexlab{a}})Wang, Wang, Paranamana, and
  Shafto]{wang2020mathematical}
P.~Wang, J.~Wang, P.~Paranamana, and P.~Shafto, ``A mathematical theory of
  cooperative communication,'' 2020.

\bibitem[Whitehill and Movellan(2017)]{whitehill2017approximately}
J.~Whitehill and J.~Movellan, ``Approximately optimal teaching of approximately
  optimal learners,'' \emph{IEEE Transactions on Learning Technologies},
  vol.~11, no.~2, pp. 152--164, 2017.

\bibitem[Woodward and Wood(2012)]{woodward2012learning}
M.~P. Woodward and R.~J. Wood, ``Learning from humans as an i-pomdp,''
  \emph{arXiv preprint arXiv:1204.0274}, 2012.

\bibitem[Wu et~al.(2018)Wu, Tian, Xia, Fan, Qin, Jian-Huang, and
  Liu]{wu2018learning}
L.~Wu, F.~Tian, Y.~Xia, Y.~Fan, T.~Qin, L.~Jian-Huang, and T.-Y. Liu,
  ``Learning to teach with dynamic loss functions,'' in \emph{Advances in
  Neural Information Processing Systems}, 2018, pp. 6466--6477.

\bibitem[Yang et~al.(2018)Yang, Yu, Wang, Vong, Shafto,
  et~al.]{yang2018optimal}
S.~C.-H. Yang, Y.~Yu, P.~Wang, W.~K. Vong, P.~Shafto \emph{et~al.}, ``Optimal
  cooperative inference,'' in \emph{International Conference on Artificial
  Intelligence and Statistics}.\hskip 1em plus 0.5em minus 0.4em\relax PMLR,
  2018, pp. 376--385.

\bibitem[Yuan et~al.(2020)Yuan, Fu, Shen, Xu, Shen, and Zhu]{yuan2020emergence}
L.~Yuan, Z.~Fu, J.~Shen, L.~Xu, J.~Shen, and S.-C. Zhu, ``Emergence of
  pragmatics from referential game between theory of mind agents,'' \emph{arXiv
  preprint arXiv:2001.07752}, 2020.

\bibitem[Zhu(2013)]{zhu2013machine}
\BIBentryALTinterwordspacing
J.~Zhu, ``Machine teaching for bayesian learners in the exponential family,''
  in \emph{Advances in Neural Information Processing Systems 26}, C.~J.~C.
  Burges, L.~Bottou, M.~Welling, Z.~Ghahramani, and K.~Q. Weinberger,
  Eds.\hskip 1em plus 0.5em minus 0.4em\relax Curran Associates, Inc., 2013,
  pp. 1905--1913. [Online]. Available:
  \url{http://papers.nips.cc/paper/5042-machine-teaching-for-bayesian-learners-in-the-exponential-family.pdf}
\BIBentrySTDinterwordspacing

\bibitem[Zhu(2015)]{zhu2015machine}
X.~Zhu, ``Machine teaching: An inverse problem to machine learning and an
  approach toward optimal education,'' in \emph{Twenty-Ninth AAAI Conference on
  Artificial Intelligence}, 2015.

\bibitem[Ziebart et~al.(2008)Ziebart, Maas, Bagnell, and
  Dey]{ziebart2008maximum}
B.~D. Ziebart, A.~L. Maas, J.~A. Bagnell, and A.~K. Dey, ``Maximum entropy
  inverse reinforcement learning.'' in \emph{Aaai}, vol.~8.\hskip 1em plus
  0.5em minus 0.4em\relax Chicago, IL, USA, 2008, pp. 1433--1438.

\bibitem[Zilles et~al.(2011)Zilles, Lange, Holte, and
  Zinkevich]{zilles2011models}
S.~Zilles, S.~Lange, R.~Holte, and M.~Zinkevich, ``Models of cooperative
  teaching and learning,'' \emph{Journal of Machine Learning Research},
  vol.~12, no. Feb, pp. 349--384, 2011.

\end{thebibliography}


\begin{thebibliography}{4}
\providecommand{\natexlab}[1]{#1}
\providecommand{\url}[1]{#1}
\csname url@samestyle\endcsname
\providecommand{\newblock}{\relax}
\providecommand{\bibinfo}[2]{#2}
\providecommand{\BIBentrySTDinterwordspacing}{\spaceskip=0pt\relax}
\providecommand{\BIBentryALTinterwordstretchfactor}{4}
\providecommand{\BIBentryALTinterwordspacing}{\spaceskip=\fontdimen2\font plus
\BIBentryALTinterwordstretchfactor\fontdimen3\font minus
  \fontdimen4\font\relax}
\providecommand{\BIBforeignlanguage}[2]{{%
\expandafter\ifx\csname l@#1\endcsname\relax
\typeout{** WARNING: IEEEtranSN.bst: No hyphenation pattern has been}%
\typeout{** loaded for the language `#1'. Using the pattern for}%
\typeout{** the default language instead.}%
\else
\language=\csname l@#1\endcsname
\fi
#2}}
\providecommand{\BIBdecl}{\relax}
\BIBdecl

\bibitem[Jupyter et~al.(2018)Jupyter, Bussonnier, Forde, Freeman, Granger,
  Head, Holdgraf, Kelley, Nalvarte, Osheroff, et~al.]{jupyter2018binder}
P.~Jupyter, M.~Bussonnier, J.~Forde, J.~Freeman, B.~Granger, T.~Head,
  C.~Holdgraf, K.~Kelley, G.~Nalvarte, A.~Osheroff \emph{et~al.}, ``Binder
  2.0-reproducible, interactive, sharable environments for science at scale,''
  in \emph{Proceedings of the 17th python in science conference}, vol. 113,
  2018, p. 120.

\bibitem[Kluyver et~al.(2016)Kluyver, Ragan-Kelley, P{\'e}rez, Granger,
  Bussonnier, Frederic, Kelley, Hamrick, Grout, Corlay, Ivanov, Avila, Abdalla,
  and Willing]{Kluyver:2016aa}
T.~Kluyver, B.~Ragan-Kelley, F.~P{\'e}rez, B.~Granger, M.~Bussonnier,
  J.~Frederic, K.~Kelley, J.~Hamrick, J.~Grout, S.~Corlay, P.~Ivanov, D.~Avila,
  S.~Abdalla, and C.~Willing, ``Jupyter notebooks -- a publishing format for
  reproducible computational workflows,'' in \emph{Positioning and Power in
  Academic Publishing: Players, Agents and Agendas}, F.~Loizides and
  B.~Schmidt, Eds.\hskip 1em plus 0.5em minus 0.4em\relax IOS Press, 2016, pp.
  87 -- 90.

\bibitem[Li and Burdick(2017)]{li2017online}
K.~Li and J.~W. Burdick, ``Online inverse reinforcement learning via bellman
  gradient iteration,'' \emph{arXiv preprint arXiv:1707.09393}, 2017.

\bibitem[Pedregosa et~al.(2011)Pedregosa, Varoquaux, Gramfort, Michel, Thirion,
  Grisel, Blondel, Prettenhofer, Weiss, Dubourg, Vanderplas, Passos,
  Cournapeau, Brucher, Perrot, and Duchesnay]{scikit-learn}
F.~Pedregosa, G.~Varoquaux, A.~Gramfort, V.~Michel, B.~Thirion, O.~Grisel,
  M.~Blondel, P.~Prettenhofer, R.~Weiss, V.~Dubourg, J.~Vanderplas, A.~Passos,
  D.~Cournapeau, M.~Brucher, M.~Perrot, and E.~Duchesnay, ``Scikit-learn:
  Machine learning in {P}ython,'' \emph{Journal of Machine Learning Research},
  vol.~12, pp. 2825--2830, 2011.

\end{thebibliography}
\bibliographystyle{IEEEtranSN}

\section*{Checklist}


\begin{enumerate}

\item For all authors...
\begin{enumerate}
  \item Do the main claims made in the abstract and introduction accurately reflect the paper's contributions and scope?
    \answerYes{}
  \item Did you describe the limitations of your work?
    \answerYes{We explicitly discussed the assumptions of our models in our problem setup and limitations in \cref{sec:conclusion}.}
  \item Did you discuss any potential negative societal impacts of your work?
    \answerNo{Our work aims to enable human-like AI behaviors and facilitate human-robot collaboration. At this moment, we cannot think of any negative societal impacts of our work.}
  \item Have you read the ethics review guidelines and ensured that your paper conforms to them?
    \answerYes
\end{enumerate}

\item If you are including theoretical results...
\begin{enumerate}
  \item Did you state the full set of assumptions of all theoretical results?
    \answerYes
	\item Did you include complete proofs of all theoretical results?
    \answerYes
\end{enumerate}

\item If you ran experiments...
\begin{enumerate}
  \item Did you include the code, data, and instructions needed to reproduce the main experimental results (either in the supplemental material or as a URL)?
    \answerYes{Details are in \cref{sec:exp}, \cref{sup:sec:exp} and supplementary codes.}
  \item Did you specify all the training details (e.g., data splits, hyperparameters, how they were chosen)?
    \answerYes{Details are in \cref{sec:exp}, \cref{sup:sec:exp} and supplementary codes.}
	\item Did you report error bars (e.g., with respect to the random seed after running experiments multiple times)?
    \answerYes{We use 20 random seeds and report the standard deviation (67\% confidence interval)}
	\item Did you include the total amount of compute and the type of resources used (e.g., type of GPUs, internal cluster, or cloud provider)?
    \answerYes{See supplementary \cref{sup:sec:exp}}
\end{enumerate}

\item If you are using existing assets (e.g., code, data, models) or curating/releasing new assets...
\begin{enumerate}
  \item If your work uses existing assets, did you cite the creators?
    \answerYes{}
  \item Did you mention the license of the assets?
    \answerYes{Cifar-10 has the MIT license and ImageNet data are downloaded from \href{https://www.image-net.org/download}{https://www.image-net.org/download}}
  \item Did you include any new assets either in the supplemental material or as a URL?
    \answerNo{But we discussed how our data are generated and included the source code in the supplementary code.}
  \item Did you discuss whether and how consent was obtained from people whose data you're using/curating?
    \answerNo{The data we used are open-source and our usage comply with their terms.}
  \item Did you discuss whether the data you are using/curating contains personally identifiable information or offensive content?
    \answerNo{The data we used doesn't include any personally identifiable information.}
\end{enumerate}

\item If you used crowdsourcing or conducted research with human subjects...
\begin{enumerate}
  \item Did you include the full text of instructions given to participants and screenshots, if applicable?
    \answerYes{See supplementary \cref{sup:sec:exp-human} for screenshots and supplementary Jupyter Notebook for instructions.}
  \item Did you describe any potential participant risks, with links to Institutional Review Board (IRB) approvals, if applicable?
    \answerYes
  \item Did you include the estimated hourly wage paid to participants and the total amount spent on participant compensation?
    \answerNo{Our data was collected from student volunteers.}
\end{enumerate}

\end{enumerate}




\makeatletter\@input{yy.tex}\makeatother
\end{document}


\maketitle
\tableofcontents
\section{Proofs and Derivations}\label{sup:sec:proofs}
\subsection{Gradient Derivation}\label{sup:sec:proof-gradient}

\begin{align*}
    \widehat{TV}_{\nu}(\Tilde{x}, y|\nu^{t-1}) &= -\eta_t^2\left\|\frac{\partial l(\langle \Tilde{x}, \nu^{t-1}\rangle, y)}{\partial \nu^{t-1}}\right\|_2^2+2\eta_t\Big(l\big(\langle \Tilde{x}, \nu^{t-1}\rangle, y\big)-l\big(\langle \Tilde{x}, \nu\rangle, y\big)\Big)\\
    \text{Denote } g_x(\nu) &= \frac{\partial l(\langle \Tilde{x}, \nu\rangle, y)}{\partial \nu}\\
    \frac{\partial \log q_{\nu}(\Tilde{x}^t, y^t|\nu^{t-1}, D)}{\partial \nu} &= \frac{\partial}{\partial \nu}\Big(\beta_t \widehat{TV}_{\nu}(x^t, y^t|\nu^{t-1}) - \log\int_{(x, y)\in D}\exp\big(\beta_t \widehat{TV}_{\nu}(x, y|\nu^{t-1})\big)\Big)\\
    &= -2\beta_t\eta_t g_{x^t}(\nu) +\frac{2\beta_t\eta_t\int_{(x, y)\sim D}g_x(\nu) \exp(-\beta_t \widehat{TV}_{\nu}(x, y|\nu^{t-1}))}{\int_{(x, y)\in D}\exp\big(-\beta_t \widehat{TV}_{\nu}(x, y|\nu^{t-1})\big)}\\
    &= -2\beta_t\eta_t (g_{x^t}(\nu) - \mathbb{E}_{x\sim q_{\nu}}[g_x(\nu)])
\end{align*}

\subsection{Proof of Theorem \ref{thm:local}}\label{sup:sec:proof-thm1}
For simplicity, let $g_x$ denote $g_x(\Tilde{\nu}^{t})$ in this proof. First we provide an intuition for the assumption. Suppose we have $\hat{\nu}^{t} = \Tilde{\nu}^{t}-\eta_t g_{\hat{x}^t}$, then moving from $\hat{\nu}^{t}$ to $\nu^{t}$ follows $\eta_t(g_{\hat{x}^t} - g_{x^t})$. The assumption $\langle \Tilde\nu^t - \nu^*, g_{x^t}-g_{\hat x^t} \rangle = \langle \nu^* - \Tilde{\nu}^t, g_{\hat{x}^t} - g_{x^t}\rangle> 0$ simply suggests that updating with $x^t$ gives the learner an advantage over updating with $\hat{x}^t$. The advantage points to $\nu^*$ (the two vectors $\nu^* - \Tilde{\nu}^t$ and $g_{\hat{x}^t} - g_{x^t}$ form an acute angle).

Next, we start the proof. We need the following lemma:
\begin{lemma}\label{lm:gradient}
Denote $\hat x^t$ as the $x$ which achieves the second largest $\widehat{TV}_{\nu^*}(\tilde x,y|\nu^{t-1})$. Suppose that $\langle \Tilde\nu^t - \nu^*, g_{x^t}-g_{\hat x^t} \rangle > 0$, then there exists $\alpha>0$ such that with large enough $\beta_t$, we have
\begin{align}
    \big\|\beta_t\eta_t^2(g_{x^t}-\mathbb{E}_{x~\sim q_{\Tilde{\nu}^{t}}}[g_x])\big\|_2 \leq \alpha\| \Tilde\nu^t - \nu^*\|_2\label{lm:main1}
\end{align}
and 
\begin{align}
    \langle \Tilde\nu^t - \nu^*,g_{x^t} - \mathbb{E}_{x~\sim q_{\Tilde{\nu}^{t}}}[g_x]\rangle \geq \alpha \|\Tilde\nu^t - \nu^*\|_2\big\|g_{x^t} - \mathbb{E}_{x~\sim q_{\Tilde{\nu}^{t}}}[g_x]\big\|_2.\label{lm:main2}
\end{align}
\end{lemma}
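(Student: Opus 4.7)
The central tool is the fact that $q_{\tilde\nu^t}(\cdot,y|\nu^{t-1},D)\propto \exp(\beta_t\widehat{TV}_{\tilde\nu^t}(\cdot,y|\nu^{t-1}))$ is a Gibbs distribution with inverse temperature $\beta_t$, and so it concentrates exponentially fast on $\argmax_{x\in D}\widehat{TV}_{\tilde\nu^t}(x,y|\nu^{t-1})$ as $\beta_t\to\infty$. By construction $x^t$ and $\hat x^t$ are the unique top-two maximizers of $\widehat{TV}_{\nu^*}$; since $\widehat{TV}_\nu$ is continuous in $\nu$, they remain the unique top two of $\widehat{TV}_{\tilde\nu^t}$ throughout a neighborhood of $\nu^*$, on which the gap $\Delta(\tilde\nu^t):=\widehat{TV}_{\tilde\nu^t}(x^t)-\widehat{TV}_{\tilde\nu^t}(\hat x^t)$ is bounded below by some $\Delta_0/2>0$. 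I would restrict attention to this neighborhood for the rest of the argument.

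The key step is a leading-order expansion of the residual. Factoring $\exp(\beta_t\widehat{TV}_{\tilde\nu^t}(x^t))$ out of numerator and denominator in $\mathbb{E}_{x\sim q_{\tilde\nu^t}}[g_x]$ and collecting terms yields
\begin{align*}
g_{x^t}-\mathbb{E}_{x\sim q_{\tilde\nu^t}}[g_x] \;=\; e^{-\beta_t\Delta(\tilde\nu^t)}\bigl(g_{x^t}-g_{\hat x^t}\bigr)\bigl(1+o(1)\bigr),
\end{align*}
where the $o(1)$ as $\beta_t\to\infty$ collects contributions from every $x\notin\{x^t,\hat x^t\}$, each suppressed by an additional exponential factor coming from the gap between $\widehat{TV}_{\tilde\nu^t}(\hat x^t)$ and the third-largest value. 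This expansion identifies both the magnitude of the residual (decaying like $e^{-\beta_t\Delta}$) and its direction (asymptotically parallel to $g_{x^t}-g_{\hat x^t}$).

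Inequality \eqref{lm:main1} then follows because its left-hand side is $O\!\bigl(\beta_t\eta_t^2\,e^{-\beta_t\Delta(\tilde\nu^t)}\|g_{x^t}-g_{\hat x^t}\|_2\bigr)$, and the exponential decay dominates the polynomial factor $\beta_t$, so for any $\alpha>0$ and any $\tilde\nu^t\neq\nu^*$ in the neighborhood the left-hand side falls below $\alpha\|\tilde\nu^t-\nu^*\|_2$ once $\beta_t$ is sufficiently large. For \eqref{lm:main2}, substituting the expansion into both sides cancels the common scalar $e^{-\beta_t\Delta(\tilde\nu^t)}$, reducing the ratio of left-hand side to right-hand side to the cosine of the angle between $\tilde\nu^t-\nu^*$ and $g_{x^t}-g_{\hat x^t}$, plus an $o(1)$. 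The acute-angle assumption $\langle\tilde\nu^t-\nu^*,g_{x^t}-g_{\hat x^t}\rangle>0$ makes this cosine strictly positive and bounded below by some constant $c>0$ on the (compact-closure) neighborhood, so setting $\alpha:=c/2$ validates \eqref{lm:main2}.

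The main obstacle I anticipate is uniformity: the two inequalities push $\alpha$ in opposite directions, so one must fix the neighborhood of $\nu^*$ first (so that the top-two structure, the gap lower bound $\Delta\geq\Delta_0/2$, and the cosine lower bound all hold uniformly in $\tilde\nu^t$), commit to $\alpha$ based on \eqref{lm:main2}, and only then choose $\beta_t$ large enough to enforce \eqref{lm:main1}. A secondary technical point is controlling the $o(1)$ remainder uniformly in $\tilde\nu^t$, which is routine if $D$ is finite or if $\|g_x\|_2$ is uniformly bounded over $D$.
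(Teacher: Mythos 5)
Your proposal is correct and follows essentially the same route as the paper: both arguments rest on the Gibbs-distribution concentration, showing that $g_{x^t}-\mathbb{E}_{x\sim q_{\Tilde{\nu}^{t}}}[g_x]$ decays like $e^{-\beta_t\Delta}$ (so the exponential beats the factor $\beta_t$, giving \eqref{lm:main1}) while its direction converges to that of $g_{x^t}-g_{\hat x^t}$, and both then take $\alpha$ to be half the cosine of the angle between $\Tilde\nu^t-\nu^*$ and $g_{x^t}-g_{\hat x^t}$ to obtain \eqref{lm:main2}. Your remark about fixing $\alpha$ from \eqref{lm:main2} before sending $\beta_t\to\infty$ is exactly how the paper's choice of $\alpha$ operates.
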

\begin{proof}[Proof of Lemma~\ref{lm:gradient}]
We set $\alpha = \langle \Tilde\nu^t - \nu^*, g_{x^t}-g_{\hat x^t} \rangle/(2\| \Tilde\nu^t - \nu^*\|_2\|g_{x^t}-g_{\hat x^t}\|_2)>0$. 

    First we show that $\big\|\beta_t\eta_t^2(g_{x^t}-\mathbb{E}_{x~\sim q_{\Tilde{\nu}^{t}}}[g_x])\big\|_2 \leq \alpha\| \Tilde\nu^t - \nu^*\|_2$. For simplicity we denote $s(x) = \widehat{TV}_{\Tilde{\nu}^{t}}(\tilde x,y|\nu^{t-1})$. Then by assumption on the selection of $x^t$ we have $x^t= \argmax_{x \in D^t} s(x)$ and
    \begin{align}
        &(g_{x^t} - \mathbb{E}_{x~\sim q_{\Tilde{\nu}^{t}}}[g_x])\int_{x'\in D}\exp(\beta_t s(x'))\notag \\
        & = \bigg(g_{x^t} - \frac{\int_{x'\in D}\exp(\beta_t s(x'))g_{x'}}{\int_{x'\in D}\exp(\beta_t s(x'))}\bigg)\int_{x'\in D}\exp(\beta_t s(x'))\notag \\
        & = \exp(\beta_ts(x^t))[g_{x^t} - g_{x^t}] + \exp(\beta_ts(\hat x^t))[g_{x^t}-g_{\hat x^t}] + \sum_{x \neq x^t, \hat x^t} \exp(\beta_ts(x))[g_{x^t}-g_{x}]\notag\\
        & = \exp(\beta_ts(\hat x^t))[g_{x^t}-g_{\hat x^t}] + \sum_{x \neq x^t, \hat x^t} \exp(\beta_ts(x))[g_{x^t}-g_{x}].\label{eq:lm_1}
    \end{align}
    Therefore, denote $\xi_{t-1} = s(\hat x^t) - s(x^t)<0$, we have that when $\beta_t \rightarrow \infty$,
    \begin{align}
        &\beta_t\big\|g_{x^t} - \mathbb{E}_{x~\sim q_{\Tilde{\nu}^{t}}}[g_x]\big\|_2\notag \\
        &\leq \beta_t\big\|g_{x^t} - \mathbb{E}_{x~\sim q_{\Tilde{\nu}^{t}}}[g_x]\big\|_2 \int_{x'\in D}\exp(\beta_t [s(x') - s(x^t)])\notag \\
        & =  \beta_t\exp(-\beta_t s(x^t))\bigg\|(g_{x^t} - \mathbb{E}_{x~\sim q_{\Tilde{\nu}^{t}}}[g_x]) \int_{x'\in D}\exp(\beta_t s(x'))\bigg\|_2\notag \\
        & = \beta_t\bigg\|\exp(\beta_t[s(\hat x^t) - s(x^t)])[g_{x^t}-g_{\hat x^t}] + \sum_{x \neq x^t, \hat x^t} \exp(\beta_t[s(x) - s(x^t)])[g_{x^t}-g_{x}]\bigg\|_2\notag \\
        & \leq \beta_t\sum_{x \neq x^t}\bigg\|\exp(\beta_t[s(x) - s(x^t)])[g_{x^t}-g_{x}] \bigg\|_2\notag \\
        & \leq \beta_t|D|\exp\big(\beta_t \xi_{t-1}\big)\max_{x' \in D}\|g_{x^t} - g_{x'}\|_2\notag \\
        & \leq \beta_t|D|\exp\big(\beta_t \xi_{t-1}\big)\max_{x' \in D}\big(\|g_{x^t}\|_2 +\|g_{x'}\|_2\big)\notag \\
        & = 2\beta_t|D|\exp\big(\beta_t \xi_{t-1}\big)G\notag \\
        &\rightarrow 0,\notag
    \end{align}
    where the first inequality holds due to the fact $s(x') < s(x^t)$ for any $x' \in D$, the second inequality holds due to triangle inequality, the third inequality holds due to the facts $\big(s(x) - s(x^t)\big) < \xi_{t-1}$ for any $x \neq x^t$, the fourth inequality holds due to the assumption that $\|g_x\|_2 \leq G$, the last line holds due to the fact that $x\exp(ax)\rightarrow 0$ for $a<0$ and $x \rightarrow \infty$. Therefore, taking large enough $\beta_t$, we have
    \begin{align}
        \big\|\beta_t\eta_t^2(g_{x^t}-\mathbb{E}_{x~\sim q_{\Tilde{\nu}^{t}}}[g_x])\big\|_2 \leq \alpha\| \Tilde\nu^t - \nu^*\|_2.\notag
    \end{align}
    Next we show that $\langle \Tilde\nu^t - \nu^*,g_{x^t} - \mathbb{E}_{x~\sim q_{\Tilde{\nu}^{t}}}[g_x]\rangle \geq \alpha \|\Tilde\nu^t - \nu^*\|_2\big\|g_{x^t} - \mathbb{E}_{x~\sim q_{\Tilde{\nu}^{t}}}[g_x]\big\|_2$. 
     From \eqref{eq:lm_1} we have
    \begin{align}
        &(g_{x^t} - \mathbb{E}_{x~\sim q_{\Tilde{\nu}^{t}}}[g_x])\exp(-\beta_ts(\hat x^t))\int_{x'\in D}\exp(\beta_t s(x'))\notag \\
        & = g_{x^t}-g_{\hat x^t} + \underbrace{\sum_{x \neq x^t, \hat x^t} \exp(\beta_t[s(x) - s(\hat x^t)])[g_{x^t}-g_{x}]}_{g(\beta_t)},\notag 
    \end{align}
    For $g(\beta_t)$, denote $\hat\xi_{t-1} = \min_{x \neq x^t, \hat x^t}[  s(x) - s(\hat x^t)] <0$. Then when $\beta_t \rightarrow \infty$, we have
    \begin{align}
        \|g(\beta_t)\|_2 &\leq \sum_{x \neq x^t, \hat x^t} \bigg\|\exp(\beta_t[s(x) - s(\hat x^t)])[g_{x^t}-g_{x}]\bigg\|_2 \notag \\
        &\leq |D|\exp(\beta_t \hat\xi_{t-1})\max_{x \in D}\|g_{x^t}-g_{x}\|_2\notag \\
        &\leq 2G|D|\exp(\beta_t \hat\xi_{t-1})\notag \\
        &\rightarrow 0, \label{eq:lm_10}
    \end{align}
    where the first inequality holds due to triangle inequality, the second inequality holds due to the fact $s(x) - s(\hat x^t) < \hat \xi_{t-1}$, the third inequality holds due to the assumption that $\|g_x\|_2 \leq G$, the last line holds because $\exp(-x)\rightarrow 0$ when $x \rightarrow \infty$. Thus when $\beta_t \rightarrow \infty$, we have
    \begin{align}
\frac{g_{x^t} - \mathbb{E}_{x~\sim q_{\Tilde{\nu}^{t}}}[g_x]}{\|g_{x^t} - \mathbb{E}_{x~\sim q_{\Tilde{\nu}^{t}}}[g_x]\|_2} &= \frac{(g_{x^t} - \mathbb{E}_{x~\sim q_{\Tilde{\nu}^{t}}}[g_x])\exp(-\beta_ts(\hat x^t))\int_{x'\in D}\exp(\beta_t s(x'))}{\big\|(g_{x^t} - \mathbb{E}_{x~\sim q_{\Tilde{\nu}^{t}}}[g_x])\exp(-\beta_ts(\hat x^t))\int_{x'\in D}\exp(\beta_t s(x'))\big\|_2}\notag \\
& = \frac{g_{x^t}-g_{\hat x^t}+g(\beta_t)}{\|g_{x^t}-g_{\hat x^t}+g(\beta_t)\|_2}\notag \\
& \rightarrow \frac{g_{x^t}-g_{\hat x^t}}{\|g_{x^t}-g_{\hat x^t}\|_2},\notag
    \end{align}
    where the last line holds due to $g(\beta_t) \rightarrow 0$ from \eqref{eq:lm_10}. 
    Therefore, we know that for large enough $\beta_t$, we have
    \begin{align}
        \bigg|\bigg\langle \frac{\Tilde\nu^t - \nu^*}{\|\Tilde\nu^t - \nu^*\|_2}, \frac{g_{x^t} - \mathbb{E}_{x~\sim q_{\Tilde{\nu}^{t}}}[g_x]}{\|g_{x^t} - \mathbb{E}_{x~\sim q_{\Tilde{\nu}^{t}}}[g_x]\|_2} \bigg\rangle - \bigg\langle \frac{\Tilde\nu^t - \nu^*}{\|\Tilde\nu^t - \nu^*\|_2}, \frac{g_{x^t}-g_{\hat x^t}}{\|g_{x^t}-g_{\hat x^t}\|_2} \bigg\rangle\bigg| \leq \alpha,\notag
    \end{align}
Finally, due to the fact $\langle \Tilde\nu^t - \nu^*, g_{x^t}-g_{\hat x^t} \rangle = 2\alpha \| \Tilde\nu^t - \nu^*\|_2\|g_{x^t}-g_{\hat x^t}\|_2$ from \eqref{lm:main2}, we have
\begin{align}
    \bigg\langle \frac{\Tilde\nu^t - \nu^*}{\|\Tilde\nu^t - \nu^*\|_2}, \frac{g_{x^t} - \mathbb{E}_{x~\sim q_{\Tilde{\nu}^{t}}}[g_x]}{\|g_{x^t} - \mathbb{E}_{x~\sim q_{\Tilde{\nu}^{t}}}[g_x]\|_2} \bigg\rangle \geq \bigg\langle \frac{\Tilde\nu^t - \nu^*}{\|\Tilde\nu^t - \nu^*\|_2}, \frac{g_{x^t}-g_{\hat x^t}}{\|g_{x^t}-g_{\hat x^t}\|_2} \bigg\rangle-\alpha = \alpha.\notag
\end{align}
\end{proof}
Now we prove the main theorem. 
\begin{proof}[Proof of Theorem~\ref{thm:local}]
    The naive learner and the teacher-aware learner, after receiving $(x^t, y^t)$, will update their model to $\Tilde{\nu}^t =(\nu^{t-1} - \eta_tg_{x^t})$ and $\nu^t = \big(\nu^{t-1} - \eta_tg_{x^t}-2\beta_t \eta_t^2 (g_{x^t} - \mathbb{E}_{x\sim q_{\Tilde{\nu}^{t}}}[g_x])\big)$ respectively. Then with large enough $\beta_t$, we have
    \begin{align}
        &\|\nu^t - \nu^*\|_2^2\notag \\
        & = \big\|\Tilde\nu^t -\nu^*- 2\beta_t\eta_t^2(g_{x^t}-\mathbb{E}_{x~\sim q_{\Tilde{\nu}^{t}}}[g_x])\big\|_2^2\notag \\
        & = \|\Tilde\nu^t - \nu^*\|_2^2 - 4\langle \Tilde\nu^t - \nu^* ,  \beta_t\eta_t^2(g_{x^t}-\mathbb{E}_{x~\sim q_{\Tilde{\nu}^{t}}}[g_x])\rangle + 4\big\|\beta_t\eta_t^2(g_{x^t}-\mathbb{E}_{x~\sim q_{\Tilde{\nu}^{t}}}[g_x])\big\|_2^2\notag \\
        & \leq \|\Tilde\nu^t - \nu^*\|_2^2 - 4\alpha\| \Tilde\nu^t - \nu^*\|_2  \big\|\beta_t\eta_t^2(g_{x^t}-\mathbb{E}_{x~\sim q_{\Tilde{\nu}^{t}}}[g_x])\big\|_2 + 4\big\|\beta_t\eta_t^2(g_{x^t}-\mathbb{E}_{x~\sim q_{\Tilde{\nu}^{t}}}[g_x])\big\|_2^2\notag \\
        & \leq \|\Tilde\nu^t - \nu^*\|_2^2 - 4\alpha\| \Tilde\nu^t - \nu^*\|_2  \big\|\beta_t\eta_t^2(g_{x^t}-\mathbb{E}_{x~\sim q_{\Tilde{\nu}^{t}}}[g_x])\big\|_2 \notag \\
        &\qquad + 4\alpha\| \Tilde\nu^t - \nu^*\|_2  \big\|\beta_t\eta_t^2(g_{x^t}-\mathbb{E}_{x~\sim q_{\Tilde{\nu}^{t}}}[g_x])\big\|_2\notag \\
        & = \|\Tilde\nu^t - \nu^*\|_2^2,\notag
    \end{align}
    where the first inequality holds due to \eqref{lm:main2} in Lemma \ref{lm:gradient}, the second inequality holds due to \eqref{lm:main1} in Lemma \ref{lm:gradient}. 
\end{proof}

\subsection{Proof of Corollary \ref{thm:global}}
\begin{proof}
    Let $\nu_a$ and $\nu_b$ be the model parameter of the naive learner and the teacher-aware learner. Denote $\widehat{TV}_{\nu^*}^E(\nu) = \max_{x \in E}\widehat{TV}_{\nu^*}(x, y|\nu)$, 
    $E$ is some dataset. Let $x^*$ denote the argmax of $\max_{x \in D}\widehat{TV}_{\nu^*}(x, y|\nu)$, then by the assumption on $D^t$, we know that there exists $x'$ such that $\|x' - x^*\|_2 \leq \epsilon/(TL(\eta_t^2 + 4\eta_t))$. Then we have
\begin{align}
       &\widehat{TV}_{\nu^*}^D(\nu) -  \widehat{TV}^{D^t}_{\nu^*}(\nu) \notag \\
       &= \max_{x \in D}(-\eta_t^2 g_x(\nu) +2\eta_t(l(\nu, x) - l(\nu^*, x)) - \max_{x \in D^t}(-\eta_t^2 g_x(\nu) +2\eta_t(l(\nu, x) - l(\nu^*, x))\notag \\
       & \leq (-\eta_t^2 g_{x^*}(\nu) +2\eta_t(l(\nu, x^*) - l(\nu^*, x^*)) - (-\eta_t^2 g_{x'}(\nu) +2\eta_t(l(\nu, x') - l(\nu^*, x'))\notag \\
       & \leq L(\eta_t^2\|x^* - x'\|_2 + 4\eta_t\|x^* - x'\|_2)&\text{($L$-Lipschitz)}\notag \\
       & \leq \epsilon/T.
       \end{align}
    Now we prove that $\|\nu_b^t - \nu^*\|_2^2 \le \|\nu_a^t - \nu^*\|_2^2 + t/T\epsilon$ for all $1 \leq t \leq T$. As $\nu^0$ is the same for both learners, knowing theorem \ref{thm:local}, we have $\|\nu_b^1 - \nu^*\|_2^2 \le \|\nu_a^1 - \nu^*\|_2^2 + 1/T\epsilon$. Suppose $\|\nu_b^t - \nu^*\|_2^2 \le \|\nu_a^t - \nu^*\|_2^2 + t/T\epsilon$, then we have
    \begin{align*}
        &\|\nu^{t+1}_b-\nu^*\|\notag \\
        &\leq\|\nu_b^t - \nu^*\|_2^2-TV^{D^t}_{\nu^*}(\nu_b^t)\notag &\text{(Theorem \ref{thm:local})}\\
        &\leq\|\nu_b^t - \nu^*\|_2^2-\widehat{TV}^{D^t}_{\nu^*}(\nu_b^t)\notag &\text{(convexity of $l$)}\\
        & \leq \|\nu_b^t - \nu^*\|_2^2-\widehat{TV}^D_{\nu^*}(\nu_b^t) + \epsilon/T &\widehat{TV}_{\nu^*}^D(\nu) -  \widehat{TV}^{D^t}_{\nu^*}(\nu) \leq \epsilon\notag \\
        &\leq\|\nu_a^t - \nu^*\|_2^2 -\widehat{TV}^D_{\nu^*}(\nu_a^t) + (t+1)/T\epsilon&\text{(condition)}\notag \\
        & \leq \|\nu_a^t - \nu^*\|_2^2 -\widehat{TV}^{D^t}_{\nu^*}(\nu_a^t) + (t+1)/T\epsilon&\widehat{TV}_{\nu^*}^D(\nu) -  \widehat{TV}^{D^t}_{\nu^*}(\nu) \geq 0\notag \\
        &= 
        \|\nu_a^{t+1} - \nu^*\|_2^2 + (t+1)/T\epsilon
    \end{align*}
Therefore, we have $\|\nu_b^t - \nu^*\|_2^2 \le \|\nu_a^t - \nu^*\|_2^2 + t/T\epsilon$, which suggests that the teacher-aware learner can always converge no slower than the naive learner up to an $\epsilon$ factor. 
\end{proof}

\section{Detailed Experiment Settings}\label{sup:sec:exp}
We used two types of loss functions in all the experiment. For regression tasks, our loss function is
\begin{align*}
    \min_{\omega\in\mathbb{R}^d, b\in\mathbb{R}}\frac{1}{n}\sum_{i=1}^n\frac{1}{2}\big(\omega^Tx_i + b - y_i\big)^2+\frac{\lambda}{2}\|\omega\|_2^2
\end{align*}
For classification tasks, our loss function is
\begin{align*}
    \min_{\omega\in\mathbb{R}^d, b\in\mathbb{R}}\frac{1}{n}\sum_{i=1}^n\sum_{k=1}^K-\mathbf{1}(y_i=k)\log{p_{ik}}+\frac{\lambda}{2}\|\omega\|_2^2\\
    p_{ik} = \frac{\exp(\omega_k^Tx_i + b_k)}{\sum_{k'=1}^K\exp(\omega_{k'}^Tx_i+b_k')}
\end{align*}
where $\omega\in\mathbb{R}^{K\times d}$ and $\omega_k$ is the $k$-th row of $\omega$, $b\in\mathbb{R}^K$ and $b_k$ is the $k$-th element of $b$. The norm is Frobenius norm. In both regression and classification tasks, we refer to $[\omega, b]$ as $\omega^*$. In all the following experiments, we used a constant learning rate $10^{-3}$ for all the algorithms. The size of the minibatch was set to 20. As the gradient scale is different in different experiments, we used different $\beta$s. We chose the hyperparameter $\beta$ so that at the beginning of the learning, the data in the mini-batch with the smallest teaching volume has above $80\%$ probability of being selected. In the supplementary material, we show additional results of our experiments. All plots are consistent with the results in the main text. All of our experiments were run on machines with 16 I9-9900K cores and 64GiB RAM. The longest setting is the Tiny ImageNet classification, takes about 12 hours to finish (2000 iterations for 8 methods and 20 random seeds).

\subsection{Linear Models on Synthesized Data}\label{sup:sec:exp-linear}
\begin{figure}
    
    \centering
    \includegraphics[width=0.6\textwidth]{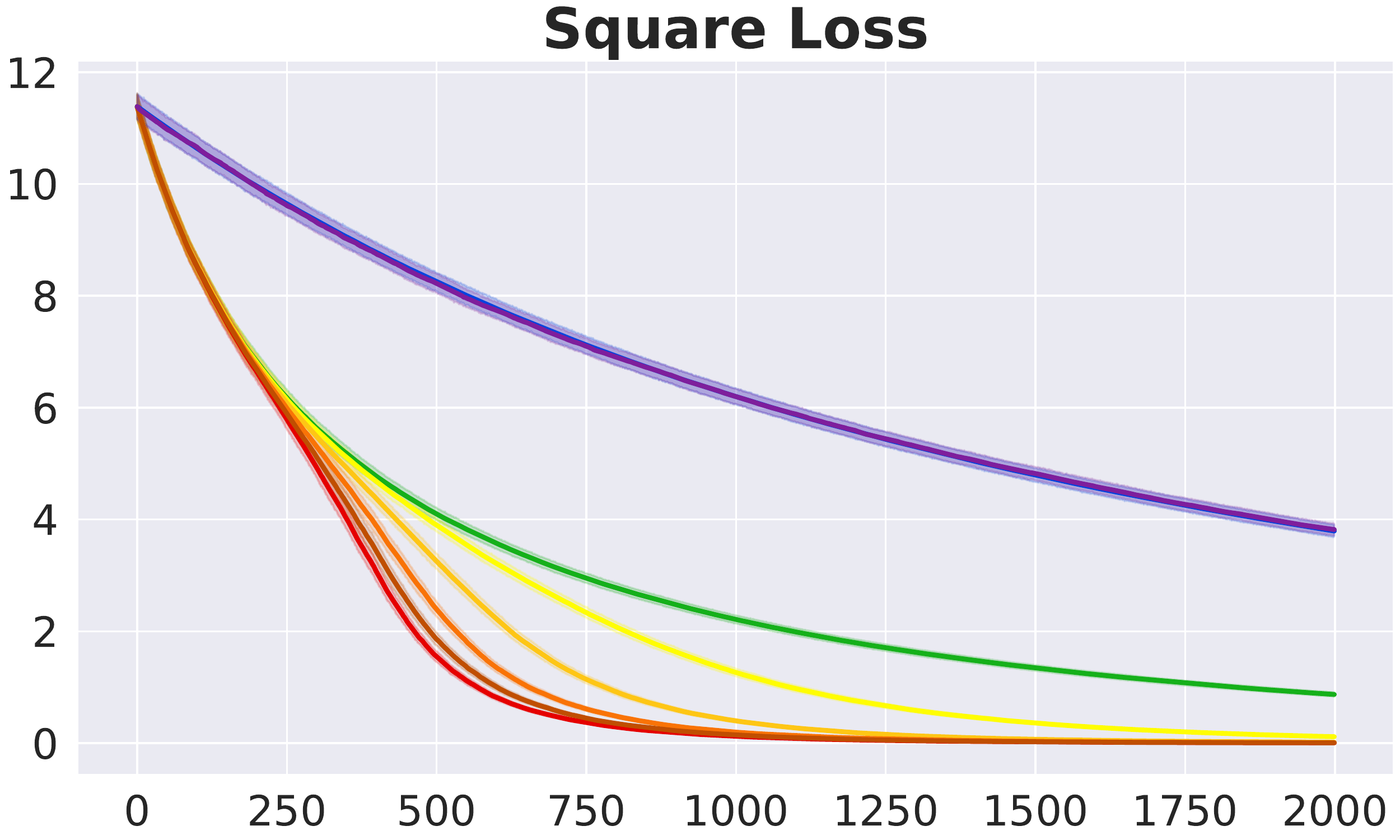}
    \caption{Square loss of the linear regression.}
    \label{sup:fig:regression}
\end{figure}
\begin{figure}[ht]
    \begin{subfigure}{0.49\textwidth}
        \centering
       \includegraphics[width=\textwidth]{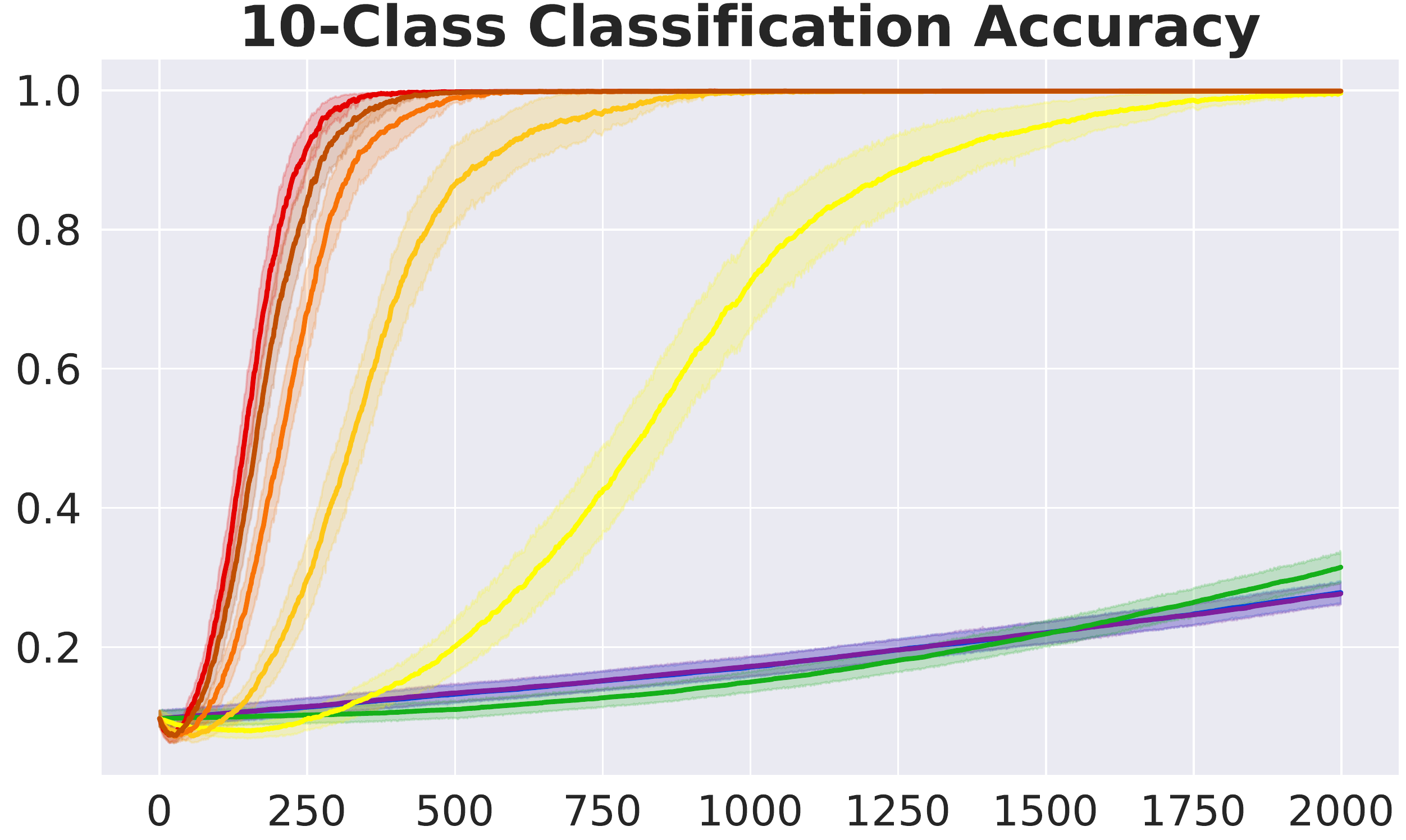}
    \end{subfigure}%
    ~
    \begin{subfigure}{0.49\textwidth}
        \centering
       \includegraphics[width=\textwidth]{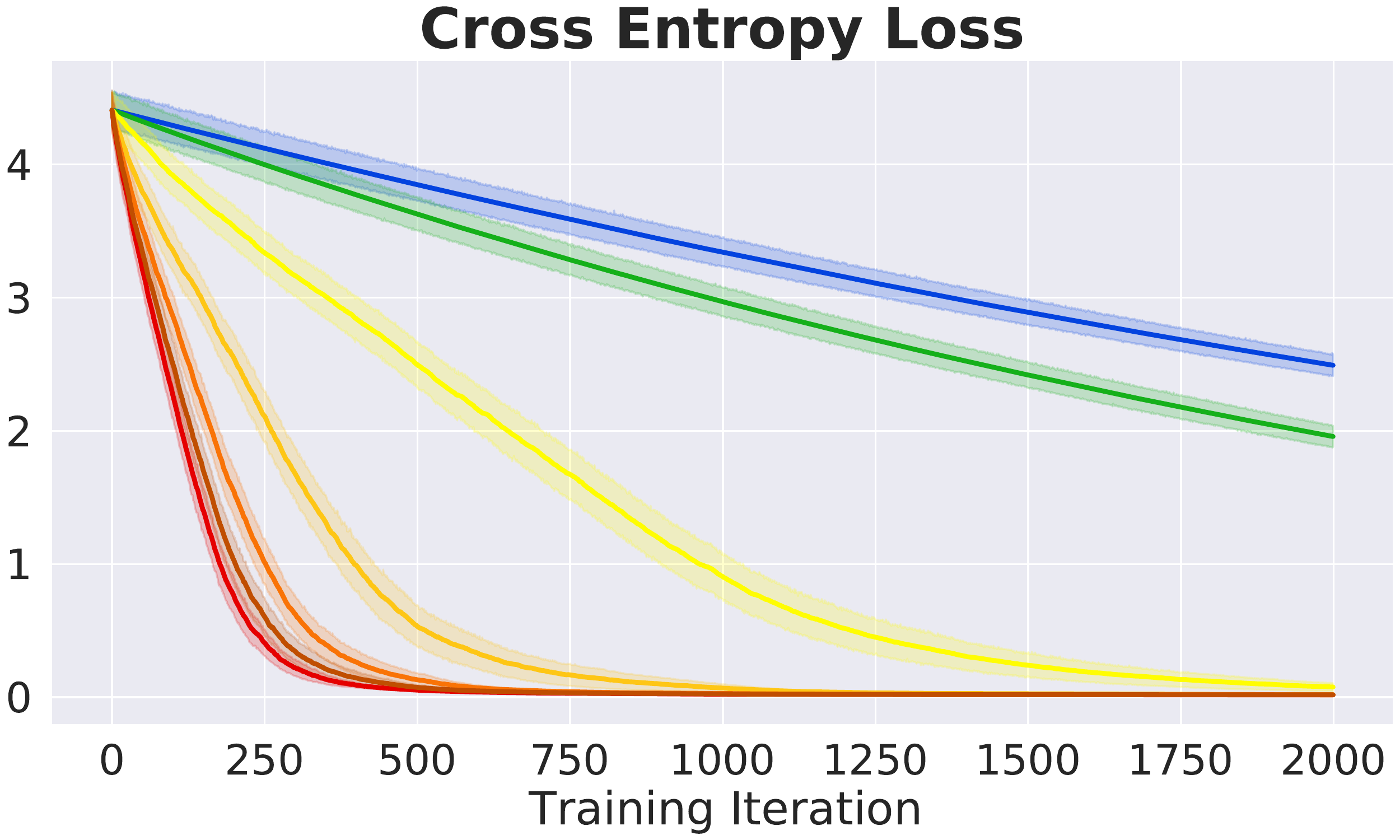}
    \end{subfigure}%
    
    \begin{subfigure}{\textwidth}
        \centering
        \includegraphics[width=\textwidth]{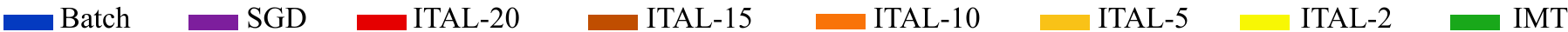}
    \end{subfigure}
    \caption{Classification accuracy and Cross Entropy loss of the 10-class Gaussian data classification.}
    \label{sup:fig:classification}
\end{figure}
For the regression task, both $\omega^*$ and $X$ are randomly generated from a uniform distribution, namely $\omega_i, b, X_{ij}\sim U[-1, 1]$. $Y = X\omega^*$. The data points have dimension 100, and $\beta$ is chosen to be 2000. For the classification task, we first randomly generate $K$ points as the center of each class from $U[-1, 1]$. Then, we use these points as the centers of Normal distributions with $\Sigma = 0.5I_{(d+1)}$. $N/K$ points are sampled from each distribution as the data. We get $\omega^*$ using the logistic regression model in Scikit-learn~\citep{scikit-learn}. For classification task with 30D data, we use $\beta = 60000$. We used $\lambda = 0$ for both tasks. For the scenario of different feature spaces, we use a random orthogonal projection matrix to generate the teacher’s feature space from the student’s. $\omega^*$ and $\nu^*$ are multiplied with the inverse of the projection matrix to preserve the inner product. Figure~\ref{sup:fig:regression} shows the square loss of the linear regression task. Figure \ref{sup:fig:classification} shows the classification accuracy and the Cross Entropy loss of 10-class classification tasks.

\subsection{Linear Classifiers on MNIST Dataset}\label{sup:sec:exp-MNIST}
\begin{table}[ht]
\centering
\caption{MNIST CNN structure}
\begin{tabular}{|c|c|c|c|}
\hline
       & 20-Dim CNN       & 24-Dim CNN      & 30-Dim CNN     \\ \hline
Conv 1 & \multicolumn{3}{c|}{1 layer, 64 {[}3$\times$3{]} filters, leaky ReLU}  \\ \hline
Pool   & \multicolumn{3}{c|}{2$\times$2 Max with Stride 2}          \\ \hline
Conv 2 & \multicolumn{3}{c|}{1 layers, 32 {[}3$\times$3{]} filters, leaky ReLU} \\ \hline
Pool   & \multicolumn{3}{c|}{2$\times$2 Max with Stride 2}          \\ \hline
Conv 3 & \multicolumn{3}{c|}{1 layer, 32 {[}3$\times$3{]} filters, leaky ReLU}  \\ \hline
FC     & 20, tanh               & 24, tanh              & 30, tanh             \\ \hline
\end{tabular}
\label{sup:tab:mnist-struct}
\end{table}
\begin{figure}
    \begin{subfigure}{0.32\textwidth}
        \centering
        \includegraphics[width=\textwidth]{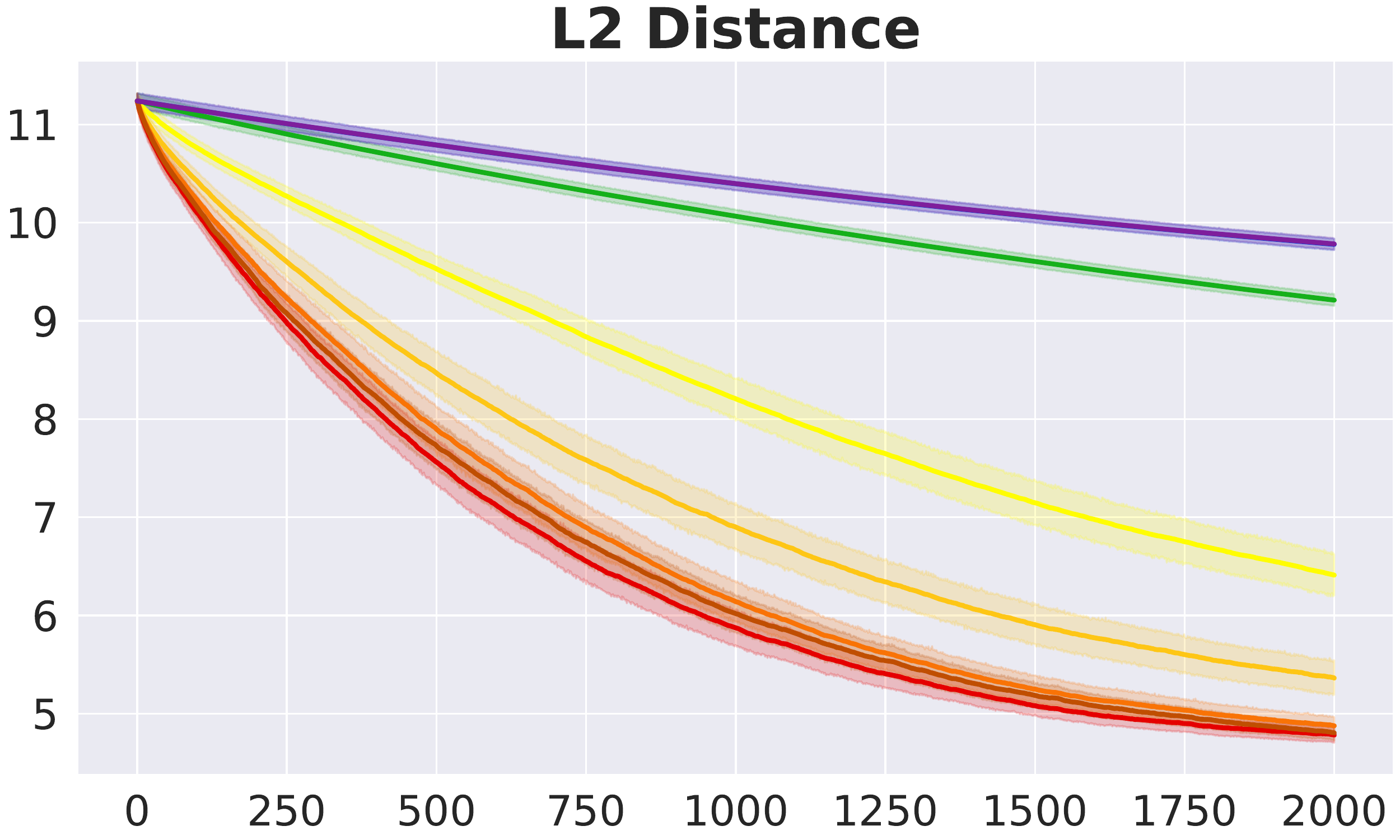}
    \end{subfigure}%
    ~
    \begin{subfigure}{0.32\textwidth}
        \centering
     \includegraphics[width=\textwidth]{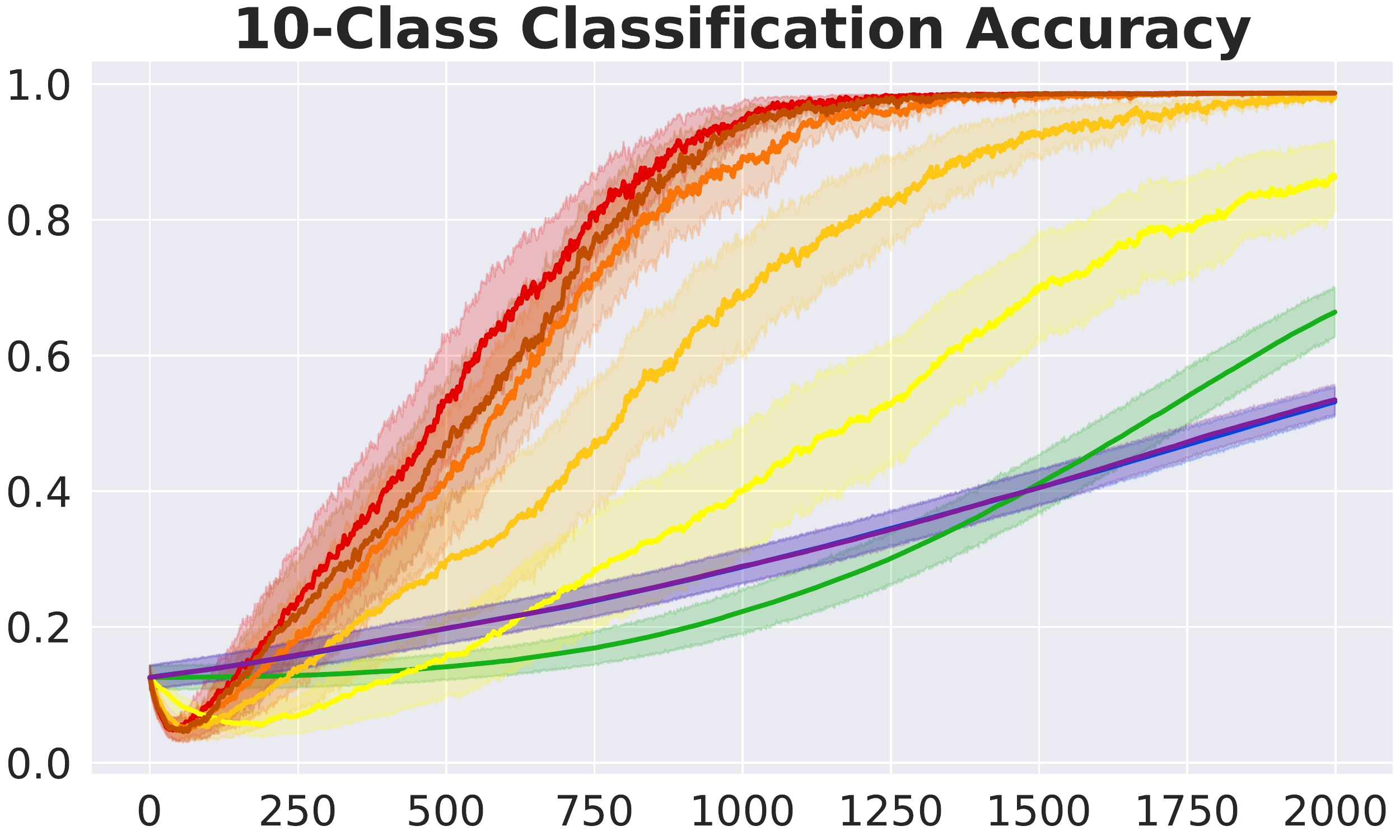}
    \end{subfigure}%
    ~
    \begin{subfigure}{0.32\textwidth}
        \centering
        \includegraphics[width=\textwidth]{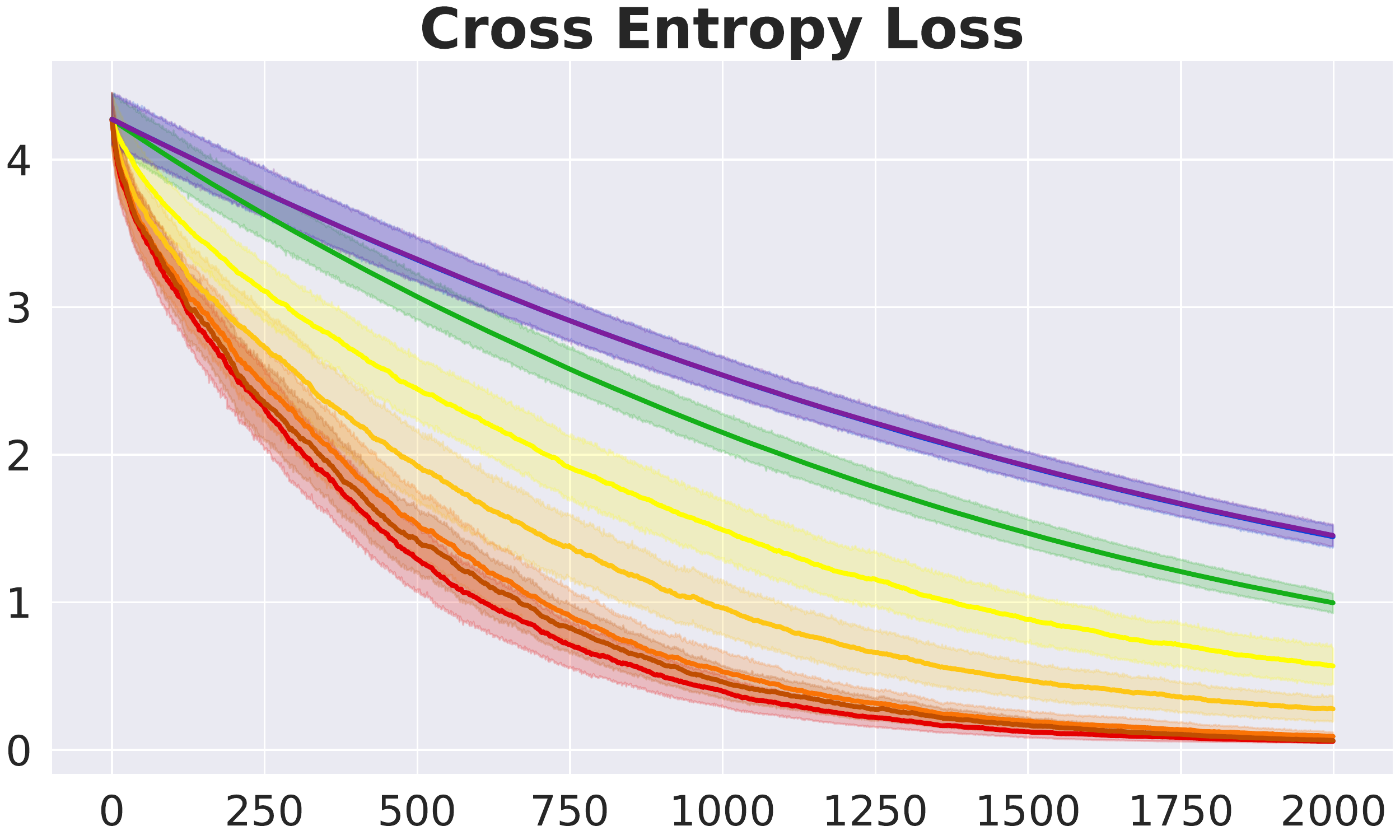}
    \end{subfigure}
    \begin{subfigure}{\textwidth}
        \centering
        \includegraphics[width=\textwidth]{Figures/imitate_legend.pdf}
    \end{subfigure}
    \caption{L2 distance, accuracy and Cross Entropy loss of the 10-class MNIST classification, in which the teacher uses 20D features.}
    \label{sup:fig:mnist-20}
\end{figure}
\begin{figure}[ht]
    \begin{subfigure}{0.32\textwidth}
        \centering
       \includegraphics[width=\textwidth]{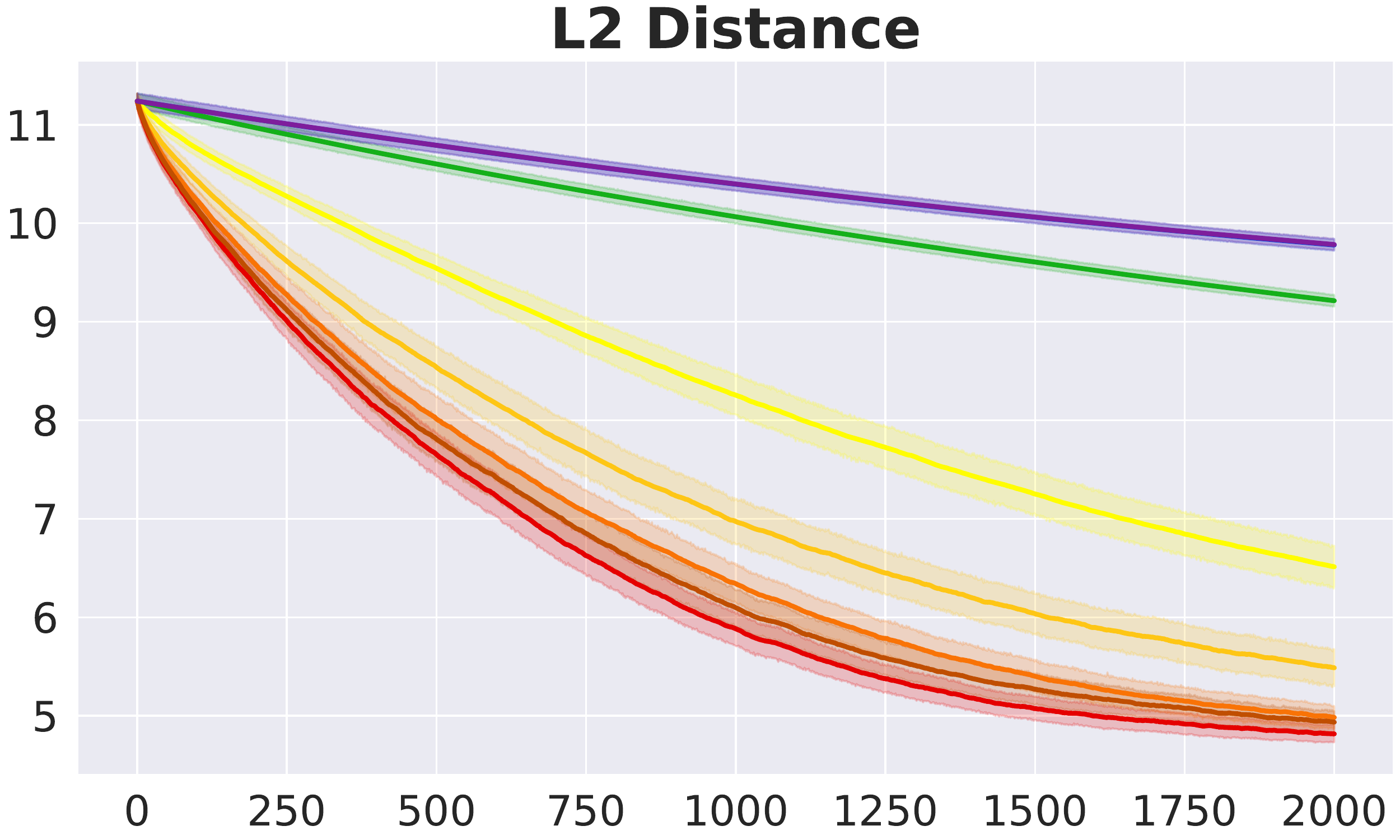}
    \end{subfigure}%
    ~
    \begin{subfigure}{0.32\textwidth}
        \centering
       \includegraphics[width=\textwidth]{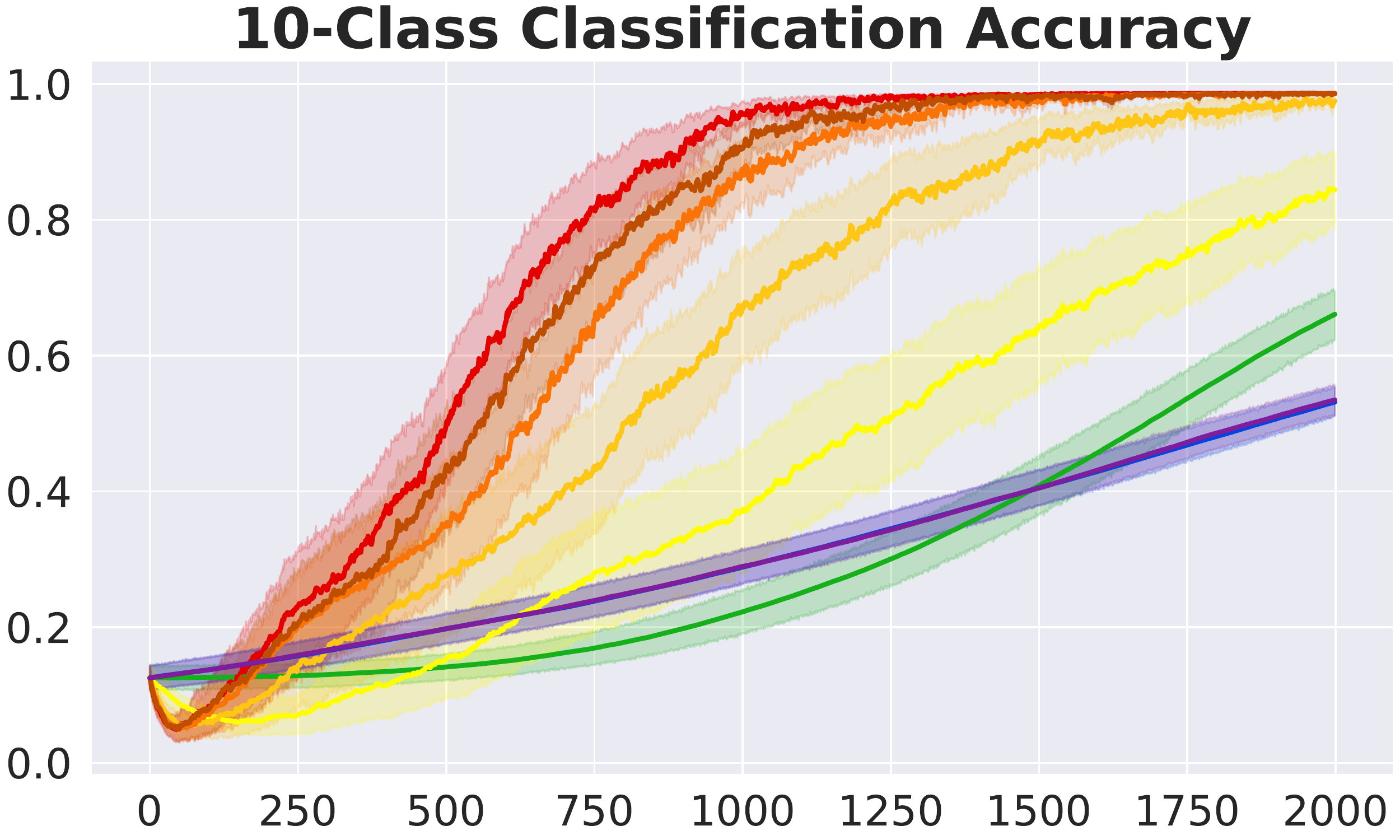}
    \end{subfigure}%
    ~
    \begin{subfigure}{0.32\textwidth}
        \centering
        \includegraphics[width=\textwidth]{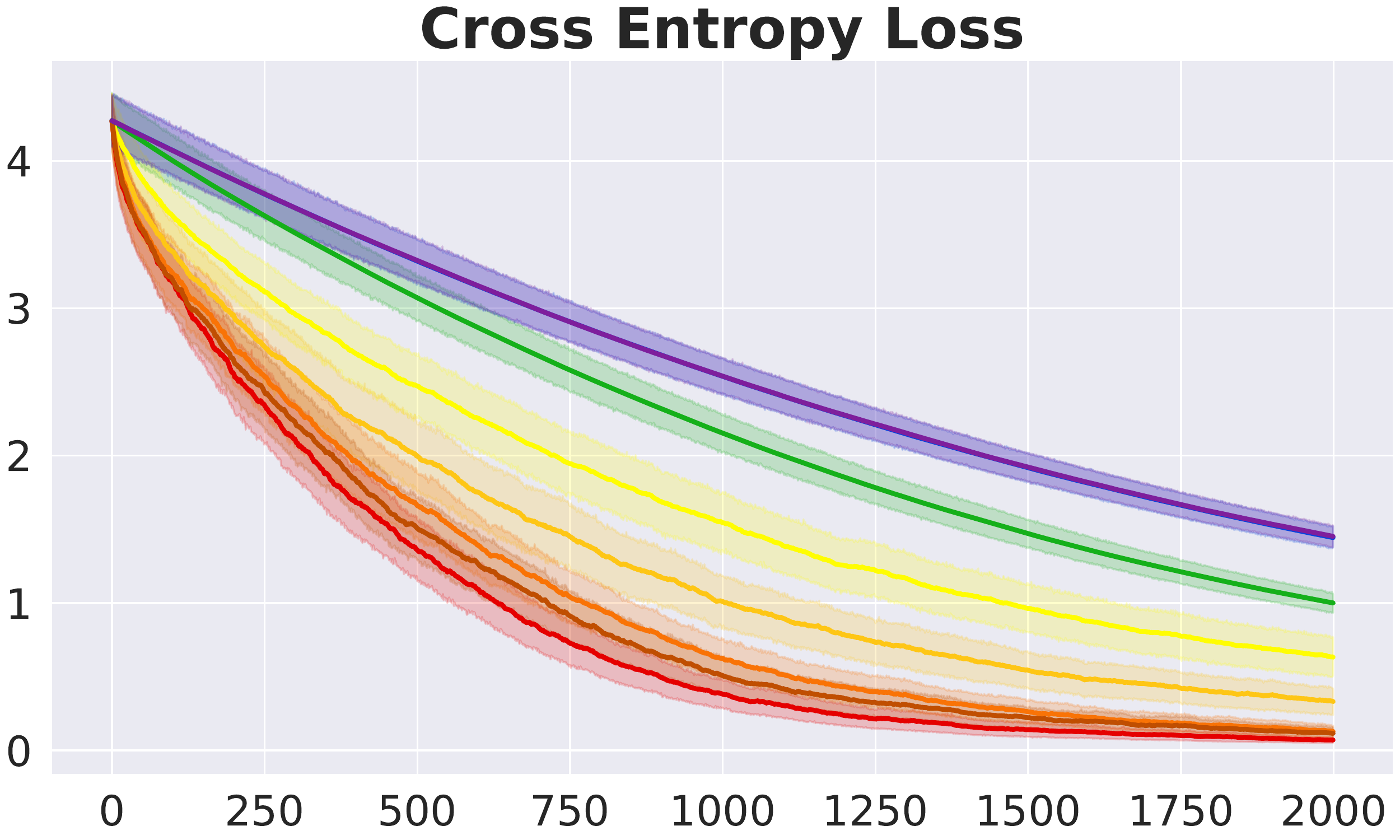}

    \end{subfigure}
    
    \begin{subfigure}{\textwidth}
        \centering
        \includegraphics[width=\textwidth]{Figures/imitate_legend.pdf}
    \end{subfigure}
    \caption{L2 distance, classification accuracy and Cross Entropy loss of the 10-class MNIST classification, in which the teacher uses 30D features.}
    \label{sup:fig:mnist-30}
\end{figure}
We also did experiment with the MNIST dataset, which didn't mentioned in the main text for space sake. For our 10-class MNIST experiment, we trained 3 different CNNs with the similar architecture, only differing in the number of units in the last fully connected (FC) layer. The structure is summarized in table~\ref{sup:tab:mnist-struct}. All three CNNs were able to achieve above 97$\%$ test accuracy. To test our ITAL method, we had the teacher teach the parameters of the FC layer to the student. The input of this layer is used as feature vectors of the images. The learner always used features with 24D, but the teacher varied with 20D and 30D, results presented in figure. In both settings, $\beta$ is set to 30000. The FC layer weights trained with supervise learning were used as $\nu^*$. Figure \ref{sup:fig:mnist-20} and \ref{sup:fig:mnist-30} show the classification accuracy and Cross Entropy loss of the training.

\subsection{Linear Classifiers on CIFAR-10}\label{sup:sec:exp-CIFAR}
\begin{figure}
    \begin{subfigure}{0.49\textwidth}
        \centering
        \includegraphics[width=\textwidth]{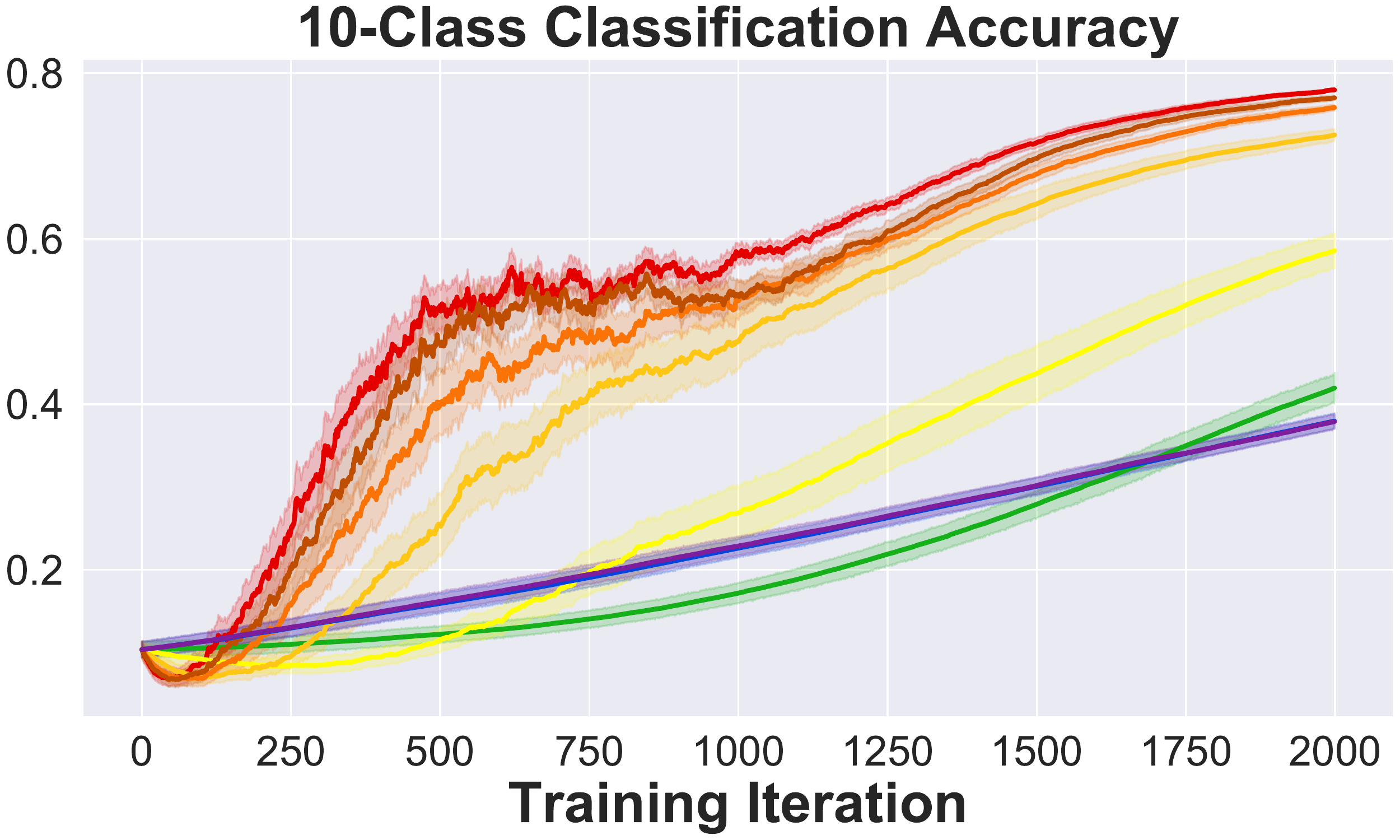}
    \end{subfigure}%
    ~
    \begin{subfigure}{0.49\textwidth}
        \centering
        \includegraphics[width=\textwidth]{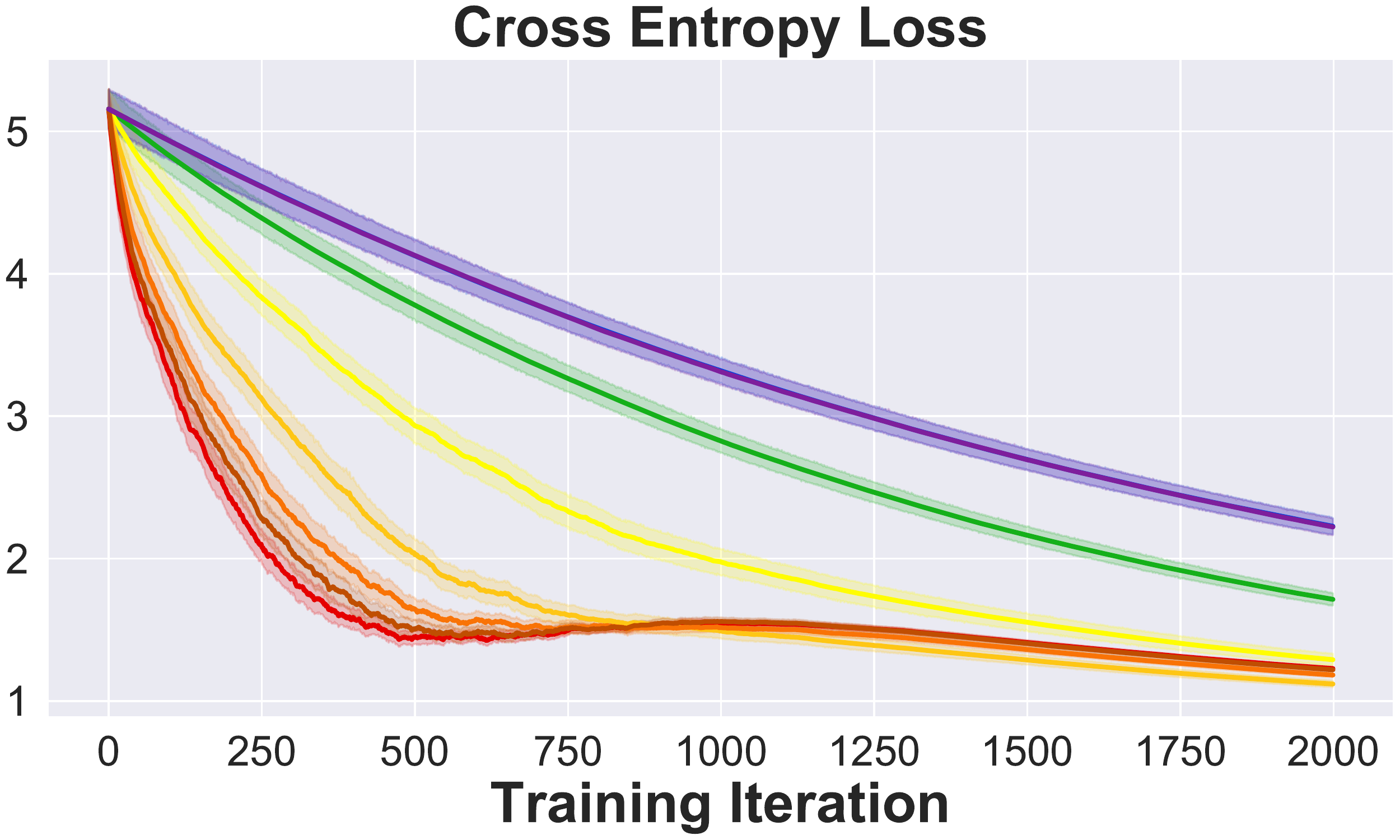}
    \end{subfigure}
    
    \begin{subfigure}{\textwidth}
        \centering
        \includegraphics[width=\textwidth]{Figures/imitate_legend.pdf}
    \end{subfigure}
    \caption{Accuracy and Cross Entropy loss of the 10-class CIFAR-10 classification, in which the teacher uses features extracted from CNN-6 detailed in table~\ref{sup:tab:cifar-struct}. The L2 loss curves we included in the main text section~\ref{sec:exp} figure~\ref{fig:CIFAR-10-cop}  was from this setting.}
    \label{sup:fig:cifar10-6}
\end{figure}

\begin{figure}[ht]
    \begin{subfigure}{0.32\textwidth}
        \centering
        \includegraphics[width=\textwidth]{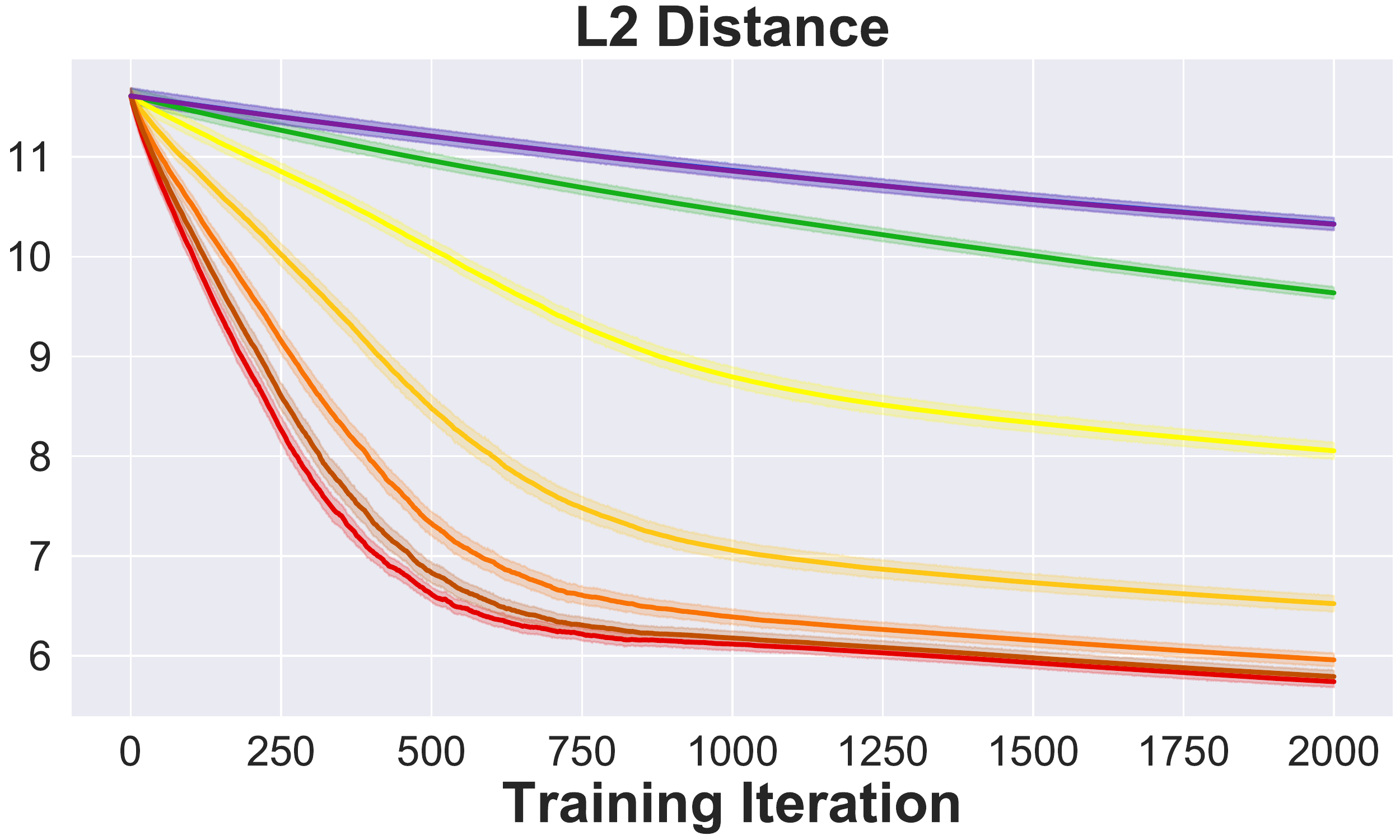}
    \end{subfigure}%
    ~
    \begin{subfigure}{0.32\textwidth}
        \centering
        \includegraphics[width=\textwidth]{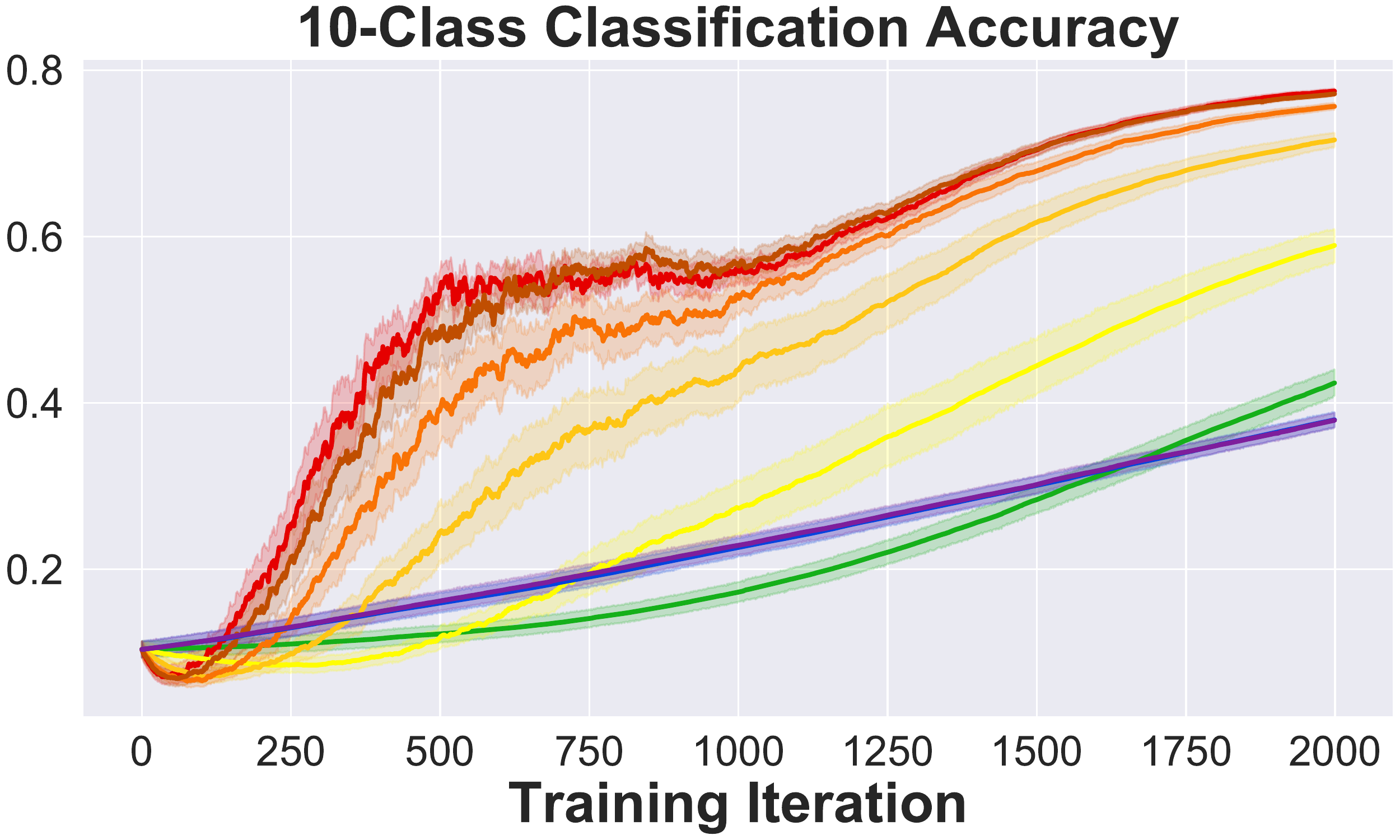}
    \end{subfigure}%
    ~
    \begin{subfigure}{0.32\textwidth}
        \centering
        \includegraphics[width=\textwidth]{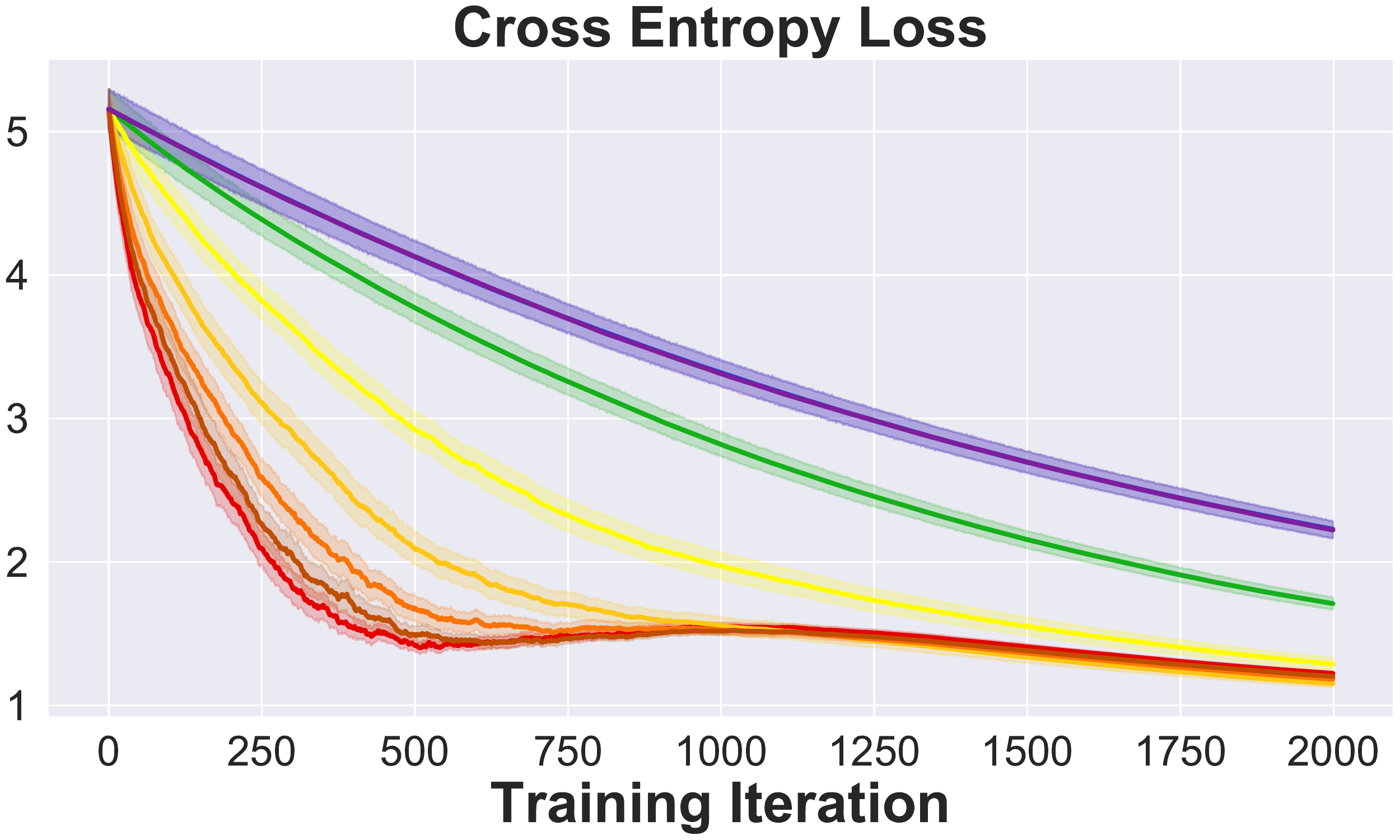}
    \end{subfigure}
    
    \begin{subfigure}{\textwidth}
        \centering
        \includegraphics[width=\textwidth]{Figures/imitate_legend.pdf}
    \end{subfigure}
    \caption{L2 distance, classification accuracy and Cross Entropy loss of the 10-class CIFAR-10 classification, in which the teacher uses features extracted from the CNN-12 detailed in table~\ref{sup:tab:cifar-struct}.}
    \label{sup:fig:cifar10-12}
\end{figure}
The overall design of this experiment resembles the MNIST classification. We used CIFAR-10, a dataset with more enriched and complicated natural images. We trained three different CNNs with 6, 9, and 12 convoluted layers on an augmented CIFAR dataset. With 40 epochs and an adaptive learning rate, we were able to achieve about 82 percent test accuracy for all three architectures. Table~\ref{sup:tab:cifar-struct} summarizes the CNN structure we used. To stabilize training, we used an exponential decaying $\beta$, $\beta_t = 50000(1-5e^{-6})^t$. We think that 
because the feature representation is quite different between the teacher and the student, as the iterative learning goes, the approximation error might accumulate. Thus, the learner's estimation of the teacher's data selection will be less accurate towards the end of the learning, especially for the most ambiguous examples (images prone to mistakes). At this time, using a large $\beta$ can be unstable. In other words, it is hard for the learner to reason about the teacher at the end of the learning, so he should be less confident using the pragmatic information suggested by the teacher's intention. Figure~\ref{sup:fig:cifar10-6} and~\ref{sup:fig:cifar10-12} show the accuracy and Cross Entropy loss of this task.

\begin{table}
\caption{CIFAR-10 CNN structures.}
\centering
\begin{tabular}{|c|c|c|c|}
\hline
                     & CNN-6                            & CNN-9                            & CNN-12                           \\ \hline
Conv 1               & 2 layers of 16 {[}3$\times$3{]} filters & 3 layers of 16 {[}3$\times$3{]} filters & 4 layers of 16 {[}3$\times$3{]} filters \\ \hline
Pool                 & \multicolumn{3}{c|}{2$\times$2 Max with Stride 2}                                                             \\ \hline
Conv 2               & 2 layers of 32 {[}3$\times$3{]} filters & 3 layers of 32 {[}3$\times$3{]} filters & 4 layers of 32 {[}3$\times$3{]} filters \\ \hline
Pool                 & \multicolumn{3}{c|}{2$\times$2 Max with Stride 2}                                                             \\ \hline
Conv 3               & 2 layers of 64 {[}3$\times$3{]} filters & 3 layers of 64 {[}3$\times$3{]} filters & 4 layers of 64 {[}3$\times$3{]} filters \\ \hline
Pool                 & \multicolumn{3}{c|}{2$\times$2 Max with Stride 2}                                                             \\ \hline
FC                   & 32                              & 32                               & 32                               \\ \hline
\end{tabular}
\label{sup:tab:cifar-struct}
\end{table}

\subsection{Linear Classifiers on Tiny ImageNet}\label{sup:sec:exp-ImageNet}
\begin{figure}
    \begin{subfigure}{0.49\textwidth}
        \centering
        \includegraphics[width=\textwidth]{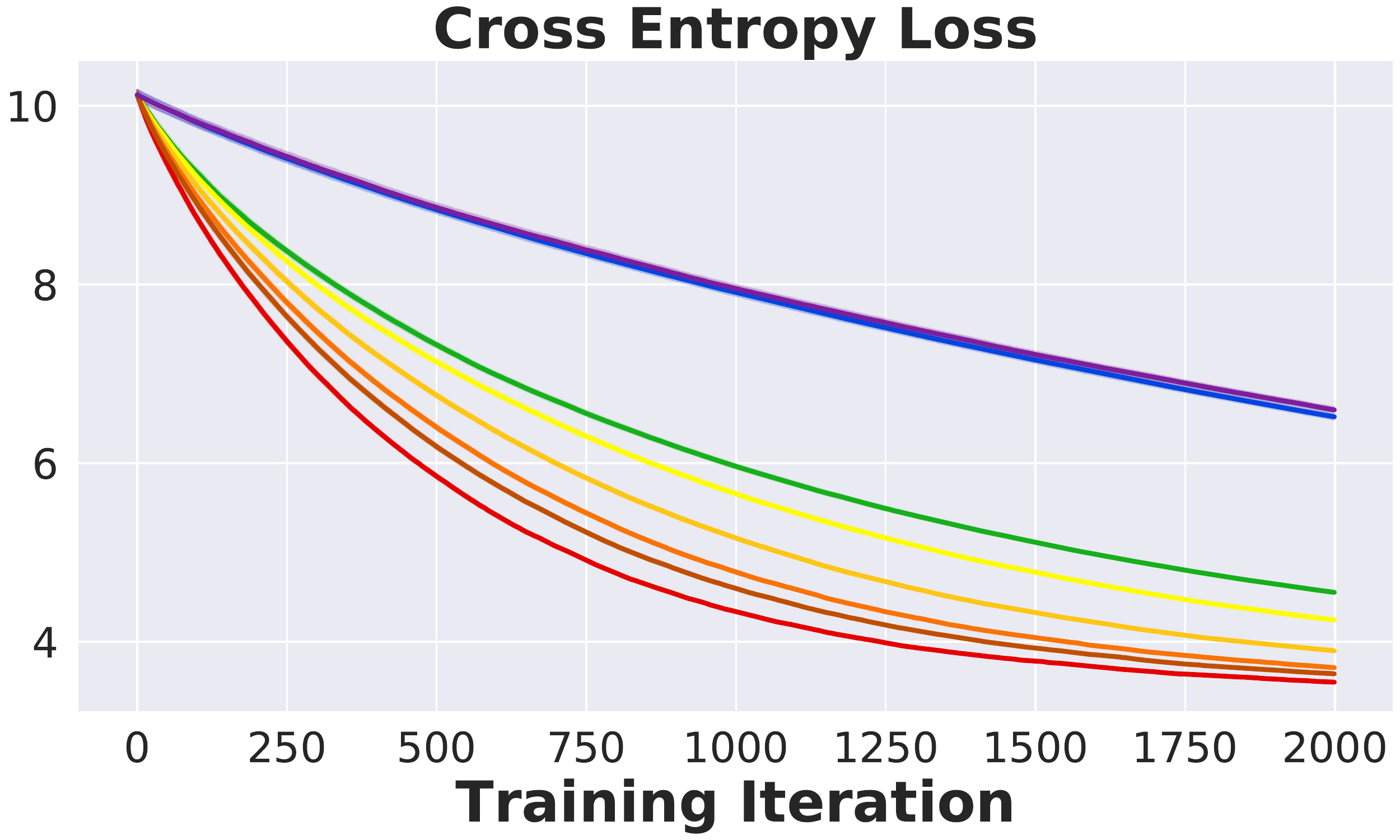}
    \end{subfigure}%
    ~
    \begin{subfigure}{0.49\textwidth}
        \centering
        \includegraphics[width=\textwidth]{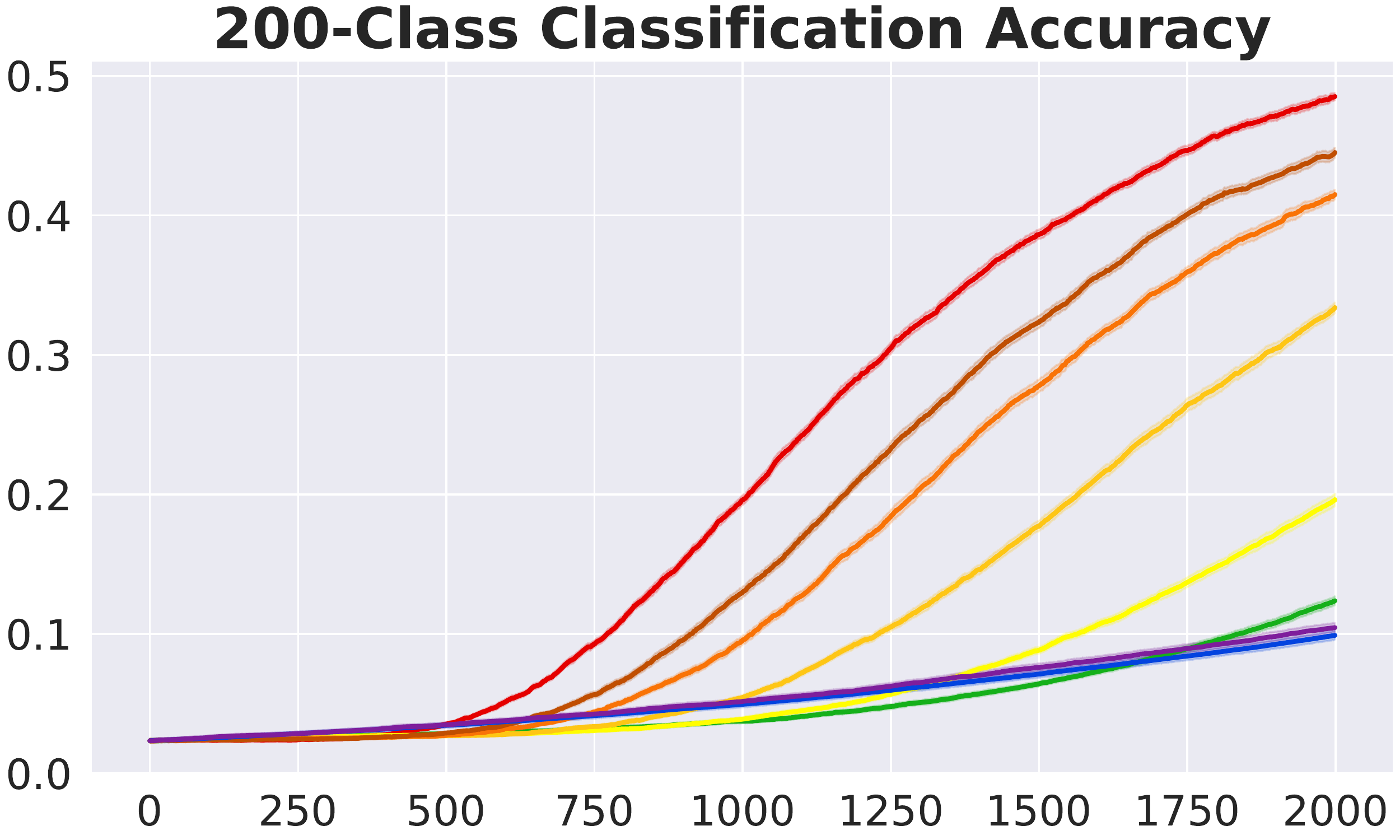}
    \end{subfigure}
    
    \begin{subfigure}{\textwidth}
        \centering
        \includegraphics[width=\textwidth]{Figures/imitate_legend.pdf}
    \end{subfigure}
    \caption{Top-5 accuracy and Cross Entropy loss of the 200-class Tiny ImageNet classification, in which the teacher uses features extracted from VGG-13. The L2 loss curves we included in the main text \cref{sec:exp} \cref{fig:ImageNet-13}  was from this setting.}
    \label{sup:fig:ImageNet-13}
\end{figure}

\begin{figure}[ht]
    \begin{subfigure}{0.32\textwidth}
        \centering
        \includegraphics[width=\textwidth]{Figures/ImageNet/imgnt_coop13_l2.pdf}
    \end{subfigure}%
    ~
    \begin{subfigure}{0.32\textwidth}
        \centering
        \includegraphics[width=\textwidth]{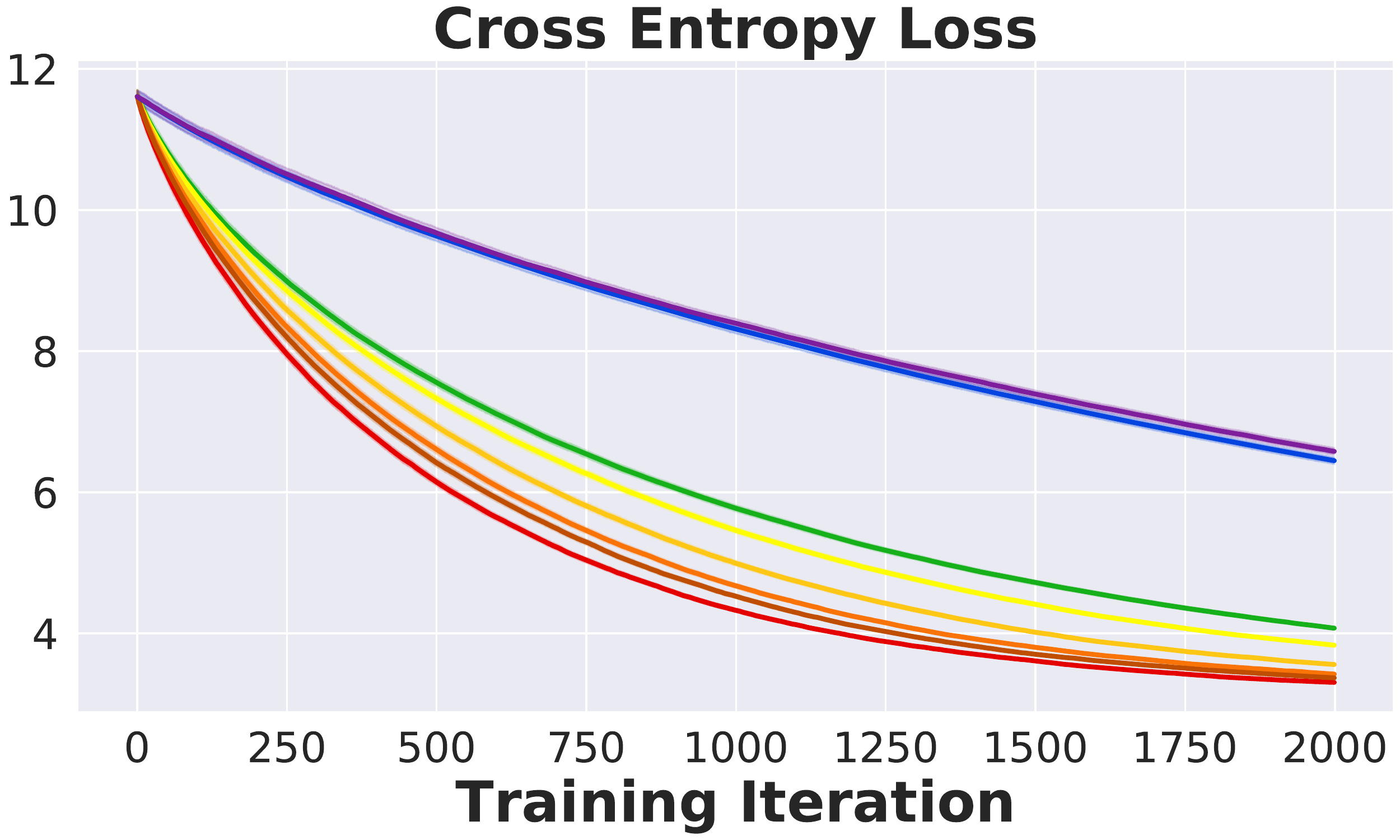}
    \end{subfigure}%
    ~
    \begin{subfigure}{0.32\textwidth}
        \centering
        \includegraphics[width=\textwidth]{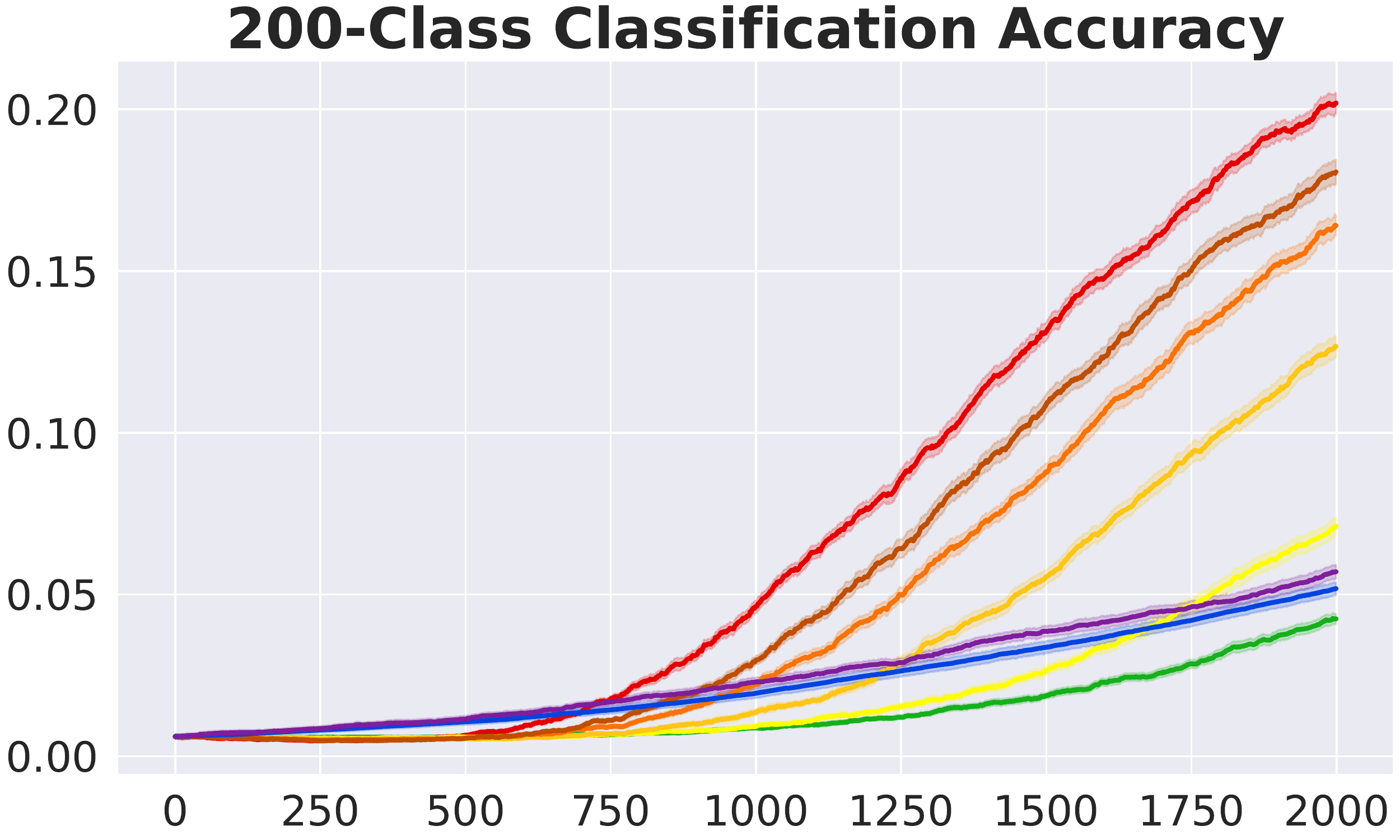}
    \end{subfigure}
    
    \begin{subfigure}{\textwidth}
        \centering
        \includegraphics[width=\textwidth]{Figures/imitate_legend.pdf}
    \end{subfigure}
    \caption{L2 distance, top-5 accuracy and Cross Entropy loss of the 200-class Tiny ImageNet classification, in which the teacher uses features extracted from the VGG-19.}
    \label{sup:fig:ImageNet-19}
\end{figure}
The overall design of this experiment resembles the MNIST and CIFAR-10 classification. We used Tiny ImageNet, a large scale dataset with natural images. We first extracted 2048D features from VGG-13/16/19 without finetuning and then downsampled the features to 10D with a multilayer perceptron with three FC-ReLU-layers (500, 250, 10) trained with Cross Entropy loss. Figure~\ref{sup:fig:ImageNet-13} and~\ref{sup:fig:ImageNet-19} show the accuracy and Cross Entropy loss of this task.

\subsection{Linear Regression for Equation Simplification}\label{sup:sec:exp-equation}
\begin{figure}[ht]
    \begin{subfigure}{0.49\textwidth}
        \centering
        \includegraphics[width=\textwidth]{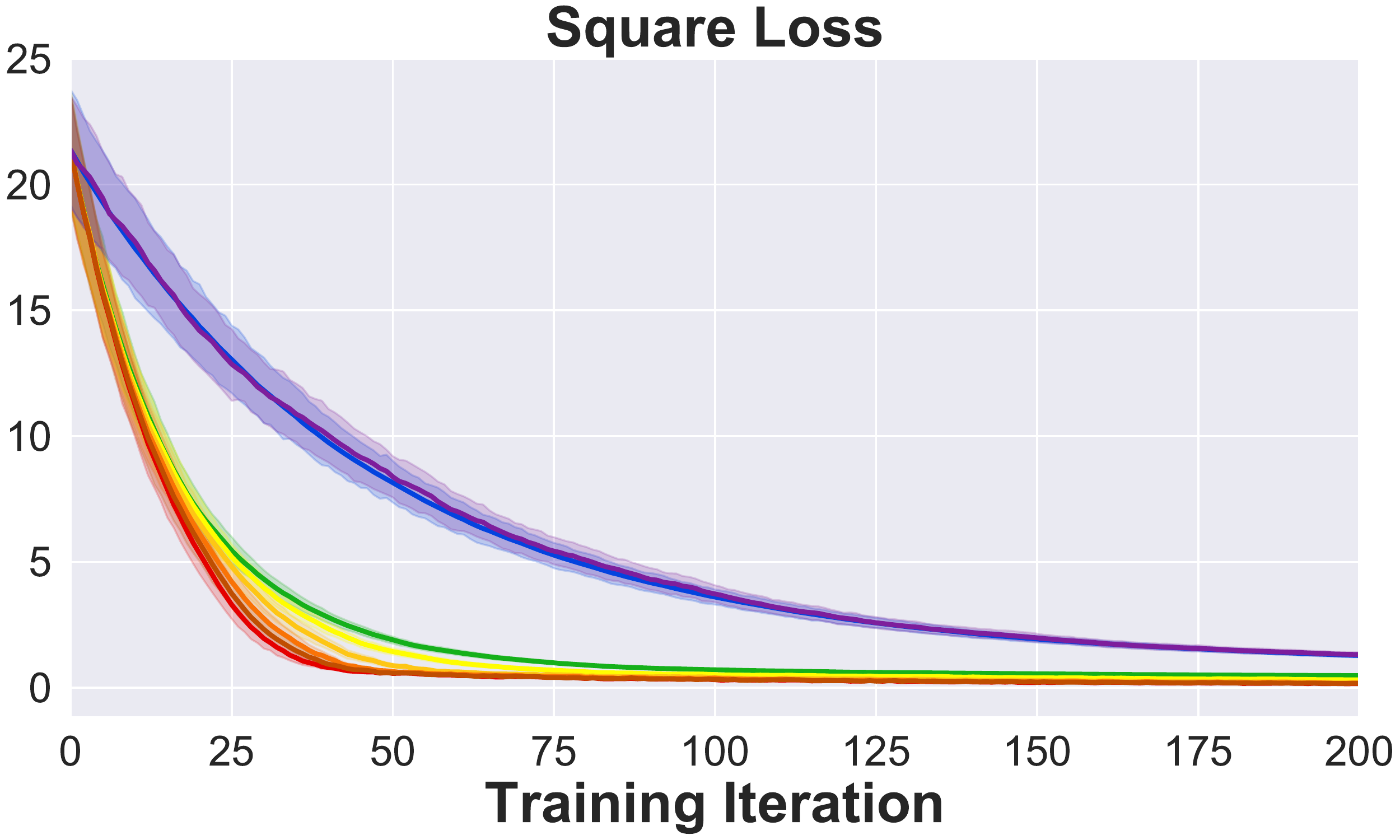}
    \end{subfigure}%
    ~
    \begin{subfigure}{0.49\textwidth}
        \centering
        \includegraphics[width=\textwidth]{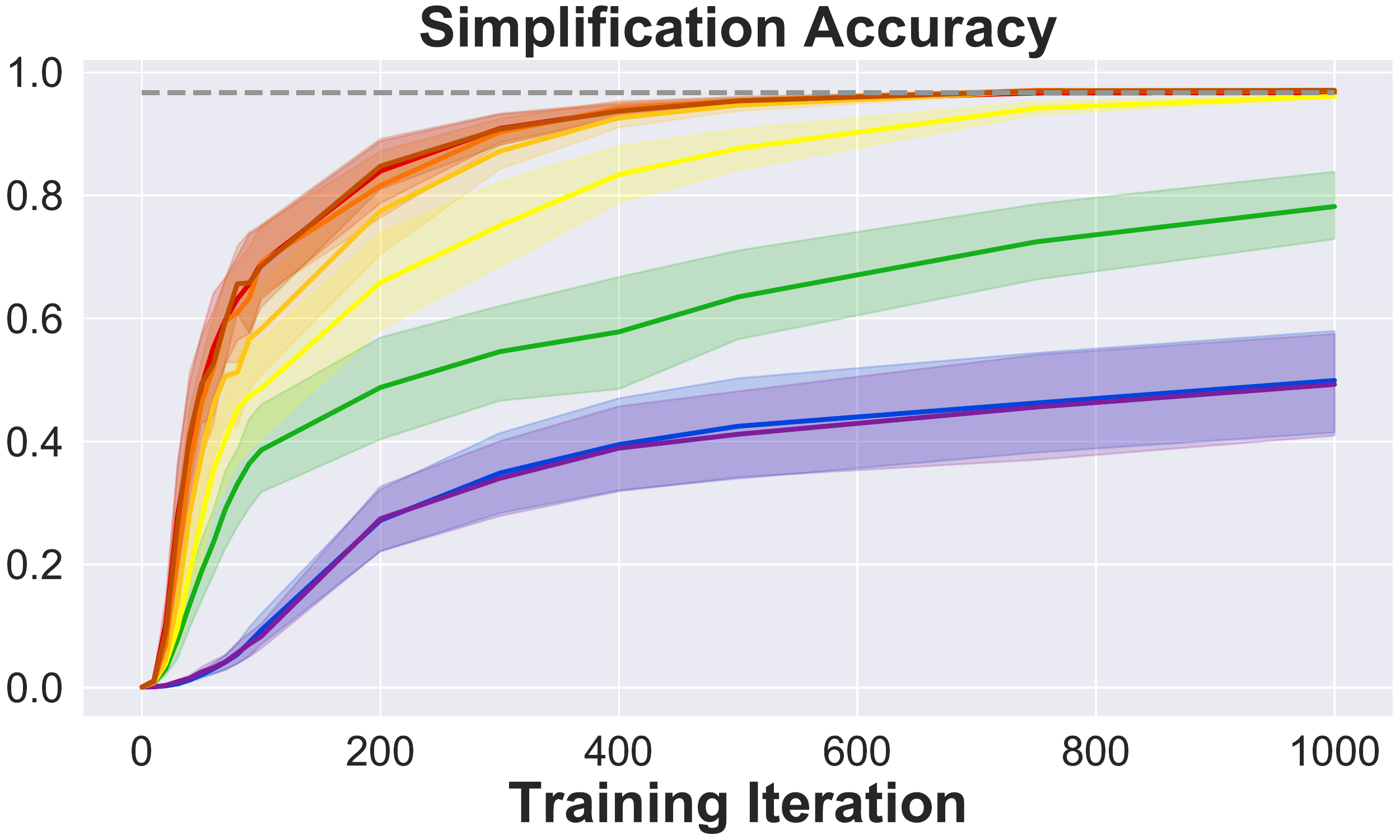}
    \end{subfigure}
    
    \begin{subfigure}{\textwidth}
        \centering
        \includegraphics[width=\textwidth]{Figures/imitate_legend.pdf}
    \end{subfigure}
    \caption{Square loss and simplification accuracy using the learned value function. We compared the last equation in the trajectories generated by the predefined rules and the greedy search results guided by the learned value function. Given the same teacher, teacher-aware learning algorithm outperforms naive learners in terms of accuracy and convergence rate. The gray horizontal dash line represents test accuracy using the ground truth parameter of 45D. For these results, the teacher uses 40D features, same as the L2-loss in section~\ref{sec:exp} figure~\ref{fig:Equation-cop}.}
    \label{sup:fig:equation-40}
\end{figure}
\begin{figure}
    \begin{subfigure}{0.32\textwidth}
        \centering
        \includegraphics[width=\textwidth]{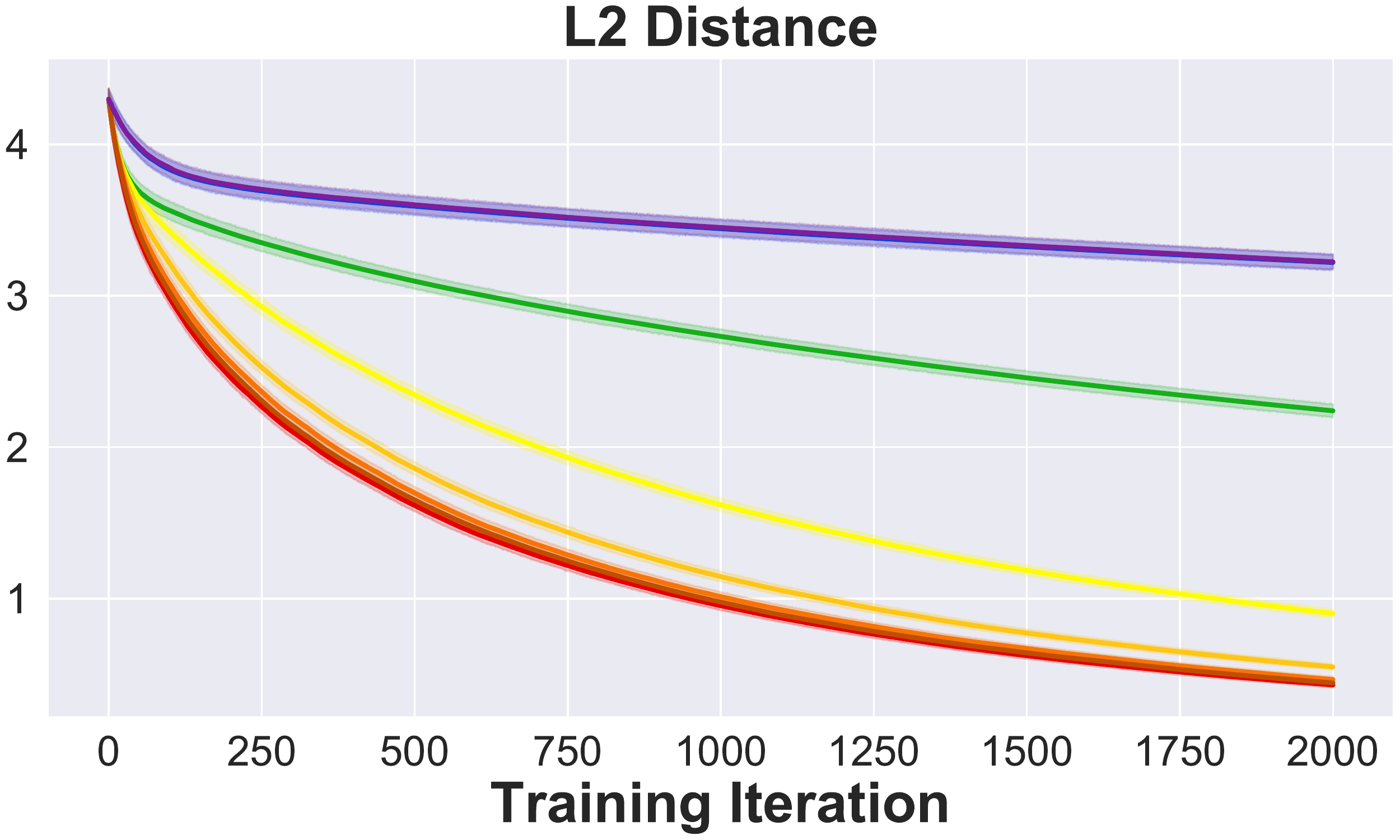}
    \end{subfigure}%
    ~
    \begin{subfigure}{0.32\textwidth}
        \centering
        \includegraphics[width=\textwidth]{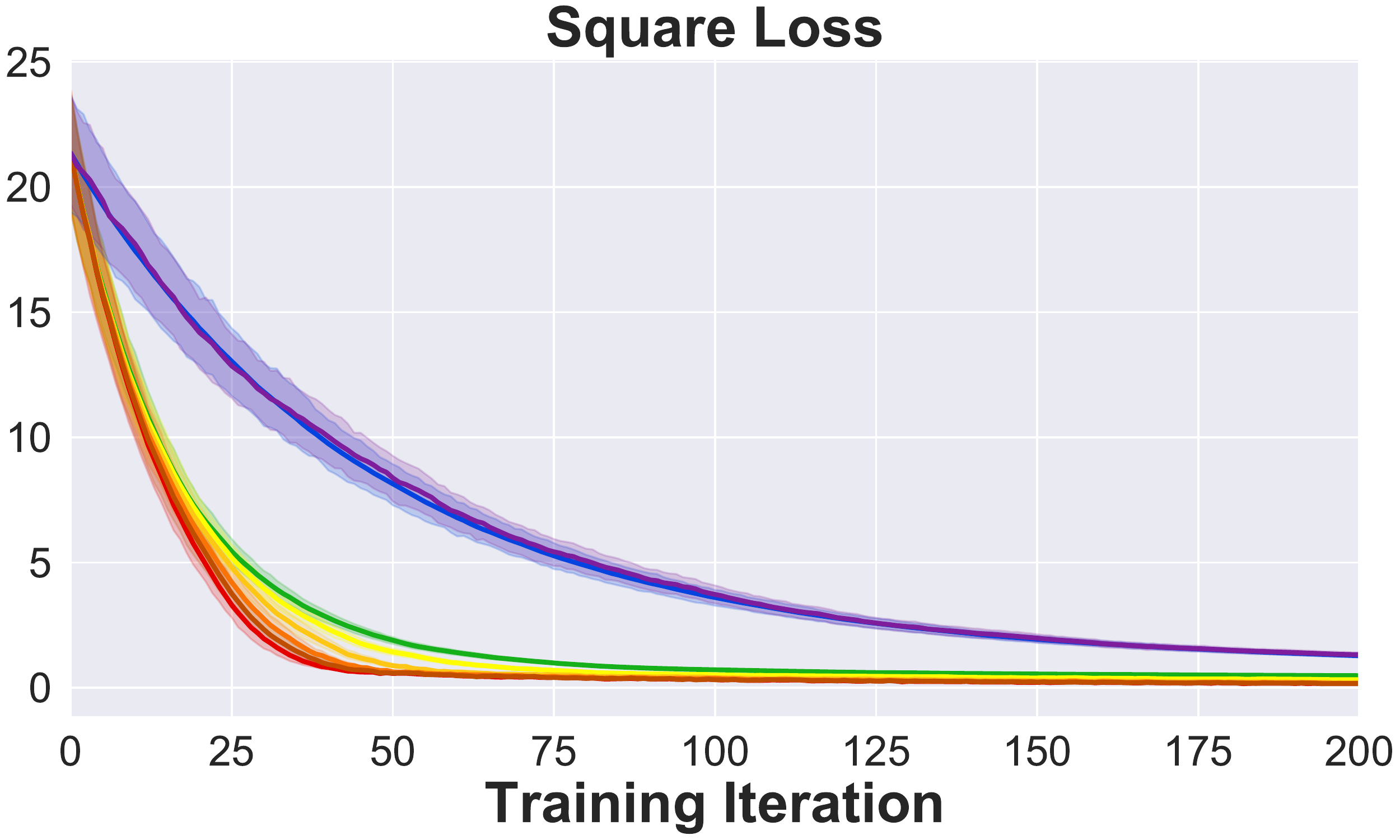}
    \end{subfigure}%
    ~
    \begin{subfigure}{0.32\textwidth}
        \centering
        \includegraphics[width=\textwidth]{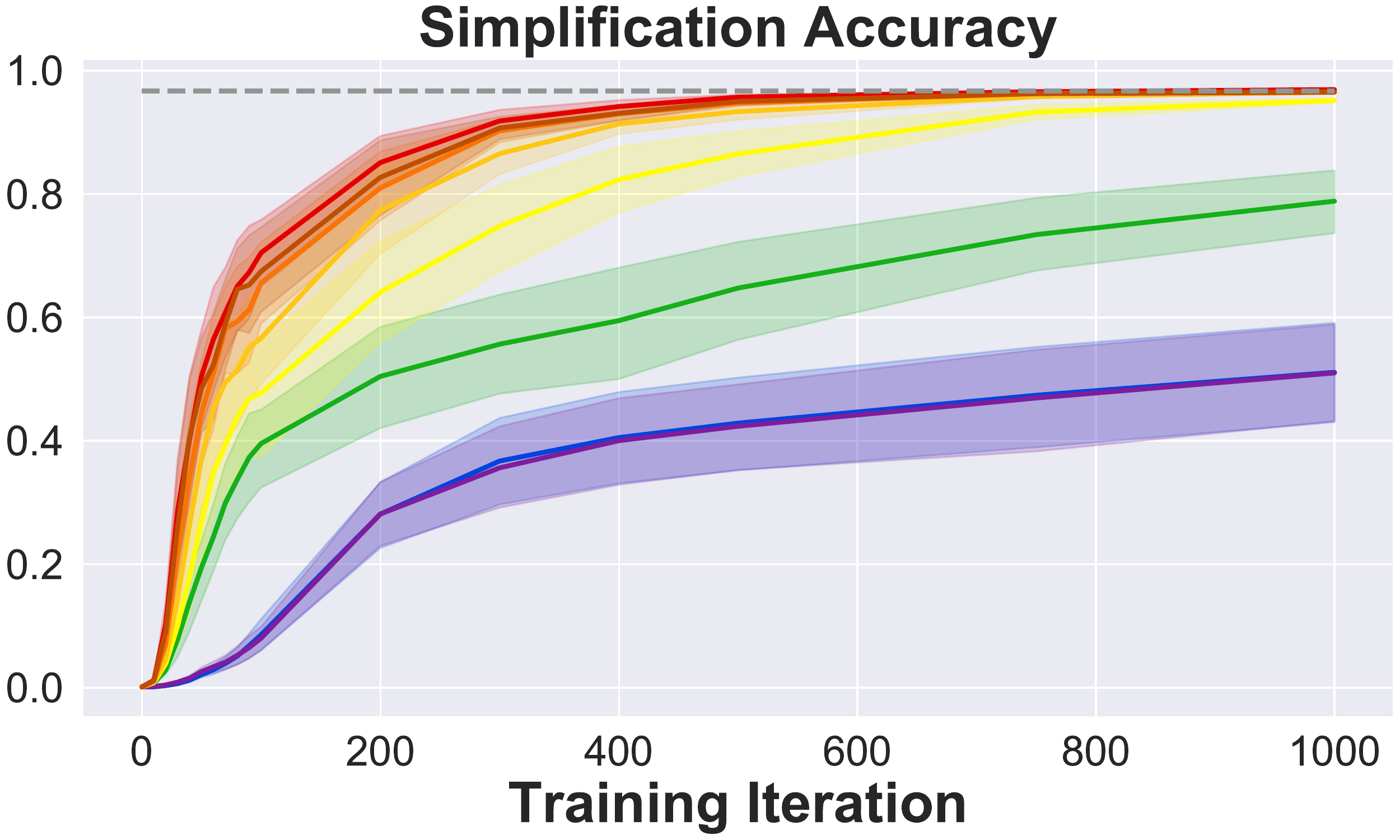}
    \end{subfigure}
    
    \begin{subfigure}{\textwidth}
        \centering
        \includegraphics[width=\textwidth]{Figures/imitate_legend.pdf}
    \end{subfigure}
    \caption{L2 distance, square loss and simplification accuracy using the learned value function. The teacher uses 50D features for these results.}
    \label{sup:fig:equation-50}
\end{figure}
In this experiment, we let the teacher teach a value function to the student so that he can use this value function to simplify polynomials given predefined operations. We first created an Equation Simplification dataset. we randomly generate two fourth-degree polynomials with three variables $x, y, z$ as the left- and right-hand sides of the equations. The coefficients of the polynomials are either fractions or integers. The range of magnitude is 20 and 5 for the nominator and denominator, respectively. We define a set of operations that can be performed on an equation.
\begin{itemize}
\item \textbf{Scale}: scale a term by an integer factor.
\item \textbf{Reduction}: reduce the fraction coefficient of a term to the simplest form.
\item \textbf{Cancel common factors}: divide coefficients of all terms by the greatest common factor of integer coefficients and the nominators of fractional coefficients.
\item \textbf{Move}: move a term to a specified position in the equation.
\item \textbf{Merge}: merge two terms that contain the same denominators and variables with the same degrees.
\item \textbf{Cancel denominators}: multiply all terms by the least common multiple of the denominators of all coefficients.
\end{itemize}
To simplify an equation, we apply operations in the following way:
\begin{enumerate}
    \item Canceling common factors
    \item Merging terms with the same denominators
    \item Merging terms with different denominators by scaling the terms with the least common multiple and then applying the merge operation
    \item Removing fractions in the coefficients
    \item Rearranging the terms by descending degrees of $x, y, z$ with the move operation
\end{enumerate}

At each step, only one operation is performed on a single term (two terms for merging), and we do not move on to the next operation until the present one is no longer applicable to the current equation. After the simplification process, all the remaining terms are on the left-hand side, while the right-hand side is simply 0. We record the series of equations generated as a simplification trajectory. Some example simplification trajectories would be:

\begin{flalign*}
&\text{Example equation } 1:&\frac{1}{4}xyz -\frac{3}{2}xyz = -14y^3 +\frac{1}{5}&\\
&&-\frac{5}{4}xyz = -14y^3 +\frac{1}{5}\quad&&\text{(merge)}\\
&& -25xyz = -280y^3 +4\quad&&\text{(cancel\;denominators)}\\
&&-25xyz +280y^3 = 4\quad&&\text{(move)}\\
&&-25xyz +280y^3 -4 = 0\quad&&\text{(move)}
\end{flalign*}

\begin{flalign*}
&\text{Example equation } 2:&5x^3y +\frac{8}{3}z = -\frac{14}{3}z +6xy^2z +\frac{11}{3}xz^2 +6yz^2 &\\
&&5x^3y +\frac{22}{3}z = 6xy^2z +\frac{11}{3}xz^2 +6yz^2\;&&\text{(merge)}\\
&&15x^3y +22z = 18xy^2z + 11xz^2 +18yz^2\;&&\text{(cancel\;denominators)}\\
&&15x^3y -18xy^2z +22z = 11xz^2 +18yz^2\;&&\text{(move)}\\
&&15x^3y -18xy^2z -11xz^2 +22z = 18yz^2\;&&\text{(move)}\\
&&15x^3y -18xy^2z -11xz^2 -18yz^2 +22z = 0\;&&\text{(move)}
\end{flalign*}

\begin{table}[ht]
    \centering
    \caption{Equation CNN structure}
    \begin{tabular}{|c|c|c|c|}
    \hline
           & 40-Dim CNN       & 45-Dim CNN      & 50-Dim CNN     \\ \hline
    Conv 1 & \multicolumn{3}{c|}{1 layer, 64 {[}5$\times$5{]} filters, leaky ReLU}  \\ \hline
    Conv 2 & \multicolumn{3}{c|}{1 layers, 64 {[}5$\times$5{]} filters, leaky ReLU} \\ \hline
    Pool   & \multicolumn{3}{c|}{2$\times$2 Max with Stride 2}          \\ \hline
    Conv 3 & \multicolumn{3}{c|}{1 layer, 32 {[}3$\times$3{]} filters, leaky ReLU}  \\ \hline
    Conv 4 & \multicolumn{3}{c|}{1 layer, 32 {[}3$\times$3{]} filters, leaky ReLU}  \\ \hline
    Pool   & \multicolumn{3}{c|}{2$\times$2 Max with Stride 2}          \\ \hline
    Conv 5 & \multicolumn{3}{c|}{1 layer, 32 {[}3$\times$3{]} filters, leaky ReLU}  \\ \hline
    Conv 6 & \multicolumn{3}{c|}{1 layer, 32 {[}3$\times$3{]} filters, leaky ReLU}  \\ \hline
    Pool   & \multicolumn{3}{c|}{2$\times$2 Max with Stride 2}          \\ \hline
    FC     & 40, tanh           & 45, tanh              & 50, tanh             \\ \hline
    \end{tabular}
\label{sup:tab:equation-struct}
\end{table}

We applied CNN $\phi_{\theta}$ to learn the features of the generated equations and a linear value function wrt. these features. We first encode equations using a codebook which maps each character to a trainable vector embedding. Thus, each equation can be encoded as a matrix. Then, we treat each equation as a 3D tensor with size $1\times W\times C$, where $W$ is the number of characters in the equation and $C$ is the length of embedding. We set $C = 30$, and $W$ ranges from 6 to 173. During training, we padded 0 to make sure all equations in one batch form a regular tensor. We fed the encoded equations to the CNN and used the output as their feature vectors. The structure of the CNN is summarized in table \ref{sup:tab:equation-struct}. The value of a given equation is the inner product of its feature vector and the parameter $\omega$. The loss function is based on contrastive loss and seeks to maximize the difference between the values of the simpler equations and the complicated equations. During training, we learn the network parameters and the weight vector simultaneously with:
\begingroup
\allowdisplaybreaks[0]
\begin{align*}
   \mathcal{L}(\omega, \theta)=&\frac{1}{|S_+|}\sum_{(E_i, E_j)\in S_+}\max\Big(1 - \big(\phi_{\theta}(E_i)-\phi_{\theta}(E_j)\big)^T\omega, 0\Big)+\\
   &\frac{1}{|S_-|}\sum_{(E_i, E_j)\in S_-}\max\Big(1 - \big(\phi_{\theta}(E_j)-\phi_{\theta}(E_i)\big)^T\omega, 0\Big) + \frac{\lambda}{2}\|\omega\|_2^2
\end{align*}
\endgroup
where $S_+$ and $S_-$ hold positive and negative pairs respectively. The positive data are pairs of equations from the same simplification trajectory, where the first equation in the pair is generated after the second one. That is, the first equation is simplified from the second equation, hence having a higher value. For the negative data, we randomly select an equation from a simplification trajectory excluding the simplification result and randomly apply an operation to that equation. If the result of the operation is different from the next equation in the trajectory, we add the result-equation pair to $S_-$. Otherwise, we randomly choose a different operation until the result of the operation is not the next equation in the trajectory, and then add the pair to $S_-$. This way, we acquire pairs whose first equations have lower values than the second ones'. We train 3 sets of value functions, with the different feature dimensions, 40D, 45D, and 50D.

After we learned a value function, we utilized $\omega^*$ as the ground truth parameter. The teacher and the learner represent the equations with the learned features. The learner always used features with 45D, and the teacher used 40D or 50D, corresponding to figure~\ref{sup:fig:equation-40} and~\ref{sup:fig:equation-50}. In all settings, $\beta$ is set to 5000. We tested the learned parameters with equations not included in the training set. Specifically, to simplify an equation, we applied all possible operations to it and obtained the outcome equation values. Then we used the greedy search to select candidates according to their values. The search ends when all the outcome equations have a lower value than the current equation. If the final equation generated by the learned value function matches with the simplification generated by our rules, we count this simplification as correct. In figure \ref{sup:fig:equation-40} and \ref{sup:fig:equation-50}, we provide the square loss and the accuracy for the simplification task.

\subsection{Online Inverse Reinforcement Learning}\label{sup:sec:exp-oirl}
\begin{figure}
    \begin{subfigure}{0.49\textwidth}
        \centering
        \includegraphics[width=\textwidth]{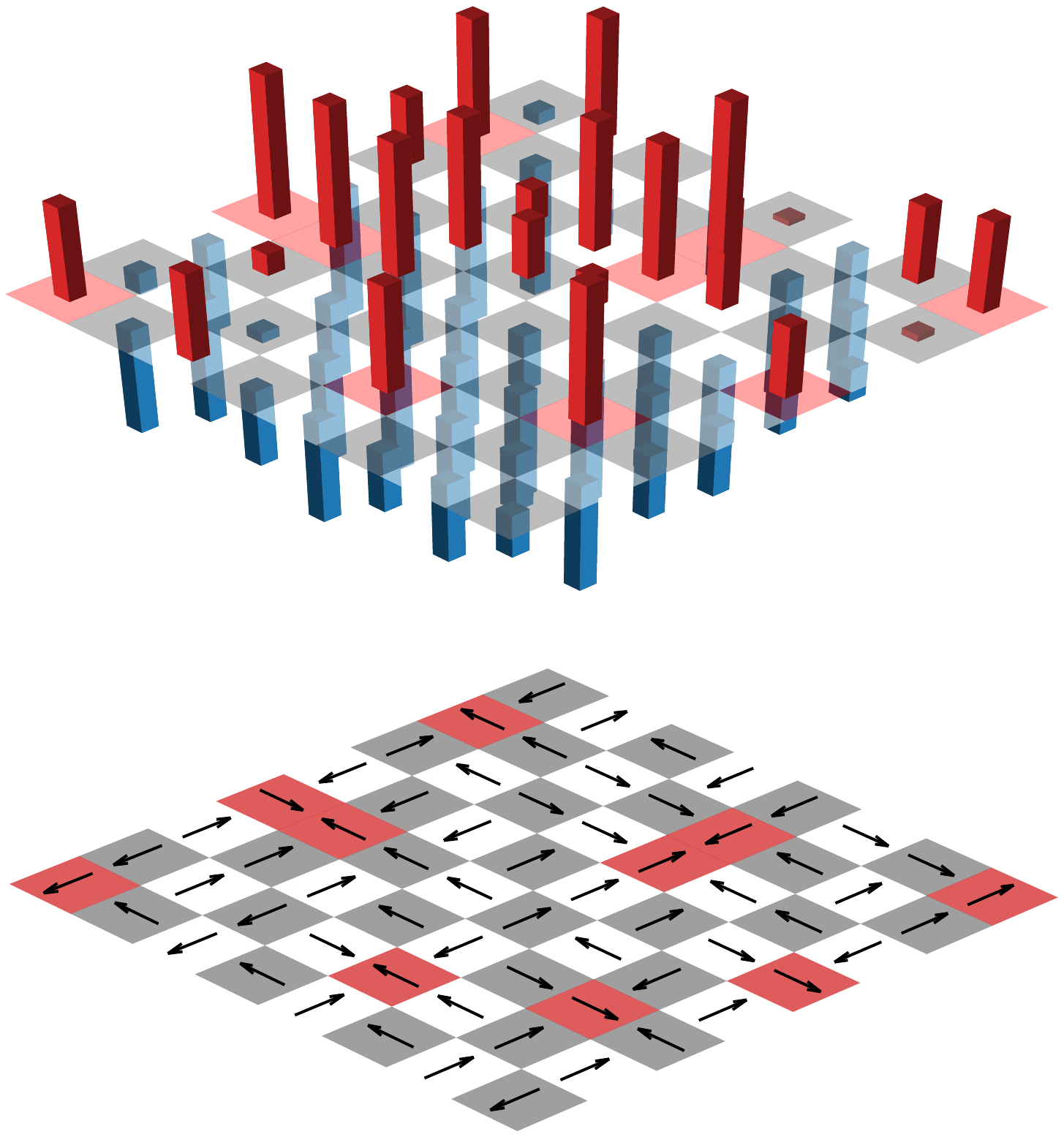}
    \end{subfigure}
    ~
    \begin{subfigure}{0.49\textwidth}
        \centering
        \includegraphics[width=\textwidth]{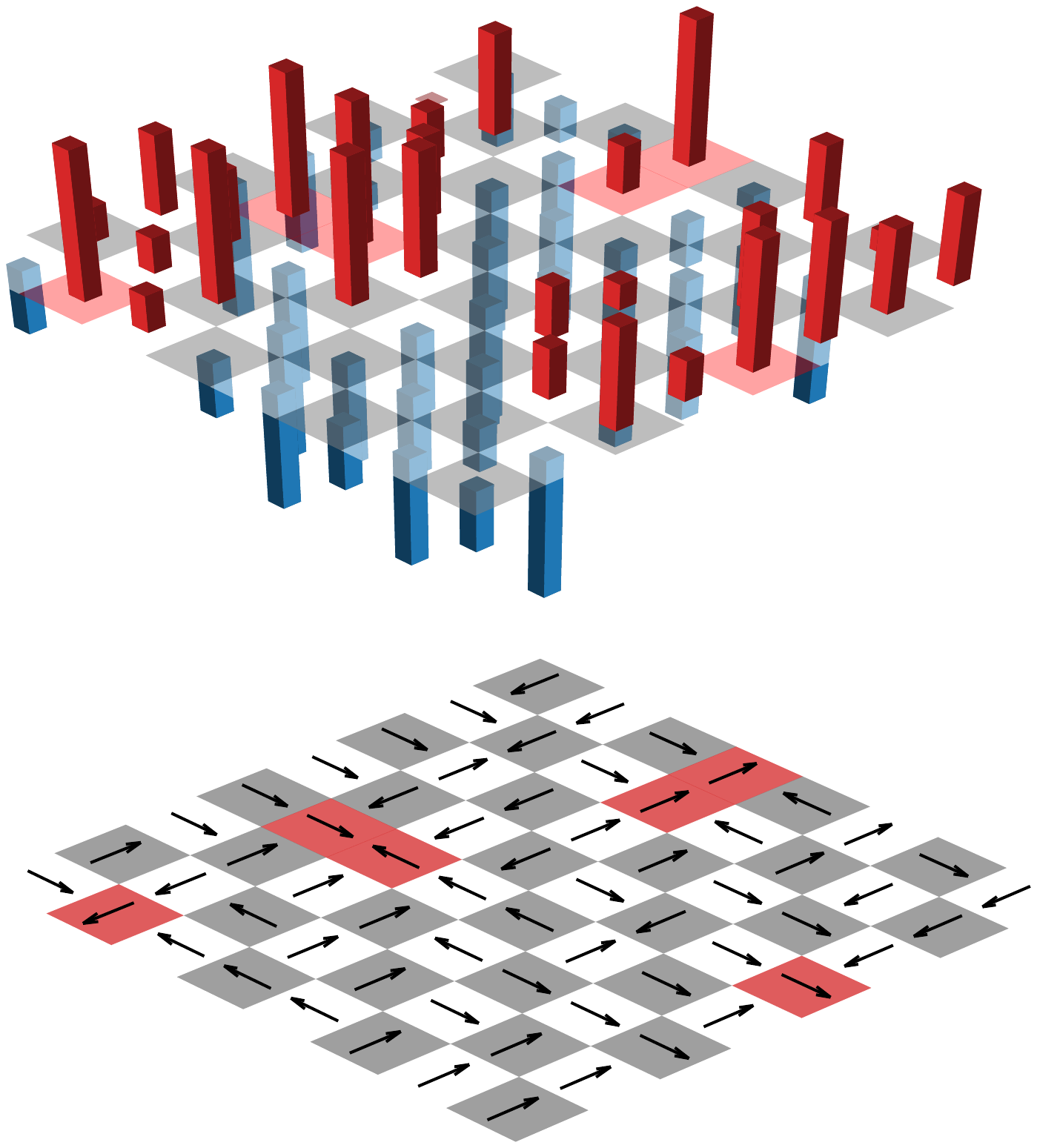}
    \end{subfigure}
    
    \caption{IRL map examples. Each map has $8\times 8$ grids. Every grid contains a reward. Maps in the first row plots the \textbf{ground truth} rewards in each grid. Red bars represent positive rewards and blue bars represent negative rewards. The learner tries to learn a policy to walk in the map and collect the most accumulative rewards. The arrows below indicate the most probable action taken by the learner after he learned the reward function. The red grids are targets of all their neighbors.}
    \label{sup:fig:irl-reward}
\end{figure}
\begin{figure}
    \begin{subfigure}{0.49\textwidth}
        \centering
        \includegraphics[width=\textwidth]{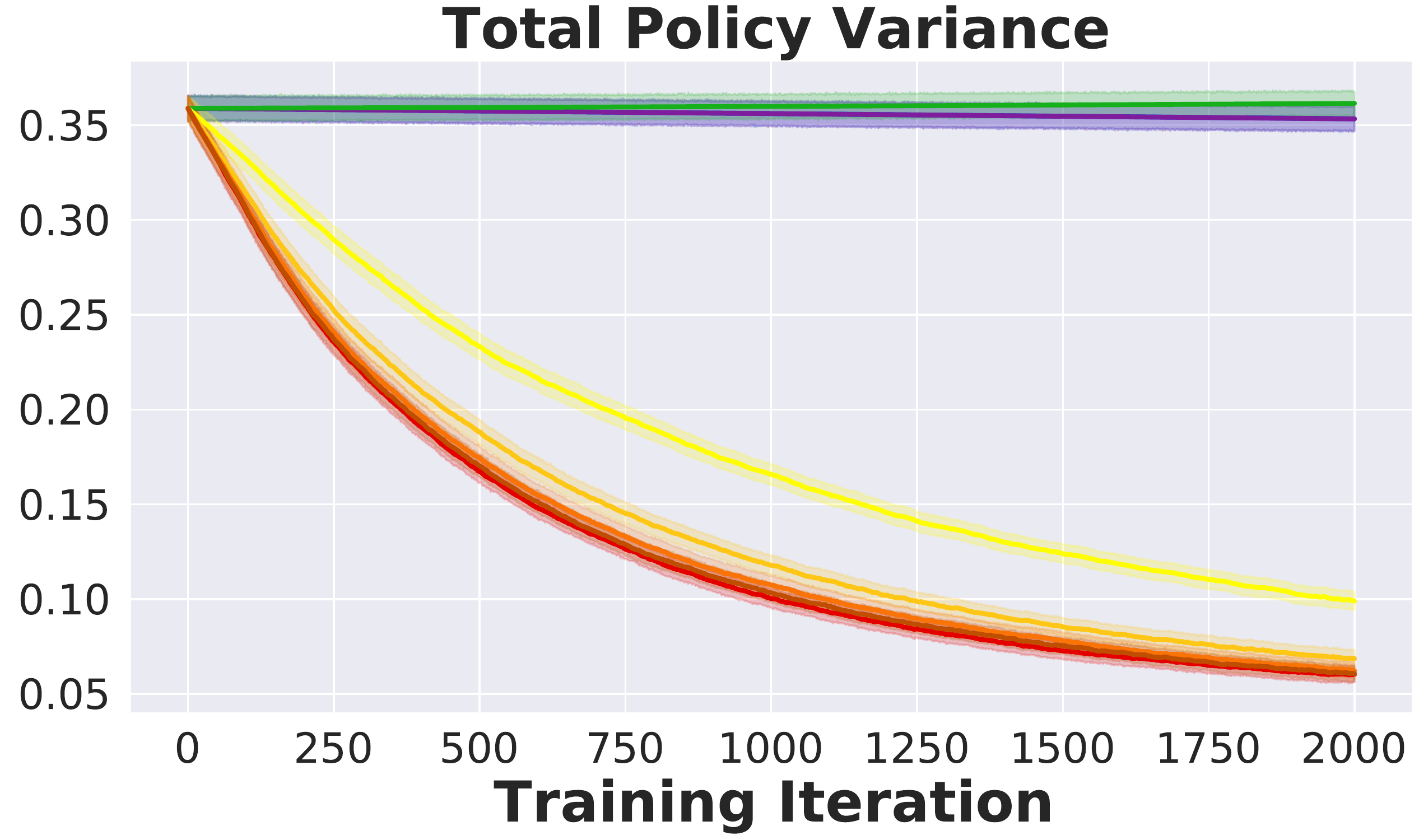}
    \end{subfigure}%
    ~
    \begin{subfigure}{0.49\textwidth}
        \centering
        \includegraphics[width=\textwidth]{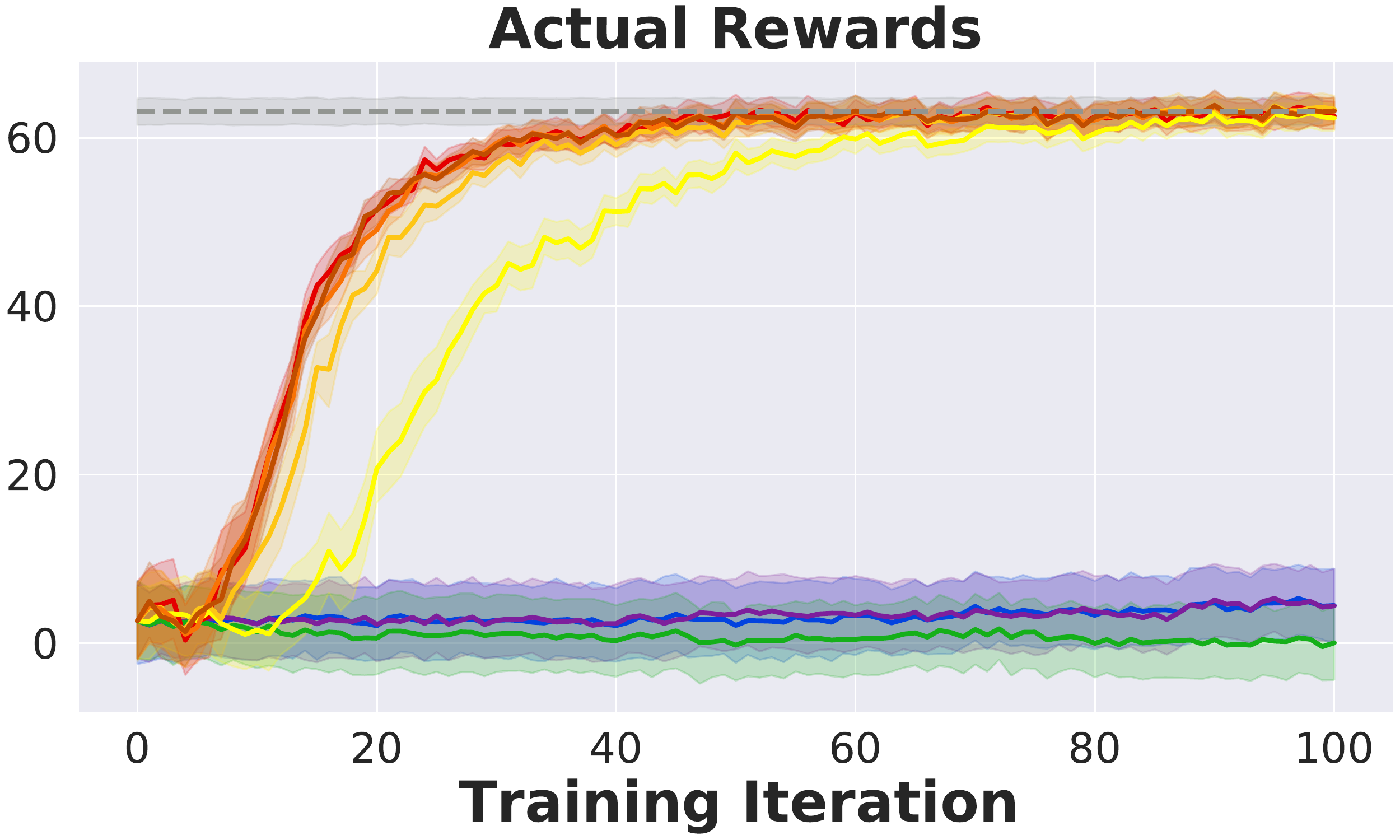}
    \end{subfigure}
    
    \begin{subfigure}{\textwidth}
        \centering
        \includegraphics[width=\textwidth]{Figures/imitate_legend.pdf}
    \end{subfigure}
    \caption{Total variance between the learner's policy and the teacher's policy and the actual gain of the learner during the learning process. The gray horizontal dash line represents teacher's expected accumulative reward.}
    \label{sup:fig:irl}
\end{figure}
\begin{figure}[ht]
    \begin{subfigure}{0.32\textwidth}
        \centering
        \includegraphics[width=\textwidth]{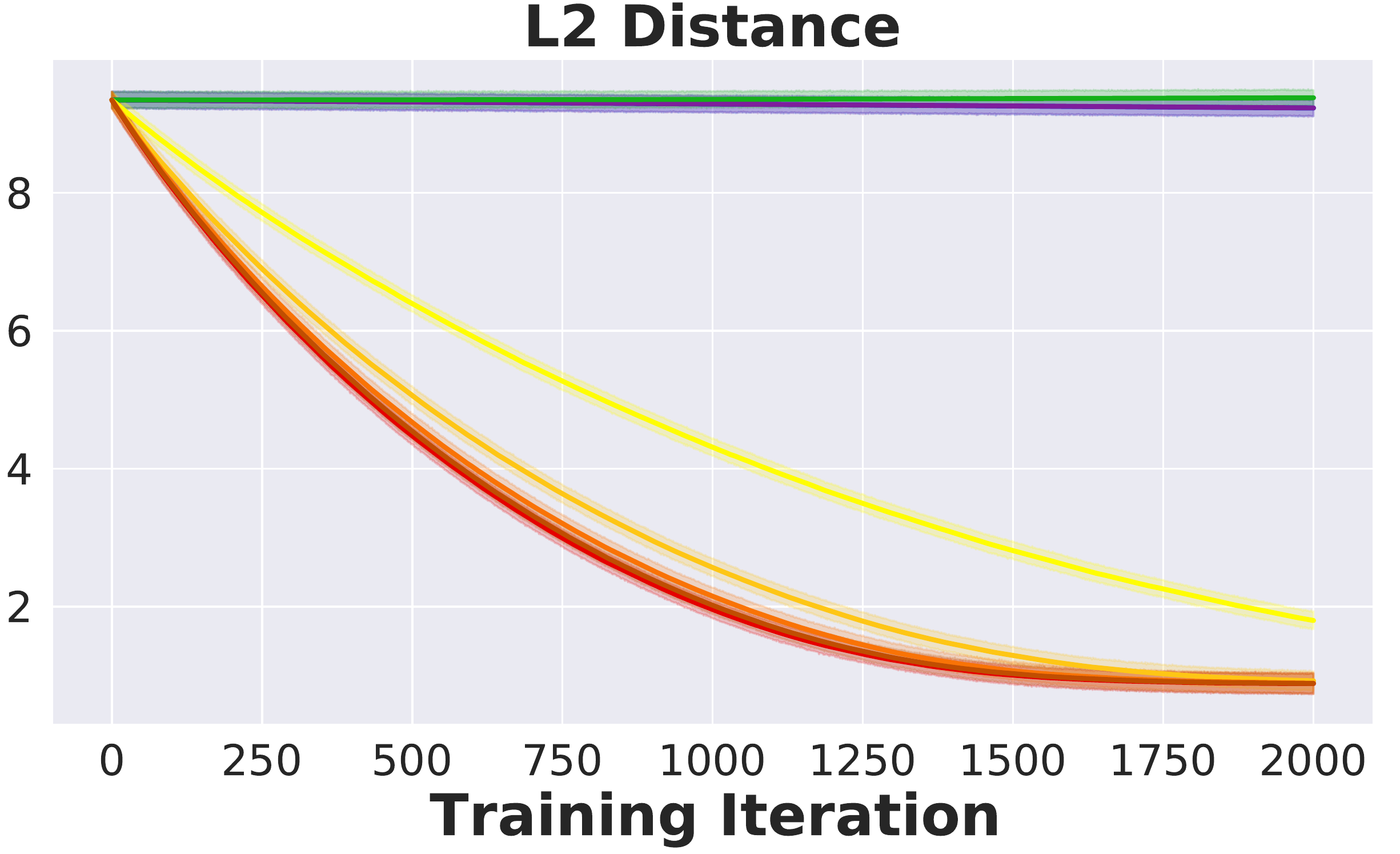}
    \end{subfigure}%
    ~
    \begin{subfigure}{0.32\textwidth}
        \centering
        \includegraphics[width=\textwidth]{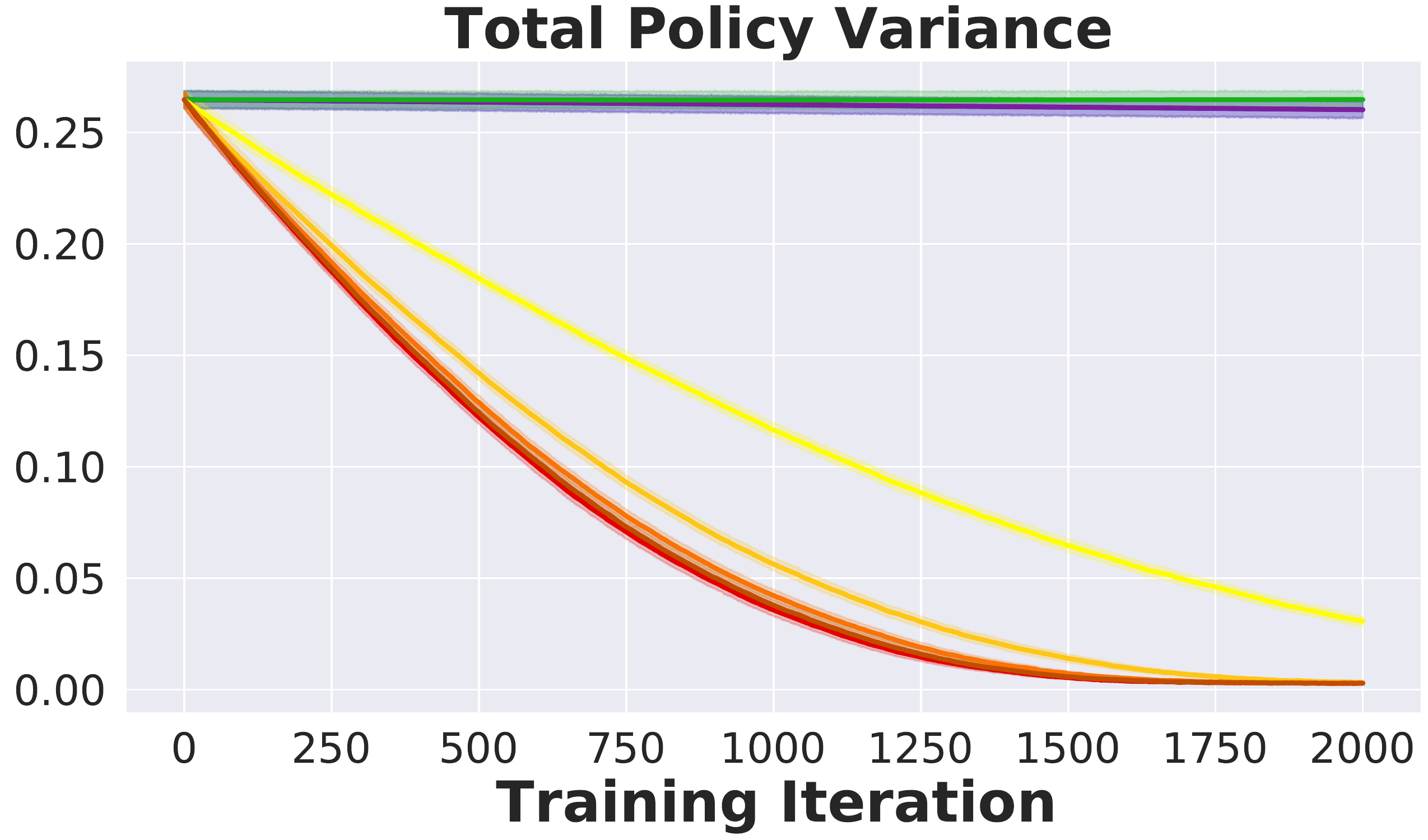}
    \end{subfigure}
    ~
    \begin{subfigure}{0.32\textwidth}
        \centering
        \includegraphics[width=\textwidth]{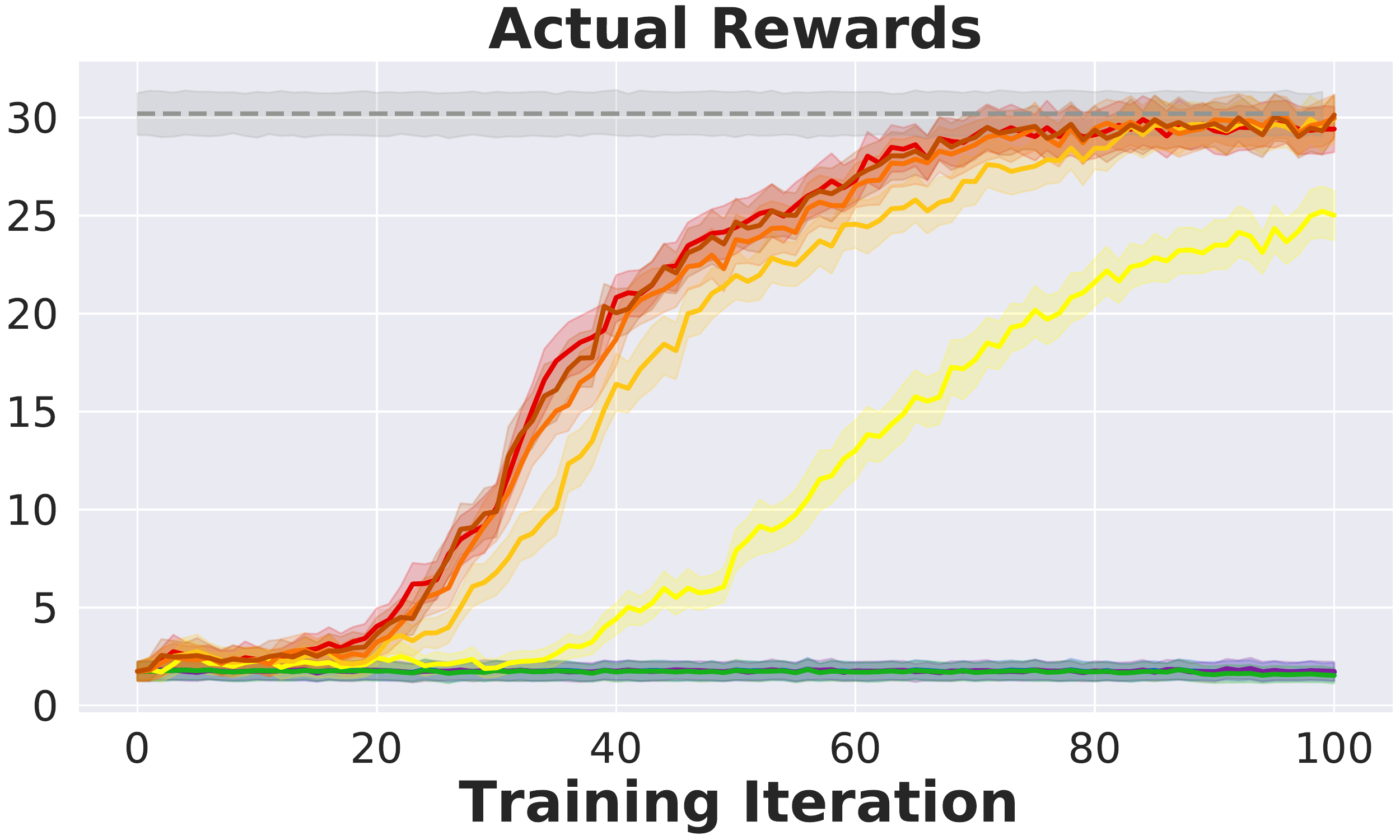}
    \end{subfigure}%
    
    \begin{subfigure}{\textwidth}
        \centering
        \includegraphics[width=\textwidth]{Figures/imitate_legend.pdf}
    \end{subfigure}
    \caption{Learning results of sparse reward maps. All grids in the 8$\times$8 map have 0 reward but 3 grids with reward 1. In every experiment, we randomly selected the 3 grids. Curves drawn with results using 20 different random seeds. The gray horizontal dash line represents teacher's expected accumulative reward.}
    \label{sup:fig:irl-easy}
\end{figure}
In this experiment, we want to learn a reward function $r(s, \omega^*)$. We can define a Markov Decision Process $\langle S, A, r, P, \gamma\rangle$, where $S$ is the state space, $A$ is the action space, $r:S\rightarrow R$ is a reward function mapping from state to a real number as the reward. $P_{ss'}^a$ is the transition model that state $s$ becomes $s'$ after the agent conducts action $a$. $\gamma$ is a discount factor that ensures the convergence of the
MDP over an infinite horizon. Given a reward function, using Bellman equation we have
\begin{align}
    V^*(s)&= \max_{a\in A}\sum_{s'|s, a}P_{ss'}^a\big[r(s') + \gamma V^*(s')\big]\\
    Q^*(s, a) &= \max_{a\in A}\sum_{s'|s, a}P_{ss'}^a\big[r(s') + \gamma \max_{a'\in A}Q^*(s', a')\big]
\end{align}
Suppose an agent behaves by following Boltzman rationality:
\begin{align}
    \pi(a^t|s^t; \omega) = \frac{\exp{(\alpha Q^*(s^t, a^t; \omega))}}{\sum_{a'\in A}\exp{(\alpha Q^*(s^t, a'; \omega))}}
\end{align}
Take log-likelihood of this function we can have an objective function that the learner can optimize to learn $\omega^*$.
\begin{align}
    l(s^t, a^t; \omega^{t-1}) &= \alpha Q^*(s^t, a^t; \omega^{t-1}) - \log\sum_{a'\in A}\alpha Q^*(s^t, a'; \omega^{t-1})\\
    \omega^{t} &= \omega^{t-1}+\eta_t\frac{\partial l(s^t, a^t; \omega^{t-1})}{\partial \omega^{t-1}}\label{sup:eq:irl_loss}
\end{align}
Then, the online IRL process can be accomodated by our learning framework. One issue is that the $\max$ operation in $Q$ is not deferentiable. Thus, we approximated $\max$ with soft-max, namely: $\max(a_0, ..., a_n) \approx \frac{\log(\sum_{i=0}^n\exp{ka_i})}{n}$, with $k$ controlling the level of approximation and leveraged the online Bellman gradient iteration~\citep{li2017online} to calculate the gradient for each step.
\begin{align}
    \frac{\partial V_{g, k}(s; \omega^t)}{\partial \omega^t} &= \sum_{a\in A}\frac{\exp{(kQ_{g, k}(s, a; \omega^t))}}{\sum_{a'\in A}\exp{(kQ_{g, k}(s, a'; \omega^t))}}\frac{\partial Q_{g, k}(s, a; \omega^t)}{\partial \omega^t}\\
    \frac{\partial Q_{g, k}(s, a; \omega^t)}{\partial \omega^t} &= \sum_{s'|s, a}P_{ss'}^a\big(\frac{\partial r(s'; \omega^t)}{\partial \omega^t}+\gamma\frac{\partial V_{g, k}(s'; \omega^t)}{\partial \omega^t}\big)
\end{align}
In every round, we randomly sample 20 $(s, a)$ pairs from $|S|\times |A|$ state-action pairs as our minibatch. Then the teacher will conduct Bellman gradient iteration. The learner will return his reward estimation for each grid to the teacher.

We used an $8\times 8$ grid map as the environment, and the action space $A$ includes four actions up, down, left, right. See figure \ref{sup:fig:irl-reward} for map examples. $80\%$ of the time, the agent goes to its target, $18\%$ of the time ends up in another random neighbor grid and $2\%$ of the time dies abruptly (game ends). We set $\gamma = 0.5$. The reward in each grid is randomly sampled from a uniform distribution, $U[-2, 2]$. If we encode each grid with a one-hot vector, then the reward parameter is a 64D vector with the $i$-th entry corresponding to the reward of the $i$-th grid. The teacher uses a shuffled map encoding as the student's. For instance, if the first grid is $[1, 0, ..., 0]$ to the learner, then it becomes $[0, ..., 0, 1, 0, ...]$ to the teacher. See figure \ref{sup:fig:irl} for the actual accumulative reward acquired by the agent during learning. 

In addition to the environment with random dense rewards, we tested the teacher-aware learner in a sparse reward environment. Each time, we only pick 3 grids randomly to assign non-zero reward. Our algorithm still shows robust performance. Results in figure \ref{sup:fig:irl-easy}.

\subsection{Adversarial Teacher}\label{sup:sec:exp-adv}
\begin{table}[ht]
    \centering
    \caption{Selection of $\beta$s in the adversarial teacher experiments. For cooperative teachers, the absolute values of the $\beta$s are the same, only the signs are flipped.}
    \begin{tabular}{|c|c|}
    \hline
         Experiment & Value of $\beta$ \\\hline
         Linear Classifiers on Synthesized Data&  -60000\\\hline
         Linear Regression on Synthesized Data& -5000 \\\hline
         Linear Classifiers on MNIST Dataset& -30000 \\\hline
         Linear Classifiers on CIFAR Dataset& $-50000(1-5e^{-6})^t$ \\\hline
         Linear Classifiers on Tiny ImageNet Dataset& $-1000$ \\\hline
         Linear Regression for Equation Simplification& -5000 \\\hline
         Online Inverse Reinforcement Learning (Random Rewards)& -25000 \\\hline
         Online Inverse Reinforcement Learning (Sparse Rewards)& -30000 \\\hline
    \end{tabular}
    \label{sup:tab:adv-beta}
\end{table}
\begin{figure*}
    \centering
    \begin{subfigure}{\textwidth}
    \hspace{0.17\textwidth}
    \begin{subfigure}{0.32\textwidth}
        \centering
       \includegraphics[width=\textwidth]{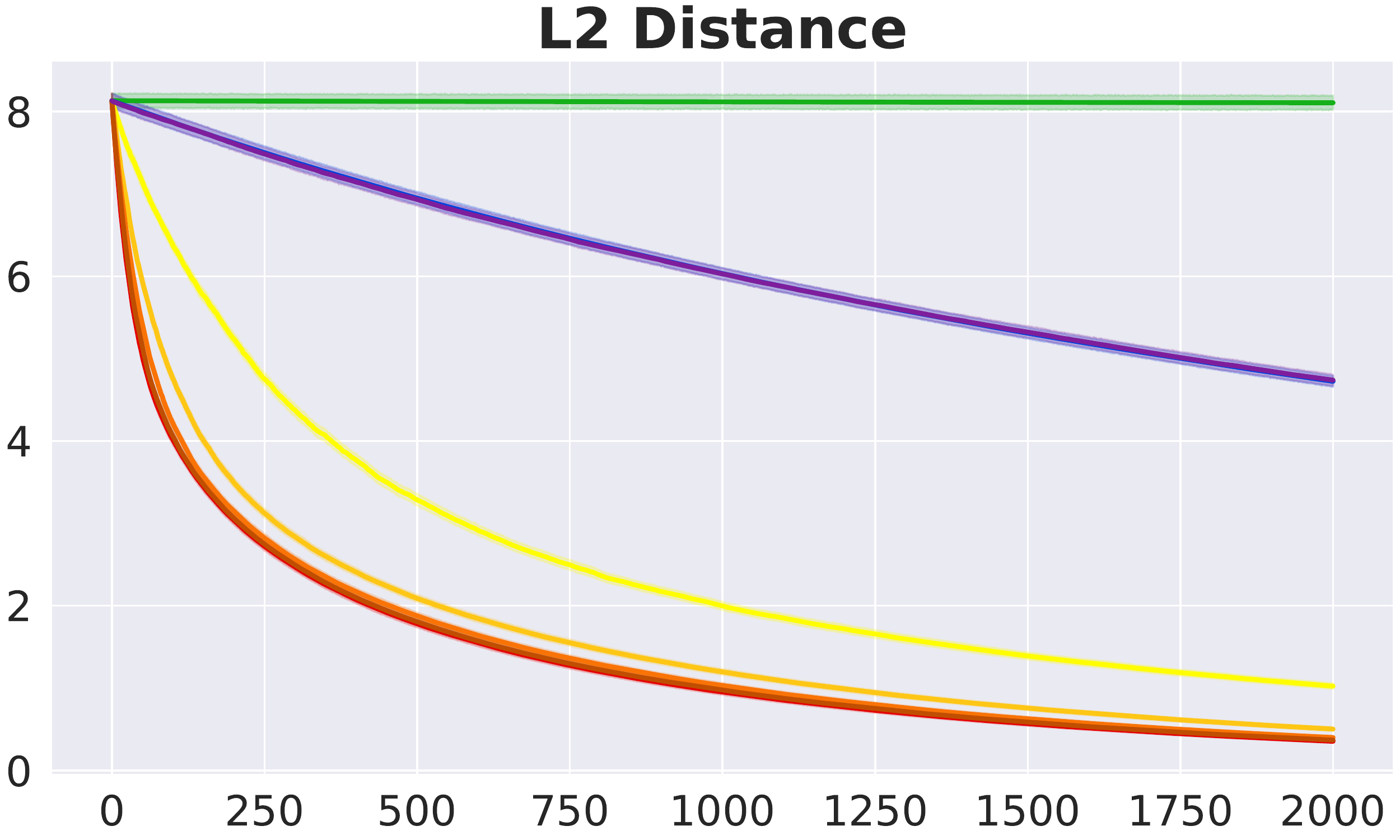}
    \end{subfigure}%
    ~
    \begin{subfigure}{0.32\textwidth}
        \centering
        \includegraphics[width=\textwidth]{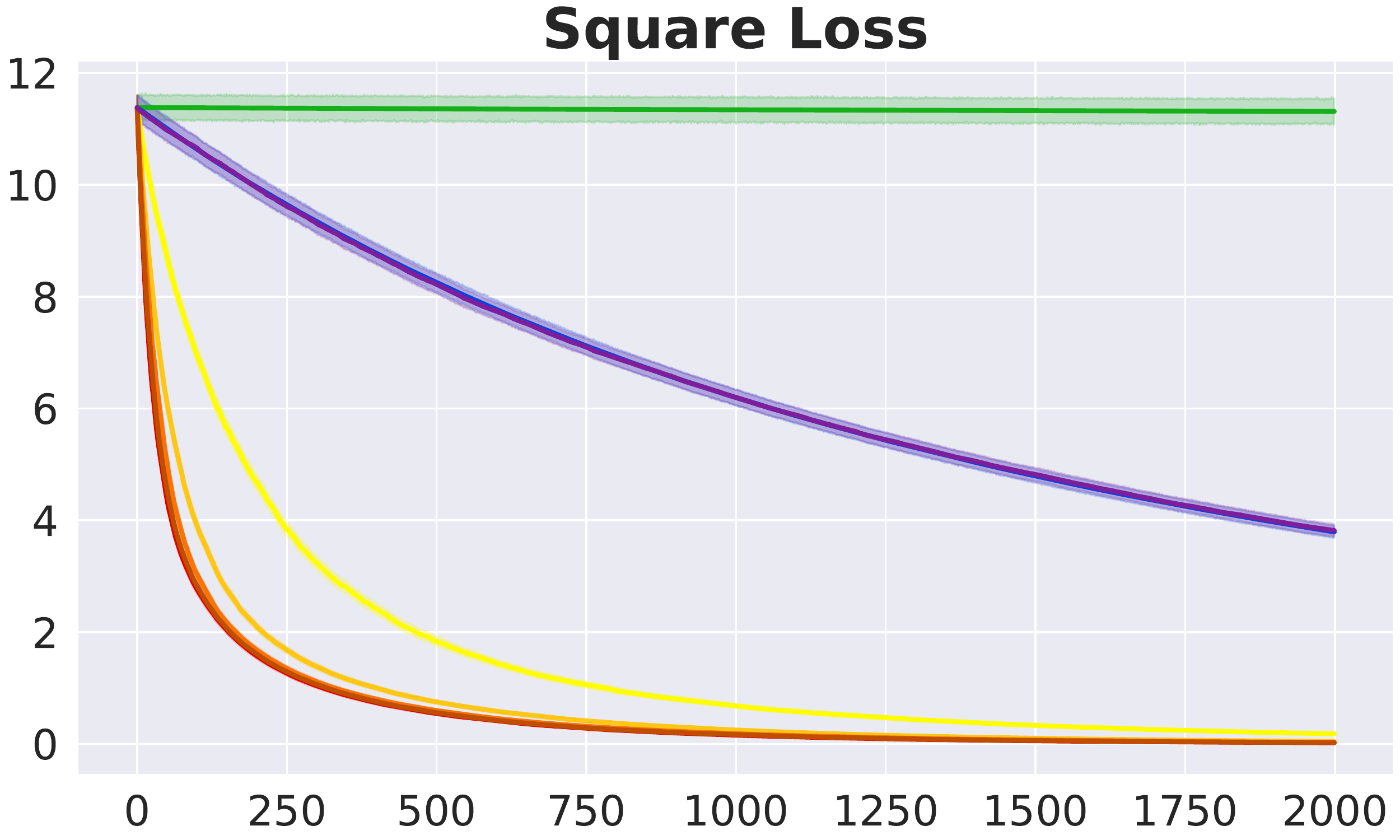}
    \end{subfigure}%
    \vspace{-5pt}
    \label{fig:regression-adv}
    \caption{Linear Regression}
    \end{subfigure}
    
    \begin{subfigure}{\textwidth}
    \begin{subfigure}{0.32\textwidth}
        \centering
        \includegraphics[width=\textwidth]{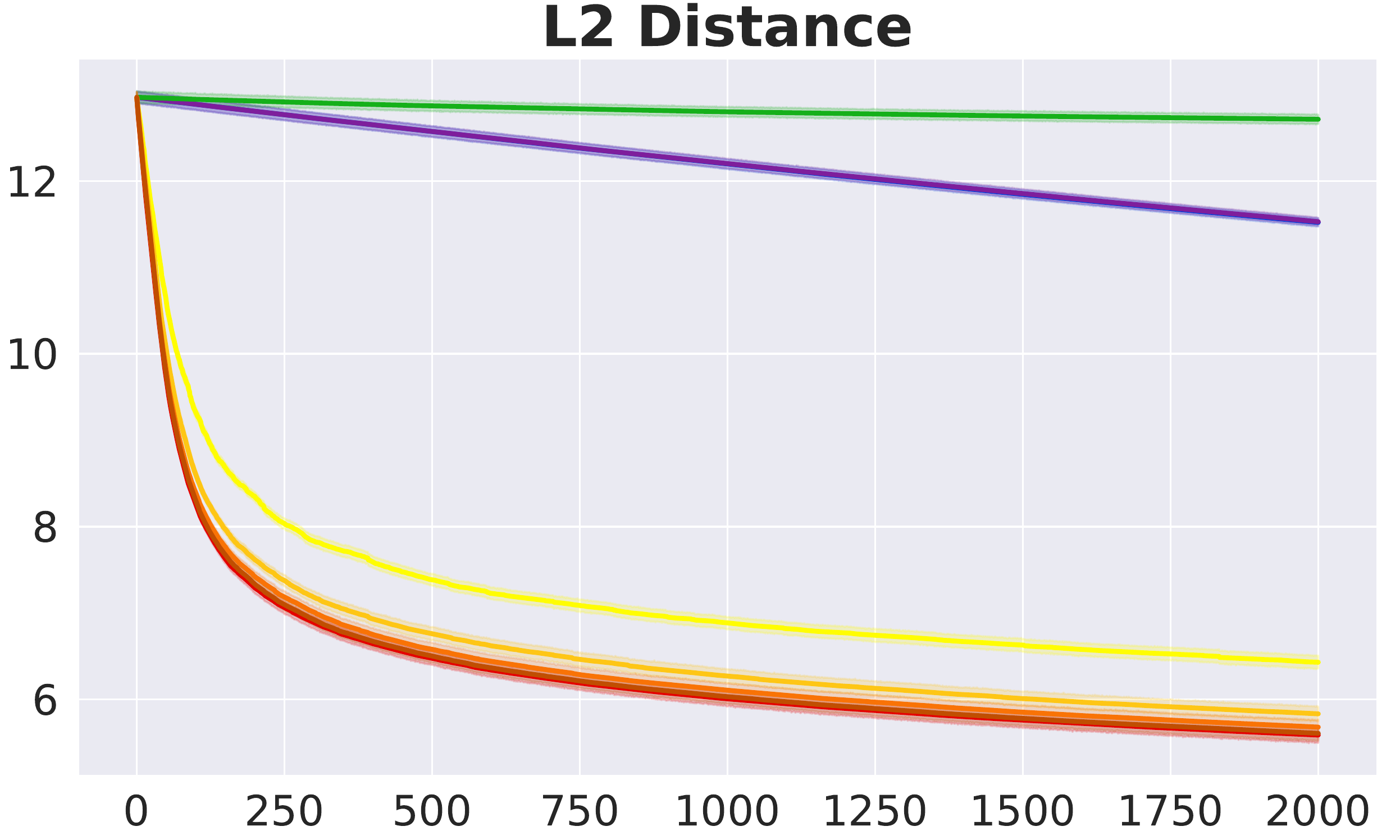}
    \end{subfigure}
    ~
    \begin{subfigure}{0.32\textwidth}
        \centering
        \includegraphics[width=\textwidth]{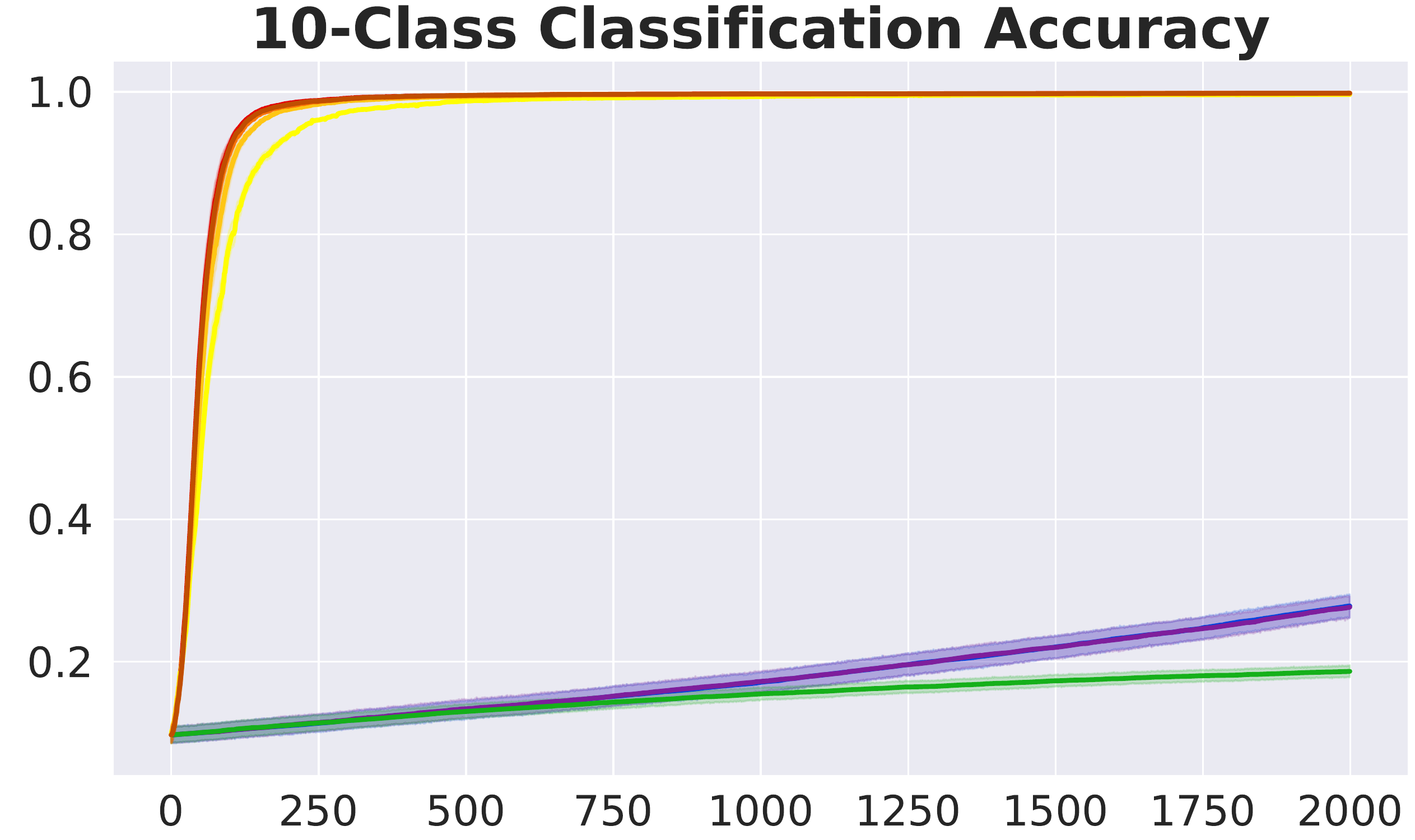}
    \end{subfigure}
    ~
    \begin{subfigure}{0.32\textwidth}
        \centering
       \includegraphics[width=\textwidth]{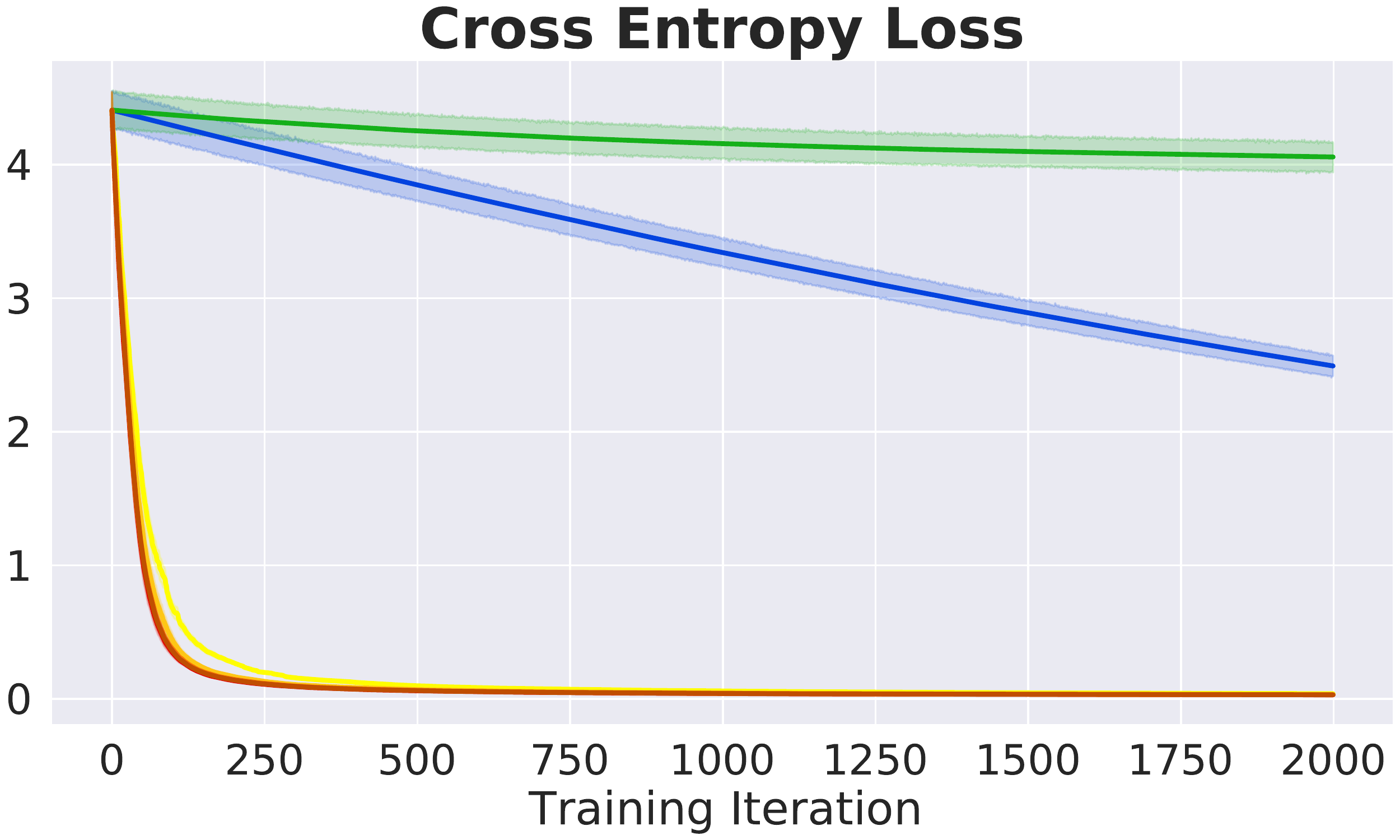}
    \end{subfigure}
    \caption{Gaussian data 10-class classification.}
    \label{fig:class10-adv}
    \end{subfigure}
    
    \begin{subfigure}{\textwidth}
    \begin{subfigure}{0.32\textwidth}
        \centering
        \includegraphics[width=\textwidth]{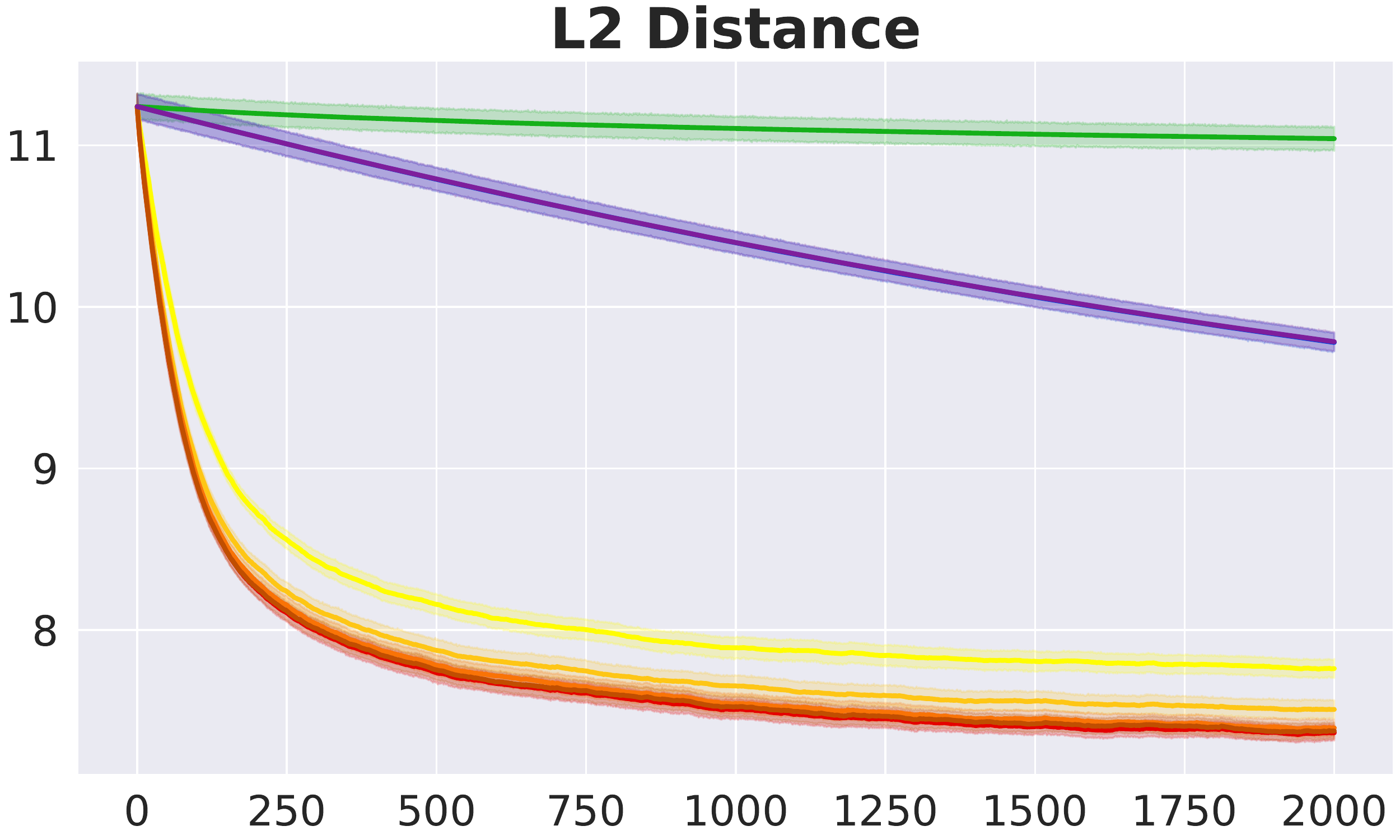}
    \end{subfigure}%
    ~
    \begin{subfigure}{0.32\textwidth}
        \centering
        \includegraphics[width=\textwidth]{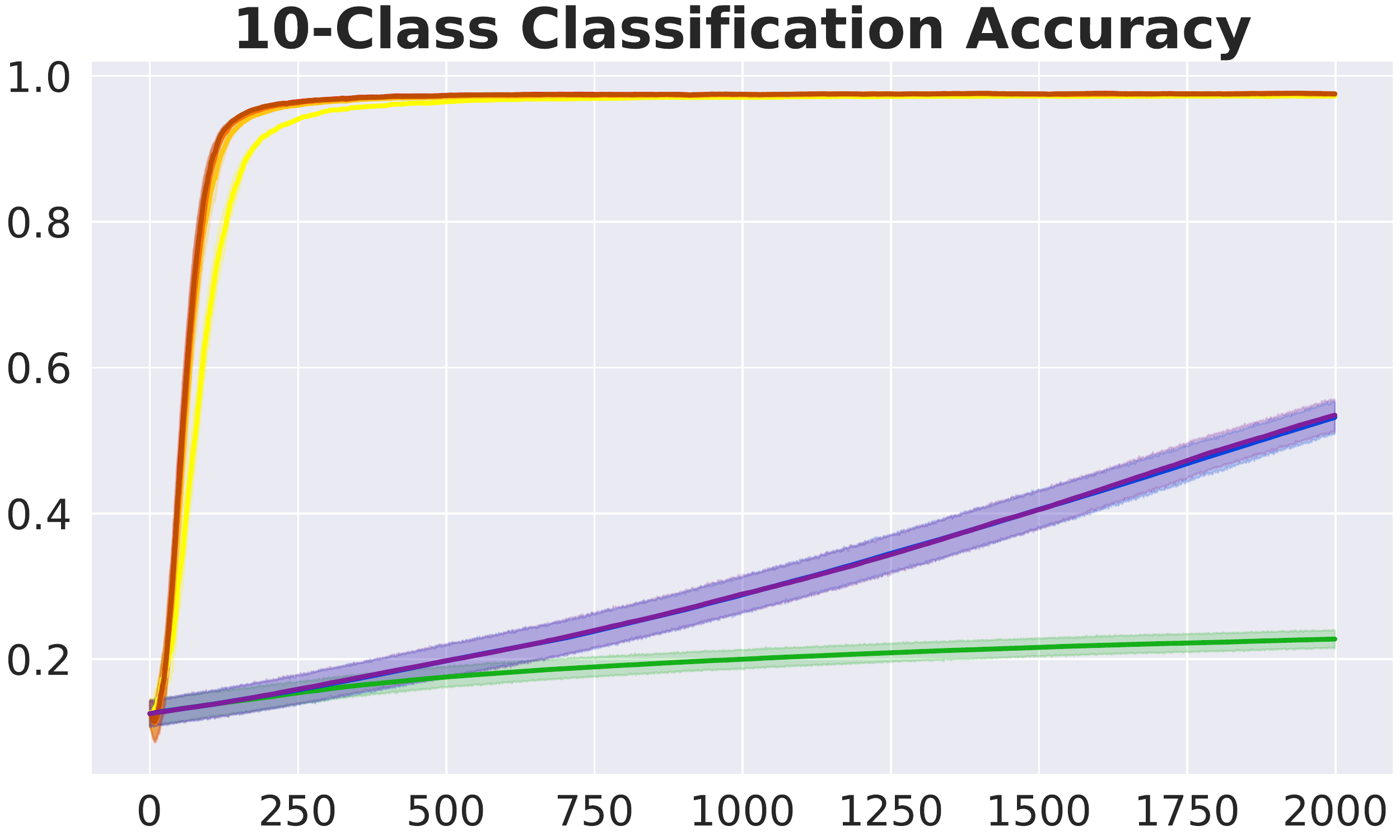}
    \end{subfigure}%
    ~
    \begin{subfigure}{0.32\textwidth}
        \centering
        \includegraphics[width=\textwidth]{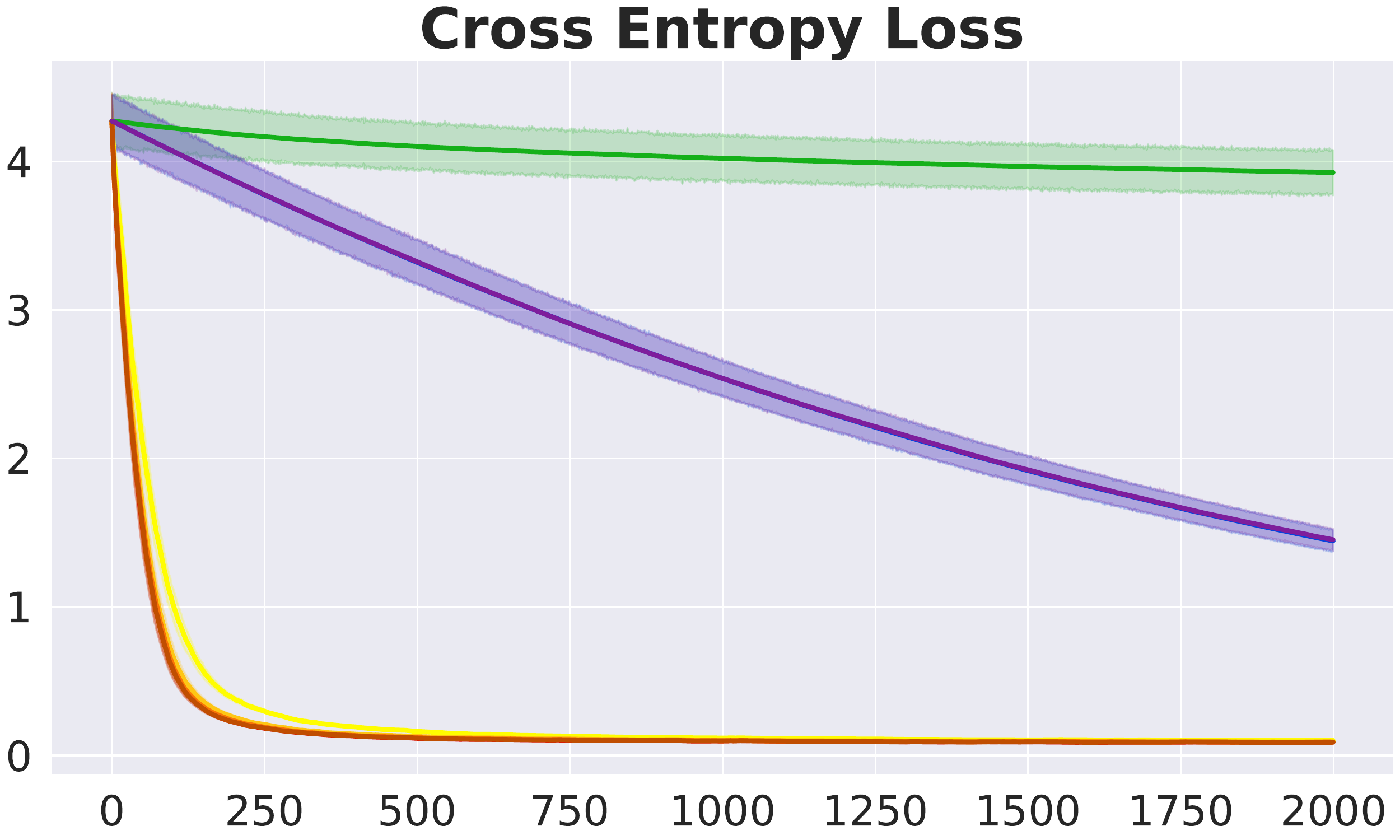}
    \end{subfigure}%
    \caption{MNIST 10-class,  20D teacher features}
    \label{fig:MNIST-adv20}
    \end{subfigure}
    
    \begin{subfigure}{\textwidth}
    \begin{subfigure}{0.32\textwidth}
        \centering
        \includegraphics[width=\textwidth]{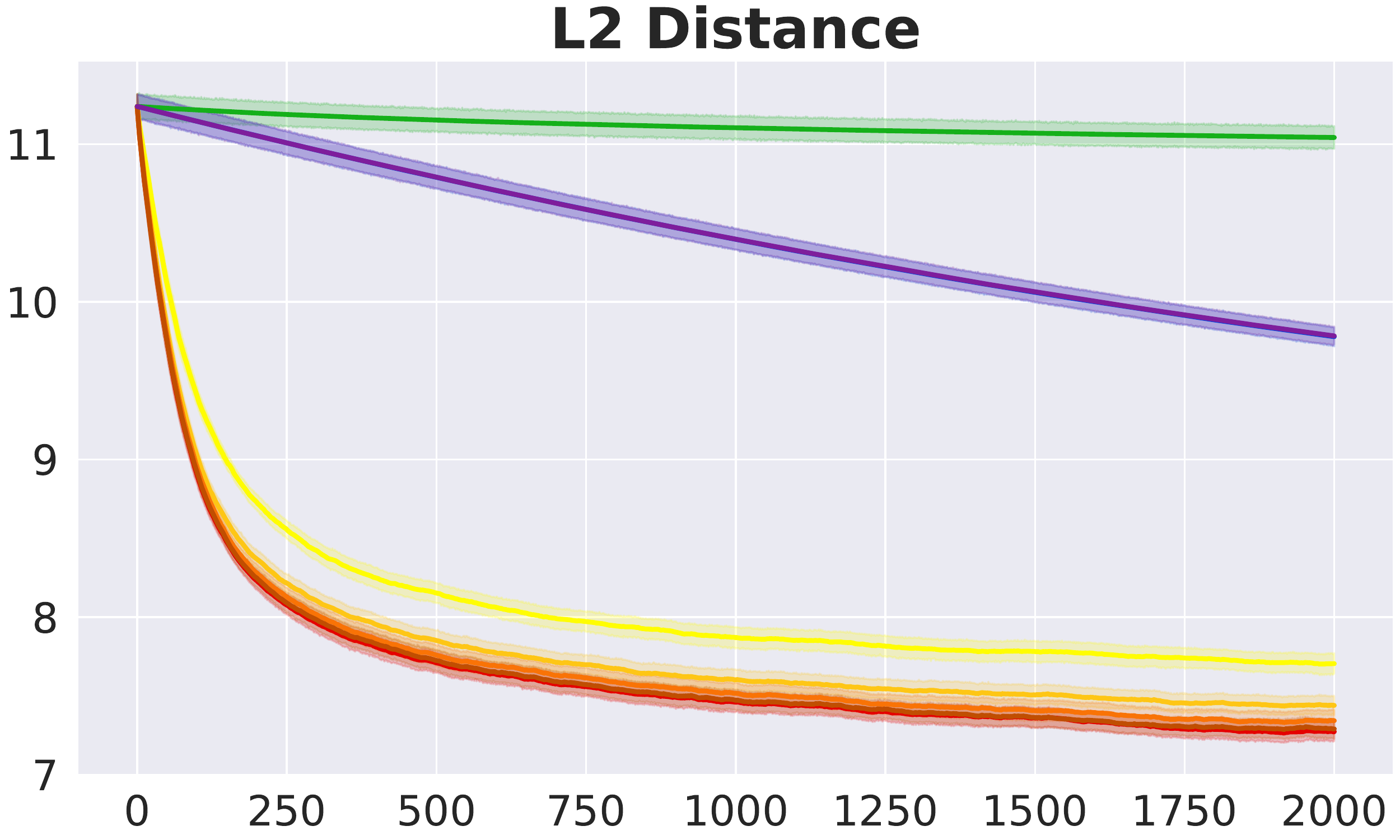}
    \end{subfigure}%
    ~
    \begin{subfigure}{0.32\textwidth}
        \centering
        \includegraphics[width=\textwidth]{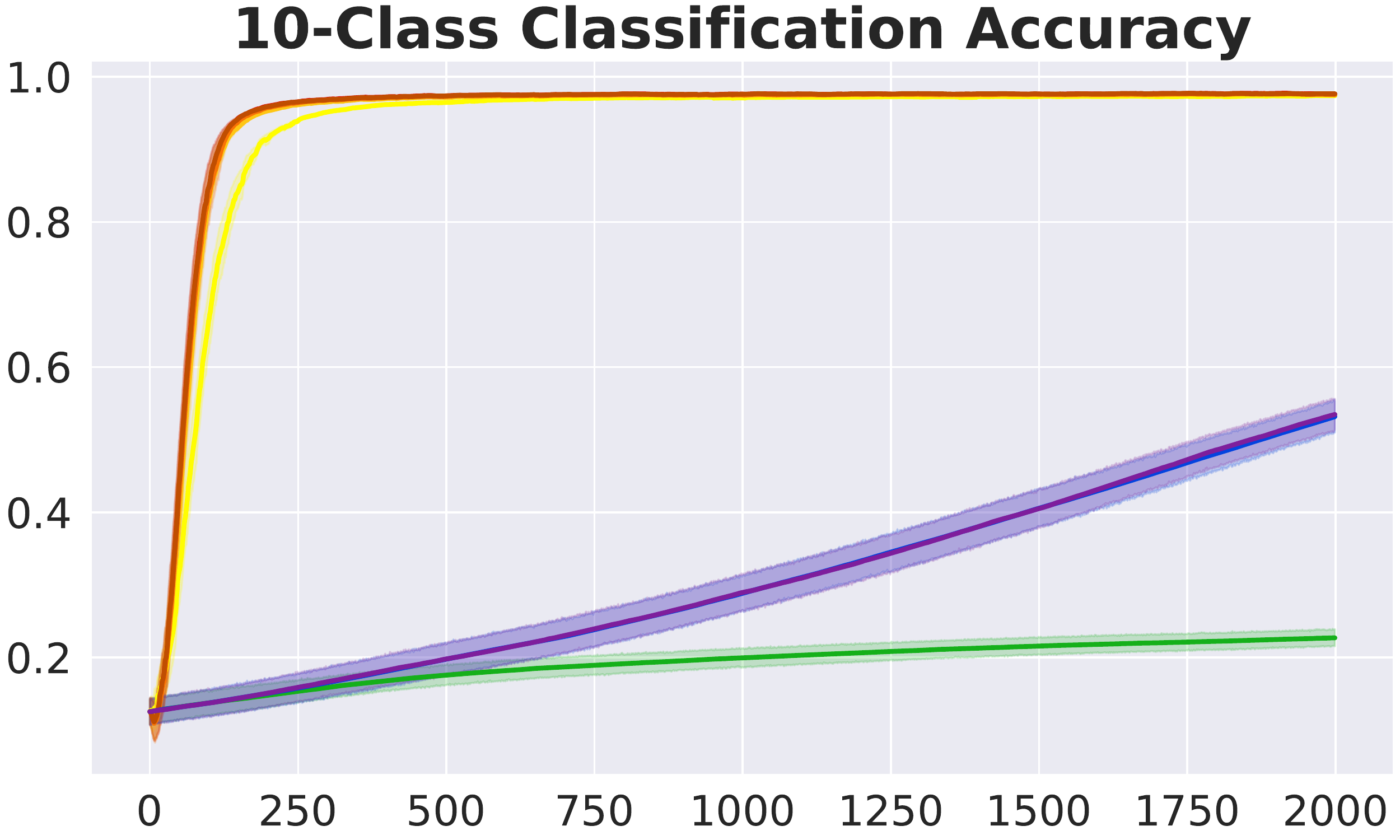}
    \end{subfigure}%
    ~
    \begin{subfigure}{0.32\textwidth}
        \centering
        \includegraphics[width=\textwidth]{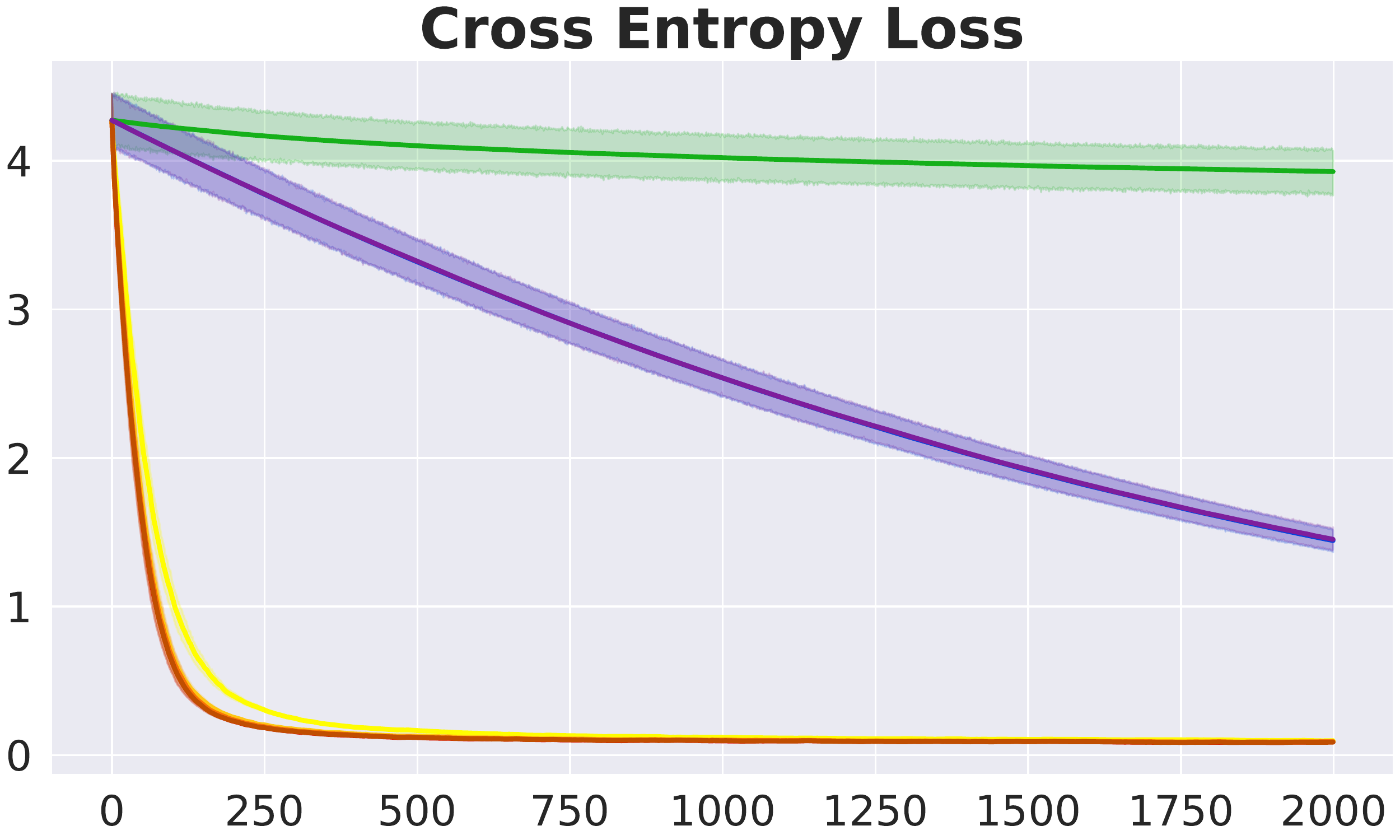}
    \end{subfigure}%
    \caption{MNIST 10-class, 30D teacher features}
    \label{fig:MNIST-adv30}
    \end{subfigure}
    
    \begin{subfigure}{\textwidth}
    \begin{subfigure}{0.32\textwidth}
        \centering
       \includegraphics[width=\textwidth]{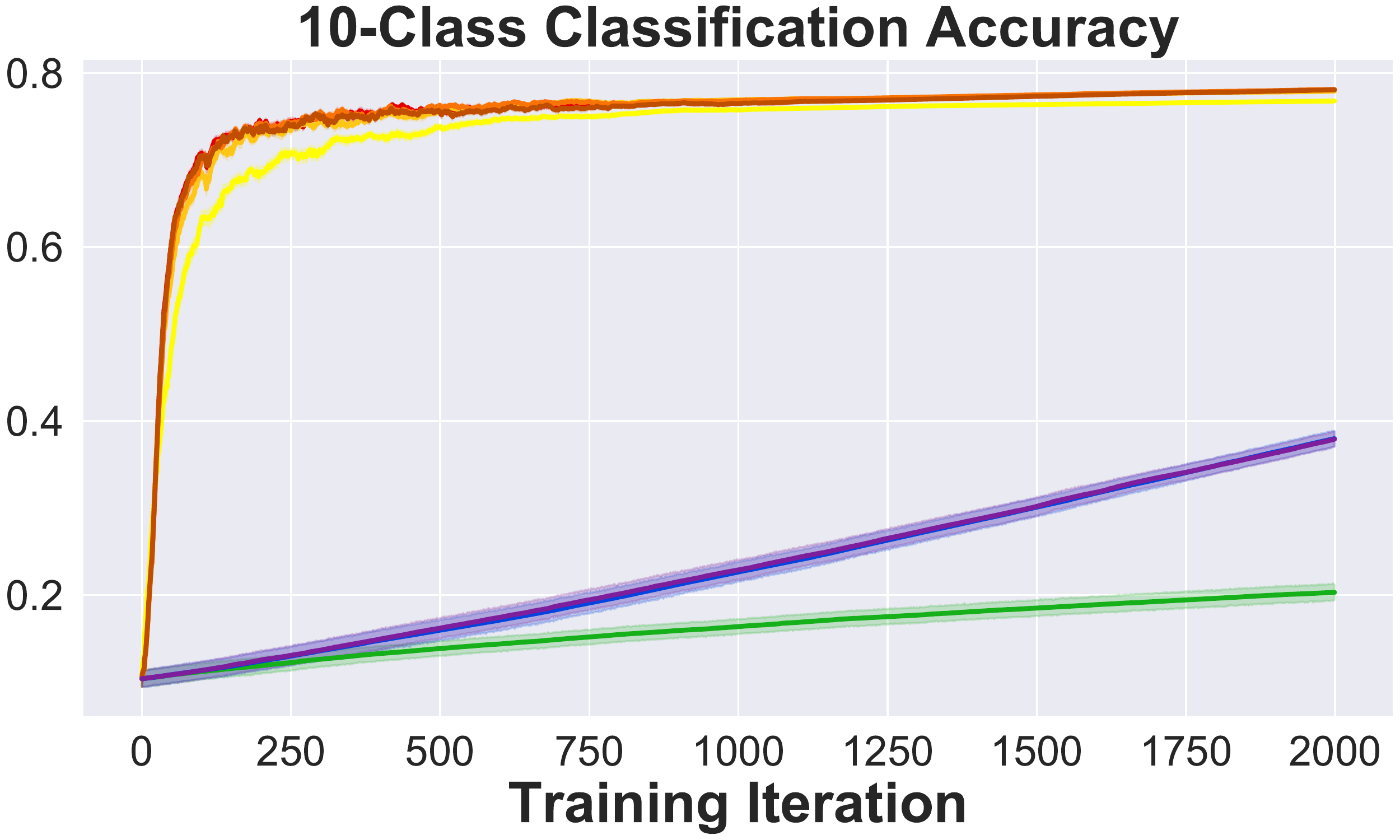}
    \end{subfigure}
    ~
    \begin{subfigure}{0.32\textwidth}
        \centering
        \includegraphics[width=\textwidth]{Figures/CIFAR-10/cifar10_adv9_accuracy.pdf}
    \end{subfigure}
    ~
    \begin{subfigure}{0.32\textwidth}
        \centering
        \includegraphics[width=\textwidth]{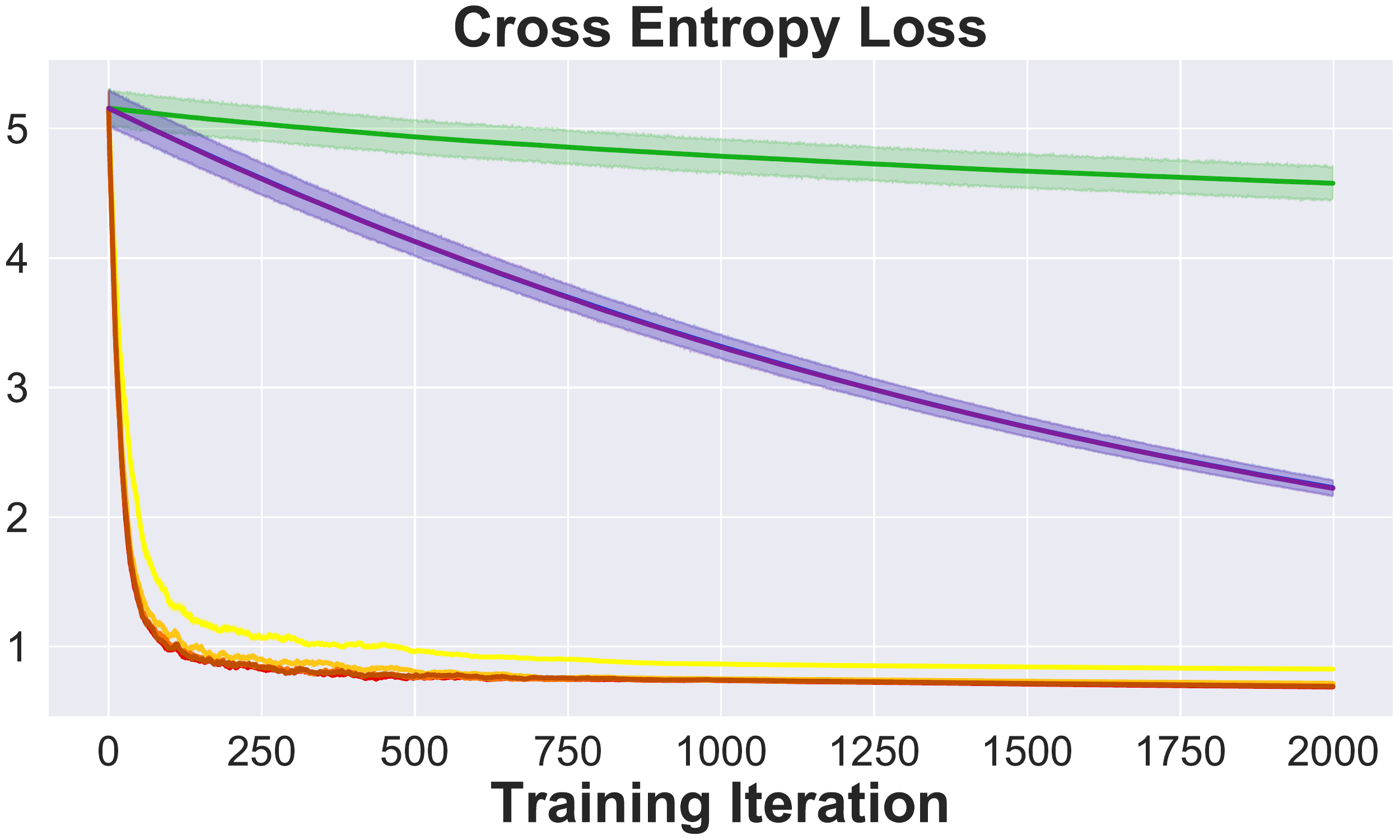}
    \end{subfigure}
    \caption{CIFAR-10, teacher feature from CNN-9.}
    \label{fig:CIFAR10-adv9}
    \end{subfigure}
    
    \begin{subfigure}{\textwidth}
    \begin{subfigure}{0.32\textwidth}
        \centering
       \includegraphics[width=\textwidth]{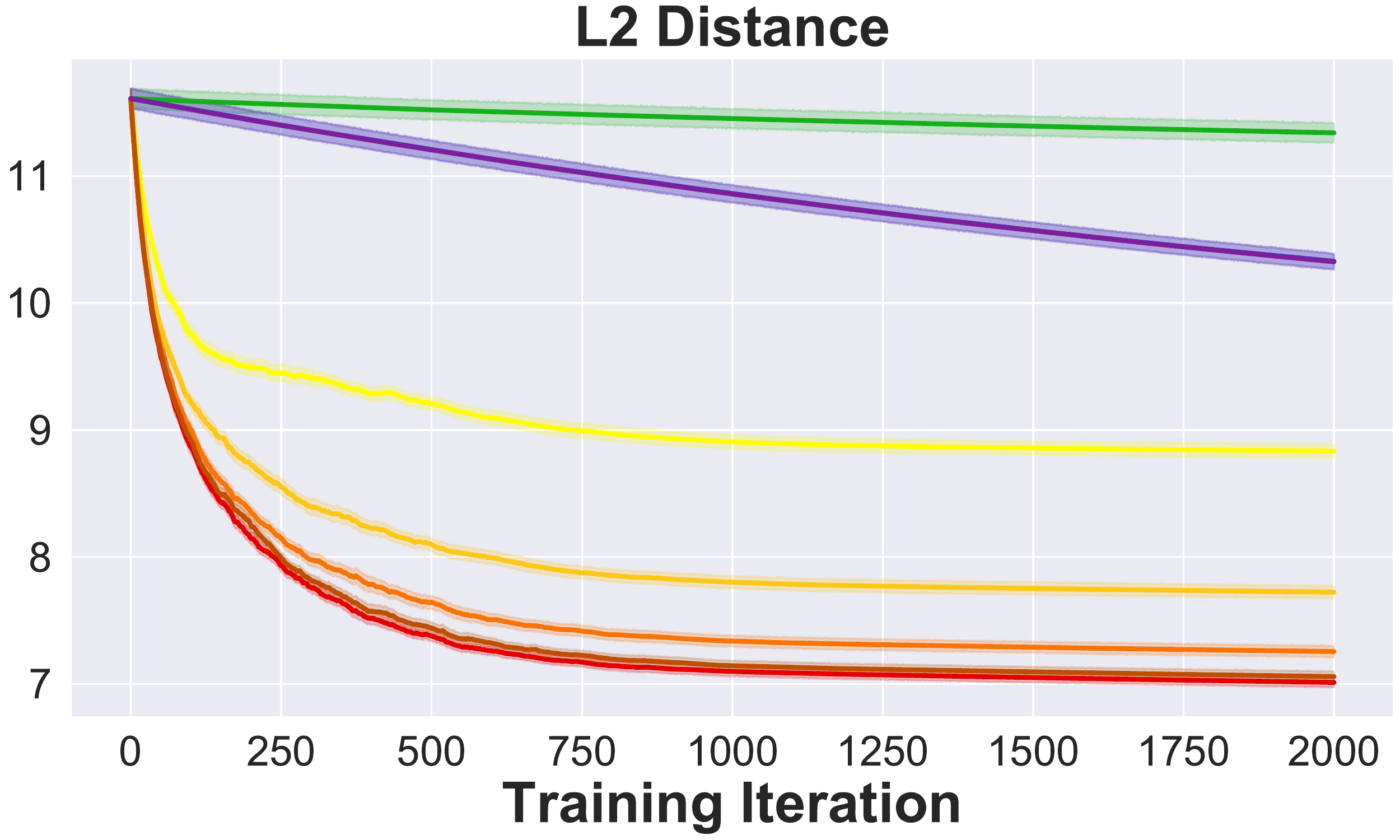}
    \end{subfigure}
    ~
    \begin{subfigure}{0.32\textwidth}
        \centering
        \includegraphics[width=\textwidth]{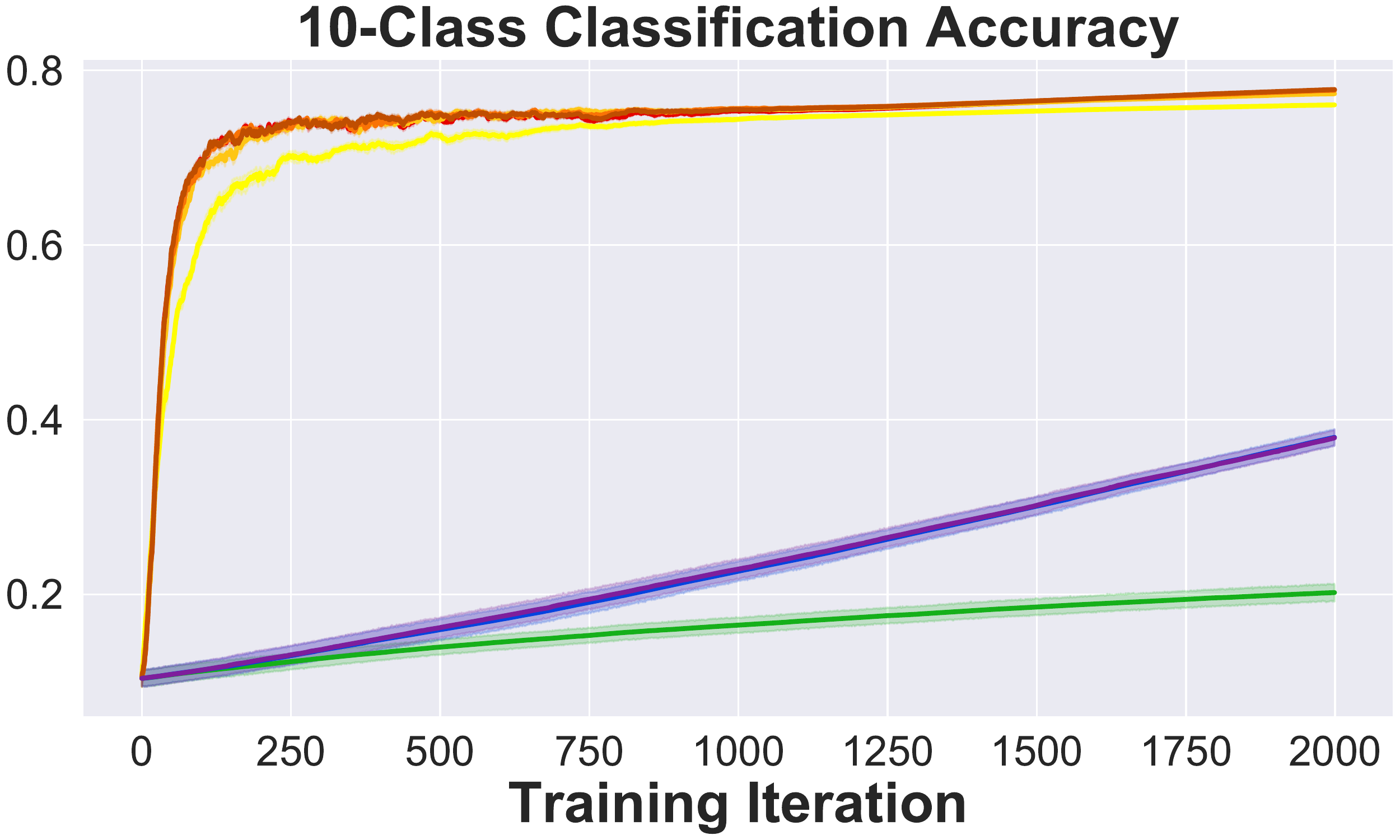}
    \end{subfigure}
    ~
    \begin{subfigure}{0.32\textwidth}
        \centering
        \includegraphics[width=\textwidth]{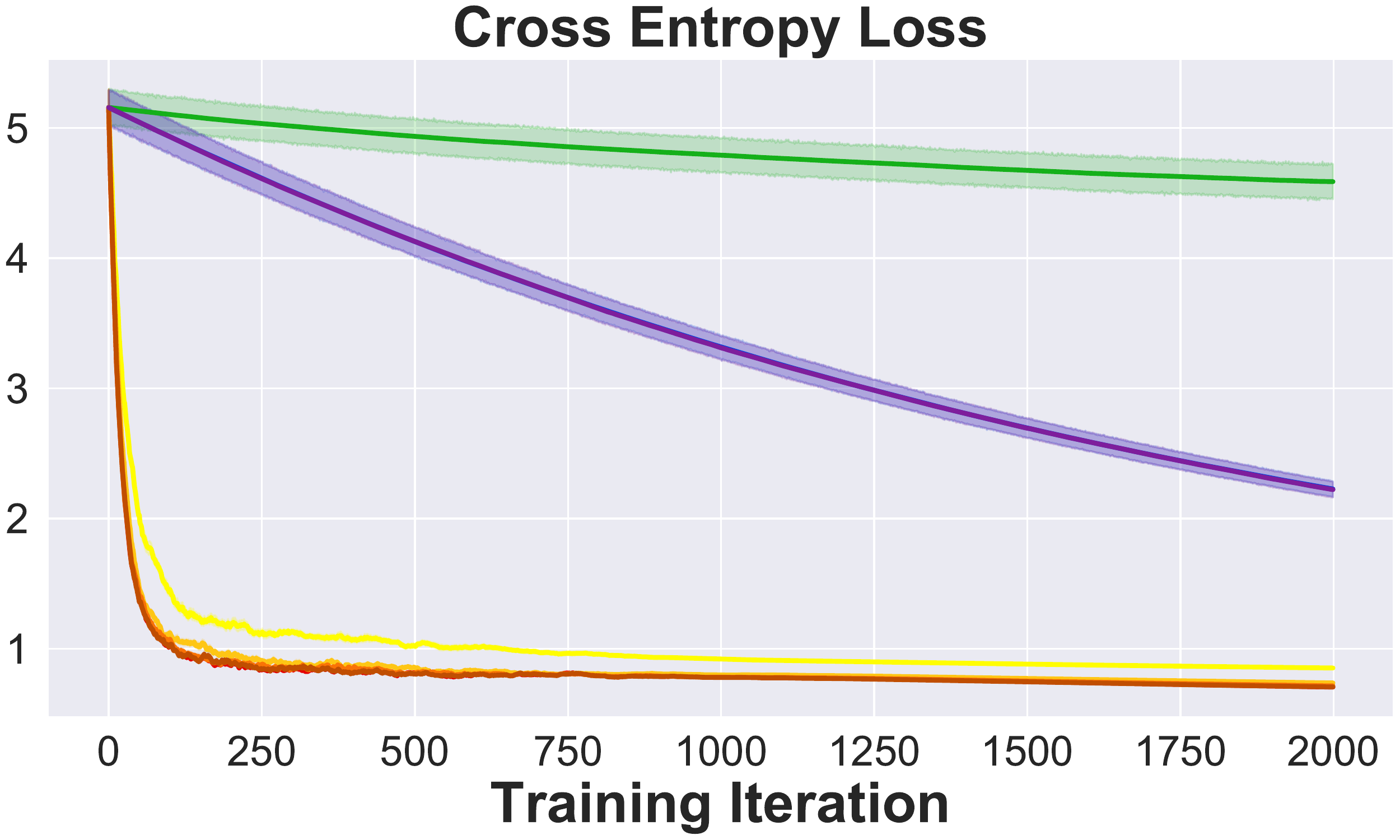}
    \end{subfigure}
    \caption{CIFAR-10, teacher feature from CNN-12.}
    \label{fig:CIFAR10-adv12}
    \end{subfigure}
\end{figure*}

\begin{figure*}\ContinuedFloat    
    \begin{subfigure}{\textwidth}
    \begin{subfigure}{0.32\textwidth}
        \centering
       \includegraphics[width=\textwidth]{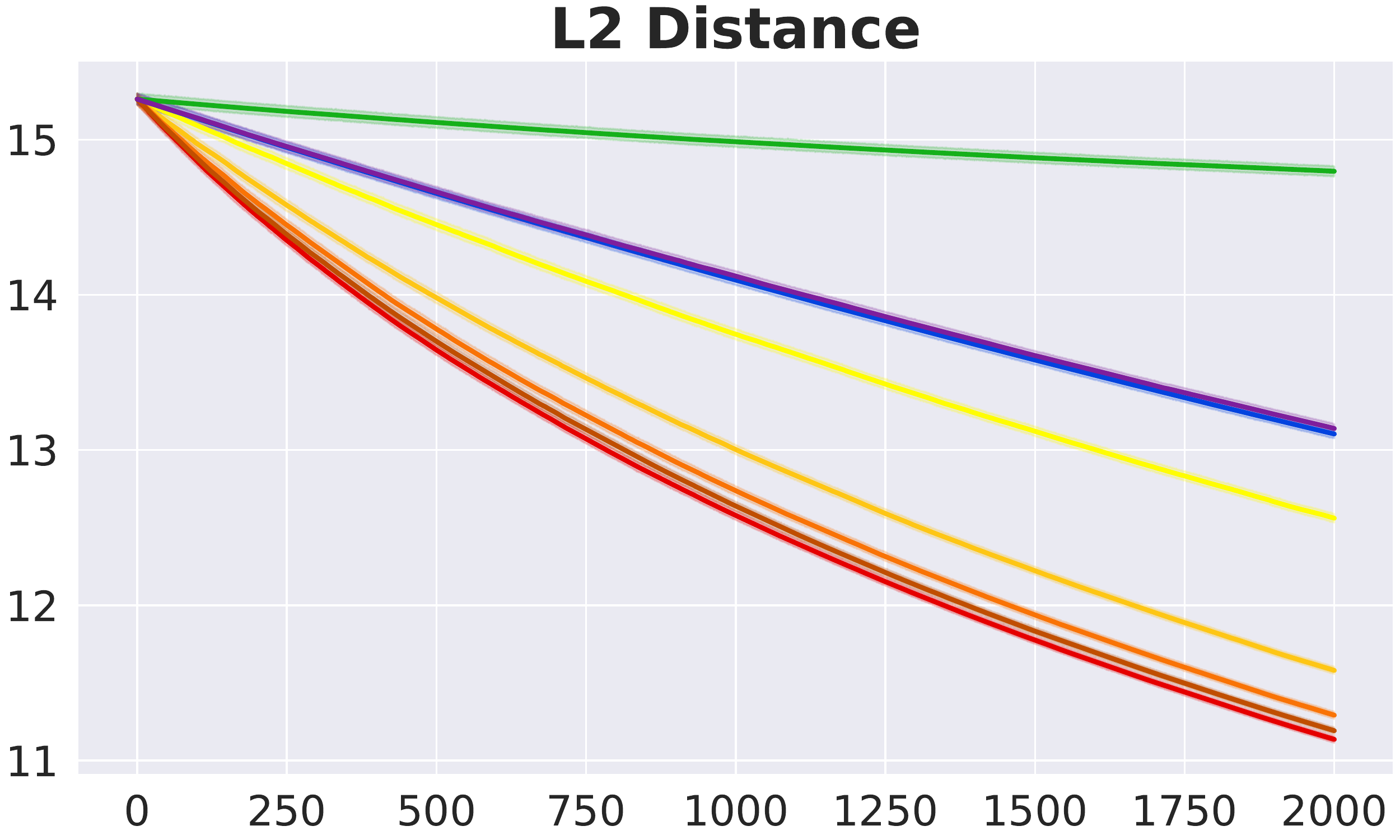}
    \end{subfigure}
    ~
    \begin{subfigure}{0.32\textwidth}
        \centering
        \includegraphics[width=\textwidth]{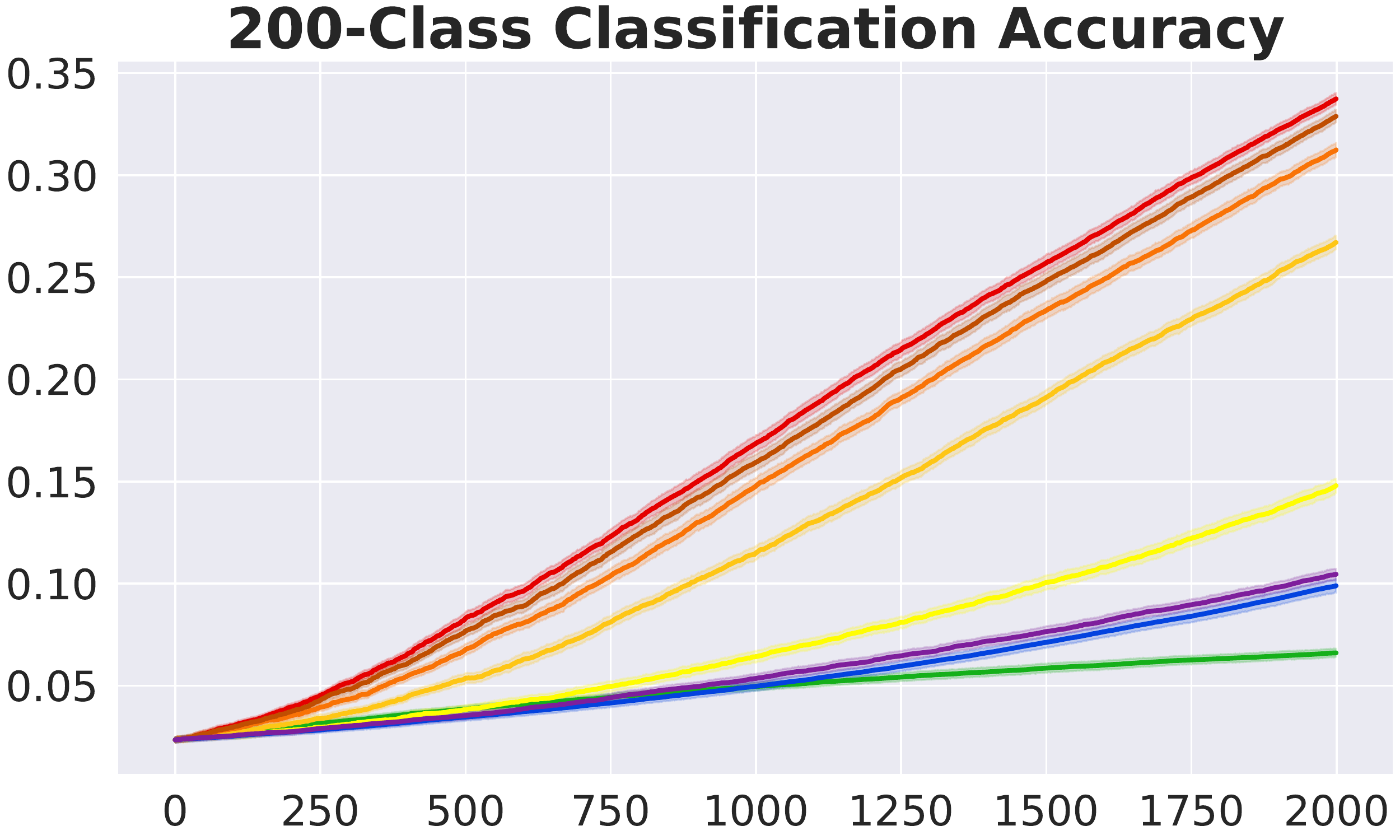}
    \end{subfigure}
    ~
    \begin{subfigure}{0.32\textwidth}
        \centering
        \includegraphics[width=\textwidth]{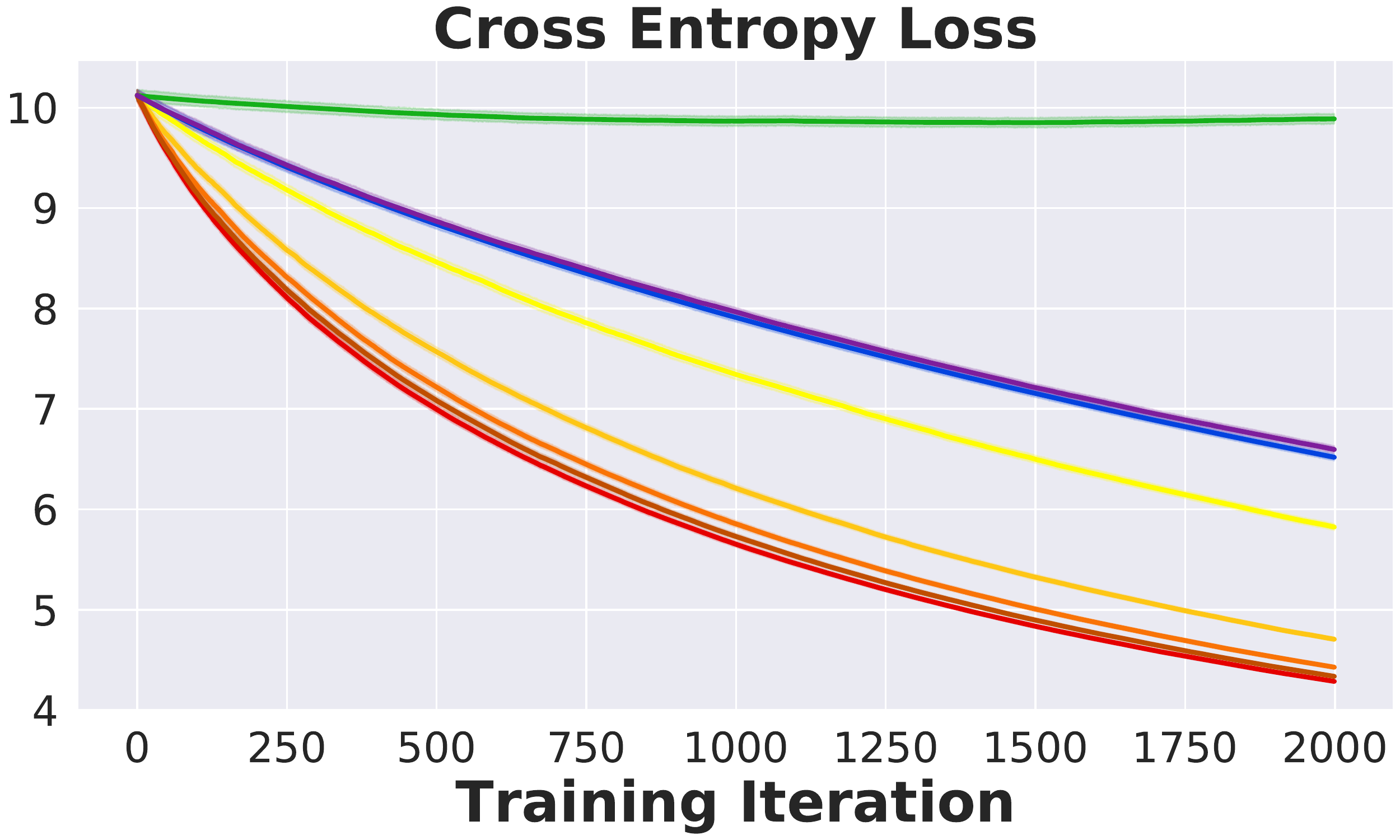}
    \end{subfigure}
    \caption{Tiny ImageNet, teacher feature from VGG-13, showing top-5 accuracy.}
    \label{fig:ImageNet-adv13}
    \end{subfigure}
    
    \begin{subfigure}{\textwidth}
    \begin{subfigure}{0.32\textwidth}
        \centering
       \includegraphics[width=\textwidth]{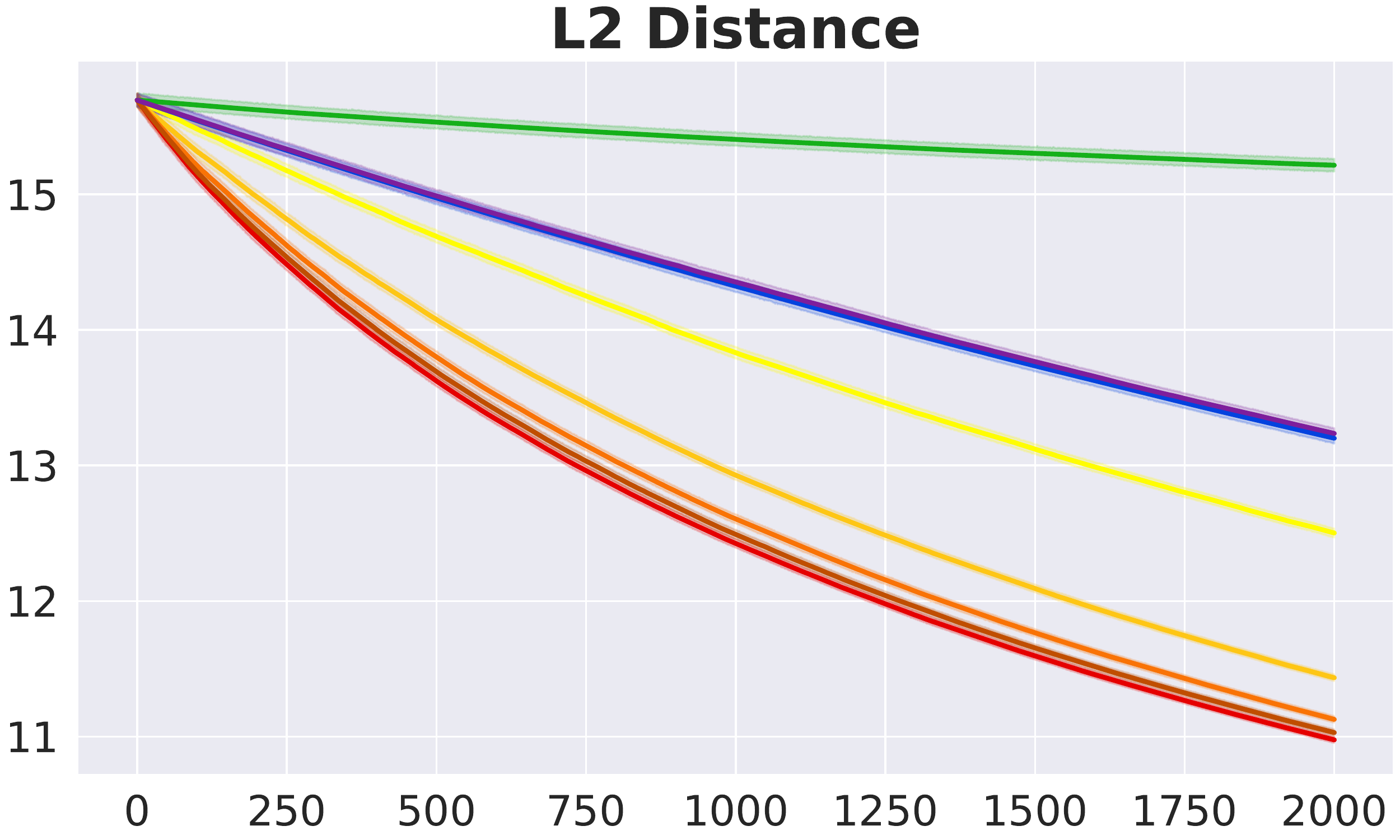}
    \end{subfigure}
    ~
    \begin{subfigure}{0.32\textwidth}
        \centering
        \includegraphics[width=\textwidth]{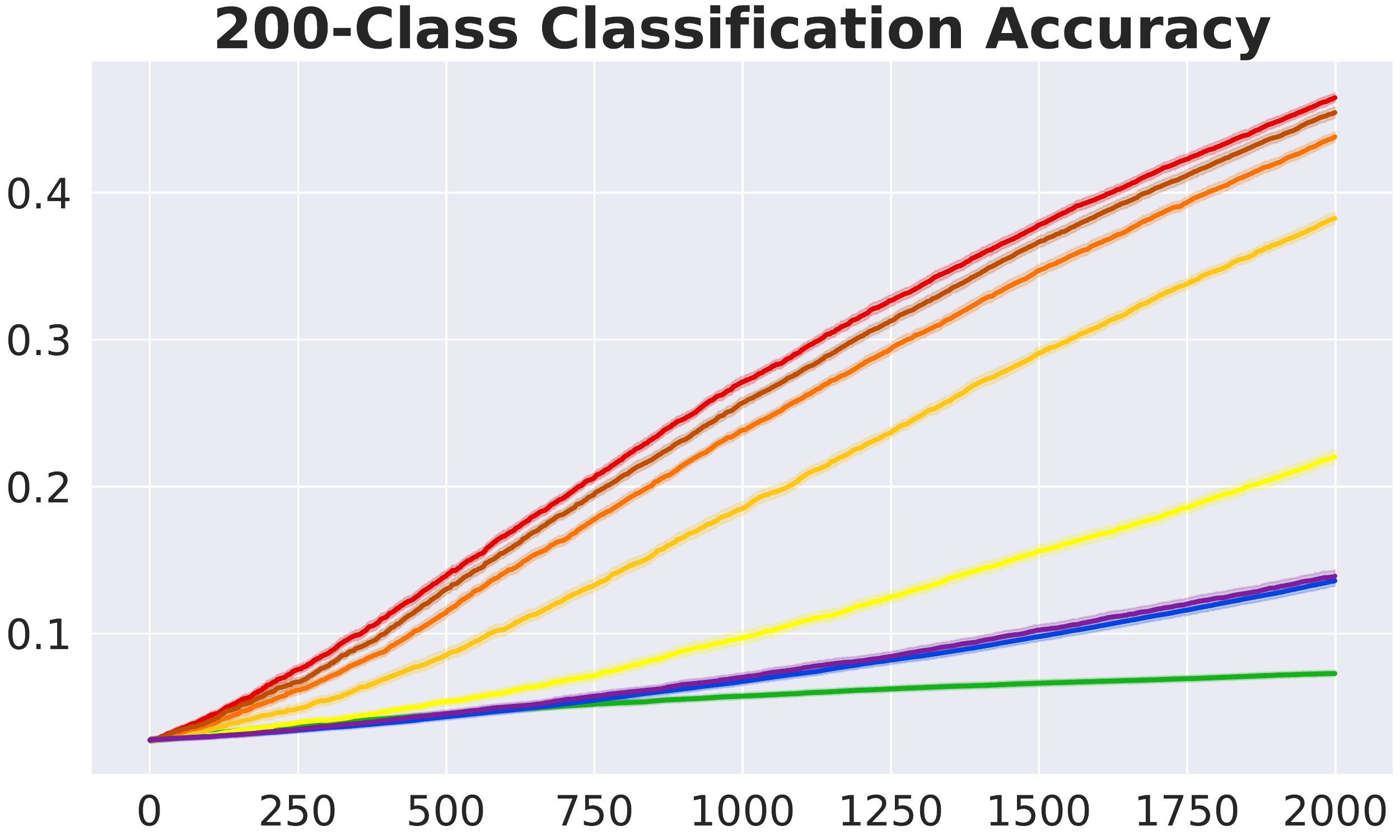}
    \end{subfigure}
    ~
    \begin{subfigure}{0.32\textwidth}
        \centering
        \includegraphics[width=\textwidth]{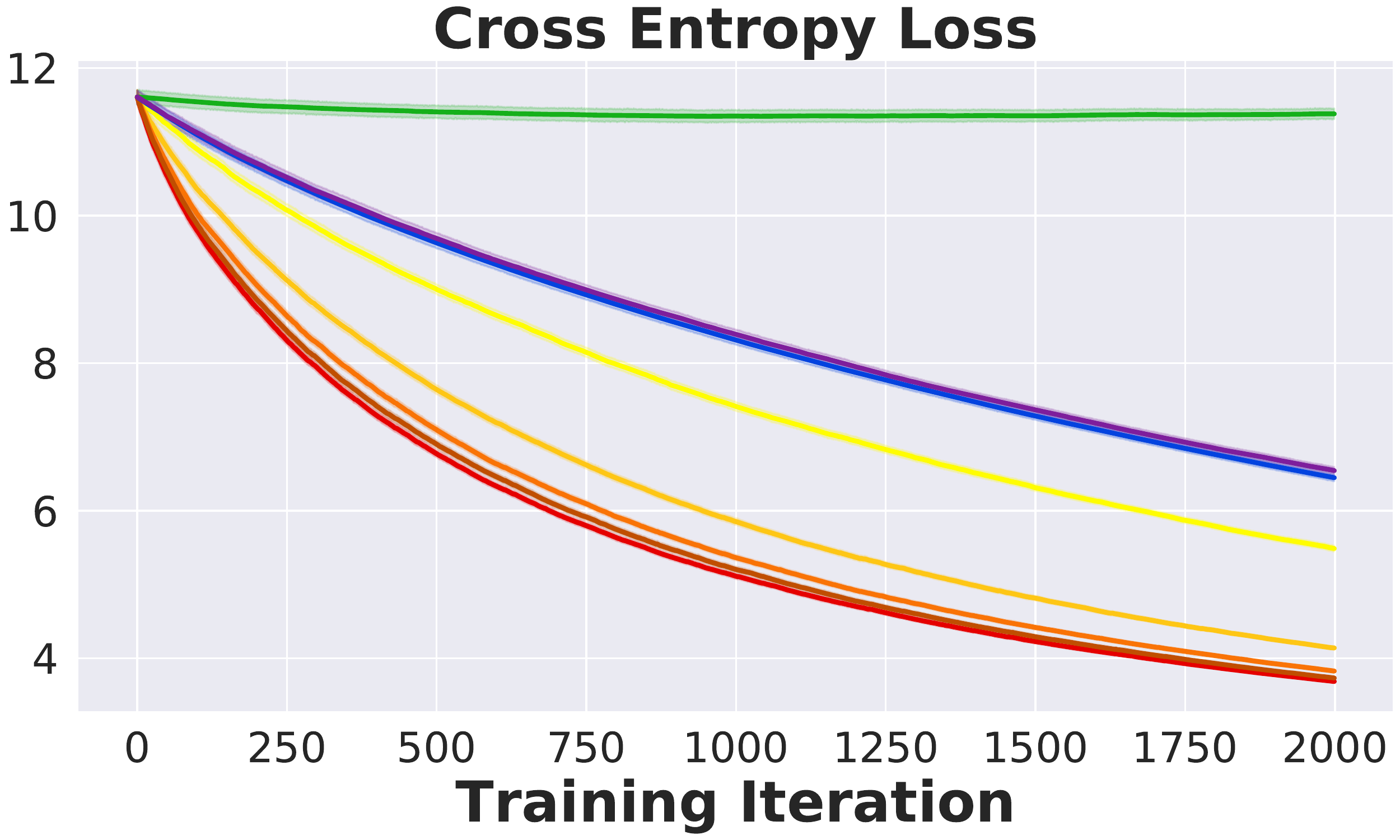}
    \end{subfigure}
    \caption{Tiny ImageNet, teacher feature from VGG-19, showing top-5 accuracy.}
    \label{fig:ImageNet-adv19}
    \end{subfigure}
    
    \begin{subfigure}{\textwidth}
    \begin{subfigure}{0.32\textwidth}
        \centering
        \includegraphics[width=\textwidth]{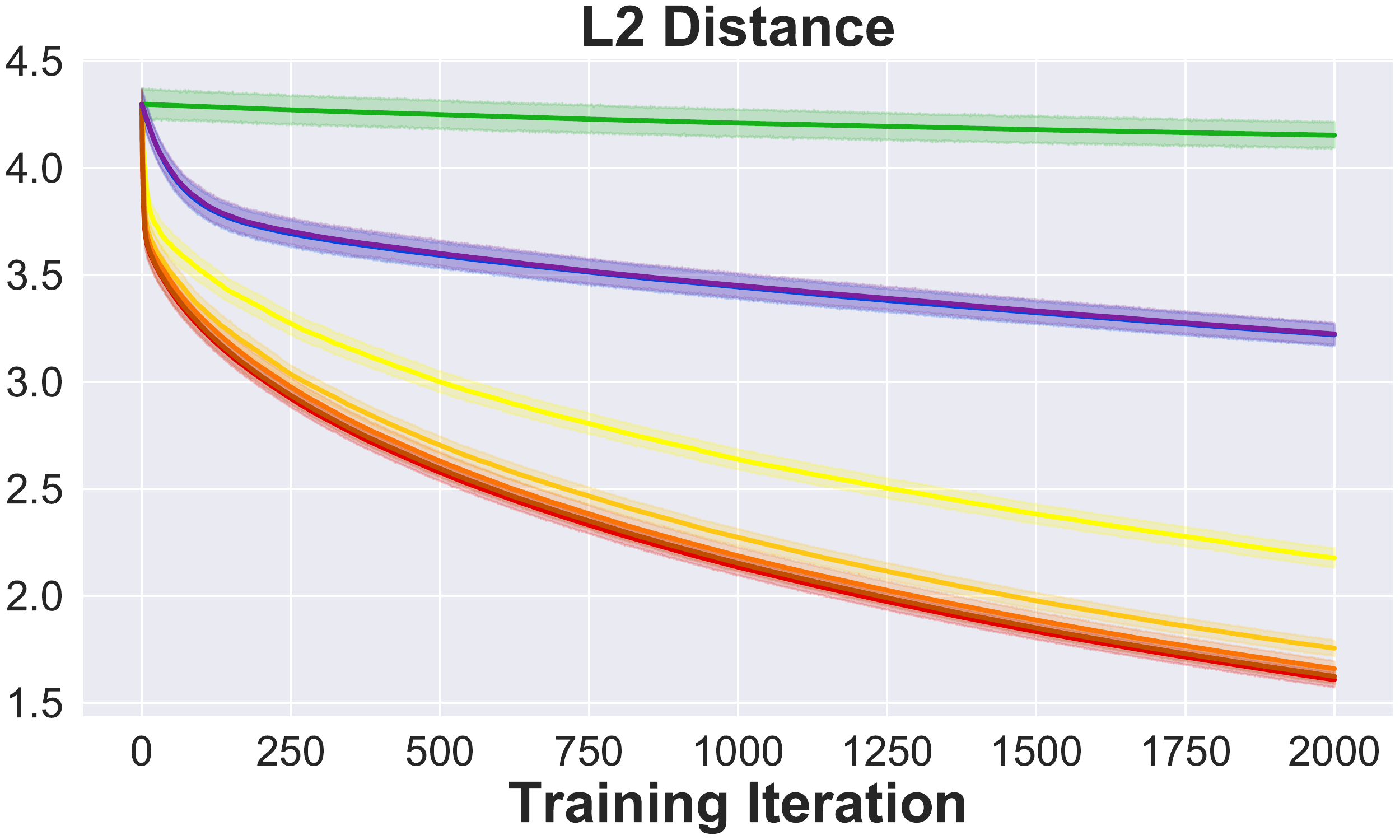}
    \end{subfigure}%
    ~
    \begin{subfigure}{0.32\textwidth}
        \centering
        \includegraphics[width=\textwidth]{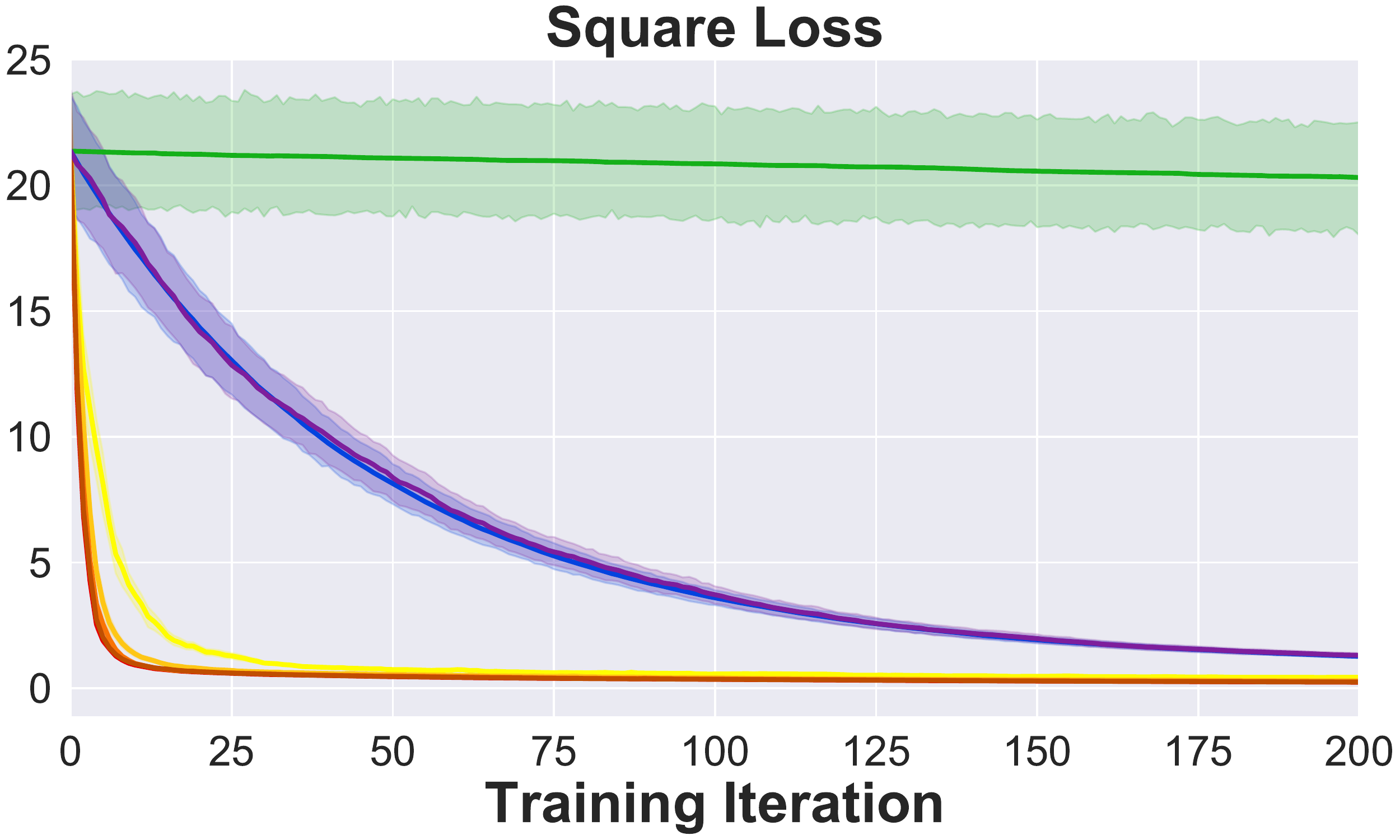}
    \end{subfigure}%
    ~
    \begin{subfigure}{0.32\textwidth}
        \centering
        \includegraphics[width=\textwidth]{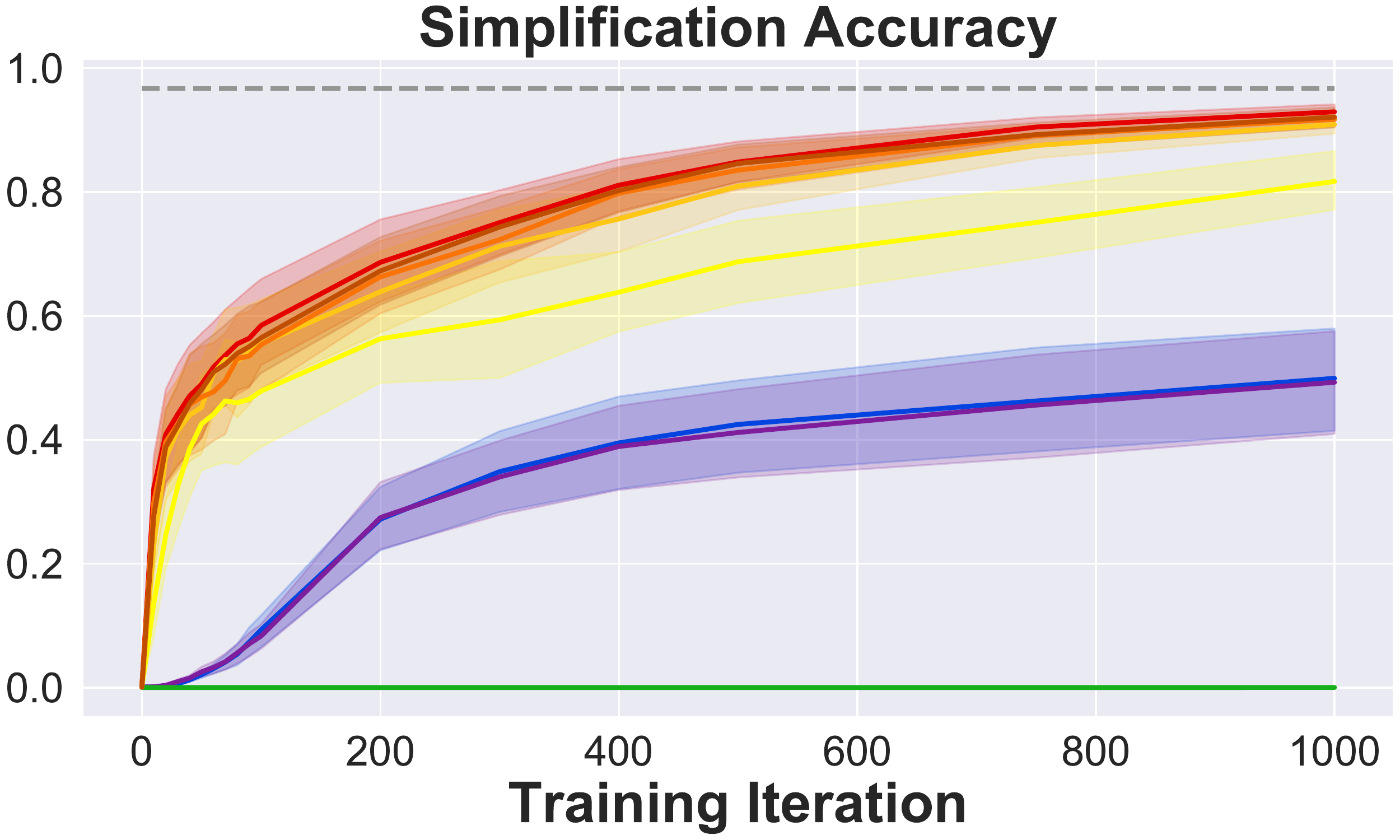}
    \end{subfigure}%
    \caption{Equation simplification, 40D teacher features}
    \label{fig:Equation-adv40}
    \end{subfigure}
    
    \begin{subfigure}{\textwidth}
    \begin{subfigure}{0.32\textwidth}
        \centering
        \includegraphics[width=\textwidth]{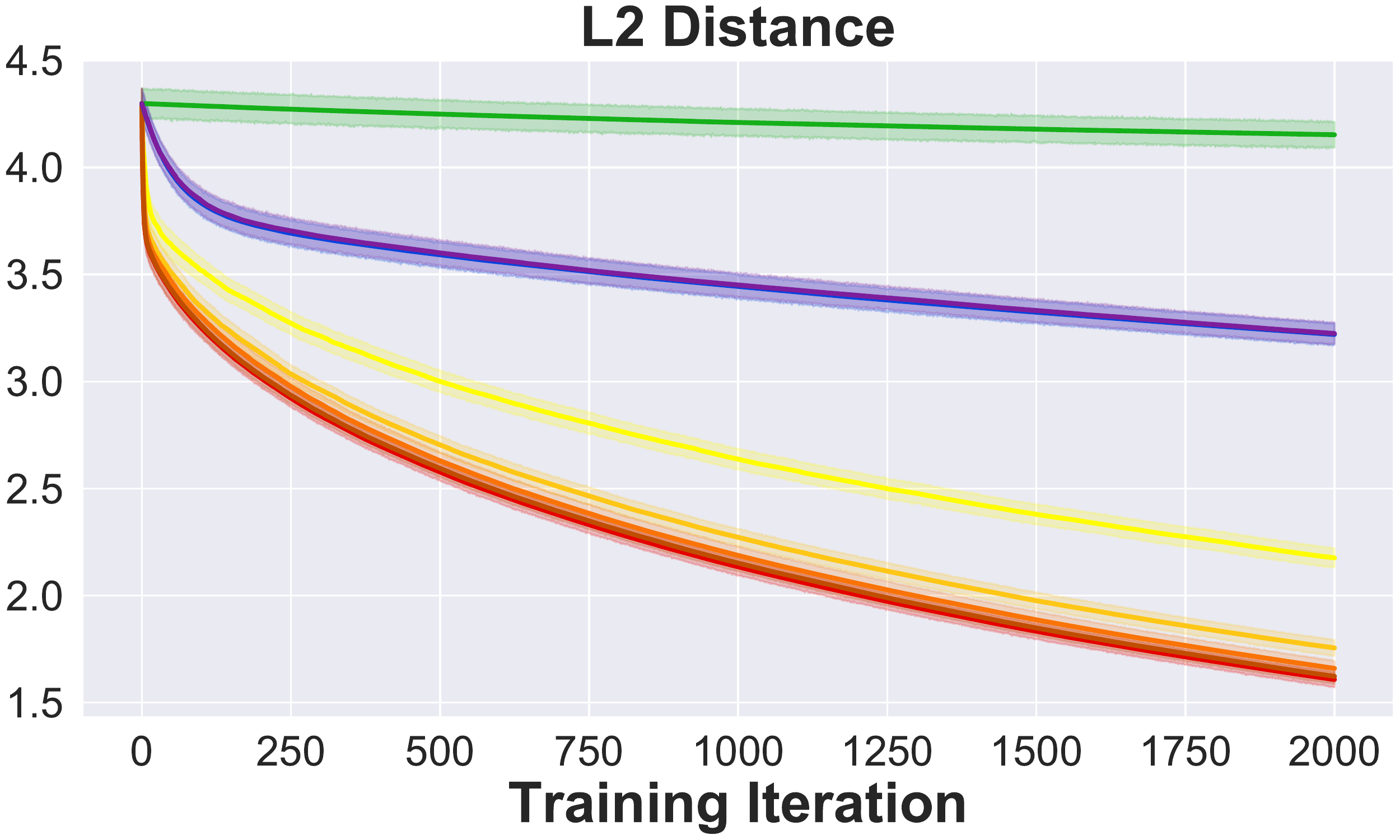}
    \end{subfigure}%
    ~
    \begin{subfigure}{0.32\textwidth}
        \centering
        \includegraphics[width=\textwidth]{Figures/Equation/equation_adv40_squareLoss.pdf}
    \end{subfigure}%
    ~
    \begin{subfigure}{0.32\textwidth}
        \centering
        \includegraphics[width=\textwidth]{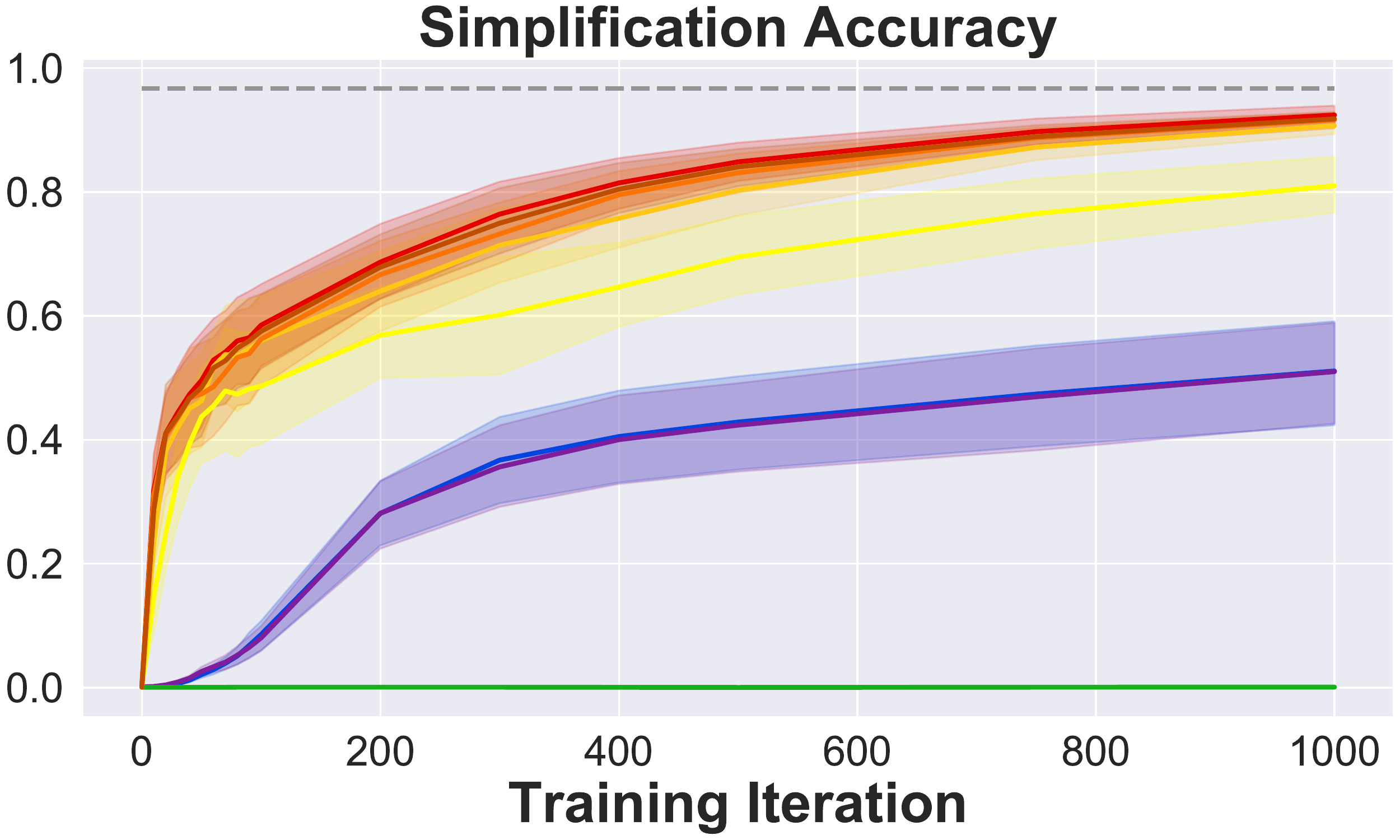}
    \end{subfigure}%
    \caption{Equation simplification, 50D teacher features}
    \label{fig:Equation-adv50}
    \end{subfigure}
    
    \begin{subfigure}{\textwidth}
    \begin{subfigure}{0.32\textwidth}
        \centering
        \includegraphics[width=\textwidth]{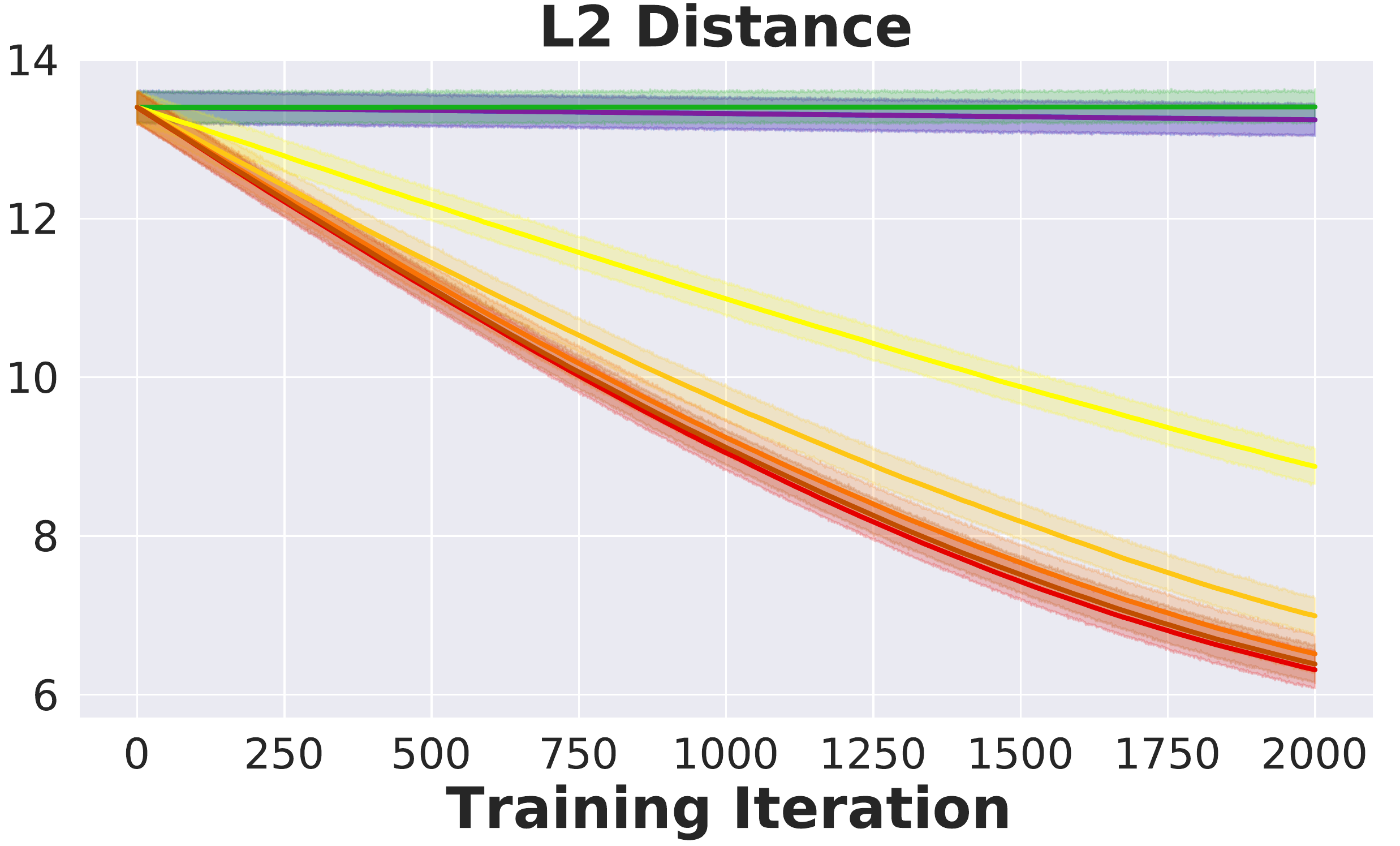}
    \end{subfigure}
    ~
    \begin{subfigure}{0.32\textwidth}
        \centering
        \includegraphics[width=\textwidth]{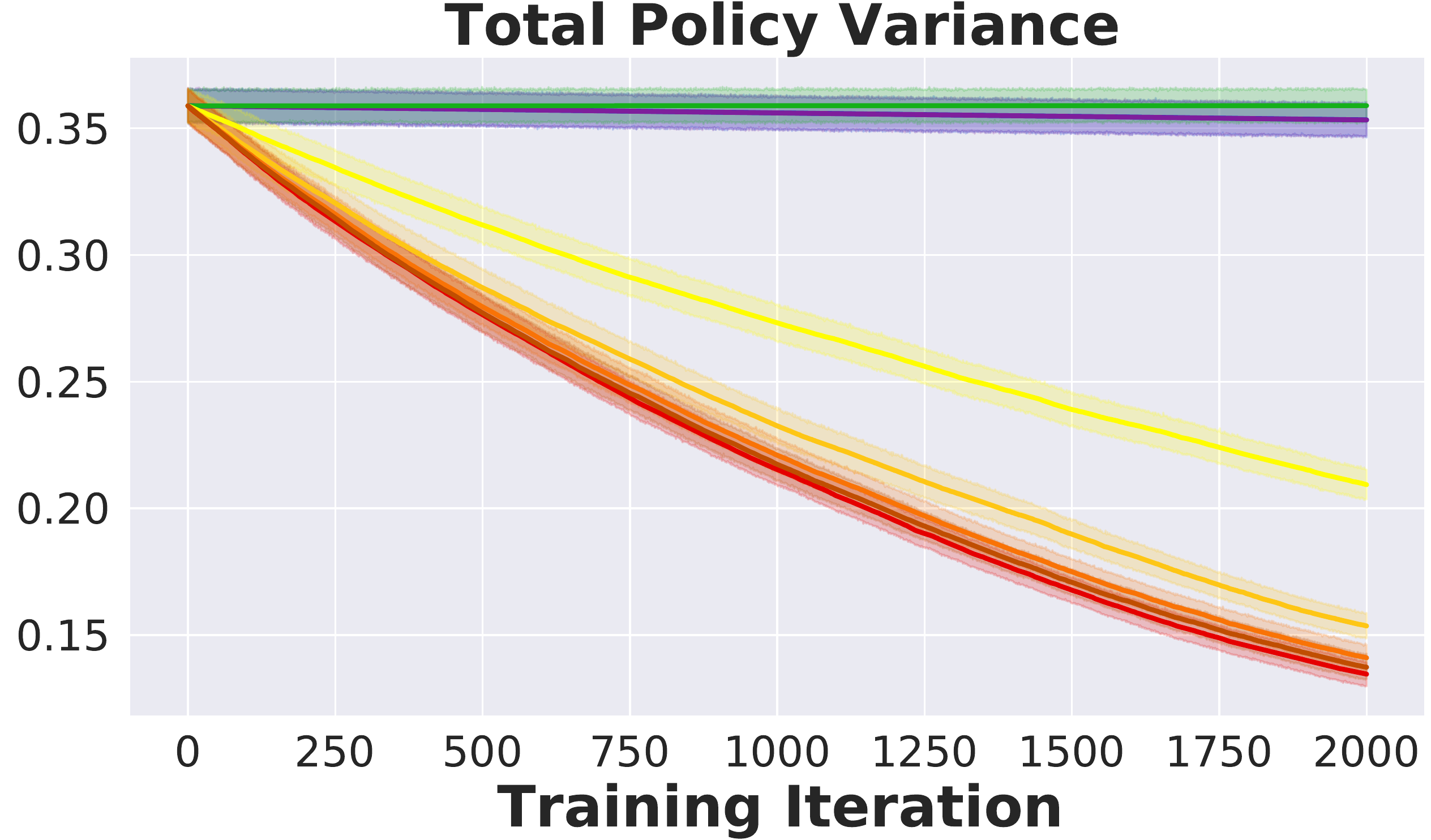}
    \end{subfigure}
    ~
    \begin{subfigure}{0.32\textwidth}
        \centering
        \includegraphics[width=\textwidth]{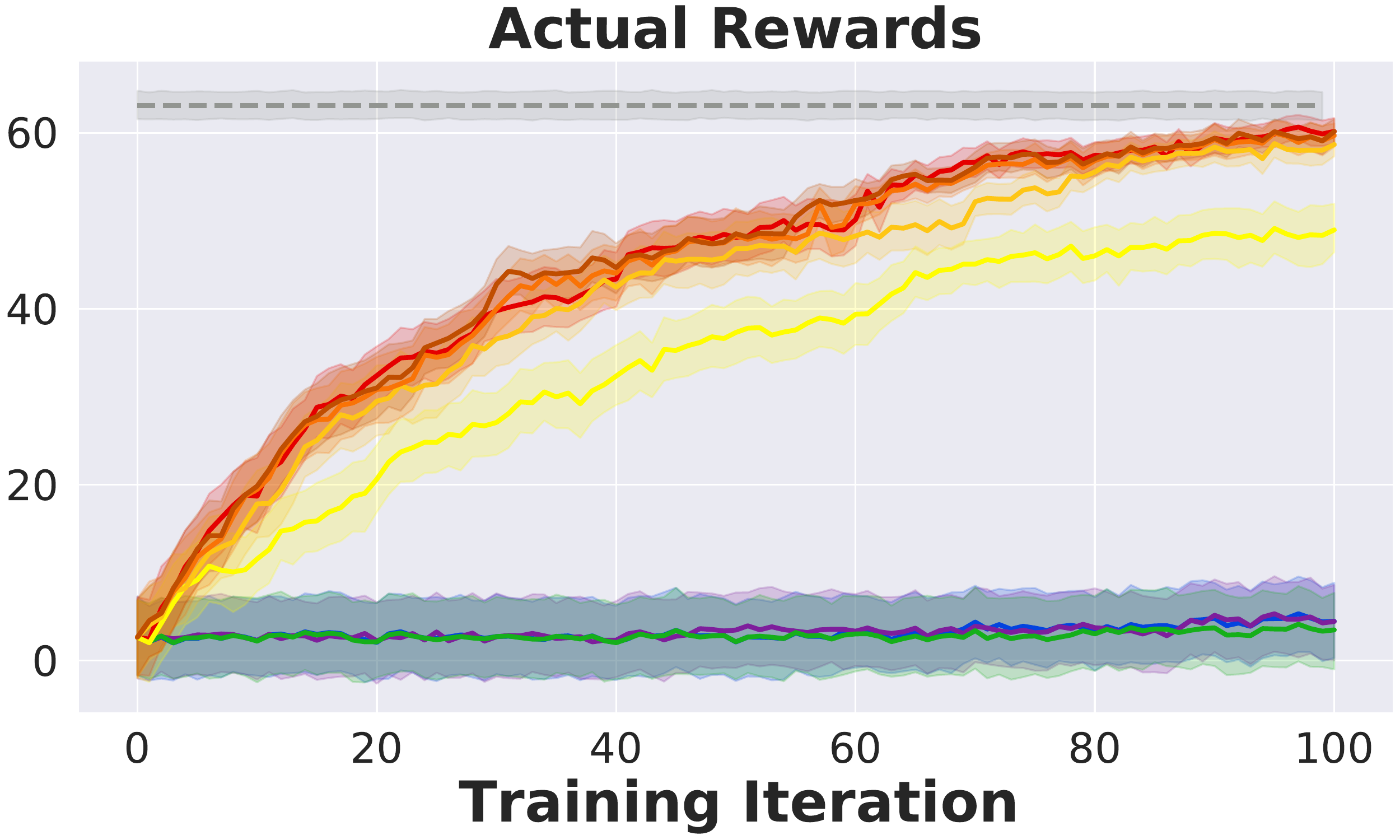}
    \end{subfigure}
    \caption{Online inverse reinforcement learning}
    \label{fig:OIRLH-adv}
    \end{subfigure}
    
    \begin{subfigure}{\textwidth}
    \begin{subfigure}{0.32\textwidth}
        \centering
        \includegraphics[width=\textwidth]{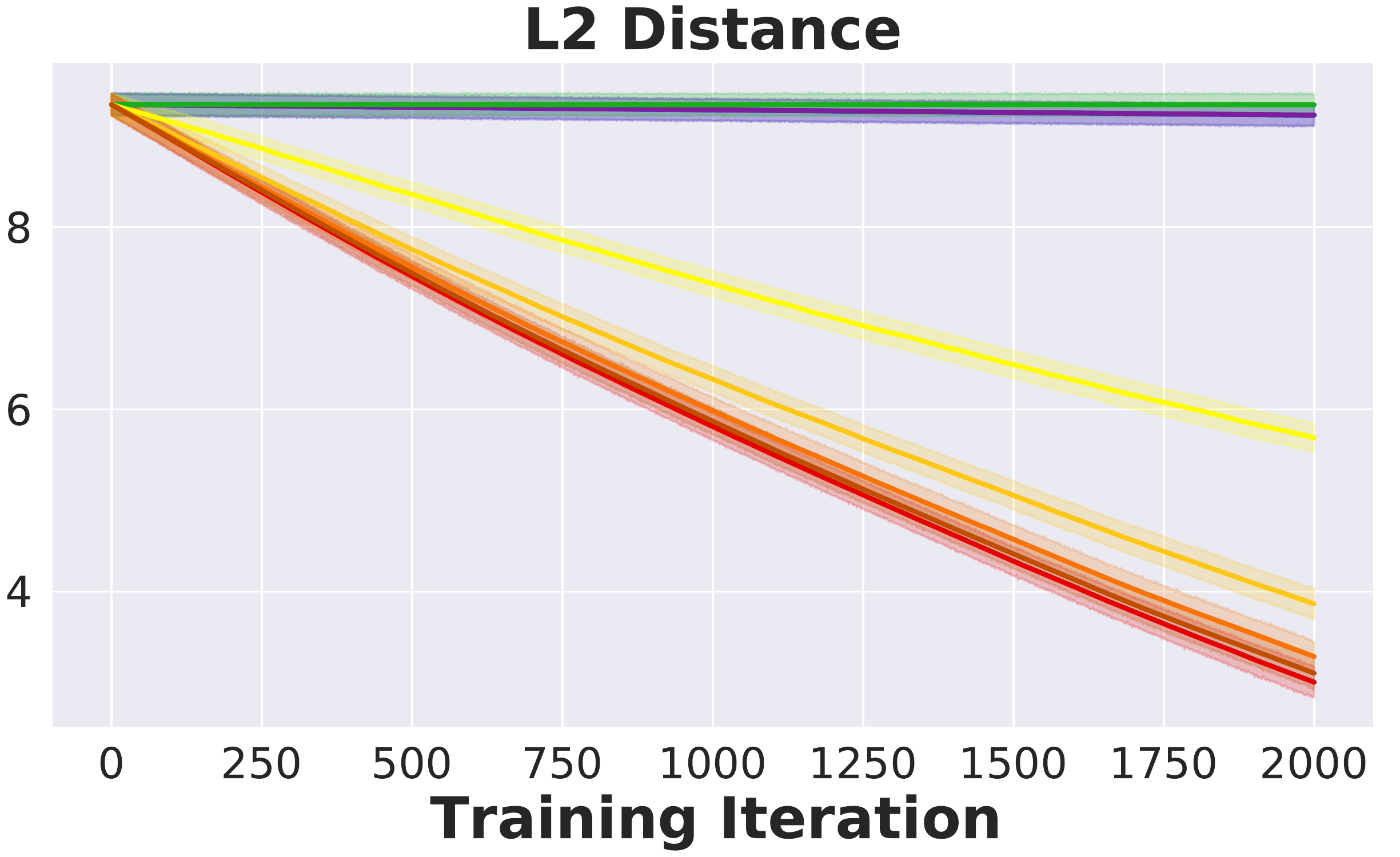}
    \end{subfigure}
    ~
    \begin{subfigure}{0.32\textwidth}
        \centering
        \includegraphics[width=\textwidth]{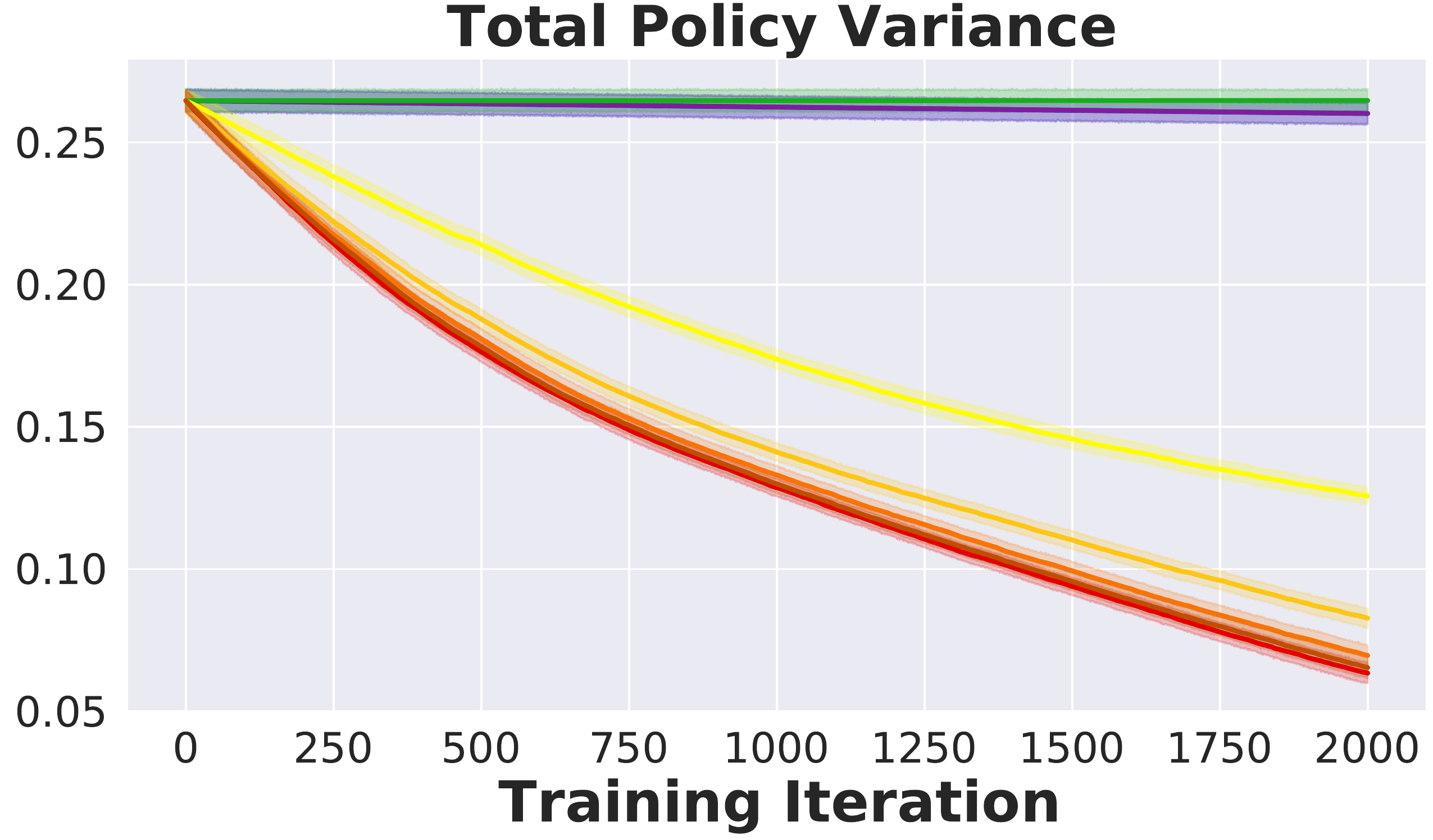}
    \end{subfigure}
    ~
    \begin{subfigure}{0.32\textwidth}
        \centering
        \includegraphics[width=\textwidth]{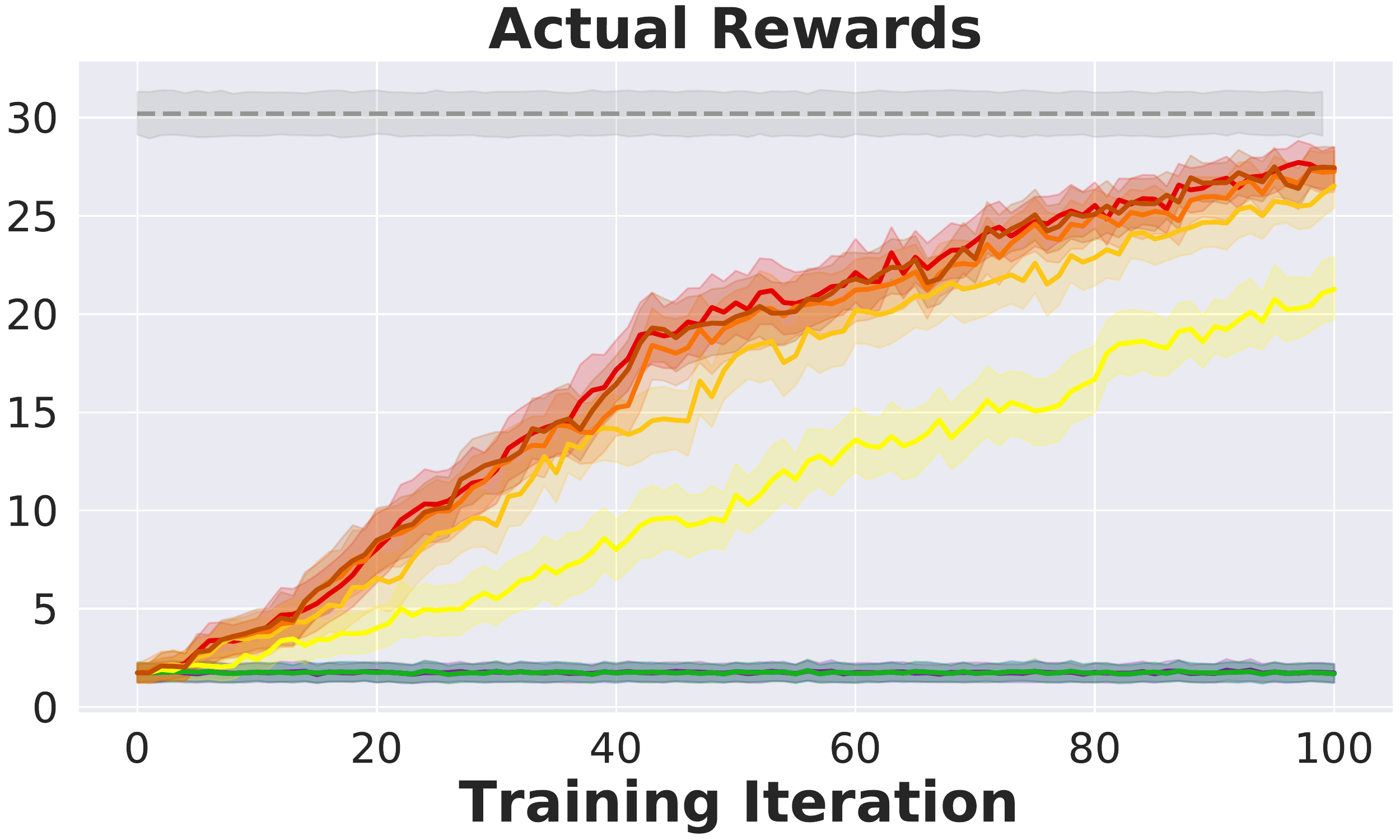}
    \end{subfigure}
    \caption{Online inverse reinforcement learning Sparse}
    \label{fig:OIRLE-adv}
    \end{subfigure}
    
    \begin{subfigure}{\textwidth}
        \centering
        \includegraphics[width=\textwidth]{Figures/imitate_legend.pdf}
    \end{subfigure}
    \caption{Adversarial teacher results. With an adversarial teacher, a naive learner can no longer learn effectively. ITAL still learns efficiently. SGD and batch learning are included for comparison.}
    \label{fig:adv}
\end{figure*}

We further test the robustness of our algorithm with an adversarial teacher, who, instead of choosing the most helpful data, chooses the least helpful one. She replace the $\argmax$ in equation~\eqref{eq:teaching-volume} in the main text with $\argmin$. In this scenario, a naive learner can barely learn, but the teacher-aware learner still shows steady improvement. See table~\ref{sup:tab:adv-beta} for the $\beta$ used in these experiments.

\subsection{Human Teacher}\label{sup:sec:exp-human}
\begin{figure}
    \centering
    \includegraphics[width=\textwidth]{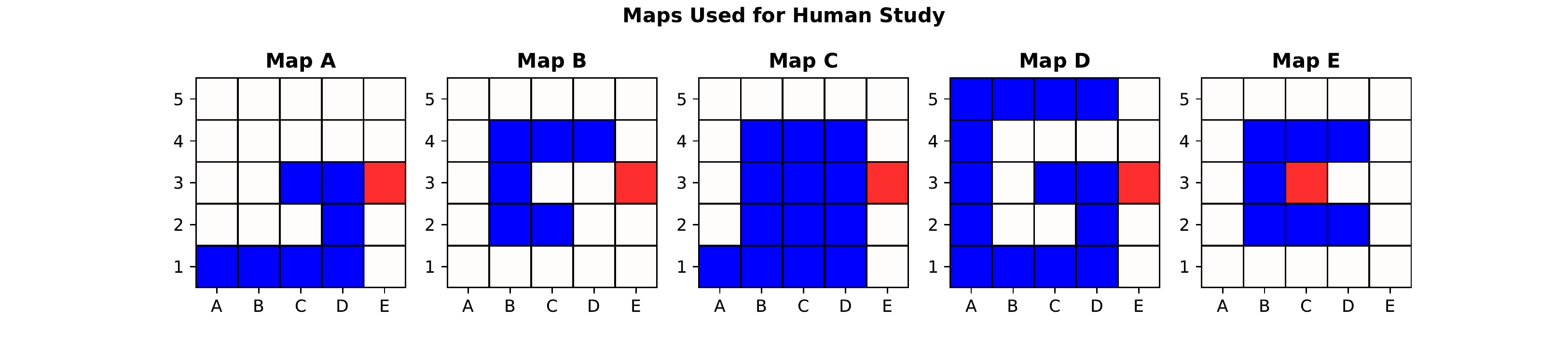}
    \caption{Map configurations}
    \label{sup:fig:map_configuration}
\end{figure}
\begin{figure}
    \centering
    \includegraphics[width=\textwidth]{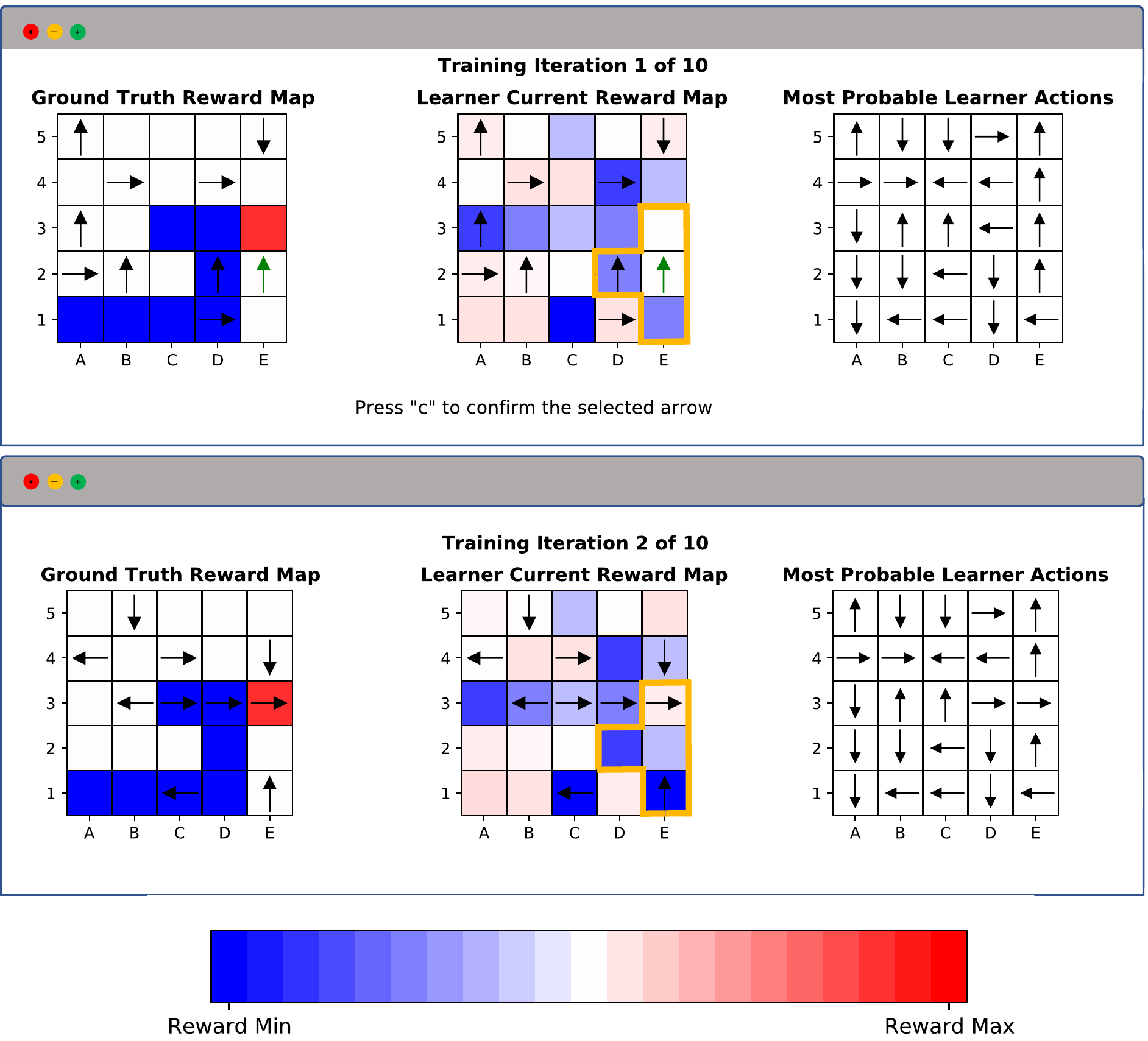}
    \caption{Actual interface used in the human study. The reward spectrum will be shown to the subjects at the experiment introduction. In this example, the subject chose the green arrow as the example at the 1-st iteration. Then, as we annotated with the orange T-shaped boxes, the estimated reward of the target tile of the green arrow increased, while rewards of the arrow source and the surrounding neighbors decreased. The orange boxes were not included in the human study.}
    \label{sup:fig:human_interface}
\end{figure}
\begin{figure}
    
    \begin{subfigure}{0.49\textwidth}
        \centering
        \includegraphics[width=\textwidth]{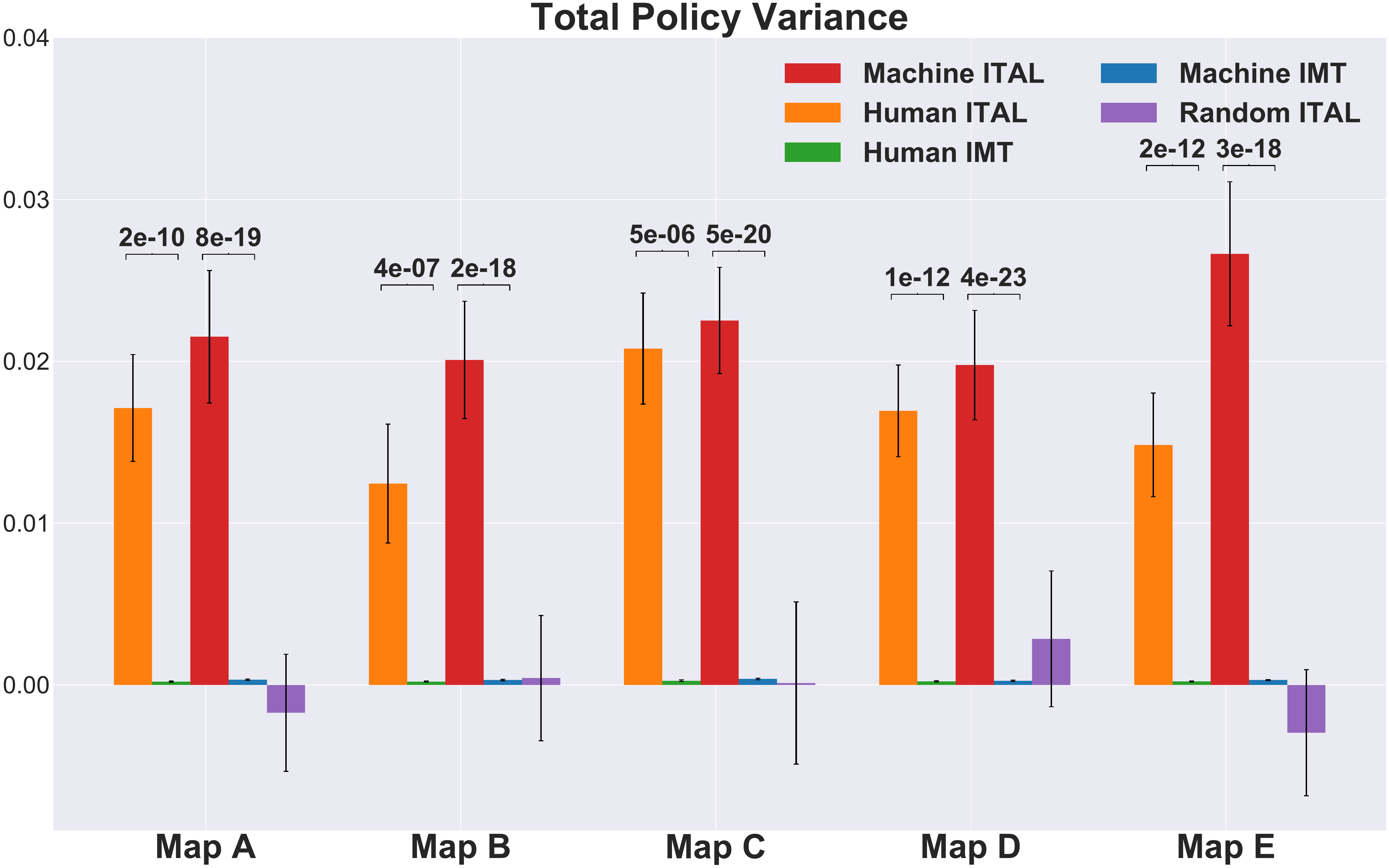}
    \end{subfigure}
    ~
    \begin{subfigure}{0.49\textwidth}
        \centering
        \includegraphics[width=\textwidth]{Figures/HumanStudy/human_rewards.pdf}
    \end{subfigure}
    \caption{Human study results. All the p-values are calculated with paired t-test.}
    \label{sup:fig:human_study_results}
\end{figure}
\begin{figure}
    \begin{subfigure}{\textwidth}
        \begin{subfigure}{0.32\textwidth}
            \centering
            \includegraphics[width=\textwidth]{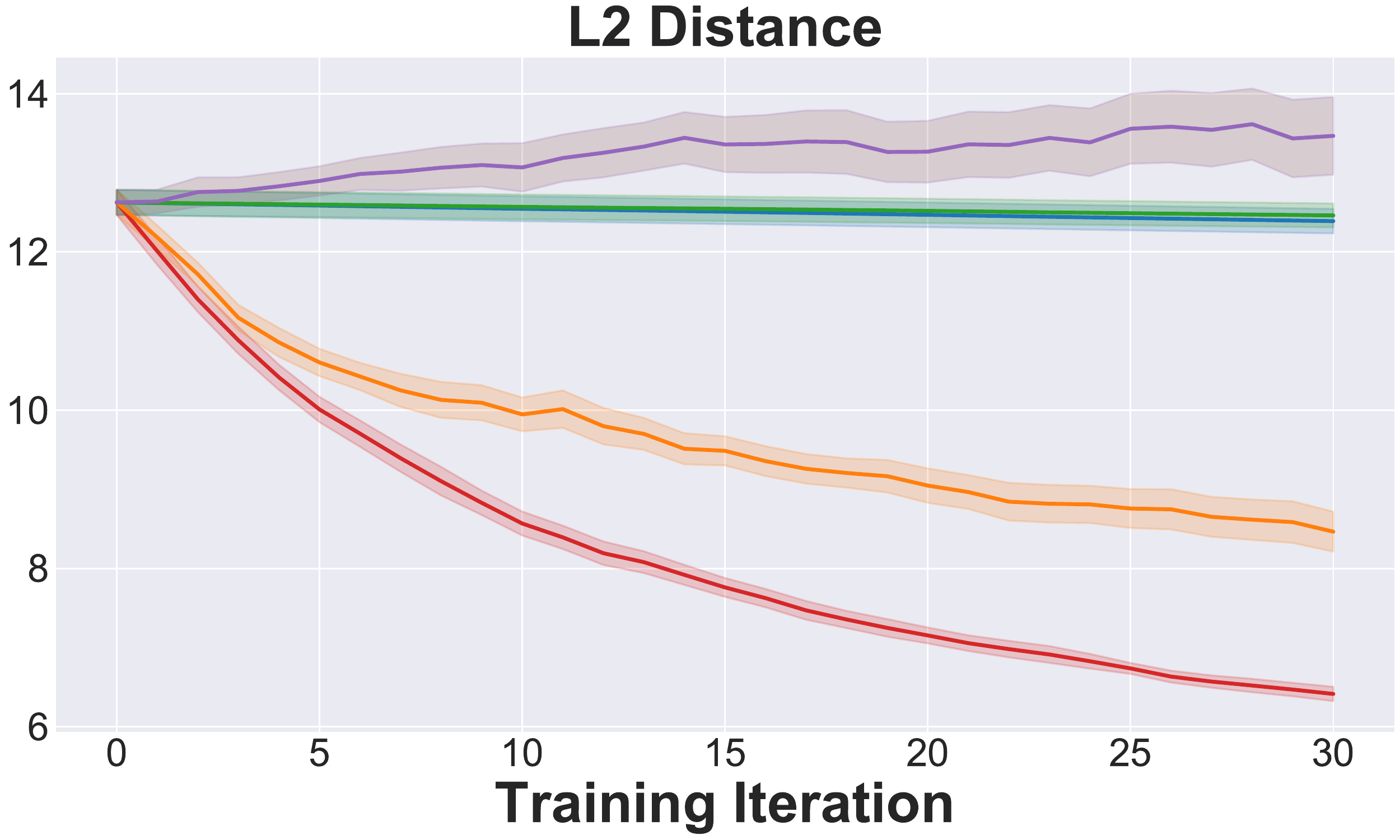}
        \end{subfigure}
        ~
        \begin{subfigure}{0.32\textwidth}
            \centering
            \includegraphics[width=\textwidth]{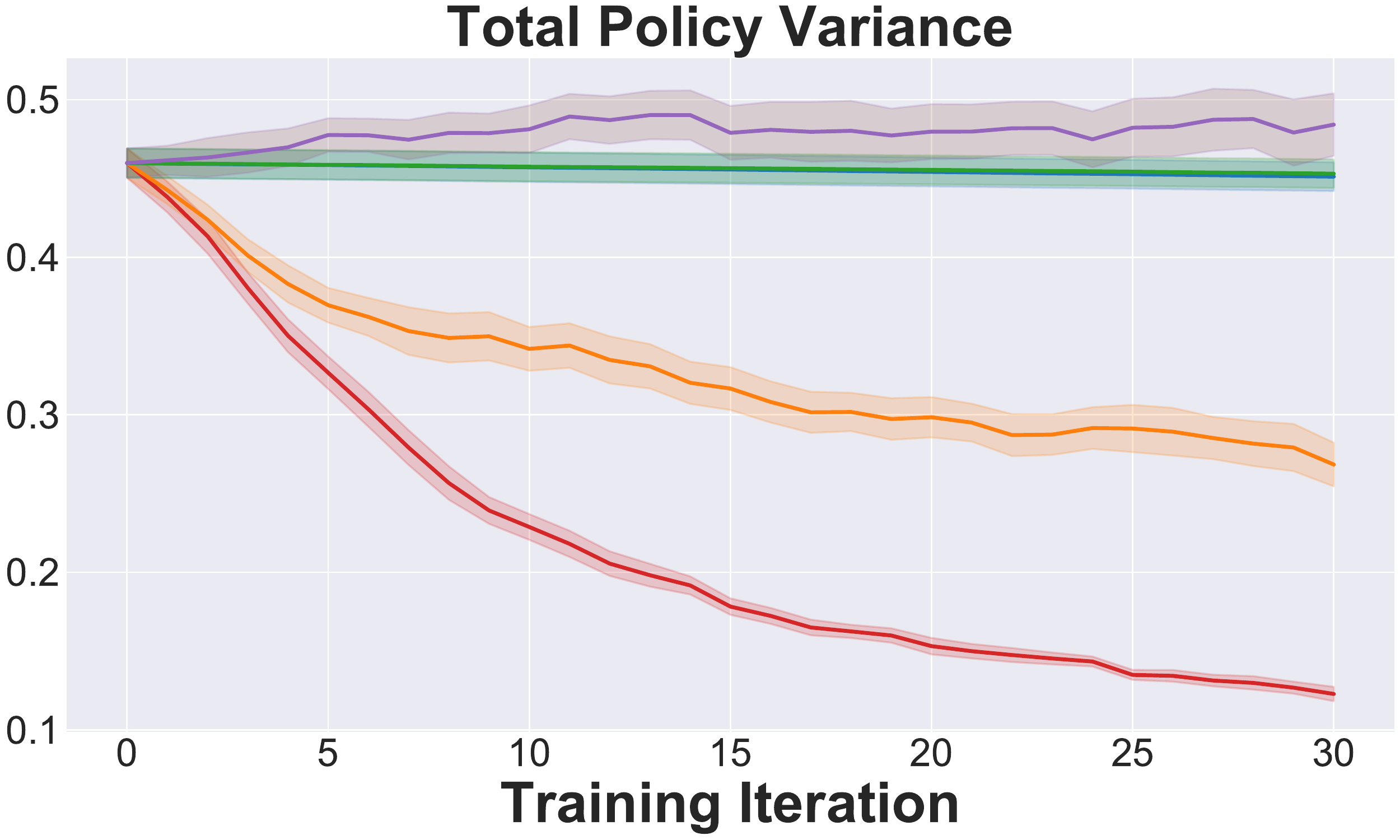}
        \end{subfigure}
        ~
        \begin{subfigure}{0.32\textwidth}
            \centering
            \includegraphics[width=\textwidth]{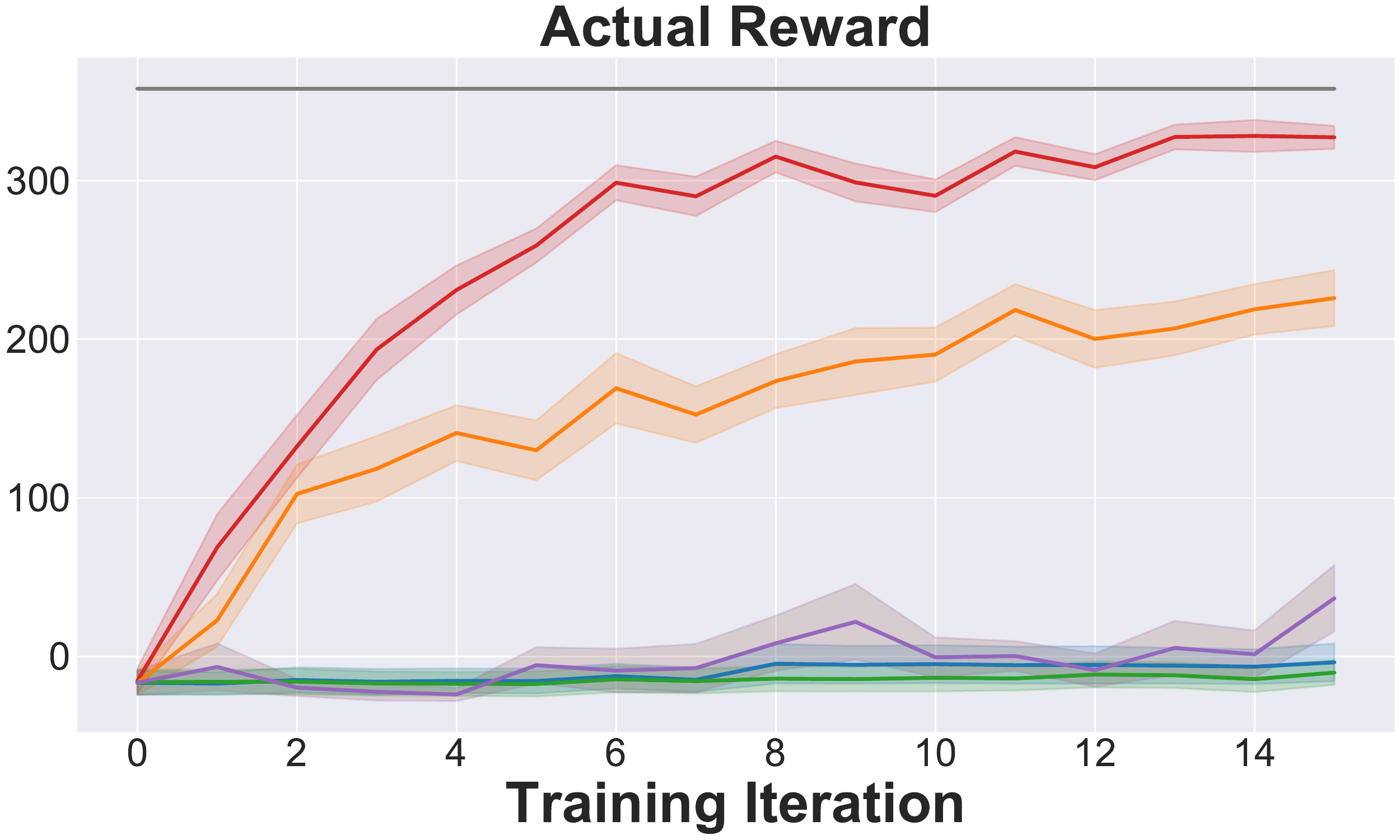}
        \end{subfigure}
        \caption{Map A}
    \end{subfigure}
    
    \begin{subfigure}{\textwidth}
        \begin{subfigure}{0.32\textwidth}
            \centering
            \includegraphics[width=\textwidth]{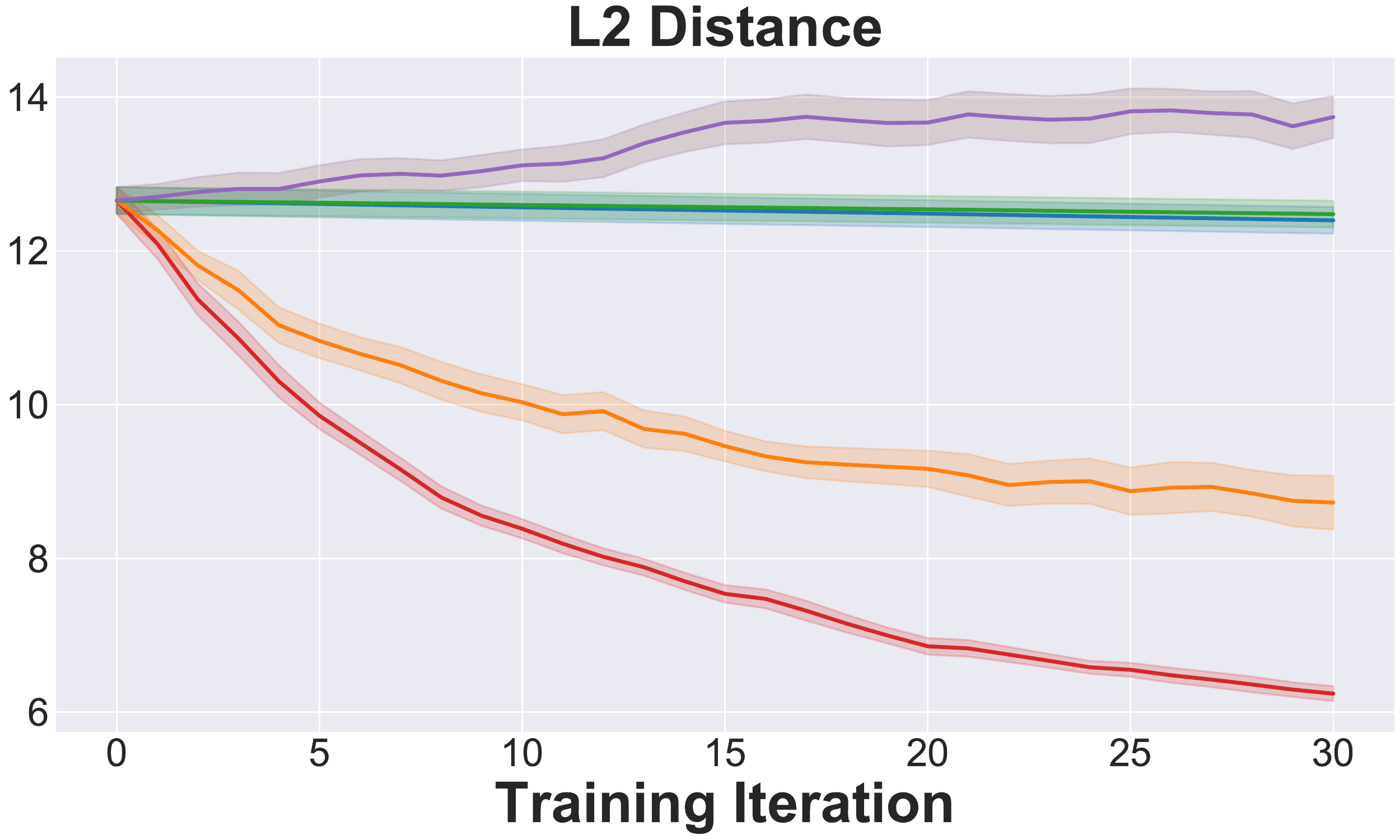}
        \end{subfigure}
        ~
        \begin{subfigure}{0.32\textwidth}
            \centering
            \includegraphics[width=\textwidth]{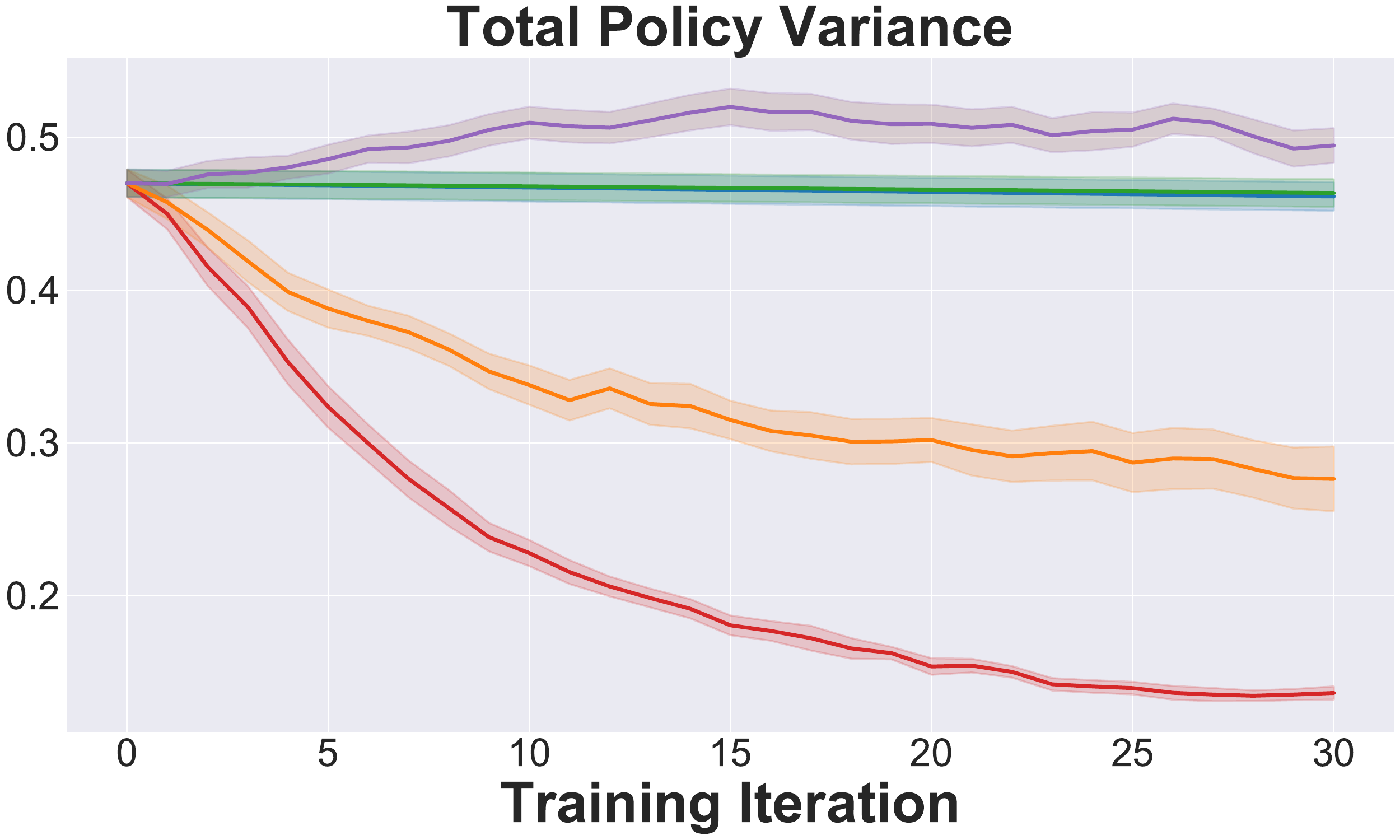}
        \end{subfigure}
        ~
        \begin{subfigure}{0.32\textwidth}
            \centering
            \includegraphics[width=\textwidth]{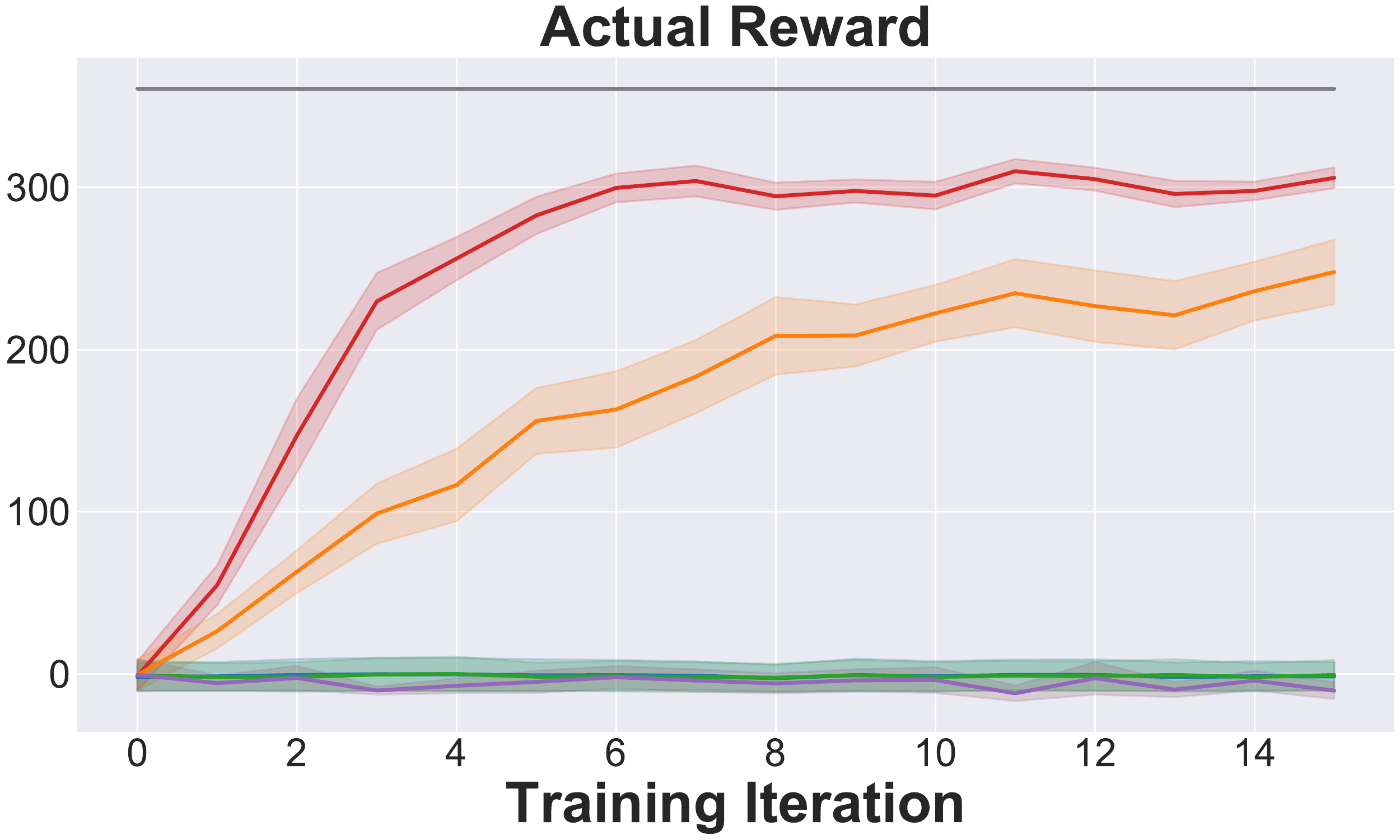}
        \end{subfigure}
    \caption{Map B}
    \end{subfigure}
    
    \begin{subfigure}{\textwidth}
    \begin{subfigure}{0.32\textwidth}
        \centering
        \includegraphics[width=\textwidth]{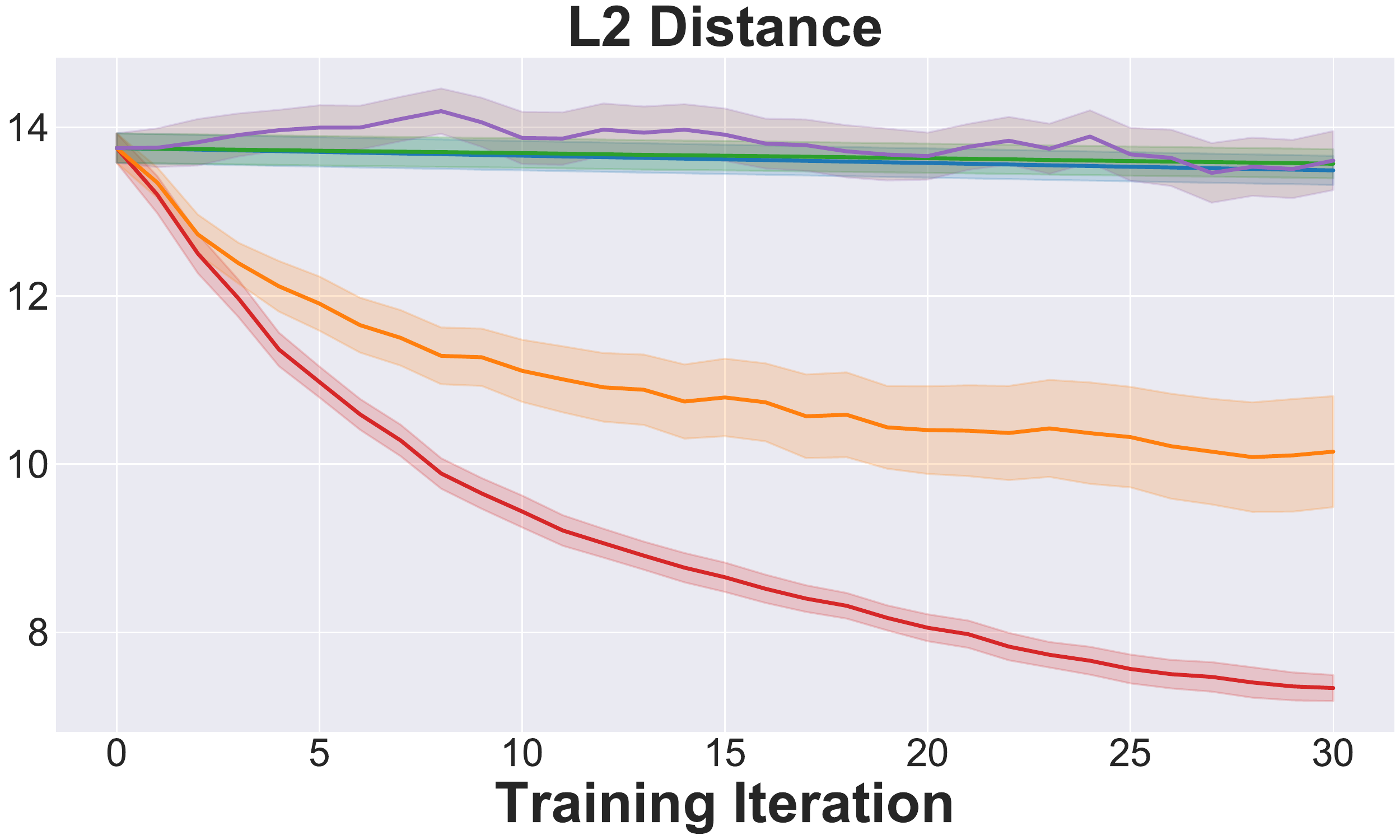}
    \end{subfigure}
    ~
    \begin{subfigure}{0.32\textwidth}
        \centering
        \includegraphics[width=\textwidth]{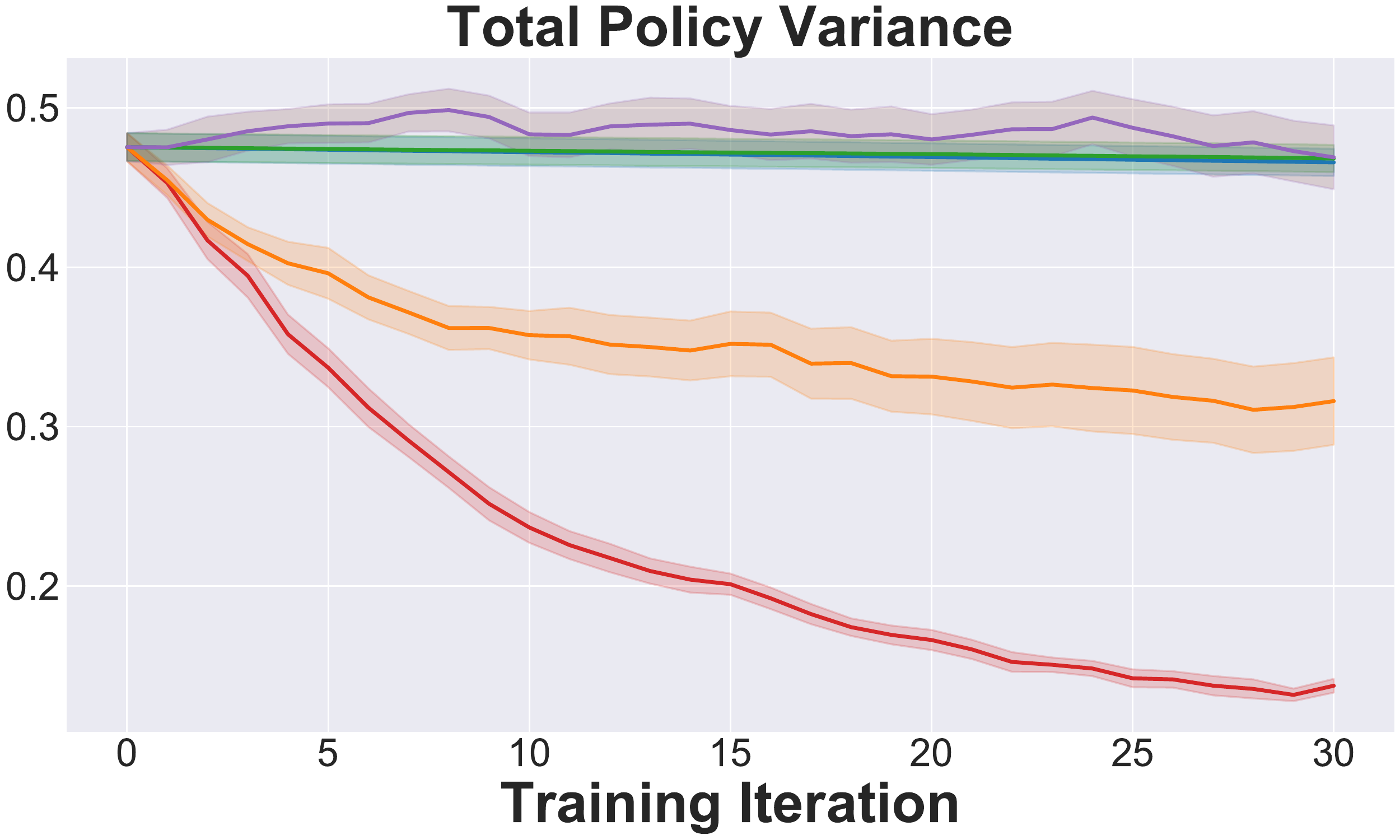}
    \end{subfigure}
    ~
    \begin{subfigure}{0.32\textwidth}
        \centering
        \includegraphics[width=\textwidth]{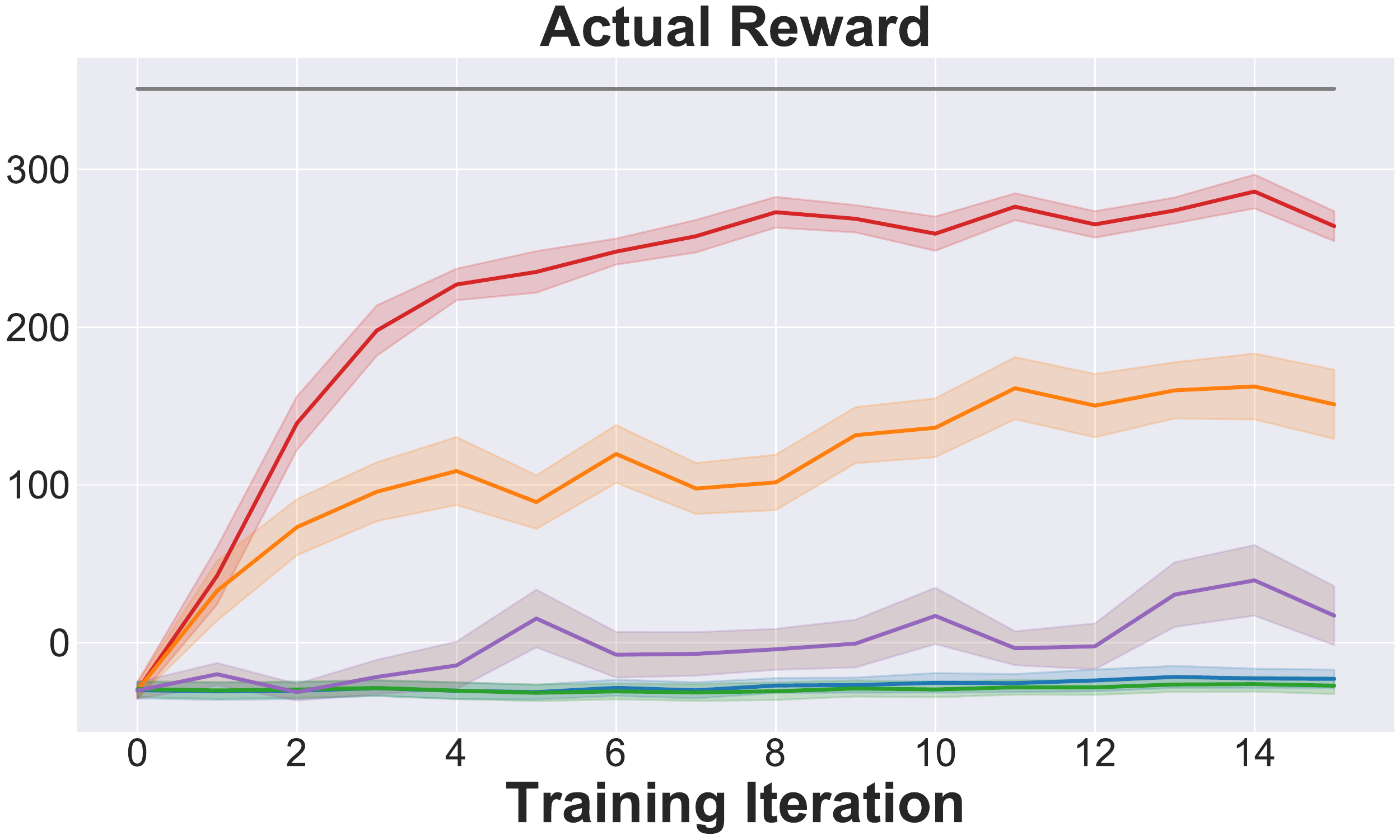}
    \end{subfigure}
    \caption{Map C}
    \end{subfigure}
    
    \begin{subfigure}{\textwidth}
    \begin{subfigure}{0.32\textwidth}
        \centering
        \includegraphics[width=\textwidth]{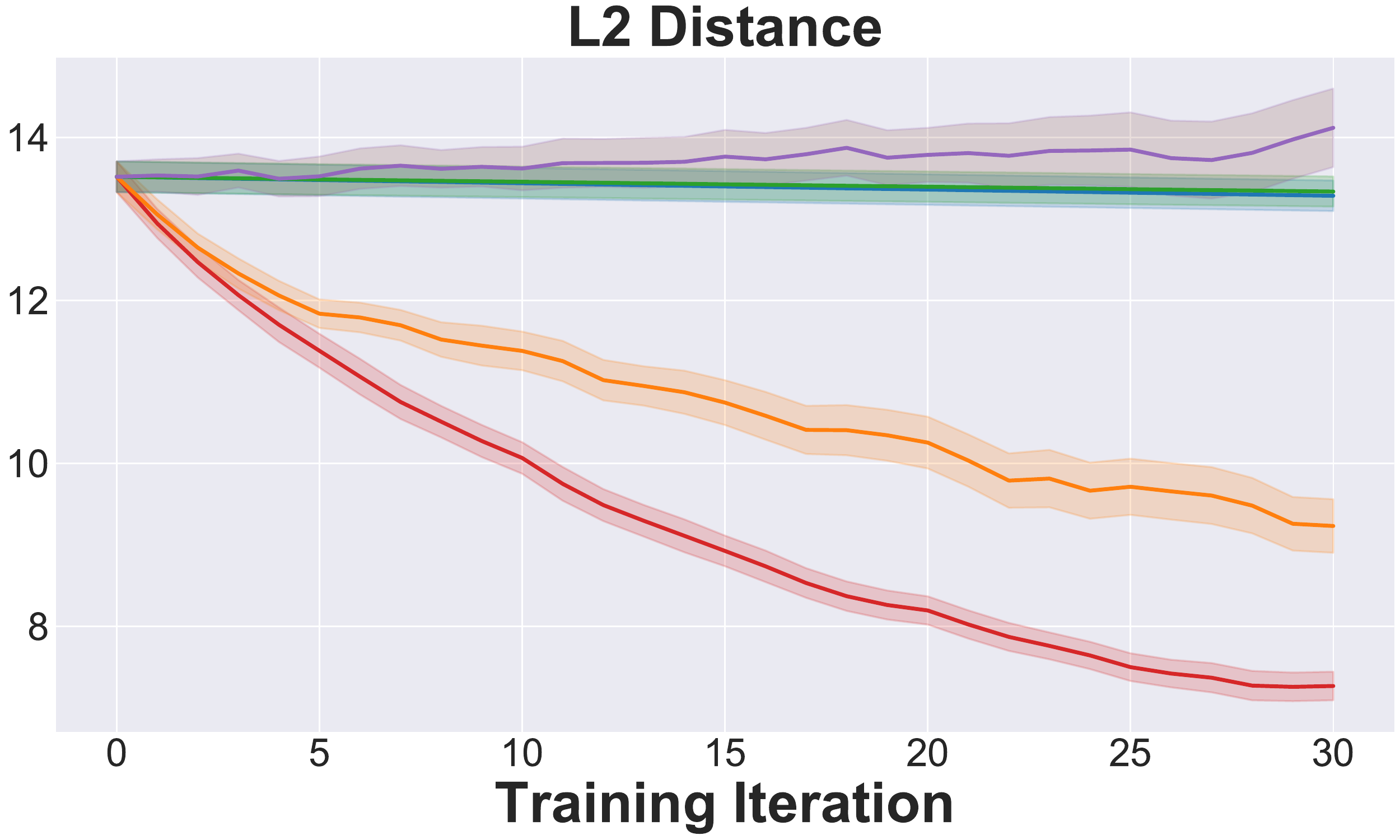}
    \end{subfigure}
    ~
    \begin{subfigure}{0.32\textwidth}
        \centering
        \includegraphics[width=\textwidth]{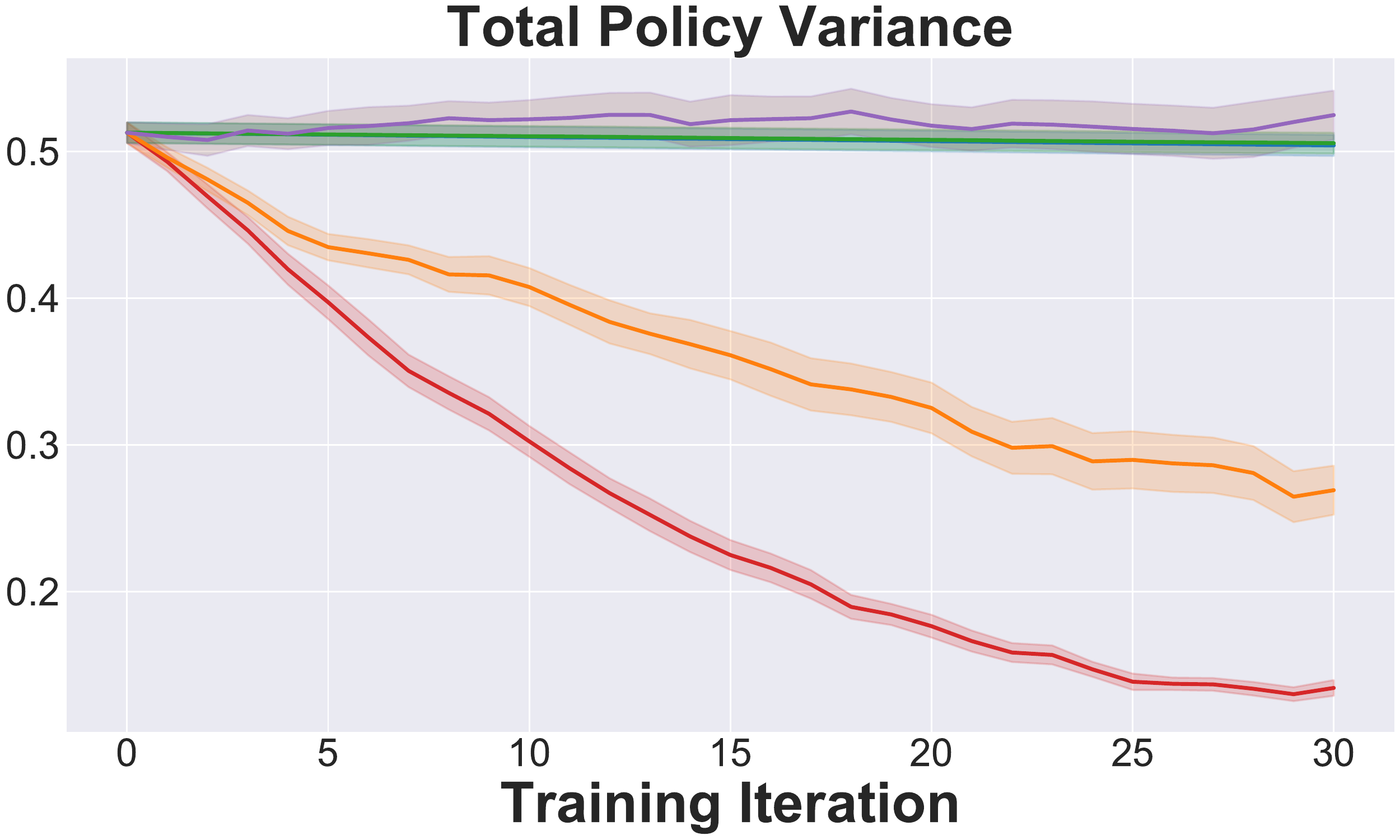}
    \end{subfigure}
    ~
    \begin{subfigure}{0.32\textwidth}
        \centering
        \includegraphics[width=\textwidth]{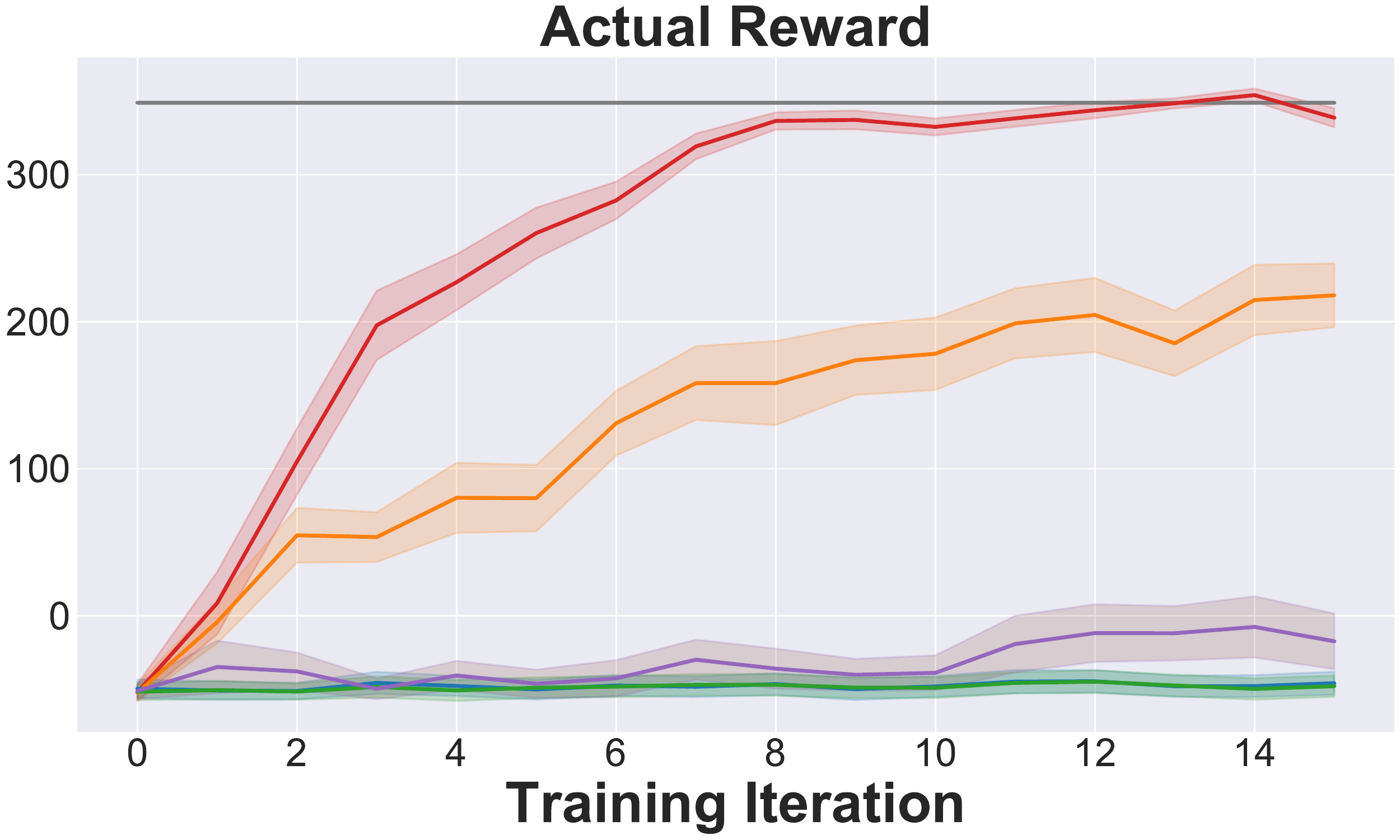}
    \end{subfigure}
    \caption{Map D}
    \end{subfigure}
    
    \begin{subfigure}{\textwidth}
    \begin{subfigure}{0.32\textwidth}
        \centering
        \includegraphics[width=\textwidth]{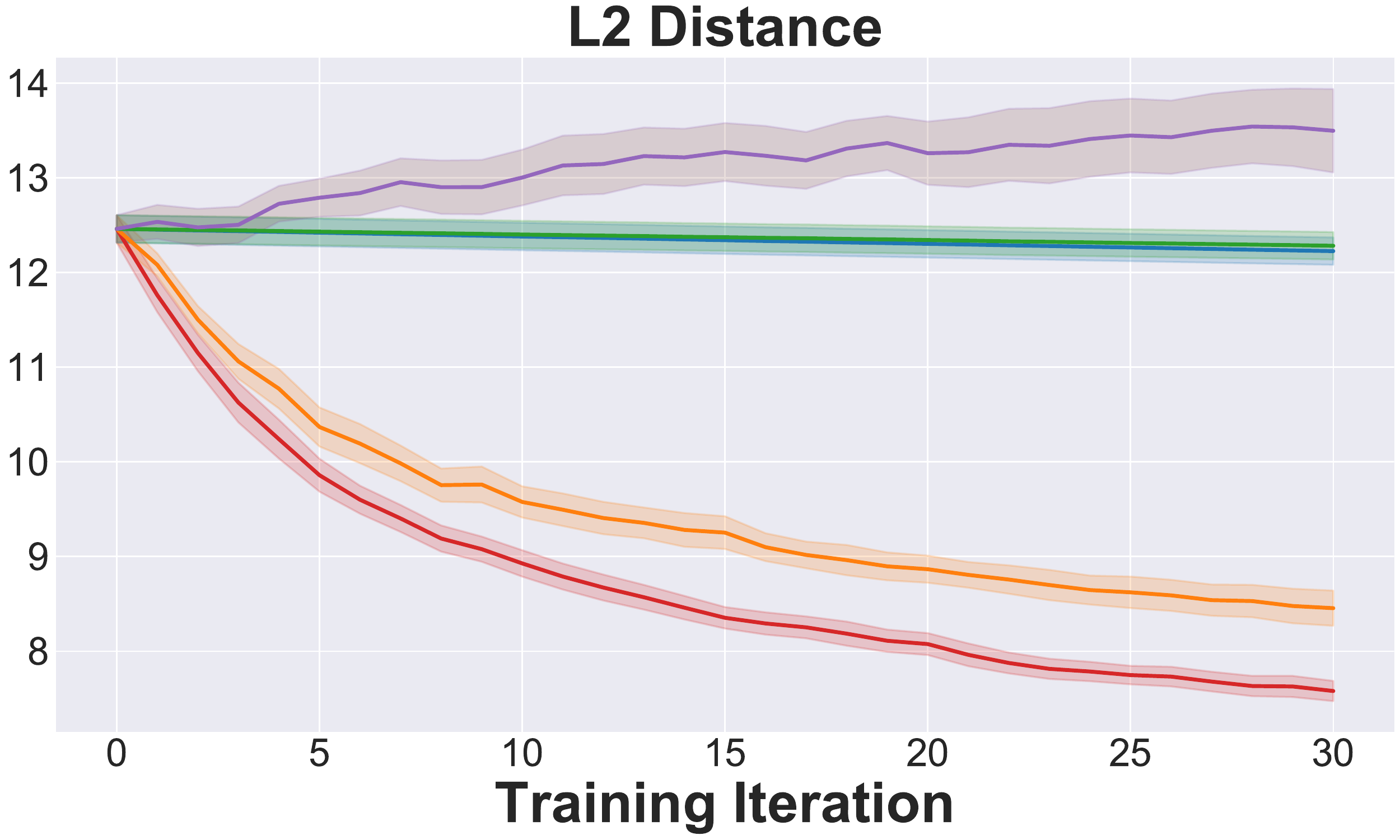}
    \end{subfigure}
    ~
    \begin{subfigure}{0.32\textwidth}
        \centering
        \includegraphics[width=\textwidth]{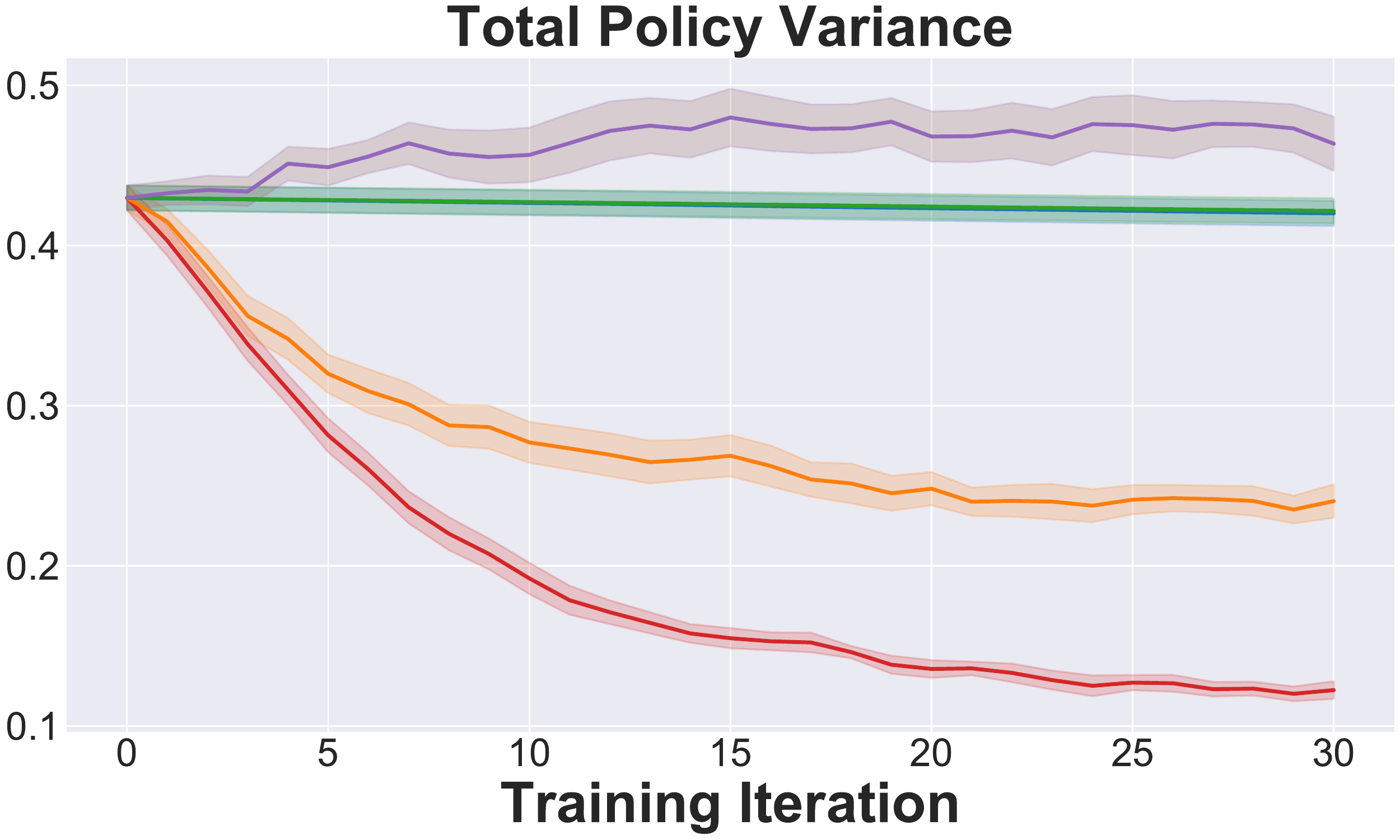}
    \end{subfigure}
    ~
    \begin{subfigure}{0.32\textwidth}
        \centering
        \includegraphics[width=\textwidth]{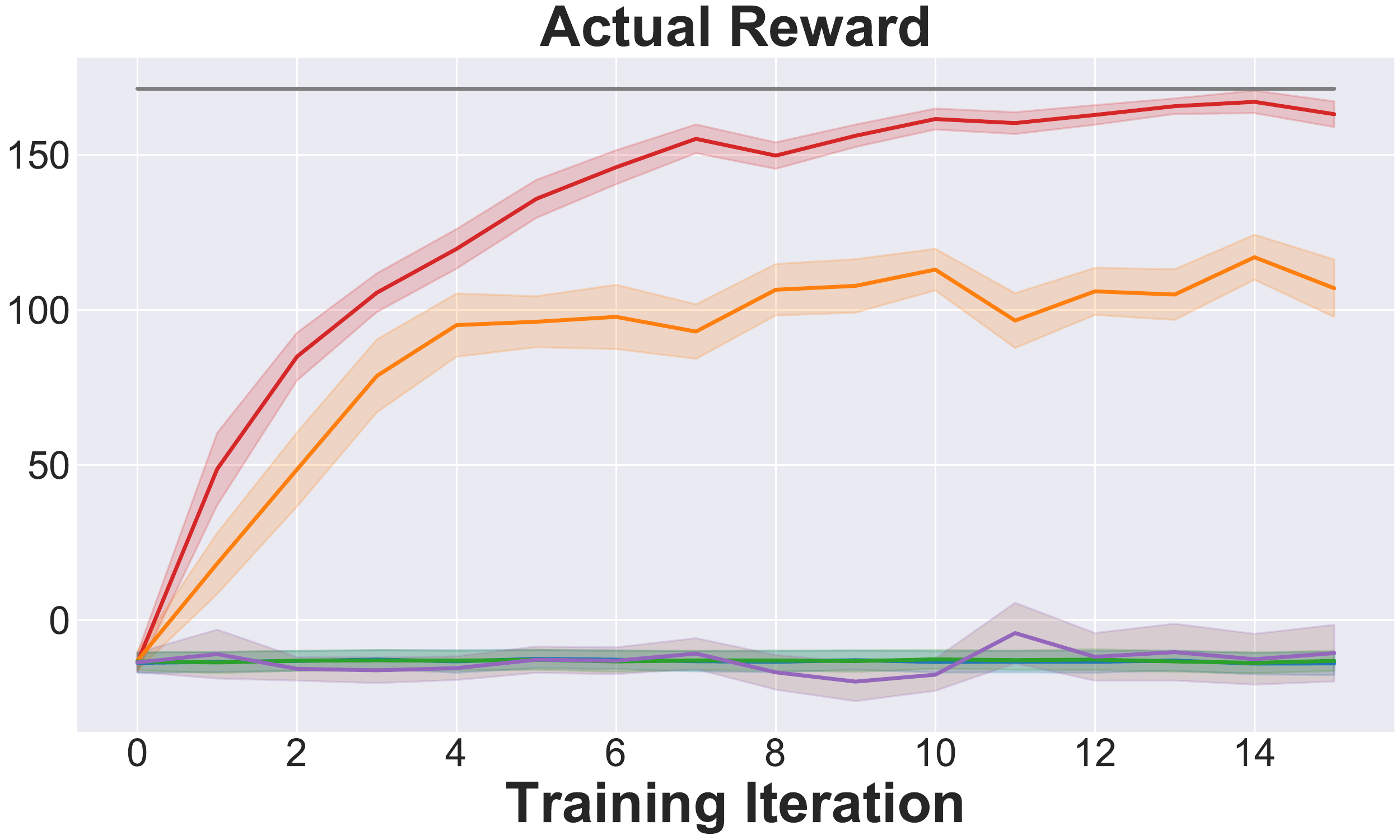}
    \end{subfigure}
    \caption{Map E}
    \end{subfigure}
    
    \begin{subfigure}{\textwidth}
        \centering
        \includegraphics[width=\textwidth]{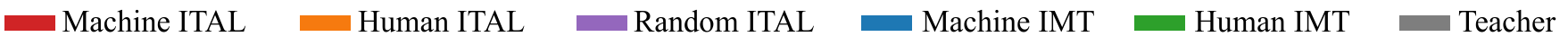}
    \end{subfigure}
    \caption{Learning curves for each map.}
    \label{sup:fig:human_curves}
\end{figure}
We conducted a proof-of-concept human study on 20 university students, 10 females and 10 males. We want to validate that our teacher-aware learner can also outperform naive learners given a human teacher. In other words, our teacher model can be applied to human teachers, despite of their potentially different pedagogical patterns. The goal of the experiment is for the participant to teach the reward of a ground-truth reward map to a learner. To reduce human subjects' cognitive burden, we use three types of tiles (red, blue and white) on the map to represent bad, good and neutral grids. We used 5 different map configurations shown in figure~\ref{sup:fig:map_configuration}. The learner's current reward map is shown to the human participants during the entire teaching session. As the reward values are continuous at the learner's side, we used the color pallet in figure~\ref{sup:fig:human_interface} to render the grids in the learner's map. We also included a map indicating the most probable actions the learner will take given his current reward map, so that the human teacher can tell which grid the learner attaches a higher reward if some neighboring grids have similar colors. The directions of these arrows are calculated with value iteration using the learner's current reward parameters. An example human interface was shown in figure~\ref{sup:fig:human_interface}. In each time step, ten arrows will be drawn on ten randomly sampled grids. Selecting one of the arrows tells the learner that he should follow the arrow's direction if he was at this grid. Then the learner will update his reward parameters based on this instruction using the same equation~\eqref{sup:eq:irl_loss} as in section~\ref{sup:sec:exp-oirl}. 

We hold the experiment as a Jupyter Notebook~\citep{Kluyver:2016aa} and launch it via Binder~\citep{jupyter2018binder}. We first introduce the experiment logic to the human subjects and include a short warm-up phase for the subjects to get familiar with the learner's update process. Then, we let the subjects to teach the maps, starting from Map A to Map E. Every subject needs to teach both a teacher-aware learner and a naive learner, whose order is randomly determined. For every map, the initialization of the two learners are the same for the same human subject and different across subjects. Like the inverse RL experiment in section~\ref{sup:sec:exp-oirl}, we evaluate the learning results in terms of L2-distance between the learners' reward parameters and the ground-truth parameters, the total variance between the learners' policy and the policy derived with the ground-truth reward and the actual accumulated reward acquired by the learner after the learning completes.

The results are presented in figure~\ref{sup:fig:human_study_results}. The advantages of the teacher-aware learners are significant ($p$-value $< 0.01$) on all measurements, computed with a paired t-test. We also did an ablative study, in which the human teacher was replaced by a random teacher. As shown in the figure~\ref{sup:fig:human_study_results} and~\ref{sup:fig:human_curves}, when paired with a random teacher, the teacher-aware learner doesn't show any advantage and has much larger variance. That is to say, the teacher model only benefits the learning when it matches with the actual teacher data selection process. Otherwise, the teacher-aware learner will over-interpret the data he receives. Figure~\ref{sup:fig:human_curves} shows the learning curves of all the map configurations.

\bibliography{reference}
\bibliographystyle{IEEEtranSN}

\makeatletter\@input{xx.tex}\makeatother